\newcommand{\DEBUG}{0}
  \newcommand{\Snote}[1]{\textcolor{red}{[Sam: #1]}}
  \newcommand{\Inote}[1]{\textcolor{blue}{[Ittai: #1]}}
  \newcommand{\Snote}[1]{}
  \newcommand{\Inote}[1]{}
\newtheorem{theorem}{Theorem}[section]
\newtheorem{problem}{Problem}
\newtheorem{lemma}[theorem]{Lemma}
\newtheorem{claim}[theorem]{Claim}
\newtheorem{definition}{Definition}
\newcommand\N{\mathbb{N}}
\newcommand\R{\mathbb{R}}
\newcommand\Sbb{\mathbb{S}}
\newcommand\mcN{\mathcal{N}}
\newcommand\mcC{\mathcal{C}}
\newcommand\mcX{\mathcal{X}}
\newcommand\mcM{\mathcal{M}}
\newcommand\mcS{\mathcal{S}}
\newcommand\sgn[1]{\textup{sign}\left({#1}\right)}
\newcommand\diag[1]{\mathop{\text{diag}}\left( {#1} \right)}
\newcommand\ol[1]{\overline{#1}}
\newcommand\wt[1]{\widetilde{#1}}
\newcommand\abs[1]{{\left\lvert{#1}\right\rvert}}
\newcommand\Prob[2]{{\Pr_{#1}\left[ {#2} \right]}}
\newcommand\EE[1]{{\mathop{\mathbb{E}}{\left[ {#1} \right]}}}
\newcommand\Expect[2]{{\mathop{\mathbb{E}}_{#1}\left[ {#2} \right]}}
\newcommand{\Norm}[1]{\left\Vert{#1}\right\Vert}
\newcommand{\IP}[2]{\left\langle {#1} , {#2} \right\rangle}
\newcommand{\polylog}{\textup{polylog}}
\newcommand\poly{\textnormal{poly}}
\newcommand\defeq{\stackrel{\textup{def}}{=}}
\newcommand\set[1]{\left\{{#1}\right\}}
\newcommand{\argmax}{\mathop{\textup{argmax}}}
\newcommand{\ErrorTerm}[1]{\textbf{TODO}}
\newcommand{\iprod}[1]{\langle #1 \rangle}
\newcommand{\brac}[1]{[#1]}
\newcommand{\Paren}[1]{\left(#1\right)}
\newcommand{\otilde}[1]{\wt{O}\Paren{{#1}}}
\newcommand{\ksign}{k_\textup{sign}}
\newcommand{\ksigma}{k_{2 \sigma}}
\newcommand{\kbuck}{k_{\textup{bucket}}}
\newcommand{\kthresh}{k_{\textup{threshold}}}
\newcommand{\gt}[1]{{#1}_{\textup{gt}}}
\newcommand{\betagt}{\gt{\beta}}
\newcommand{\ACRE}{\hyperref[alg:ACRE]{\textup{ACRE}}~}
\newcommand{\RTI}{\hyperref[alg:refined_triangle_inequality]{\textup{RTI}}~}
\newcommand{\OHARE}{\hyperref[alg:OHARE]{\textup{OHARE}}~}
\newcommand{\ceil}[1]{\left\lceil #1 \right\rceil}
\title{Robustness Auditing for Linear Regression: To Singularity and Beyond}
\author[1]{Ittai Rubinstein}
\author[1]{Samuel B. Hopkins}
\affil[1]{MIT EECS and CSAIL, Cambridge MA \texttt{ \{ittair,samhop\}@mit.edu}}
\begin{document}

\maketitle

\begin{abstract}
    It has recently been discovered that the conclusions of many highly influential econometrics studies can be overturned by removing a very small fraction of their samples (often less than $0.5\%$).
    These conclusions are typically based on the results of one or more Ordinary Least Squares (OLS) regressions, raising the question: given a dataset, can we certify the robustness of an OLS fit on this dataset to the removal of a given number of samples?
    Brute-force techniques quickly break down even on small datasets.
    Existing approaches which go beyond brute force either can only find candidate small subsets to remove (but cannot certify their non-existence)~\cite{broderick2020automatic, kuschnig2021hidden}, are computationally intractable beyond low dimensional settings~\cite{moitra2022provably}, or require very strong assumptions on the data distribution and too many samples to give reasonable bounds in practice~\cite{bakshi2021robust, freund2023towards}. 
    We present an efficient algorithm for certifying the robustness of linear regressions to removals of samples.
    We implement our algorithm and run it on several landmark econometrics datasets with hundreds of dimensions and tens of thousands of samples, giving the first non-trivial certificates of robustness to sample removal for datasets of dimension $4$ or greater.
    We prove that under distributional assumptions on a dataset, the bounds produced by our algorithm are tight up to a $1 + o(1)$ multiplicative factor.
\end{abstract}

\newpage

\tableofcontents

\newpage

\section{Introduction}
Consider a supervised learning problem with feature vectors $X_1,\ldots,X_n \in \R^{d}$ and labels $Y_1,\ldots,Y_n \in \R$, to which we fit a model $f  :  \R^d \rightarrow \R$.
Robustness auditing addresses the question:
\begin{quote}
    \begin{center}
    \emph{How would $f$ have differed if we had been missing a small fraction of the data?}
    \end{center}
\end{quote}
We study this question in the context of ordinary least squares (OLS) linear regression, where $f(X) = \iprod{\beta,X}$ is the linear function minimizing the mean squared error $\tfrac 1 n \sum_{i \leq n} (f(X_i) - Y_i)^2$.
We focus on OLS for two reasons.
First, OLS is a statistics workhorse, with widespread use across economics, social science, finance, machine learning, and beyond.
Second, its relative simplicity affords us the opportunity to design algorithms with provable guarantees, and offers a stepping stone to more complex models (logistic regression, kernel methods, neural networks).

\begin{problem}[Robustness Auditing for OLS Regression]
\label{prob:robustness-auditing}
  Given a linear regression instance $(X_1,Y_1),\ldots,(X_n,Y_n) \in \R^{d+1}$, a direction $e \in \R^d$, and an integer $k \leq n$, what is
  \begin{equation}
  \label{eq:intro-1}
  \Delta_k(e) = \max_{\substack{S \subseteq [n] \\ |S| = n-k}} \iprod{\beta_{[n]} - \beta_S, e} \, 
  \end{equation}
  where for $T \subseteq [n]$, $\beta_T \in \R^d$ denotes the vector of OLS coefficients for the dataset $\{(X_i,Y_i)\}_{i \in T}$?
  
  In particular, for a threshold $\theta \in \R$ what is the minimal number of removals $k_\theta(e)$ for which $\Delta_k(e) > \theta$?
\end{problem}

\paragraph{Context and Applications for Robustness Auditing}

Problem~\ref{prob:robustness-auditing} was introduced in this form by Broderick, Giordano, and Meager~\cite{broderick2020automatic}, who use a heuristic algorithm, AMIP, to identify very small subsets of landmark datasets from econometrics which can be removed to overturn important conclusions of the respective studies~\cite{finkelstein2012oregon,angelucci2009indirect}; often this can be achieved by removing less than $0.5\%$ of a dataset.
Researchers have subsequently used AMIP to audit a wide range of recent studies in economics \cite{martinez2022much,di2022covid,davies2024training,zachmann2023nudging,burton2023negative,beuermann2024does,bondy2023estimating}.
Subsequent algorithmic works \cite{kuschnig2021hidden,moitra2022provably,freund2023towards} develop additional algorithms for auditing a similar notion of robustness; we discuss prior work in detail below.

It is important that, similar to these prior works, we focus on robustness to a shift of $\beta$ in only a single user-specified direction $e$.
This is because the main conclusion of a regression is often determined by the projection of its result on a particular axis.
For instance:

\subparagraph{Robustness of Parameter Estimate}
A researcher may want to estimate the correlation $\iprod{\beta,e}$ between a specific explanatory variable and a target variable, controlled for additional factors, where $e$ is the indicator vector corresponding to the explanatory variable.
Moreover, the sign and statistical significance of $\iprod{\beta,e}$ is often of greatest interest.

This correlation can have a causal interpretation.
For instance, in a randomized control trial, where $e$ is the indicator for the treatment variable, the projection $\iprod{\beta,e}$ can be used to estimate the ``average treatment effect'' (ATE) of a new drug or policy on the outcome $Y$, while including the control variables in the regression can help reduce the variance of this estimate.
Even more complex causal inferences (e.g., instrumental variables regression) can often be decomposed into a small number of OLS regressions, where the result of the causal inference depends on a single coefficient from the result of each regression.

Conclusions from a study where this shift $\Delta_k(e)$ is large when $k \ll n$ are therefore driven by a small number of data points, meriting at minimum reinspection of a dataset, and perhaps casting doubt on generalizability.
In many real world regressions, the sign of $\iprod{\beta,e}$ is not robust to a small number of removals, even though it is statistically significant~\cite{broderick2020automatic}.
\emph{Non-existence} of a small set of highly influential samples indicates robustness of the measured effect to an interesting class of distribution shifts -- any removal of a small fraction of the population.

\subparagraph{Data Attribution} Suppose that instead of looking for the effect of a particular feature on the label $Y$, instead we use the linear model $f$ to predict the label of a fresh point $X_{\text{new}}$, and we want to identify what part of the training data led to the prediction that $Y_{\text{new}} \approx f(X_{\text{new}})$.
Following the counterfactual formulation of this data attribution problem from \cite{koh2017understanding,ilyas2022datamodels}, we arrive again at robustness auditing:
using $e = X_{\text{new}}$, we can find the smallest set of whose removal would significantly shift $f(X_{\text{new}})$.
We can evaluate the \emph{brittleness} of the prediction by measuring the size of the smallest set of samples we could remove to cause $f(X_{\text{new}})$ to cross a decision boundary.

\paragraph{Intractability, Heuristics and Upper Bounds}
As soon as $k$ exceeds single digits, robustness auditing by brute-force search over all $|S| = n-k$ takes times $\binom{n}{k}$, which is computationally intractable.
In fact, under standard computational complexity assumptions, some degree of intractability is inherent \cite{moitra2022provably}, so we need either some assumptions on $\{(X_i,Y_i)\}_{i \in [n]}$, a relaxed version of Problem~\ref{prob:robustness-auditing}, or both.

The works \cite{broderick2020automatic,kuschnig2021hidden} relax the goal to finding upper bounds on $k_\theta(e)$, for which they use greedy/local search algorithms.
This approach leaves open the risk that $k_\theta(e)$ might be much smaller than the upper bound suggests.
Indeed, later experiments by us (see Figure~\ref{fig:experiments}), \cite{moitra2022provably}, and \cite{freund2023towards} uncover numerous real-world examples where local search techniques give loose upper bounds.
Figure~\ref{fig:amip_fails} illustrates a simple synthetic example where a small number of datapoint removals can shift $\iprod{\beta,e}$, but greedy/local search algorithms only find much larger sets.

Following \cite{moitra2022provably}, we aim for algorithms which provide \emph{unconditionally valid upper and lower bounds} on $k_\theta(e)$ for every dataset $\{(X_i,Y_i)\}_{i \in [n]}$, and which return \emph{high-quality} upper/lower bounds (as close to matching each other as possible) under reasonable assumptions on $\{(X_i,Y_i)\}_{i \in [n]}$.
Prior approaches to go beyond greedy algorithms and provide lower bounds on $k_\theta(e)$ \cite{klivans2018efficient,bakshi2021robust,moitra2022provably,freund2023towards} so far don't yield results in practice for datasets of dimension $4$ or greater, due to running times which scale exponentially in $d$ and/or prohibitively strong assumptions on $X_1,\ldots,X_n$.

\begin{figure}[ht]
\centering
\begin{subfigure}{0.45\textwidth}
  \centering
  \includegraphics[width=\linewidth]{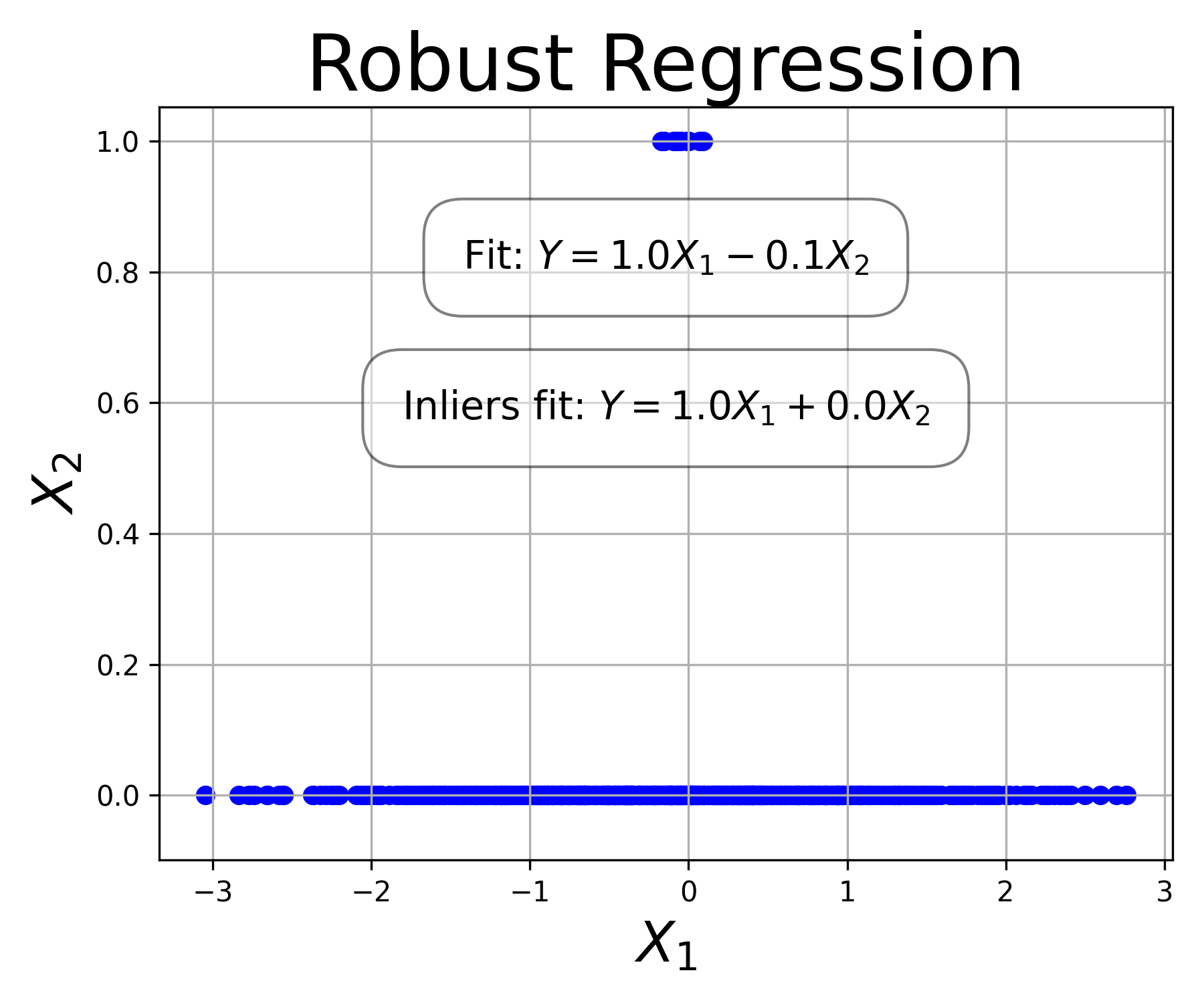}
  \caption{A robust dataset whose robustness cannot be certified by continuous algorithms such as~\cite{bakshi2021robust,freund2023towards}.}
  \label{fig:sub1}
\end{subfigure}
\hfill 
\begin{subfigure}{0.45\textwidth}
  \centering
  \includegraphics[width=\linewidth]{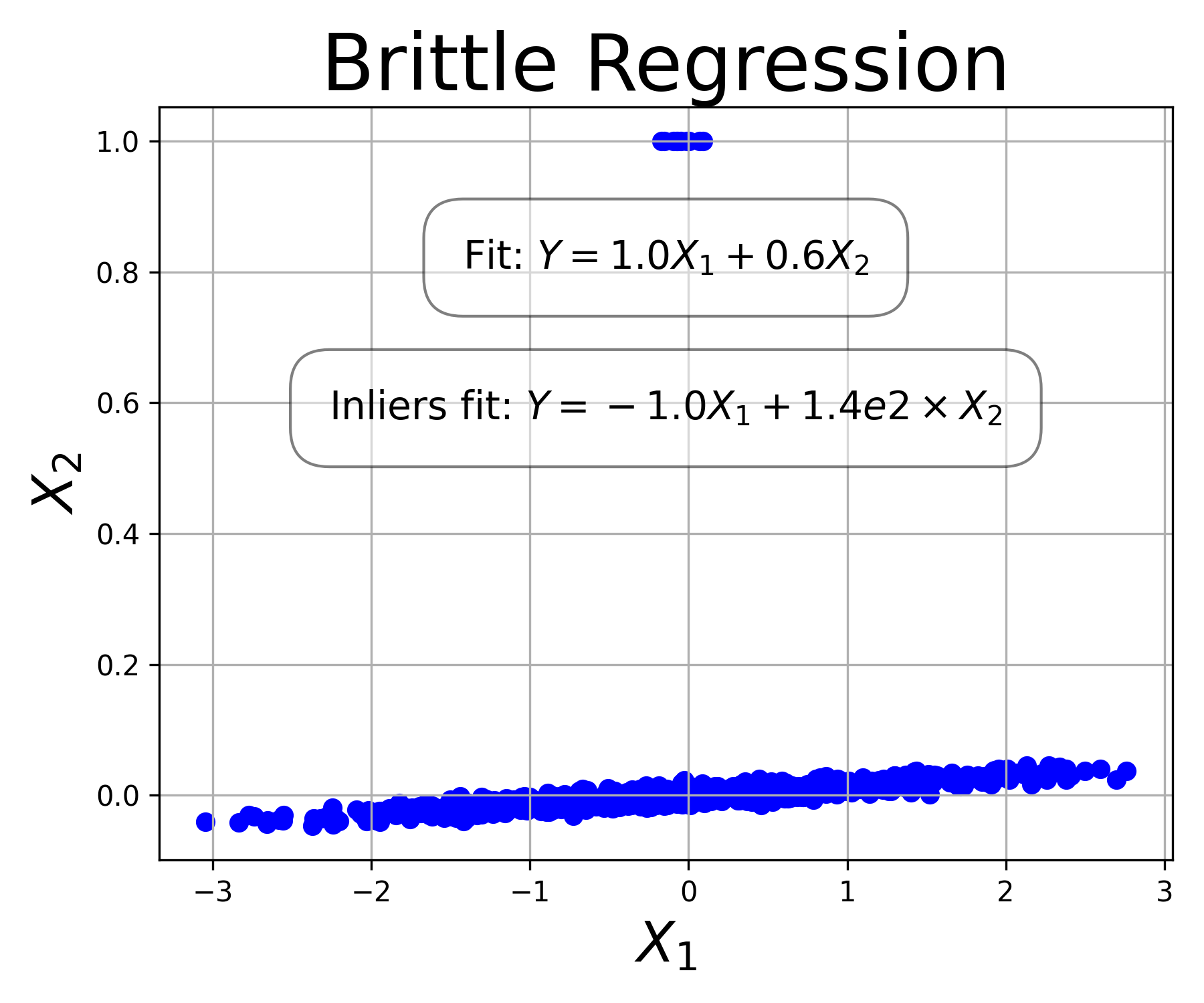}
  \caption{A small perturbation on~\ref{fig:sub1} creates brittleness which is not detected by AMIP (\cite{broderick2020automatic}) or \cite{kuschnig2021hidden}.}
  \label{fig:sub2}
\end{subfigure}
\caption{A comparison of two regressions. Figure~\ref{fig:sub1} shows a regression from a main variable $X_1$ and an indicator variable $X_2$ which is set to $1$ on only a very small subset of the samples ($\approx 1\%$). The label values $Y$ are drawn iid from a normal distribution around $X_1$, resulting in an OLS vector $\beta$ whose first coefficient is positive and whose sign is robust to removing any 158 samples. We use $e = (1,0)$. Using the procedure detailed in Claim~\ref{clm:one_hot_brittle}, we perturb only the $X_2$ values of the inlier samples to produce an extremely brittle regression (Figure~\ref{fig:sub2}). Because most current efficient approaches to estimating the robustness of a linear regression produce outputs which vary smoothly with the input dataset (such as gradient descent~\cite{broderick2020automatic}, semidefinite programming~\cite{bakshi2021robust}, or spectral decompositions~\cite{freund2023towards}), they cannot be used to differentiate between these cases.
}
\label{fig:amip_fails}
\end{figure}

\subsection{Our Contributions}
We present and analyze two new algorithms, \ACRE and \OHARE, which provide \emph{lower bounds} on $k_\theta(e)$, \emph{certifying robustness} of OLS regression.
Our algorithms provide the first nontrival bounds on the number of samples which must be removed to flip the signs of important parameter estimates for benchmark datasets studied in prior work with hundreds of dimensions and tens of thousands of samples.
We evaluate our algorithms experimentally and in theory.

\ACRE (\textbf{A}lgorithm for \textbf{C}ertifying \textbf{R}obustness \textbf{E}fficiently) takes as input a dataset $X \in \R^{n \times d}, Y \in \R^{n}$ and a vector $e\in\R^d$, runs in time $O\Paren{n^2 d + n^2 \log(n)}$, and outputs a set of upper and lower bounds $U, L \in \R^n$ on the removal effects such that $U_k \geq \Delta_k(e) \geq L_k$.
In particular, this runtime avoids exponential dependence on $k$, $d$, and $n$.
The upper and lower bounds \ACRE provides are valid even making \emph{no assumptions whatsoever} on $X$ and $Y$.
The bounds are \emph{good}, meaning that the upper and lower bounds are close to matching, when $X$ and $Y$ are drawn from a sufficiently ``nice'' distribution (such as a linear model with subgaussian features and labels) -- we formalize this below.

However, \ACRE provides poor bounds -- with a wide gap between $U_k$ and $L_k$ -- on a very important class of datasets: those using one-hot encodings (also known as indicator or dummy variables) to express categorical features.
Even though one-hot encoded datasets can yield robust regressions, certifying this is challenging because of singularities which emerge in the covariance when samples are removed (see Figure~\ref{fig:amip_fails}).

\OHARE (\textbf{O}ne-\textbf{H}ot aware \textbf{A}lgorithm for certifying \textbf{R}obustness \textbf{E}fficiently) extends \ACRE to certify robustness of datasets with a mix of continuous and categorical features.
It uses dynamic programming in conjunction with a fine-grained linear-algebraic analysis of the contribution of categorical features.
\OHARE takes as input a dataset $X \in \R^{n \times (d+m)}$ (where the last $m$ features represent a one-hot encoding) and a direction $e \in \R^d$, and runs in time $O\Paren{n^2 (d+m) + n^2 m \log(n)}$.

The most important property of both \ACRE and \OHARE is that the bounds they produce are \emph{valid} regardless of any assumptions on the dataset:

\begin{theorem}[Correctness]
    \label{thm:valid_bound}    
    Given $e,X$, and $Y$, \ACRE and \OHARE output lists of upper/lower bounds $U, L \in \R^n$ s.t.
    \[
    \forall k \in \brac{n} \;\;\;\;\; L_k \leq \Delta_k(e) = \max_{\substack{S \subseteq [n] \\ \abs{S} = n - k}} \IP{\beta - \beta_S}{e} \leq U_k
    \]
\end{theorem}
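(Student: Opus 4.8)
\emph{Proof proposal.}\
The two directions behave very differently, and I would dispatch the \textbf{lower bound} first, since it is essentially bookkeeping. Over the course of their execution both \ACRE and \OHARE produce, for every $k$, an explicit set $S_k \subseteq [n]$ with $\abs{S_k} = n-k$ (for instance the set found by the greedy/local-search pass, or the $\argmax$ of whatever relaxation drives the upper bound), and they report $L_k = \IP{\beta - \beta_{S_k}}{e}$ (or the largest such value over a family of candidate sets). Since $\beta_{S_k}$ is then a genuine OLS solution on $n-k$ samples, $\IP{\beta - \beta_{S_k}}{e} \le \Delta_k(e)$ holds by the very definition of $\Delta_k$, with no distributional hypotheses on $X,Y$. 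So the only thing to verify is the loop invariant that each recorded set really has the claimed cardinality and that $X_{S_k}$ has full column rank (so $\beta_{S_k}$ is defined); both follow from how the sets are built.

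The \textbf{upper bound} is the real content. My starting point is the exact leave-$k$-out identity: writing $R = [n]\setminus S$ for the removed set and $r_R = Y_R - X_R\beta$ for the residuals on $R$, whenever $X_S$ has full rank one has
\[
  \beta - \beta_S \;=\; \Paren{X^\top X}^{-1} X_R^\top \Paren{I - H_R}^{-1} r_R , \qquad H_R \defeq X_R \Paren{X^\top X}^{-1} X_R^\top ,
\]
where $H_R$ is the principal submatrix of the hat matrix on $R$. Setting $v \defeq \Paren{X^\top X}^{-1} e$ this rewrites Problem~\ref{prob:robustness-auditing} as $\Delta_k(e) = \max_{\abs{R}=k} v^\top X_R^\top \Paren{I - H_R}^{-1} r_R$. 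I would prove correctness of \ACRE by exhibiting $U_k$ as a sound relaxation of this maximum. Expanding $\Paren{I-H_R}^{-1} = I + H_R + H_R^2 + \dotsb$ peels off the first-order ``AMIP'' term $\sum_{i\in R}\IP{v}{X_i}\,r_i$, which — crucially — is by itself \emph{not} an upper bound (Figure~\ref{fig:amip_fails}), from a Neumann tail that must be controlled uniformly over all $\abs{R}=k$. The plan is: (i) bound the tail via $\Paren{I-H_R}^{-1}\preceq (1-\lambda_{\max}(H_R))^{-1} I$, with $\lambda_{\max}(H_R)$ in turn upper-bounded by a quantity depending only on $k$ and on per-sample leverage scores — this is where the \RTI routine replaces a crude union bound by a tighter but still valid estimate; (ii) collapse the result into a separable form ``(sum of $k$ per-sample scores) $+$ (a correction depending only on $k$)''; and (iii) conclude that sorting the scores, taking the top $k$, and adding the correction over-estimates the max (with $U_k = \infty$ vacuously valid if $X_S$ is actually rank-deficient for some admissible $S$). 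The main obstacle is carrying out (i)–(ii) \emph{with no conditioning assumption}: one must show the algorithm's second-order correction dominates the \emph{entire} Neumann tail even when $\lambda_{\max}(H_R)$ approaches $1$, i.e.\ when $X_S$ is nearly rank-deficient.

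For \OHARE the extra difficulty is precisely those near-singular configurations, which are unavoidable once categorical (one-hot) columns are present: deleting all samples in some category drives $\lambda_{\max}(H_R)\to 1$. I would handle this by conditioning on the integer vector $c\in\Z_{\ge 0}^m$ recording how many samples are removed from each category. Because distinct one-hot columns are supported on disjoint subsets of samples, the categorical block of $X^\top X$ changes only through these counts, so for fixed $c$ the worst-case shift decomposes as an \ACRE{} analysis applied to the Schur complement of the categorical block (the ``effective'' continuous design after projecting out the surviving indicators) plus an explicitly computable term for the deleted categorical mass; when a category is emptied entirely, $\beta_S$ is interpreted via the induced lower-dimensional regression and a limiting argument shows the bound persists. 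A dynamic program then maximizes the per-configuration bound over all feasible $c$ with $\sum_j c_j \le k$, and $U_k$ is this maximum. The crux here is a \emph{monotonicity} statement — that handing the adversary a singularity never helps beyond what the DP already charges for it, so that the per-$c$ bounds are jointly valid and their DP-maximum still dominates $\Delta_k(e)$ — which requires carefully tracking how the Schur complement and the projection of $e$ onto the vanishing categorical subspace behave as categories drop out.
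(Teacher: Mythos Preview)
Your framing of the two directions as ``very different'' is where the proposal goes astray: in \ACRE (and \OHARE) the upper and lower bounds are produced \emph{symmetrically} from the same decomposition, and $L_k$ is \emph{not} the shift of an explicit witness set. After normalizing so that $X^\top X=I$, the paper writes $\langle e,\beta-\beta_S\rangle = \text{first order}(T) + \text{high order}(T)$ (eq.~\eqref{eq:regression_change2}) and bounds $|\text{high order}(T)|\le b_k$ uniformly over all $|T|=k$ via the Cauchy--Schwarz split of eq.~\eqref{eq:cs_relaxation}: each of the three factors $\max\lambda(\Sigma_S^{-1})$, $\|\Sigma_T e\|$, and $\|\sum_{i\in T}R_iX_i\|$ is reduced to an MSN instance (Problem~\ref{def:problem1}) and handed to \RTI. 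With $A_k=\max_{|T|=k}\text{first order}(T)$ computed exactly by sorting, the algorithm outputs $U_k,L_k=A_k\pm b_k$. The lower bound is valid because the first-order maximizer $T_A$ satisfies $\langle e,\beta-\beta_{[n]\setminus T_A}\rangle\ge A_k-b_k$; no $\beta_{S_k}$ is ever recomputed, and no rank check on $X_{S_k}$ is needed. So your ``bookkeeping'' argument would certify a different lower bound than the one \ACRE actually reports.

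Your upper-bound plan has the right shape --- the hat-matrix/Neumann formulation is Woodbury-equivalent to the paper's $\Sigma_S^{-1}$ expansion --- but it leaves unspecified the step that actually drives the algorithm: the three-way Cauchy--Schwarz reduction to three separate MSN problems on the Gram matrices $G_{X\otimes X},G_{XR},G_{X\langle X,e\rangle}$ is precisely your step~(ii), ``collapse into separable form,'' and it is the substance of the correctness proof rather than a detail to fill in. (The ``obstacle'' you flag about $\lambda_{\max}\to 1$ is a non-issue for \emph{correctness}: once the MSN bound on $G_{X\otimes X}$ reaches~$1$, $b_k=\infty$ and both bounds are vacuously valid.) For \OHARE your Schur-complement-plus-DP plan is essentially what the paper does --- the reaveraging of Claim~\ref{clm:new_regression} \emph{is} the Schur complement, and Algorithms~\ref{alg:dynamic_programming_1d}/\ref{alg:dynamic_programming_2d} implement the DP over per-category removal counts --- but the ``monotonicity'' you worry about is handled not by a separate lemma but by the explicit identity~\eqref{eq:Sigma_S_one_hot}, which shows $I-\hat\Sigma_S$ decomposes into $\Sigma_T$ plus per-bucket rank-one corrections that remain bounded even when a category is emptied and $\Sigma_S$ itself becomes singular. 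Here too the lower bound comes from the DP on $\text{bound}_j^-$ minus the high-order envelope, not from evaluating a witness.
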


The proof of Theorem~\ref{thm:valid_bound} is given in Section~\ref{sec:certifying_robustness} for \ACRE and Section~\ref{sec:ohare} for \OHARE.
On its own, Theorem~\ref{thm:valid_bound} says little about the usefulness of the upper and lower bounds $L_k, U_k$.
We provide two types of evidence that the bounds produced by \ACRE and \OHARE are interesting.
First, we demonstrate on real-world econometric datasets studied in prior work on robustness auditing that \ACRE and \OHARE produce significantly better lower bounds bounds than were previously known (see Table~
\ref{tab:your_label_here} and Figure~\ref{fig:experiments}).
Second, we prove that \ACRE and \OHARE both produce nearly-matching upper and lower bounds under relatively mild distributional assumptions on $X$ and $Y$, for the interesting range of $k$.

\subparagraph{Interesting Values of \texorpdfstring{$k$}{k}}
For a direction $e$, we emphasize two values of $k$: $\ksigma(e)$, the number of removals needed to shift $\iprod{\beta, e}$ outside its $95\%$ confidence interval, and $\ksign(e) = k_{\iprod{\beta, e}}$, the number to flip the sign of $\iprod{\beta, e}$.
In the parameter estimation setting, $\iprod{\beta,e}$ and $2\sigma$ are often of similar magnitude, because rejecting a null hypothesis often involves placing the estimator $\iprod{\beta,e}$ in a confidence interval which does not contain $0$.

\paragraph{Experimental Results}
For comparison with prior works, we focus our experiments on $\ksign(e)$.
We provide lower bounds on $\ksign(e)$ for benchmark datasets drawn from important studies in economics and social sciences, first investigated in the context of robustness auditing by \cite{broderick2020automatic}.

We study real-world datasets corresponding to each of the parameter estimation use-cases listed above: \emph{Nightlights} \cite{martinez2022much} (correlation controlled for additional features), \emph{Cash Transfer} \cite{angelucci2009indirect} (randomized control trial), and the \emph{Oregon Health Insurance Experiment (OHIE)} \cite{finkelstein2012oregon} (IV regression), with 14 distinct linear or instrumental-variables regressions drawn from the corresponding papers, all of which appeared in top econometrics journals.
In many cases, our lower bounds match known upper bounds up to a factor of $2$ or $3$, where no nontrivial lower bounds were previously known; see Table~\ref{tab:your_label_here} and Figure~\ref{fig:experiments}.

To illustrate, consider Nightlights: \cite{martinez2022much} studies whether democractic countries publish more accurate economic growth estimates than dictatorships, after controlling for variables like regional stability and wealth in natural resources.
Martinez formulates this as a linear regression with dimension $d = 209$, over $200$ of which correspond to one-hot encoded categorical variables.
The sign and statistical significance of a single coordinate of $\beta$ govern the conclusion of the study.
Algorithms from prior work find a subset of $2.8\%$ of the samples which can be removed to reverse \cite{martinez2022much}'s main conclusion, but prior algorithms could not rule out the existence of much smaller subsets.
Our algorithm \OHARE provides a certificate that no subset of $\leq 0.7\%$ of the samples would reverse the study's conclusion.
See Table~\ref{tab:your_label_here} for our results on Nightlights and the other 13 regressions we audit, and Section~\ref{sec:results} for detailed discussion of our experiments.

\begin{figure}[ht]
\centering
  \begin{subfigure}[b]{0.45\textwidth}
    \centering
    \includegraphics[width=\textwidth]{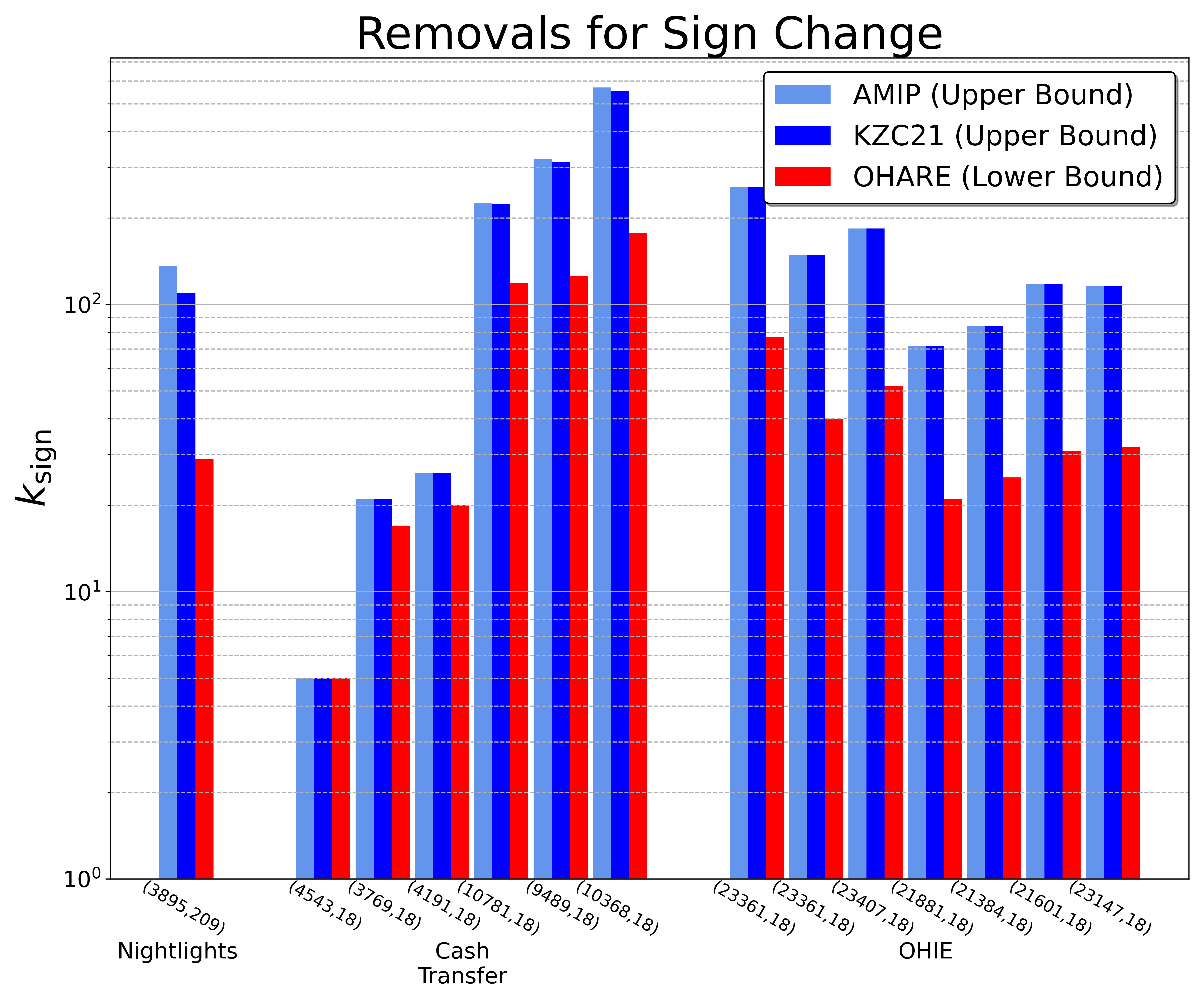} 
    \caption{A comparison of \OHARE and known upper bounds on benchmark datasets.}
    \label{fig:barplot1}
  \end{subfigure}
  \hfill
  \begin{subfigure}[b]{0.45\textwidth}
    \centering
    \includegraphics[width=\textwidth]{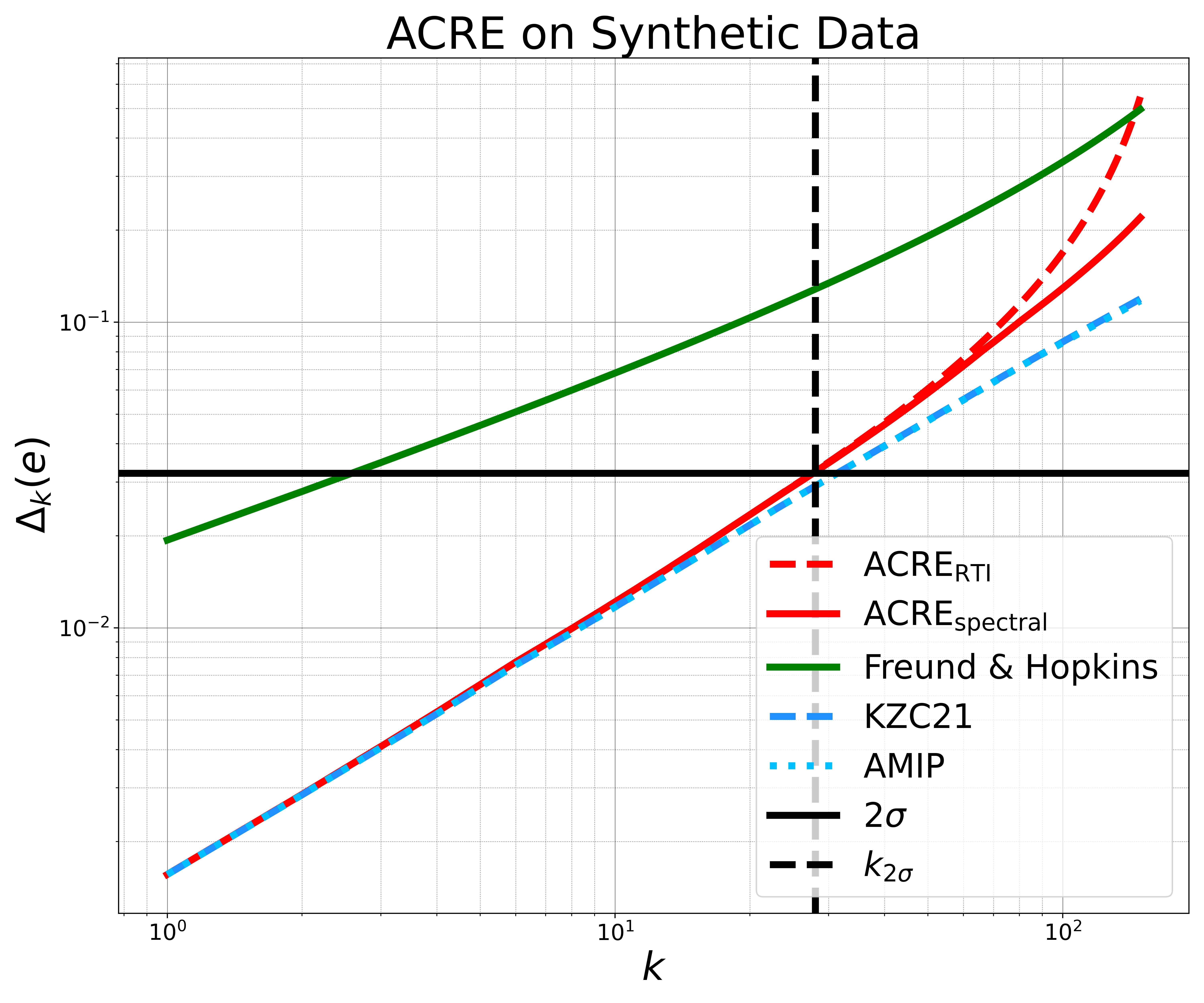} 
    \caption{A comparison of \ACRE and known bounds on a synthetic dataset (AMIP and KZC coincide).}
    \label{fig:barplot2}
  \end{subfigure}
  \caption{
  A comparison of our \ACRE and \OHARE algorithms with previous techniques.
  In Figure~\ref{fig:barplot1}, we plot the number of removals required to flip the sign of several linear regressions from landmark econometrics studies~\cite{martinez2022much,angelucci2009indirect,finkelstein2012oregon}.
  Each of these studies contains a number of linear regression central to their analyses, which include several applications of linear regression, such as estimating correlation controlled for additional covariates, treatment effects, and instrumental variables regression.
  For each regression, we run AMIP~\cite{broderick2020automatic} and KZC~\cite{kuschnig2021hidden} to obtain base-line upper bounds on $\ksign$ and compare the results to lower bounds produced by \OHARE.
  We list the number of samples and the dimension of each regression below the plot.
  In Figure~\ref{fig:barplot2}, we consider a synthetic dataset comprised of $n=4000$ samples in dimension $d=50$, and plot bounds on the removal effects $\Delta_k(e)$.
  In this plot, the roles are reversed, with AMIP and KZC representing lower bounds on the removal effects, while our \ACRE algorithm gives the first practical upper bound.
  We compare the bounds produced by ACRE to the previous state-of-the-art for efficiently computable upper bounds~\cite{freund2023towards}.
  Moreover, to ground the scale of the plot, consider the different bounds on $\ksigma$ (the number of removals required to shift the regression results outside of their $95\%$ confidence intervals).
  The ACRE algorithm has two possible backends -- spectral or RTI (see Section~\ref{sec:prob1_algs}).
  RTI is more efficient and performs better in practice, while the spectral analysis which uses ideas from Freund and Hopkins' algorithm has a somewhat slower runtime ($\wt{O}\Paren{n^3}$) and offers a logarithmic advantage in some synthetic datasets.
  The bound produced by ACRE is almost tight on this range of values of $k$, while Freund and Hopkins' algorithm yields a trivial bound.
  \label{fig:experiments}
  }
\end{figure}

\begin{table}[!ht]
\centering
\begin{tabular}{lcccccccc}
\hline
Paper & Regression & n & d & AMIP & KZC21 & OHARE & Runtime & Memory \\
\hline
Nightlights &  & 3895 & 209 & 136 & 110 & \textbf{29} & 25 s & 3.81 GiB \\
\hline
Cash
Transfer & (np, 8) & 4543 & 18 & 5 & 5 & \textbf{5} & 6 s & 1.78 GiB \\
 & (np, 9) & 3769 & 18 & 21 & 21 & \textbf{17} & 4 s & 2.80 GiB \\
 & (np, 10) & 4191 & 18 & 26 & 26 & \textbf{20} & 5 s & 1.73 GiB \\
 & (p, 8) & 10781 & 18 & 225 & 224 & \textbf{119} & 50 s & 11.20 GiB \\
 & (p, 9) & 9489 & 18 & 321 & 314 & \textbf{126} & 24 s & 5.37 GiB \\
 & (p, 10) & 10368 & 18 & 570 & 555 & \textbf{178} & 29 s & 6.59 GiB \\
\hline
OHIE & Health genflip & 23361 & 18 & 257 & 257 & \textbf{77} & 9 m 5 s & 34.19 GiB \\
 & Health notpoor & 23361 & 18 & 149 & 149 & \textbf{40} & 9 m 10 s & 43.50 GiB \\
 & Health change flip & 23407 & 18 & 184 & 184 & \textbf{52} & 9 m 10 s & 43.55 GiB \\
 & Not bad days total & 21881 & 18 & 72 & 72 & \textbf{21} & 7 m 39 s & 38.64 GiB \\
 & Not bad days physical & 21384 & 18 & 84 & 84 & \textbf{25} & 7 m 17 s & 45.77 GiB \\
 & Not bad days mental & 21601 & 18 & 118 & 118 & \textbf{31} & 7 m 20 s & 46.21 GiB \\
 & Nodep Screen & 23147 & 18 & 116 & 116 & \textbf{32} & 8 m 57 s & 51.53 GiB \\
\hline
\end{tabular}
\caption{All experiments were run on a single core of an AMD EPYC 9654 96-Core Processor. For detailed notes on the data analysis and experimental procedures, see Section~\ref{sec:results}.
}
\label{tab:your_label_here}
\end{table}

An implementation of our algorithms is available via \href{https://github.com/ittai-rubinstein/ols_robustness}{Github}.
Our implementation is efficient enough to run our algorithms with $n$ up to $3\times 10^4$ and $d$ up to $10^3$ on a single CPU core with $< 64GB$ of RAM.
The main bottleneck in practice is storing $3$ floating-point matrices of size $n\times n$ each.

\paragraph{Theory for \ACRE}
Without some assumptions on $X,Y$, finding matching upper and lower bounds on $\Delta_k(e)$ is computationally intractable, under standard complexity assumptions \cite{moitra2022provably}.
So, we analyze tightness of the bounds produced by \ACRE and \OHARE under some relatively mild distributional assumptions on $X$ and $Y$.
Our main assumption for \ACRE will be that the samples $X_1,\ldots,X_n$ are drawn iid from a well-behaved distribution:

\begin{definition}[Well-Behaved Distribution]
    \label{def:very_well_behaved_dist}
    We say that a mean-zero distribution $\mcX$ on $\R^d$ with covariance $\Sigma = \Expect{X \sim \mcX}{X X^\intercal}$ is \emph{well-behaved} if it has exponentially decaying tails in the sense that
    \[
        \exists C > 0 \;\; \forall v \in \Sbb^{d-1}, t > 0 \;\;\; \Prob{X \sim \mcX}{\abs{\IP{v}{\Sigma^{-1/2} X}} > t} \leq \exp \Paren{- \Omega\Paren{t^C}}\,.
    \]
\end{definition}
Any subgaussian or subexponential distribution is well-behaved (with $C = 2,1$, respectively), and distributions with even heavier tails than subexponential can still be well-behaved.

We will assume that the labels $Y_1,\ldots,Y_n$ are drawn according to a ground-truth linear model specified by an unknown vector $\beta_{\textup{gt}} \in \R^d$.
Concretely, we assume $Y_i = \iprod{\beta_{\textup{gt}},X_i} + \epsilon_i$, where $\epsilon_1,\ldots,\epsilon_n$ are iid from $\mathcal{N}(0,\sigma^2)$ for an unknown variance parameter $\sigma^2 > 0$.

\begin{theorem}[Tightness of \ACRE, Proof in Section~\ref{sec:analysis}]
\label{thm:tightness_informal}
Let $n,d \in \N, \;e,\beta_{\textup{gt}} \in \R^d, \;X, Y$, and $\sigma > 0$ be as above.
There exists $k_{\textup{threshold}} = \wt{\Theta}\left(\min \left\{ \frac{n}{\sqrt{d}}, \frac{n^2}{d^2} \right\}\right)$ such that for all $k \leq k_{\textup{threshold}}$, the bounds $L_k, U_k$ produced by \ACRE satisfy 
\begin{equation}
\label{eq:intro-thm}
\frac{U_k}{L_k} =  1 + \wt{O} \Paren{ \frac{d + k \sqrt{d}}{n}} \, .
\end{equation}
\end{theorem}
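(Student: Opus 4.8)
The plan is to reduce the combinatorial quantity $\Delta_k(e)$ — and both bounds output by \ACRE — to a common \emph{first-order} functional, and then control the slack. Write $M = X^\intercal X$, let $H = X M^{-1} X^\intercal$ be the hat matrix, and for each $i$ set $a_i = \IP{M^{-1}e}{X_i}$ (leverage-weighted alignment of $X_i$ with $e$) and $r_i = Y_i - \IP{\beta}{X_i}$ (the $i$-th residual). A Sherman--Morrison--Woodbury computation gives, for any removed set $T$ with $\abs{T} = k$,
\[
    \IP{\beta - \beta_{[n]\setminus T}}{e} \;=\; a_T^\intercal (I - H_T)^{-1} r_T\,,
\]
where $a_T, r_T \in \R^k$ are the coordinate restrictions and $H_T$ is the principal submatrix of $H$ indexed by $T$. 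Hence $\Delta_k(e) = \max_{\abs{T}=k} a_T^\intercal (I - H_T)^{-1} r_T$, whose first-order (AMIP) truncation is $F_k \defeq \max_{\abs{T}=k} \sum_{i\in T} a_i r_i$, the sum of the $k$ largest first-order influence scores. The goal is to show that for $k \le \kthresh$ both \ACRE outputs satisfy $L_k, U_k = \Paren{1 \pm \wt O\Paren{(d+k\sqrt d)/n}} F_k$; dividing, and using $L_k \le \Delta_k(e) \le U_k$, then yields \eqref{eq:intro-thm}.

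The reduction to $F_k$ uses the Neumann expansion $(I - H_T)^{-1} = I + H_T(I - H_T)^{-1}$, which gives, for every $\abs{T}=k$,
\[
    \abs{a_T^\intercal(I - H_T)^{-1} r_T - \textstyle\sum_{i\in T} a_i r_i} \;\le\; \tfrac{\rho_k}{1-\rho_k}\,\Norm{a_T}\Norm{r_T}\,, \qquad \rho_k \defeq \max_{\abs{T}=k}\Norm{H_T}\,,
\]
so the error relative to $F_k$ is governed by $\rho_k$ and by the ratio $\max_{\abs{T}=k}\Norm{a_T}\Norm{r_T}/F_k$. On the lower side, $L_k$ is (by construction) at least the exact removal effect of the explicit greedy/AMIP maximizer $T^\star$ of $F_k$, so $L_k \ge F_k - \tfrac{\rho_k}{1-\rho_k}\Norm{a_{T^\star}}\Norm{r_{T^\star}}$. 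On the upper side, $U_k$ — whether produced by the spectral or the refined-triangle-inequality backend — dominates each $a_T^\intercal(I-H_T)^{-1}r_T$ by $\sum_{i\in T}a_i r_i$ plus a term controlled by $\tfrac{\rho_k}{1-\rho_k}\Norm{a_T}\Norm{r_T}$, so $U_k \le F_k + \tfrac{\rho_k}{1-\rho_k}\max_{\abs{T}=k}\Norm{a_T}\Norm{r_T}$. Thus everything comes down to two concentration estimates.

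First, $\rho_k = \wt O((d+k\sqrt d)/n) = o(1)$ uniformly for $k \le \kthresh$. I would obtain this from covariance concentration ($M = (1+\wt O(\sqrt{d/n}))\,n\Sigma$ spectrally), a bound on the leverage scores ($\max_i H_{ii} = \wt O(d/n)$) and on the off-diagonal interactions ($\sum_{i\ne j\in T} H_{ij}^2$), combined with an $\epsilon$-net over $\Sbb^{d-1}$ and a union bound over the $\binom nk$ choices of $T$; the two contributions — from the squared leverage scores and from the off-diagonal blocks of $H$ — are exactly what force $\kthresh = \wt\Theta(\min\{n/\sqrt d,\, n^2/d^2\})$, the largest $k$ for which $\rho_k = o(1)$. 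Second, $\max_{\abs{T}=k}\Norm{a_T}\Norm{r_T} = \wt O(1)\cdot F_k$: here I would use that $a_i = \tfrac1n\IP{\Sigma^{-1/2}e}{\Sigma^{-1/2}X_i}(1+o(1))$ and $r_i = \epsilon_i - \sum_j H_{ij}\epsilon_j$ are (nearly) independent, mean-zero, light-tailed scalars, so that the sum of the top $k$ squares of the $a_i$ (resp.\ of the $r_i$) exceeds $\sum_{i\in T^\star} a_i r_i$ by only a $\polylog$ factor — note that the errors in the approximation $a_i \approx \tfrac1n\IP{\Sigma^{-1/2}e}{\Sigma^{-1/2}X_i}$ are common to $U_k$ and $L_k$ and hence cancel in the ratio. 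Combining, $U_k/L_k \le (1 + \wt O((d+k\sqrt d)/n))/(1 - \wt O((d+k\sqrt d)/n)) = 1 + \wt O((d+k\sqrt d)/n)$.

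The main obstacle is the \emph{uniformity} in the last step: controlling $\rho_k = \max_{\abs{T}=k}\Norm{H_T}$ and the ratio $\max_{\abs{T}=k}\Norm{a_T}\Norm{r_T}/F_k$ simultaneously over the exponentially many size-$k$ subsets (and over all $k \le \kthresh$). This requires matrix concentration for the off-diagonal blocks of $H$, an $\epsilon$-net argument for the operator norm, and careful control of the order statistics of the light-tailed products $a_i r_i$, with $\polylog$ factors tracked precisely enough that the exceptional probability can absorb the $\binom nk$ union bound — and it is precisely the point at which these estimates degrade that pins down $\kthresh = \wt\Theta(\min\{n/\sqrt d,\, n^2/d^2\})$ and the $\wt O((d+k\sqrt d)/n)$ form of the gap.
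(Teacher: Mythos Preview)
Your decomposition of $\Delta_k(e)$ via $a_T^\intercal(I-H_T)^{-1}r_T$ and the Neumann remainder is correct, but it proves bounds on the \emph{wrong} quantities. The theorem is about the ratio $U_k/L_k$ where $U_k,L_k$ are the \emph{specific numbers output by \ACRE}, namely
\[
U_k,L_k \;=\; F_k \;\pm\; \frac{M_{XR}\cdot M_{XZ}}{1-M_{X\otimes X}}\,,
\]
with $M_{X\otimes X},M_{XR},M_{XZ}$ the outputs of the \RTI subroutine on the three Gram matrices $G_{X\otimes X},G_{XR},G_{X\IP{X}{e}}$. Your $\rho_k$ and $\Norm{a_T}\Norm{r_T}$ control $\Delta_k(e)$ but say nothing about these RTI outputs: the $M$'s are \emph{upper} bounds on MSN problems, potentially much larger than the true optima, so showing $\rho_k$ is small does not show $M_{X\otimes X}$ is small, and hence does not show $L_k$ is close to $F_k$. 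Relatedly, your assertion that ``$L_k$ is (by construction) at least the exact removal effect of $T^\star$'' is false --- \ACRE never evaluates any removal effect exactly; $L_k$ is simply $F_k$ minus the RTI-based correction $b_k$, and bounding $L_k$ from below means bounding $b_k$ from above, i.e.\ bounding the RTI outputs themselves.

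The paper's route is rather different and sidesteps the $\binom{n}{k}$ union bound entirely. It observes that for any Gram matrix $G$, the RTI output satisfies the deterministic bound $\text{RTI}_k(G)^2 \le k\cdot(\text{max diagonal entry}) + k^2\cdot(\text{max off-diagonal entry})$. So to upper-bound $M_{X\otimes X},M_{XR},M_{XZ}$ one only needs to bound the largest diagonal and off-diagonal entries of the three Gram matrices, which amounts to showing $\max_i X_i^\intercal\Sigma^{-1}X_i=\wt O(d/n)$, $\max_{i\ne j}\abs{X_i^\intercal\Sigma^{-1}X_j}=\wt O(\sqrt d/n)$, $\max_i R_i^2=\wt O(1)$, and $\max_i(e^\intercal\Sigma^{-1}X_i)^2=\wt O(1/n)$ --- concentration over $O(n^2)$ scalar quantities, not over subsets. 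Plugging these into the RTI bound gives exactly the $(d+k\sqrt d)/n$ shape and the threshold $\kthresh=\wt\Theta(\min\{n/\sqrt d,n^2/d^2\})$ (the point where $M_{X\otimes X}$ reaches a constant). The lower bound on $F_k$ needed for the ratio comes from a separate anticoncentration step showing $\wt\Omega(n)$ samples have influence score $\Omega(1/n)$. Your $\epsilon$-net over $\Sbb^{d-1}$ plus union bound over $\binom nk$ subsets is neither needed nor, for the relevant range of $k$, obviously absorbable.
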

Note that this theorem holds even when the covariance $\Sigma$ of $\mathcal{X}$ is unknown \emph{a priori}.

\subparagraph{Relationship of $k$, $d$, and $n$}
Theorem~\ref{thm:tightness_informal} guarantees that \ACRE gives nearly optimal bounds so long as $n \gg d$ and the number of removals is at most $k \leq \wt{\Theta}(\min \{ n/\sqrt{d}, n^2/d^2 \})$; note that in this regime the RHS of \eqref{eq:intro-thm} is $1 + o(1)$.
First of all, OLS is only appropriate when $n \geq d$, and all of the datasets on which we perform experiments have $n$ well in excess of $d$.
The range of interesting values of $k$ is more subtle -- to illustrate, consider $\ksigma(e)$.
Assuming reasonably well-behaved samples, we expect $\ksigma(e) = O(\sqrt{n})$, so as long as we also have $n \gg d^{4/3}$, \ACRE gives nearly tight bounds on $\ksigma(e)$.
This represents a mild multiplicative overhead of $d^{1/3}$ samples compared to the $n \geq d$ required anyway to use OLS regression.

\paragraph{Theory for \OHARE}
Even though they are mild, the assumptions for \ACRE are not satisfied by the real-world datasets used in our experiments, because of the presence of one-hot encoded categorical features, which is the reason we design \OHARE in the first place.
We also prove a tightness theorem for \OHARE -- we now turn to the assumptions on $X,Y$ which underlie it.

We study regression in $d+m$ dimensions -- that is, $X \in \R^{n \times (d+m)}$.
The block of $m$ coordinates will be a one-hot encoding of a categorical variable, while the block of $d$ coordinates will act as in the \ACRE setting.
Formally, let $n \in \N$ and let $B_1,\ldots,B_m$ partition $[n]$ into $m$ \emph{buckets}. 
Let $e,\beta_{\textup{gt}} \in \R^{d+m}$, and $\sigma > 0$.
For $i \in [n]$, let $X_1',\ldots,X_n'$ be iid draws from a $d$-dimensional well-behaved distribution and let $X_i$ be $X_i'$ concatenated with the $j$-th indicator vector, where $j = j(i)$ is such that $i \in B_j$.
Finally, let $Y_i = \iprod{\beta_{\textup{gt}}, X_i} + \epsilon_i$ where $\epsilon_1,\ldots,\epsilon_n \sim \mathcal{N}(0,\sigma^2)$.

\begin{theorem}[Tightness of \OHARE, informal, see Section~\ref{sec:ohare_tightness}]
\label{thm:ohare_tightness_informal}
    Let $X,Y$ be as described above.
    For any arbitrarily small $\varepsilon > 0$, if all buckets have sizes $n^{\varepsilon} \sqrt{d} < |B_j| < 0.49n$, and $n \geq d^{5/4 + o(1)}$, then with high probability, for all $k < \kthresh$, where
    $\kthresh = \wt{\Theta}\left(\min \left\{ \frac{n}{\sqrt{d}}, \frac{n^2}{d^2}, n^{1 - \varepsilon} \right\}\right)$, the upper and lower bounds produced by \OHARE satisfy
    \[
        \frac{U_k}{L_k} = 1 + O\Paren{\frac{1}{\sqrt{\log n}}} \, .
    \]

\end{theorem}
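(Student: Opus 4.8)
The plan is to reduce everything to a bucket-wise relaxation of the optimization in \eqref{eq:intro-1} and then sandwich that relaxation between \OHARE's two outputs. Since \OHARE always returns $L_k \le \Delta_k(e) \le U_k$ (Theorem~\ref{thm:valid_bound}), it suffices to exhibit, on an event of probability $1 - n^{-\omega(1)}$ over the draw of $X,Y$, a reference value $\widehat\Delta_k$ for which simultaneously $U_k \le (1 + O(1/\sqrt{\log n}))\widehat\Delta_k$ and $L_k \ge (1 - O(1/\sqrt{\log n}))\widehat\Delta_k$; the theorem then follows by transitivity. The reference value is $\widehat\Delta_k := \max_{\sum_j s_j \le k}\sum_j g_j(s_j)$, where we decompose a candidate removal set $S$ by its intersections $S \cap B_j$, write $s_j = |S \cap B_j|$, and let $g_j(s_j)$ be the largest leading-order shift in $\iprod{\beta_{[n]}-\beta_S,e}$ attributable to deleting $s_j$ samples from bucket $B_j$. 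This is exactly the objective \OHARE's dynamic program optimizes -- its ``fine-grained linear-algebraic analysis'' being precisely the exact evaluation of the $g_j$'s, the singular regime included -- so the work splits into (a) showing $\Delta_k(e) = (1+o(1))\widehat\Delta_k$, i.e.\ the neglected interaction terms are lower order, and (b) controlling the slack between \OHARE's certified over/under-estimates of the $g_j$'s and their true values.

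First I would set up the good event. Using the well-behaved tail bound (Definition~\ref{def:very_well_behaved_dist}), $n \ge d^{5/4 + o(1)}$, and the bucket-size lower bound $|B_j| > n^{\varepsilon}\sqrt d$, standard matrix-concentration and subgaussian maximal inequalities give, with probability $1 - n^{-\omega(1)}$: $\|\Sigma^{-1/2}(\beta_{[n]} - \betagt)\| = \wt O(\sqrt{d/n})$, so the residuals $r_i = Y_i - \iprod{\beta_{[n]},X_i}$ agree with $\epsilon_i$ up to a $1+o(1)$ factor; the Gram matrix of the continuous block concentrates (globally, and restricted to the complement of any single bucket) to its expectation up to relative error $\wt O(\sqrt{d/n})$; the continuous-block leverage scores are $\wt O(d/n)$; and the one-hot Gram block is $\diag(|B_1|,\dots,|B_m|)$. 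Applying Sherman--Morrison/Woodbury to $X_{-S}^\intercal X_{-S} = X^\intercal X - X_S^\intercal X_S$ then yields an exact expansion of $\iprod{\beta_{[n]} - \beta_S, e}$ as a sum of per-bucket terms, each depending only on $S \cap B_j$, plus cross-bucket and higher-order corrections; the per-bucket term equals, to leading order, $\frac{1}{|B_j| - s_j}\sum_{i \in S \cap B_j} r_i w_i$ for explicit weights $w_i$ built from $\Sigma^{-1}$, $e$, and the within-bucket means, with the denominator $|B_j| - s_j$ being exactly what degenerates as $s_j \to |B_j|$.

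Next I would bound the interactions and match the two sides. The continuous-block interaction terms are controlled exactly as in the \ACRE analysis (Theorem~\ref{thm:tightness_informal}): they contribute a factor $1 + \wt O((d + k\sqrt d)/n)$, which is $1 + o(1)$ once $k \le \kthresh$ forces $k$ below $\min\{n/\sqrt d, n^2/d^2\}$ up to polylogarithmic factors; the terms coupling the one-hot block to the continuous block and coupling distinct near-empty buckets are $o(1)\cdot\widehat\Delta_k$ under the extra constraints $k < n^{1-\varepsilon}$ and $n \ge d^{5/4+o(1)}$ -- which is why these three quantities all enter $\kthresh$. For the upper bound, \OHARE replaces each $g_j$ by a certified overestimate $\ol{g}_j$; the concentration above gives $\ol{g}_j(s_j) \le (1 + O(1/\sqrt{\log n}))g_j(s_j)$ uniformly in $s_j$ (including $s_j = |B_j|$), hence $U_k \le (1 + O(1/\sqrt{\log n}))\widehat\Delta_k$. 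For the lower bound, \OHARE's $L_k$ is the exact removal effect of the set realizing the optimal DP allocation together with the within-bucket top-$s_j$ choices; the sums $\sum_{i \in S \cap B_j} r_i w_i$ concentrate around their ideal order statistics up to a $1 + O(\sqrt{\log n / s_j})$ factor, and since the optimal allocation never places mass $s_j = o(\log n)$ on a bucket contributing a constant fraction of $\widehat\Delta_k$ (such mass could be reallocated to strictly improve the objective), we get $L_k \ge (1 - O(1/\sqrt{\log n}))\widehat\Delta_k$.

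The hard part will be the uniform-in-$s_j$ control of $g_j$ through the singular threshold $s_j = |B_j|$ -- making good on the subtitle. Unlike the continuous block, whose linearization is accurate to $1 + o(1)$, a (nearly) emptied bucket makes $\iprod{\beta_{[n]} - \beta_S, e}$ dominated by the $1/(|B_j|-s_j)$ factor and by second-order geometry rather than by first-order influences, so the naive union bound over the $\binom{|B_j|}{s_j}$ within-bucket removal patterns costs $\exp(\Theta(s_j\log(|B_j|/s_j)))$ and yields only relative control $1 + O(\sqrt{\log n / s_j})$. Reconciling this with the dynamic program's discretization of the singular-direction parameter, and ruling out allocations that would exploit a sub-logarithmic $s_j$, is exactly what forces the final rate to be $1 + O(1/\sqrt{\log n})$ rather than \ACRE's $1 + o(1)$, and is the technical heart of Section~\ref{sec:ohare_tightness}.
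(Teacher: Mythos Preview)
Your high-level scaffolding (define a bucketwise reference, sandwich $U_k$ and $L_k$ around it, reuse the \ACRE higher-order analysis for the continuous block) matches the paper. But the two ingredients that actually drive the $1/\sqrt{\log n}$ rate are misidentified, and your argument as written does not produce that rate.

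First, the leading per-bucket term is not $\frac{1}{|B_j|-s_j}\sum_{i\in T\cap B_j} r_i w_i$. The paper's decomposition (equation~\eqref{eq:first_order_one_hot}) has the dominant first-order piece equal to the ordinary AMIP sum $\sum_{i\in T} R_i\langle X_i,e\rangle$, with the $\tfrac{1}{|B_j|-s_j}$-weighted product appearing only as the \emph{correction} $c_j^{\pm}(k_j)$ coming from reaveraging. The reference value in the paper is therefore $\mathrm{AMIP}(k)=\max_{|T|=k}\sum_{i\in T}\alpha_i$, computed exactly by greedy selection; it is not something that must itself be approximated.

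Second, the $1/\sqrt{\log n}$ does not come from a union bound over removal patterns or from an argument about the optimal allocation avoiding $s_j=o(\log n)$. (Note that your own chain gives at best $1+O(\sqrt{\log n/s_j})$ with $s_j=\Omega(\log n)$, which is $1+O(1)$, not $1+O(1/\sqrt{\log n})$.) The paper's mechanism is a size comparison: the reaveraging correction satisfies $\eta_j(k_j)=O(1/n)$ when $k_j\le n_j/\polylog(n)$, and $O(\mathrm{polyloglog}(n)/n)$ otherwise (Claim~\ref{clm:first_order_ind}); meanwhile Gaussian anticoncentration guarantees at least $\wt\Omega(n^{1-\varepsilon})$ samples with influence $\alpha_i=\Omega(\sqrt{\log n}/n)$ (Claim~\ref{clm:influence_scores_ohare}), so $\mathrm{AMIP}(k)=\Omega(k\sqrt{\log n}/n)$. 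Dividing gives the rate. The technically hard part is not uniform-in-$s_j$ control through the singular threshold; it is proving the matrix-concentration lemmas (Lemma~\ref{lem:warmup_ohare3} and its consequences) that make the reaveraged regression ACRE-friendly under the weak hypotheses $|B_j|>n^\varepsilon\sqrt d$ and $n\ge d^{5/4+o(1)}$.
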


Careful analysis is needed to prove Theorem~\ref{thm:ohare_tightness_informal} under the relatively weak assumptions $|B_j| > n^\varepsilon  \sqrt{d}$ and $n \geq d^{5/4 + o(1)}$.
The stronger assumptions $|B_j| > d$ or $n \geq d^2$ would simplify the analysis.
But neither assumption would be valid for all of our real-world datasets.
Getting away with such weak assumptions ultimately requires us to put together a number of technical tools, including novel matrix concentration arguments (e.g. Lemma~\ref{lem:warmup_ohare3}).

Additionally, the error term $1 / \sqrt{\log n}$, which we believe is nearly tight, goes to zero slowly compared to the error term in Theorem~\ref{thm:tightness_informal} -- to capture this fine-grained behavior of $U_k/L_k$, our proof carefully exploits Gaussian anticoncentration.

\subsection{Prior Work and Future Directions}

\paragraph{Prior Work}
Robustness in linear regression is too vast to survey here, so we restrict attention to recent works in robustness auditing.
As previously discussed, our work is preceded by \cite{broderick2020automatic,kuschnig2021hidden,moitra2022provably,freund2023towards}, all studying robustness auditing for least-squares regression.
Unlike any of these prior works, our algorithms provide nontrivial lower bounds on $k_\theta$ in practice for datasets with tens/hundreds of dimensions.
All these algorithms and ours fit into the broader tradition of influence functions as a measure of robustness in regression, dating at least to \cite{cook1980characterizations}.

Our algorithms are partly inspired by recent developments in algorithmic robust statistics -- see \cite{diakonikolas2023algorithmic} and references therein.
While the main goal in robust statistics differs somewhat from robustness auditing, modern algorithms for robust regression typically contain subroutines for tasks very similar to robustness auditing.
But the subroutines in recent breakthroughs in robust regression \cite{klivans2018efficient,bakshi2021robust} are not practically implementable due to reliance on semidefinite programming.

\paragraph{Future Directions}
We propose several directions for future work:

\subparagraph{Beyond OLS} Given the prevalence of regression beyond ordinary least squares in machine learning, an important next step is to design algorithms to certify robustness of other regression methods which arise by minimizing a convex loss -- e.g. logistic regression and LASSO \cite{tibshirani1996regression}.

\subparagraph{Applications to Differential Privacy (DP)} DP \cite{dwork2006calibrating} is the gold-standard mathematically rigorous approach to protecting privacy of individuals represented in a dataset. 
Tighter certificates of robustness for regression have great potential to improve \emph{privacy-accuracy tradeoffs} in private regression \cite{dwork2009differential}; poor privacy-accuracy tradeoffs are a major roadblock to widespread adoption of private data analysis techniques.

\subparagraph{Interpretations of Robustness Certificates}
It is an appealing intuition that statistical conclusions which are robust to removing many samples should generalize better -- formalizations of the relationship between stability and generalization have been highly influential, e.g. \cite{bousquet2002stability}. 
Can rigorous interpretations of robustness certificates be formalized?
Can robustness certificates yield e.g. tighter empirical confidence intervals, or out-of-distribution generalization guarantees?

\subparagraph{Beyond Sample Removal}
Removing a small set of datapoints is just one of many potential ways to perturb a dataset.
Can we certify robustness of OLS or other regression algorithms to other types of dataset perturbation, e.g. $\ell_2$ or $\ell_\infty$-bounded perturbations of the feature vectors?

\subsection{Organization of this Paper}

In Section~\ref{sec:prelim} we will define some of the notation used throughout the paper.

Sections~\ref{sec:acre},~\ref{sec:ohare} and~\ref{app:msn_algs} will cover the main algorithmic contributions of our paper.
In Section~\ref{sec:acre} we present the \ACRE algorithm for certifying robustness of OLS from continuous features and a proof of its correctness, as well as the main ``MSN-bounding algorithm'' we use throughout the paper.
In Section~\ref{sec:ohare} we construct the \OHARE algorithm by extending the ACRE framework to the case where some of the dimensions of the regression represent a one-hot encoding of a categorical feature, and in Section~\ref{app:msn_algs} we present additional MSN-bounding algorithms.

In Section~\ref{sec:results}, we give an overview of the econometrics datasets used for our evaluation of the OHARE algorithm, describe our experimental setup, and present the results of running our OHARE algorithm on these regressions (Table~\ref{tab:your_label_here}).

Sections~\ref{sec:analysis} and~\ref{sec:ohare_tightness} are devoted to a theoretical analysis of the ACRE and OHARE algorithms, and contain the proofs of Theorem~\ref{thm:tightness_informal} and Theorem~\ref{thm:ohare_tightness_informal} respectively.
We note that the proof of Theorem~\ref{thm:ohare_tightness_informal} is somewhat challenging and requires some non-trivial concentration bounds which may be of independent interest (see Section~\ref{subsec:proof_xi_ip}).

Finally, we defer some of the simpler proof to Appendix~\ref{subsec:prf_new_regression} and Appendix~\ref{app:one_hot}.

\section{Preliminaries}
\label{sec:prelim}
Let $(X_1, Y_1), \ldots, (X_n, Y_n) \in \R^{(d+1)}$ represent features/covariates and labels/target variables of a linear regression instance.
We always assume $n \geq d$.
Let $\Sigma = X^\intercal   X \in \R^{d\times d}$ denote the (un-normalized) empirical second moment of $X$, let $\beta$ denote the OLS fit $\beta = \Sigma^{-1} X^\intercal   Y \in \R^d$, and
let $R = Y - \beta X \in \R^n$ denote the residuals on the complete regression.
Finally, let $e \in \R^d$ be some ``direction of interest'' along which we wish to certify the robustness of $\beta$.
Using standard normalization techniques, we may ensure that $e\in\Sbb^{d-1}$ has norm $1$ and $\Sigma = I$.

For any $S \subseteq \brac{n}$ representing some subset of the samples, let $X_S, Y_S$ represent the samples limited to only those whose indices lie in $S$.
Similarly, let $\Sigma_S, \beta_S$ denote the empirical second moment and regression when using only the samples in $S = \ol{T}$.
Note that while we could set $\Sigma = I$, removing some of the samples may change the covariance matrix $\Sigma_S = \Sigma - \Sigma_{T} \neq \Sigma$.

Finally, we use standard asymptotic notation $O(\cdot), \Theta(\cdot)$, and we write $f(n) = \wt{O}(g(n))$ if there is a constant $C$ such that $f(n) = O( g(n) \cdot \log^C n)$, and similarly for $\wt{\Theta}$.

\section{ACRE: Certifying Robustness without Categorical Features}
\label{sec:acre}
In this section we present \ACRE (\textbf{A}lgorithm for \textbf{C}ertifying \textbf{R}obustness \textbf{E}fficiently), our algorithm for certifying robustness of regressions without categorical features.

\subsection{Separating First Order and Higher Order Effects on the OLS fit \texorpdfstring{$\beta$}{beta}}
We split the effect of data removal on $\beta$ into a first order term and a higher order correction.
The first order term is linear in the datapoints $X_i$, allowing us to analyze it exactly.
For well-behaved datasets, the higher order term is smaller in magnitude, so even very loose bounds will suffice.

More concretely, let $T\subseteq [n]$ be a set of $k = \abs{T} \ll n$ samples we might remove from the regression data, and set $S = [n] \setminus T$.
A simple analysis yields the identity
\begin{equation}
    \label{eq:regression_change0}
    \beta - \beta_S = \Sigma_S^{-1} \sum_{i \in T} R_i X_i
\end{equation}
where $R$ denotes the residuals of the original regression and $\Sigma_S$ denotes the empirical 2nd moment of the retained samples $\Sigma_S = X_S^\intercal   X_S = \sum_{i \in S} X_i X_i^\intercal  $.

The difficulty in analyzing equation~\eqref{eq:regression_change0} is the effect induced by the non-linear matrix inversion operation.
Recall that we normalized our datasets so that the empirical second moment over the entire dataset $\Sigma$ is the identity matrix.
$\Sigma_S$ is generated by removing some of the samples, so it is no longer normalized.

Because only a very small number of samples were removed, we might hope $\Sigma_S = I - \Sigma_T$ is still close to the identity matrix. 
Therefore, it makes sense to try to develop equation~\eqref{eq:regression_change0} in orders of $\Sigma_T$.

More concretely, we use the identity $(I - \Sigma_T)^{-1} = I + (I - \Sigma_T)^{-1} \Sigma_T$ to derive:

\begin{equation}
    \label{eq:regression_change1}
    \begin{aligned}
        \beta - \beta_S &= \Sigma_S^{-1} \sum_{i \in T} R_i X_i =   \underbrace{\sum_{i \in T} R_i X_i}_{\text{first order term}} + \underbrace{\Sigma_S^{-1} \Sigma_T \sum_{i \in T} R_i X_i }_{\text{higher order correction}}
    \end{aligned}
\end{equation}

Projecting the first order term onto some axis $e \in \Sbb^{d-1}$ yields the gradients / influence scores used by the AMIP algorithm of \cite{broderick2020automatic}.
Our analysis will focus on bounding the higher order term.

\subsection{Maximal Subset Sum Norm (MSN) -- The Backend of \texorpdfstring{\hyperref[alg:ACRE]{\textup{ACRE}}}{ACRE}}
Under the hood of \ACRE is a simple algorithm  which places upper/lower bounds on the following optimization problem.
\begin{problem}[\textbf{M}aximal \textbf{S}ubset sum \textbf{N}orm (MSN)\footnote{When the vectors have $0$ mean ($\sum_{i \in \brac{n}} v_i = 0$), MSN is equivalent to resilience~\cite{steinhardt2017resilience}.}]
\label{def:problem1}
Given a set of vectors $v_1, \ldots, v_n \in \R^d$ with Gram matrix $G = \Paren{\IP{v_i}{v_j}}_{i, j \in \brac{n}}$,
we define
\[
\forall k \in \brac{n} \;\;\;\; \textup{MSN}_k\Paren{G} = \max_{\substack{T \subseteq \brac{n} \\ \abs{T} = k}} {\Norm{\sum_{i \in T} v_i}} = \max_{\substack{T \subseteq \brac{n} \\ \abs{T} = k}} {\sqrt{1_T^\intercal  G 1_T}}
\]

\end{problem}

A constant-factor approximation to MSN would refute the small-set expansion hypothesis \cite{hopkins2019hard}, so we aim for \emph{MSN-bounding} algorithms which place upper and lower bounds on the optimal value, with the aim that these bounds are close to tight on well-behaved $v_i$s.
For our purposes, a simple algorithm in Section~\ref{sec:prob1_algs} gives useful bounds, but \ACRE can use any MSN-bounding algorithm as a subroutine, and improved MSN-bounding algorithms will lead to improved performance for \ACRE.

For now, we treat MSN-bounding as a black-box, and show how \ACRE produces its upper and lower bounds $U_k,L_k$ by making a few calls to an MSN-bounding algorithm.

\subsection{Reducing Robustness Certification to MSN-bounding}
\label{sec:certifying_robustness}

Recall equation~\eqref{eq:regression_change1} for the effect of removals on an OLS regression.
Projecting onto $e$, we have

\begin{equation}
    \label{eq:regression_change2}
    \begin{aligned}
        \IP{e}{\beta - \beta_S} &= \IP{e}{\Sigma_S^{-1} \sum_{i \in T} R_i X_i} = \underbrace{\sum_{i \in T} R_i \IP{X_i}{e}}_{\text{first order term}} + \underbrace{\IP{\Sigma_S^{-1} \Sigma_T e}{\sum_{i \in T} R_i X_i}}_{\text{high order term}}
    \end{aligned}
\end{equation}

We can find the maximizing $T$ for the first order term of equation~\eqref{eq:regression_change2} via a greedy algorithm, allowing us to focus on bounding the maximum value of the high order term for any choice of $T$.
We start with the following bound:
\begin{equation}
\begin{aligned}
\label{eq:cs_relaxation}
    \abs{\text{high order term}} &
    \leq\max(\lambda(\Sigma_S^{-1})) \Norm{(\Sigma_S - I) e} \Norm{\sum_{i \in T} X_i R_i}
\end{aligned}
\end{equation}
where \( \max(\lambda(\Sigma_S^{-1})) \) is the largest eigenvalue of \( \Sigma_S^{-1} \). 
We show that each of the three terms in the RHS of equation~\eqref{eq:cs_relaxation} can be upper-bounded by the value of an MSN problem.
The last term, $\|\sum_{i \in T} X_i R_i \|$, is already corresponds to an MSN problem with the Gram matrix $G_{RX} = \diag{R} G_X \diag{R}$, so we focus on the other two terms.

\paragraph{MSN bound for \texorpdfstring{$\max(\lambda(\Sigma_S^{-1}))$}{Sigma S Inverse}}

We start by simplifying the \(\Sigma_S^{-1}\) term. 
Recall that $\Sigma_S = I - \Sigma_T$, allowing us to use
\begin{equation}
\label{eq:msn-lambda-max}
\max\set{\lambda\Paren{\Sigma_S^{-1}}} \leq \frac{1}{1 - \max \set{\lambda(\Sigma_T)}}
\end{equation}
so long as $\max\set{\lambda\Paren{\Sigma_T}} \leq 1$.
We apply the inequality
\[
\max \set{\lambda \Paren{\Sigma_T}} \leq \sqrt{\sum_{\lambda_i \in \lambda\Paren{\Sigma_T}} \lambda_i^2} = \Norm{\Sigma_T}_F = \Norm{\sum_{i \in T} X_i \otimes X_i}
\]
where $\| M\|_F$ is the Frobenius norm of a matrix $M$ ($\ell_2$ norm of the vector of eigenvalues) and $\otimes$ denotes tensor product.
So we have an MSN problem with the vectors $X_i \otimes X_i$.

Even though this MSN is represented by $n$ vectors of dimension $d^2$, its Gram matrix representation $G_{X\otimes X}$ can still be computed in time $O(n^2 d)$, by computing the Gram matrix $G_X = \Paren{\IP{X_i}{X_j}}_{i, j \in \brac{n}}$, and squaring its entries to obtain 
\[
G_{X \otimes X} = \Paren{\IP{X_i \otimes X_i}{X_j \otimes X_j}}_{i, j \in \brac{n}} =  \Paren{\IP{X_i}{X_j}^2}_{i, j \in \brac{n}}
\]

\paragraph{MSN bound for \texorpdfstring{$\Sigma_T e$}{Sigma T e}}
Recall 
$\Sigma_T = \sum_{i \in T} X_i X_i^\intercal$.
Let $G_{X \iprod{X,e}}$ be the Gram matrix of $\{X_i \iprod{X_i,e}\}_{i \in [n]}$.
So,
\[
\Norm{\Sigma_T e} = \Norm{\sum_{i \in T} X_i \iprod{X_i, e}} \leq \text{MSN}\Paren{G_{X \iprod{X,e}}} \, ,
\]

\subsection{Algorithm}

Combining the results of the analysis above yields the following expression and the corresponding Algorithm~\ref{alg:ACRE}:
\[
\abs{\Delta_k(e) - \text{(first-order term)}_k} \leq \frac{1}{1 - \text{MSN}_k\Paren{G_{X\otimes X}}} 
\cdot \text{MSN}_k\Paren{G_{XR}}
\cdot \text{MSN}_k\Paren{G_{XZ}}
\]

\begin{algorithm}[H]
\caption{ACRE (\textbf{A}lgorithm for \textbf{C}ertifying \textbf{R}obustness \textbf{E}fficiently)}
\label{alg:ACRE}

\KwData{Linear regression problem $X \in \R^{n \times d}$, $Y \in \R^d$, direction of interest $e \in \R^d$, MSN-bounding algorithm $\mcM$}
\KwResult{Upper and lower bounds $U, L \in \R^n$ such that $U_k \geq \Delta_k(e) \geq L_k$}

Compute the Gram matrix $G_X = X X^\intercal   \in \R^{n \times n}$\;
Compute the projection of the samples on the direction of interest $Z = X e$\;
Define $A = \texttt{cumulative sum} \Paren{\texttt{sorted}(R Z)}$\;
\BlankLine
$G_{X \otimes X} = \text{pointwise square of the entries of } G_X$\;
$G_{XR} = \diag{R} G_X \diag{R}$\;
$G_{X\iprod{X,e}} = \diag{Z} G_X \diag{Z}$\;
\BlankLine
Run the MSN-bound algorithm $\mcM$ on these $G_{X \otimes X}, G_{XR}, G_{X\iprod{X,e}}$ to compute upper bounds $M_{X \otimes X}, M_{XR}, M_{X\iprod{X,e}} \in \R^n$\;
\BlankLine
\Return $U, L = A \pm \frac{1}{1 - M_{X \otimes X}} M_{XR} M_{X\iprod{X,e}}$ \tcp*{pointwise operations}
\end{algorithm}

\subsection{Refined Triangle Inequality}
\label{sec:prob1_algs}

In this section we discuss the main MSN-bounding algorithm we use as the backend of \ACRE and \OHARE for all our experiemnts on real-world data.
We call this algorithm \textbf{R}efined \textbf{T}riangle \textbf{I}nequality (RTI).
We implemented other MSN-bounding algorithms, in particular one based on eigenvalues and eigenvectors of the Gram matrix $G$, but found that they improved over RTI only on synthetic data, so we defer them to a later section.

RTI relies on the following inequality, evaluating the RHS via a greedy algorithm:
\begin{equation}
    \begin{aligned}
        \max_{|T|=k} \Norm{\sum_{i \in T} X_i}^2 &= \max_{|T| = k} \sum_{i, j \in T} \langle X_i, X_j \rangle  \leq  \max_{|T|, |S_1|,\ldots,|S_k| = k} \sum_{i \in T, j \in S_i} \langle X_i, X_j \rangle \, .
    \end{aligned}
\end{equation}

\begin{algorithm}[H]
\caption{Refined Triangle Inequality}
\label{alg:refined_triangle_inequality}

\KwData{Gram matrix $G$ of size $n \times n$ where $G_{ij} = \langle Z_i, Z_j \rangle$}
\KwResult{A vector $V$ of length $n$, where $V_k$ is an upper bound on the $\ell_2$ norm of the sum of any $k$ vectors in $Z$, for $k = 1$ to $n$}

Sort each row of $G$ in decreasing order\;
Compute the cumulative sum of the rows of $G$ and store the result in $C$\;

Sort the columns of $C$ in decreasing order\;
Compute the cumulative sums of the columns of $C$ and store the results in $S$\;

\For{$k\leftarrow 1$ \KwTo $n$}{
    Set $V_k$ to be $\sqrt{S_{k,k}}$\;
}
\Return $V$\;

\end{algorithm}



\section{Categorical-Aware Robustness Analysis (OHARE)}
\label{sec:ohare}

Suppose that $X_1,\ldots,X_n \in \R^{d + m}$ consist of $d$ continuous-valued features and a single categorical feature with $m$ categories, one-hot encoded.

The bounds computed by \ACRE are valid for such $X_1,\ldots,X_n$.
But, consider removing the samples $T$ comprising any single category, encoded in coordinate $i$; let $S = [n] \setminus T$.
The matrix $\Sigma_S$ is singular, since all variance from category $i$ has been removed.
So our approach to bounding the high-order term from \eqref{eq:cs_relaxation} by pulling out $\max(\lambda(\Sigma_S^{-1}))$ is doomed to failure once $k$ exceeds the size of the smallest category, since $\max(\lambda(\Sigma_S^{-1})) \rightarrow \infty$.
We will assume that the direct of interest $e$ lies orthogonal to the one-hot features, so $e \in \R^d$.

The key idea in \OHARE is to rephrase the OLS algorithm as a two-phase process, first explicitly controlling for the categorical feature by computing a ``controlled'' dataset $\{(\wt{X}_i,\wt{Y}_i)\}_{i \in [n]} \subseteq \R^{d+1}$, then performing OLS on the controlled dataset to arrive at $\beta \in \R^d$.
We show that this process yields the same $\beta$ which would have been produced by running OLS on the original dataset $\{(X_i,Y_i)\}_{i \in [n]}$ (Claim~\ref{clm:new_regression}) and restricting to the span orthogonal to the one-hot encoding.

We derive explicit formulae for $\wt{X}_i,\wt{Y}_i$ by analyzing the Gram-Schmidt orthogonalization process which we call ``reaveraging''.
The upshot is that we replace each term in \eqref{eq:cs_relaxation} as well as the first order effect, with two terms: one corresponding to the direct effect of sample removals on the $\wt{X},\wt{Y}$ regression, and one corresponding to effects of removing $X_T,Y_T$ on the remaining controlled samples $\wt{X}_S, \wt{Y_S}$ through reaveraging.
Crucially, this process allows us to certify that the matrix $\wt{\Sigma}_S$ is nonsingular even in cases where $\Sigma_S$ can be singular.

To bound the new correction term coming from the influence of categorical features on the sample-removal effect, we use a knapsack-style dynamic program to combine bounds on the influence of data removals from each category into a single bound by searching over partitions $k = k_1 + \ldots + k_m$.



\subsection{Regressing over a Categorical Feature}

Consider a linear regression over a set of feature vectors, $F_i \in \R^d$ for $i \in n$, and an additional $m$ dimensions corresponding to a one-hot encoding with buckets $B_1 \sqcup \cdots \sqcup B_m = [n]$, where we want to fit our labels $L_i$ to a model of the form
\[
L_i \approx \sum_{j \in [d]} \beta_j F_{i, j} + \sum_{j \in [m]} t_{j} 1_{i \in B_j}
\]

This linear regression would be represented by a matrix $X \in \R^{n \times (m+d)}$ whose rows are the samples $X_i = \left(F_i, 1_{i \in B_1}, \ldots, 1_{i \in B_m} \right)$.
Then, if we are interested in the correlation between one of the continuous features $F_{:, j}$ and the labels $Y$ (controlled for the rest of our categorical / continuous features), we may compute this correlation by running an OLS regression $\beta = (X^\intercal   X)^{-1} X^\intercal   Y$ and output the relevant entry $\beta_j$ of the fit.

Analyzing the robustness of this process is challenging, and we proceed by performing the Gram Schmidt orthogonalization between the dummy variables and the continuous features explicitly.
Before delving into the details of this analysis, we note that \OHARE can be used to certify robustness for a slightly more general class of regressions.
Using this more general notation will help motivate our orthogonalization process and is crucial for certifying the robustness of weighted regressions with categorical features (such as the ones in the OHIE study - see Section~\ref{subsec:OHIE}).

Indeed, let $u_1, \ldots, u_m$ be the columns representing the dummy variables.
So long as for any sample $i \in \brac{n}$ there is a unique $j = b(i)$ (representing the bucket to which the $i$th sample belongs) such that $u_{j, i} \neq 0$ (and for all $j \neq j^\prime$, we have $u_{j^\prime, i} = 0$).
This property ensures that for any $S \subseteq \brac{n}$, these columns are still perpendicular to one another $u_{j, S} \perp u_{j^\prime, S}$.

\begin{enumerate}
    \item For each bucket $B_j \subseteq [n]$, we compute the weighted averages over the features $f_{j, i} = \frac{u u^\intercal  }{\Norm{u}^2} F = \frac{\sum_{i \in B_j} u_{j, i} f_{i, :}}{\Norm{u_j}^2} u_{j, i}$ and over the labels $\ell_j = \frac{\sum_{i \in B_j} u_{j, i} \ell_i}{\Norm{u_j}^2} u_{j, i}$  for samples from this bucket.
    When the samples are unweighted $u_{j, i} = 1_{i \in B_j}$, these are equal to the averages over the features / labels $f_{j} = \Expect{i \in B_j}{F_{i, :}}$ and $\ell_j = \Expect{i \in B_j}{L_i}$, and when the samples are weighted, these are the projections of the continuous features / labels onto the space spanned by the indicator columns.
    \item Compute the normalized features $X \in \R^{d \times n}$ obtained by subtracting the feature averages of each bucket from its samples, and the normalized labels $Y \in \R^n$ obtained by subtracting the average label from each bucket:
    \begin{equation*}
        \begin{aligned}
            &X_i = F_{i, :} - f_{b(i), i}\\
            &Y_i = L_i - \ell_{b(i), i}
        \end{aligned}
    \end{equation*}
    \item Perform a linear regression on $Y$ as a function of $X$ and output the fit $\beta = (X^\intercal   X)^{-1} X^\intercal   Y$.
\end{enumerate}

\begin{claim}
\label{clm:new_regression}
    The output of the process described above $\beta$ is equal to the coefficients for the continuous features on a full OLS with the dummy variables.
\end{claim}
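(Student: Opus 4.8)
The plan is to show that the two-phase process computes exactly the block of OLS coefficients corresponding to the continuous features, by invoking the Frisch–Waugh–Lovell (FWL) theorem — i.e., the fact that in a regression of $L$ on $[F \mid U]$ (where $U$ is the matrix of dummy/indicator columns $u_1,\dots,u_m$), the coefficient vector on $F$ equals the coefficient vector obtained by regressing the residual of $L$ after projecting out $U$ on the residual of $F$ after projecting out $U$. So the first step is to recall the normal equations for the full regression: writing the design matrix as $[F \mid U]$, the OLS solution $(\beta, t)$ satisfies $F^\intercal F \beta + F^\intercal U t = F^\intercal L$ and $U^\intercal F \beta + U^\intercal U t = U^\intercal L$.

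Next I would exploit the structural assumption stated in the excerpt: each sample $i$ has a unique bucket $b(i)$ with $u_{b(i),i}\neq 0$ and $u_{j,i}=0$ for $j\neq b(i)$. This makes $U^\intercal U$ a diagonal matrix (the columns $u_j$ have disjoint supports), hence trivially invertible, and it makes the projection $P_U = U(U^\intercal U)^{-1}U^\intercal$ a block operation that acts independently on each bucket. Concretely, $P_U$ applied to a vector $v$ replaces, on bucket $B_j$, the entries by $\frac{\sum_{i\in B_j} u_{j,i} v_i}{\|u_j\|^2}\, u_{j,i}$ — which is precisely the "reaveraging" map used to define $f_{j,i}$ from $F$ and $\ell_{j,i}$ from $L$ in the process. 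Therefore $X = F - P_U F$ and $Y = L - P_U L$ are exactly the FWL residuals $(I - P_U)F$ and $(I - P_U)L$.

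Then I would solve the second normal equation for $t$, namely $t = (U^\intercal U)^{-1}(U^\intercal L - U^\intercal F \beta)$, substitute into the first, and simplify: $F^\intercal(I - P_U)F\,\beta = F^\intercal(I - P_U)L$. Since $I - P_U$ is an orthogonal projection ($(I-P_U)^2 = I - P_U$ and symmetric), the left side equals $((I-P_U)F)^\intercal((I-P_U)F)\beta = X^\intercal X \beta$ and the right side equals $((I-P_U)F)^\intercal((I-P_U)L) = X^\intercal Y$. Hence $\beta = (X^\intercal X)^{-1} X^\intercal Y$, which is exactly the output of step 3 of the process, completing the proof. (A remark on invertibility: the full OLS requires $[F\mid U]$ to have full column rank, which forces $X^\intercal X = F^\intercal(I-P_U)F$ to be invertible, so the inverse in step 3 is well-defined whenever the original regression is.)

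The only genuinely delicate point — and the one I would be most careful about — is verifying that the "reaveraging" formulas in steps 1–2 really coincide with the action of $P_U$ on a per-bucket basis, including the weighted case $u_{j,i}\neq 1$; this is a short but notation-heavy check that $\frac{u_j u_j^\intercal}{\|u_j\|^2}$ is the rank-one orthogonal projection onto $\mathrm{span}(u_j)$ and that these projections are mutually orthogonal because the $u_j$ have disjoint supports, so $P_U = \sum_j \frac{u_j u_j^\intercal}{\|u_j\|^2}$. Everything else is a mechanical manipulation of the normal equations, so I do not expect any further obstacle.
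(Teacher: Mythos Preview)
Your proposal is correct and takes essentially the same approach as the paper: both arguments orthogonalize the continuous features against the dummy columns and use the block structure of the resulting normal equations (the paper phrases this as ``Gram--Schmidt orthogonalization,'' you as Frisch--Waugh--Lovell). Your version is in fact more explicit and carefully checks that the reaveraging map coincides with the projection $P_U = U(U^\intercal U)^{-1}U^\intercal = \sum_j u_j u_j^\intercal / \|u_j\|^2$, which the paper leaves implicit.
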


The proof of Claim~\ref{clm:new_regression} follows directly by performing the Gram-Schmidt orthogonalization (to compute $\Sigma^{-1}$) explicitly while taking the one-hot encoding columns into account (see Section~\ref{subsec:prf_new_regression}).
This approach of explicitly writing out a Gram-Schmidt orthogonalization to remove the effects of some of the controls could potentially be used whenever we have two sets of features, one of which is easier to analyze than the other.

Our focus on one-hot encodings is due to their prevalence in econometrics datasets, along with the fact that, as we will see over the next few pages, the perpendicular structure of the indicator variables makes them particularly amenable to such a divide and conquer strategy (and much harder to deal with using previous techniques).

For ease of notation, we will normalize the dummy variables so that $\Norm{u_j} = 1$ for all $j \in [m]$.

\subsection{First Order Effects}
\label{subsec:ohare_first_order}
Claim~\ref{clm:new_regression} allows us to reduce the problem of performing a regression on $d$ continuous feature and a categorical feature with $m$ potential values into a regression on just a reaveraged version of the $d$ continuous features.
Using this reduction, we can split the effects of removing samples from the smaller linear regression into direct removal effects that change the regression fit directly by removing samples from the $d$-dimensional regression $X, Y$, and  reaveraging effects that shift the regression fit by causing the expectation of the data within each bucket to change (effectively shifting the values of the retained rows of $X, Y$).

In particular, we may again write
\begin{equation}
\label{eq:one_hot_removal_effect}
    \begin{aligned}
        \IP{e}{\beta - \beta_S} &= -\IP{e}{ \hat{\Sigma}_S^{-1} \sum_{i \in S} \hat{X}_i \hat{R}_i}
    \end{aligned}
\end{equation}
where~$\hat{\cdot}$~is the value of $\cdot$ after the new reaveraging due to the removal of the elements in $T$.
More concretely, we have
\begin{equation}
    \begin{aligned}
        &\hat{X}_i = X_i - x_{b(i)} u_{b(i), i} \text{ where } x_{j} = \sum_{i^\prime \in B_{j} \cap S} X_i u_{j, i^\prime}\\
        &\hat{R}_i = R_i - r_{b(i)} u_{b(i), i} \text{ where } r_{j} = \sum_{i^\prime \in B_{j} \cap S} R_i u_{j, i^\prime} 
    \end{aligned}
\end{equation}

From here, we can derive closed form formula for the new versions of all the terms in the previous analysis.
For instance, the ``first order effect'' (i.e., the effect we would have seen on the regression, had $\Sigma_S$ been equal to $I$), which was previously given by $\text{First Order}_\text{continuous} = \sum_{i \in T} R_i \IP{e}{X_i}$ is now given by

\begin{equation}
\label{eq:first_order_one_hot}
    \begin{aligned}
        \text{First Order}_\text{one-hot} &= -\IP{e}{\sum_{i \in S} \hat{X}_i \hat{R}_i} = - \IP{e}{\sum_{i \in S} (X_i - x_{b(i)} u_{b(i), i}) \hat{R}_i} =\\
        &=  - \IP{e}{ \sum_{j \in [m]} \sum_{i \in B_j \cap S} (X_i - x_j u_{j, i}) \hat{R}_i} = \\
        &= - \IP{e}{ \sum_{j \in [m]} \sum_{i \in B_j \cap S} X_i \hat{R}_i} - \IP{e}{ \sum_{j \in [m]}  x_j \underbrace{\sum_{i \in B_j \cap S} u_{j, i} \hat{R}_i}_{=0}}= \\
        &= - \IP{e}{ \sum_{j \in [m]} \sum_{i \in B_j \cap S} X_i R_i} + \IP{e}{ \sum_{j \in [m]}  r_j  \underbrace{\sum_{i \in B_j \cap S}  u_{j, i} X_i}_{=- \sum_{i \in B_j \cap T} X_i u_{j, i}}} = \\
        &=\underbrace{\IP{e}{\sum_{i \in T} X_i R_i}}_{\text{First Order}_\text{continuous}} + \underbrace{\sum_{j \in [m]} \frac{1}{\Norm{u_{j, S}}^2} \left(\sum_{i \in T \cap B_j} R_i u_{j, i}\right) \left( \sum_{i \in T \cap B_j} \IP{e}{X_i } u_{j, i} \right) }_\text{Correction Term}
    \end{aligned}
\end{equation}

In order to bound the right-hand-side of equation~\eqref{eq:first_order_one_hot}, we consider each of the $3$ terms $\sum_{i \in T \cap B_j} R_i \IP{e}{X_i}$, $\sum_{i \in T \cap B_j} R_i u_{j, i}$ and $\sum_{i \in T \cap B_j} \IP{e}{X_i} u_{j, i}$ separately.
Using a greedy algorithm, we can maximize / minimize each of these terms separately as a function of $k_j = \abs{T\cap B_j}$.
We can then combine these into a bound of the form
\begin{equation*}
    \begin{aligned}
        \text{bound}^{\pm}_j(k_j) = d_j(k_j) + \frac{c^{\pm}_j(k_j)}{\min_{S_j \in \binom{B_j}{k_j}}\set{\Norm{u_{j, S_j}}^2}}
    \end{aligned}
\end{equation*}
where
\begin{equation*}
    \begin{aligned}
        &d_j(k_j) = \max_{\substack{T_j \subseteq B_j \\ \abs{T_j} = k_j}} \sum_{i \in T_j} R_i \IP{e}{X_i} \\
        &c^+_j(k_j) = \max\set{\left(\max_{\substack{T_j \subseteq B_j \\ \abs{T_j} = k_j}} \sum_{i \in T_j} R_i u_{j, i}\right) \left( \max_{\substack{T_j \subseteq B_j \\ \abs{T_j} = k_j}} \sum_{i \in T_j} \IP{e}{X_i} u_{j, i} \right), \left(\min_{\substack{T_j \subseteq B_j \\ \abs{T_j} = k_j}} \sum_{i \in T_j} R_i u_{j, i}\right) \left( \min_{\substack{T_j \subseteq B_j \\ \abs{T_j} = k_j}} \sum_{i \in T_j} \IP{e}{X_i} u_{j, i} \right) }\\
        &c^-_j(k_j) = \max\set{\left(\min_{\substack{T_j \subseteq B_j \\ \abs{T_j} = k_j}} \sum_{i \in T_j} R_i u_{j, i}\right) \left( \max_{\substack{T_j \subseteq B_j \\ \abs{T_j} = k_j}} \sum_{i \in T_j} \IP{e}{X_i} u_{j, i} \right), \left(\max_{\substack{T_j \subseteq B_j \\ \abs{T_j} = k_j}} \sum_{i \in T_j} R_i u_{j, i}\right) \left( \min_{\substack{T_j \subseteq B_j \\ \abs{T_j} = k_j}} \sum_{i \in T_j} \IP{e}{X_i} u_{j, i} \right) }
    \end{aligned}
\end{equation*}

For $k_j = n_j$, there is no reaveraging effect and $\text{bound}^{\pm}_j(n_j) = d_j(n_j)$.

We can then combine these bounds on the individual buckets by using a dynamic programming algorithm to solve the integer knapsack problem of
\[
\max_{k_1 + \cdots + k_m = k} \sum_{j \in [m]} \text{bound}^{\pm}_j(k_j)\,.
\]

From equation~\eqref{eq:first_order_one_hot}, it is clear that these maximizations yield upper and lower bounds on the maximal first order removal effect
\[
\max_{k_1 + \cdots + k_m = k} \sum_{j \in [m]} \text{bound}^{-}_j(k_j) \leq \max_{S \in \binom{[n]}{n-k}}\set{-\IP{e}{\sum_{i \in S} \hat{X}_i \hat{R}_i}}  \leq \max_{k_1 + \cdots + k_m = k} \sum_{j \in [m]} \text{bound}^{+}_j(k_j)\,.
\]

The upper bound on the overall first order effect tends to be larger than the continuous first order effect but only by a  $1 + o(1)$ factor.
This is because $d_j$ tends to be the dominant effect, while $c_j$ are typically much smaller.

For instance, consider an unweighted regression on normally distributed samples $(X_i, Y_i)$, and let $\sigma_R = \sqrt{\frac{1}{n}\sum_{i \in \brac{n}} R_i^2}$ denote the root-mean-square (RMS) / scale of the residuals.
With high probability, we can find removal sets that would have $d_j \approx \frac{k_j}{\sqrt{n}} \sigma_R \log(n/k)$ (by taking only samples which are on the $\varepsilon$ tail end of the distribution of having both large residual and large inner product with the direction of interest).
On the other hand, for sufficiently large buckets $\abs{B_j} \gg \log(n)$, we expect to have $c_j = O\Paren{\frac{k_j^2}{\abs{B_j} \sqrt{n}} \sigma_R} < O\Paren{\frac{k_j}{\sqrt{n}} \sigma_R}$, so they tend to be somewhat smaller than $d_j$.

Moreover, by using a dynamic programming algorithm, we can enforce the constraint that the number of samples removed from each bucket $k_j$ has to be the same for the direct effects and for the reaveraging effect.
This constraint limits causes the contribution of the reaveraging effects to be even smaller, as they require taking many samples from the same bucket to enjoy the quadratic scaling of $k_j$, while the more dominant direct effects are typically optimized by selecting samples evenly from the buckets.

\begin{algorithm}[H]
\caption{Dynamic Programming Algorithm for Integer Knapsack}
\label{alg:dynamic_programming_1d}

\KwData{List of bounds $\{\text{bound}_j(k_j)\}_{j=1}^m$, maximal budget $k_{\max}$}
\KwResult{Array $F$ where $F[k]$ is the highest total score possible for budget $k$}

$m \gets |\{\text{bound}_j(k_j)\}|$\;
$k_{\max} \gets k_{\max} + 1$\;  \tcp{Adjusting for 0-based indexing in NumPy arrays}
Initialize $F$ array of size $(m + 1) \times k_{\max}$ with $-\infty$\;
Set the first column of $F$ to $0$\;

\For{$j \gets 1$ \KwTo $m$}{
    $\text{bound} \gets \text{bound}_j$\;
    $F[j, 0:\texttt{len}(\text{bound})] \gets \text{bound}[0:k_{\max}]$\;
    $F[j, :] \gets \max(F[j, :], F[j-1, :])$\;
    
    \For{$\delta_k \gets 1$ \KwTo $\min(k_{\max}, \texttt{len}(\text{bound})) - 1$}{
        $F[j, \delta_k:] \gets \max(F[j, \delta_k:], F[j-1, :-\delta_k] + \text{bound}[\delta_k])$\;
    }
}

\Return $F[m, :k_{\max} - 1]$
\end{algorithm}

\subsection{High Order Terms}
\label{subsec:ohare_higher_order}
We continue our analysis of equation~\eqref{eq:one_hot_removal_effect}.
As in the continuous analysis, we have

\begin{equation}
\label{eq:one_hot_removal_effect2}
    \begin{aligned}
        \IP{e}{\beta - \beta_S} &= -\IP{e}{ \hat{\Sigma}_S^{-1} \sum_{i \in S} \hat{X}_i \hat{R}_i} = \\
        &= -\IP{e}{I \sum_{i \in S} \hat{X}_i \hat{R}_i} - \IP{e}{\left(\hat{\Sigma}_S^{-1} - I\right) \sum_{i \in S} \hat{X}_i \hat{R}_i} =\\
        &= \text{First Order}_\text{one-hot} + \IP{\hat{\Sigma}_S^{-1} \left(I - \hat{\Sigma}_S\right) e}{ \sum_{i \in S} \hat{X}_i \hat{R}_i} \leq \\
        &\leq \text{First Order}_\text{one-hot} + \max \set{\lambda\left(\hat{\Sigma}_S^{-1}\right)} \times \Norm{\left(I - \hat{\Sigma}_S\right)e} \times \Norm{\sum_{i \in S} \hat{X}_i \hat{R}_i}
    \end{aligned}
\end{equation}

To analyze the first 2 higher order terms, we begin with an analysis of $\hat{\Sigma}_S$.
This analysis shows that $\hat{\Sigma}_S$ essentially behaves like $\Sigma_S$ with a minor correction for each bucket:

\begin{equation}
    \label{eq:Sigma_S_one_hot}
    \begin{aligned}
        \hat{\Sigma}_S &= \sum_{i \in S} \hat{X}_i \hat{X}_i^\intercal   = \sum_{j \in [m]} \sum_{i \in B_j \cap S} (X_i - x_j u_{j, i}) \hat{X}_i^\intercal   = \\
        &=\sum_{j \in [m]} \sum_{i \in B_j \cap S} X_i (X_i - x_j u_{j, i})^\intercal   - \sum_{j \in [m]} x_j \underbrace{\sum_{i \in B_j \cap S} \hat{X}_i^\intercal   u_{j, i}}_{=0} = \\
        &= \sum_{j \in [m]} \sum_{i \in B_j \cap S} X_i X_i^\intercal   - \sum_{j \in [m]} \left( \sum_{i \in B_j \cap T} X_i u_{j, i} \right) \left( \sum_{i \in B_j \cap T} X_i \right)^\intercal   =\\
        &= I - \Sigma_T - \sum_{j \in [m]} \frac{1}{\Norm{u_{j, S}}^2} \left( \sum_{i \in B_j \cap T} X_i u_{j, i} \right) \left( \sum_{i \in B_j \cap T} X_i u_{j, i} \right)^\intercal  
    \end{aligned}
\end{equation}

In order to bound $\max\set{\lambda\left(\hat{\Sigma}_S^{-1}\right)} = \min\set{\lambda\left(\hat{\Sigma}_S\right)}^{-1}$ from above, we begin by bounding $\max\set{\lambda\left(\Sigma_T\right)}$ from above (using the same MSN-bounding reductions from Section~\ref{sec:acre}).
We then use a MSN-bounding algorithm and the same dynamic programming as above to bound the term in equation~\eqref{eq:dynamic_prog_CS} (MSN-bounding to bound the individual terms and dynamic programming to combine them).
\begin{equation}
\label{eq:dynamic_prog_CS}
    \begin{aligned}
        &\max\set{\lambda\left(\sum_{j \in [m]} \frac{1}{\Norm{u_{j, S}}^2} \left( \sum_{i \in B_j \cap T} X_i u_{j, i} \right) \left( \sum_{i \in B_j \cap T} X_i u_{j, i} \right)^\intercal  \right)} = \\
        &\;\;\;\;\;\;\;\;\;\;\;\;\;\;\;\;\;\;\;\;\;\;\;\;\;\;\;\;=\max_{k_1 + \cdots + k_m = k} \sum_{j \in [m]} \frac{1}{\min_{S_j} \set{\Norm{u_{j, S_j}}^2}}\max_{\substack{T_j \subseteq B_j \\ \abs{T_j} = k_j}} \set{\Norm{\sum_{i \in T_j} X_i u_{j, i}}^2}
    \end{aligned}
\end{equation}

For each $j$, we can bound $\max_{\substack{T_j \subseteq B_j \\ \abs{T_j} = k_j}} \set{\Norm{\sum_{i \in T_j} X_i u_{j, i}}}$ from above by performing a call to an MSN bounding algorithm.
However, we can refine the results of this MSN bounding call (especially with regards to larger values of $k_j \geq \frac{n_j}{2}$), by utilizing the fact that $\sum_{i \in B_j} X_i = 0$, which implies that 
\[
\max_{\substack{T_j \subseteq B_j \\ \abs{T_j} = k_j}} \set{\Norm{\sum_{i \in T_j} X_i u_{j, i}}} = \max_{\substack{T_j \subseteq B_j \\ \abs{T_j} = n_j - k_j}} \set{\Norm{\sum_{i \in T_j} X_i u_{j, i}}}\,.
\]

Therefore, if $M_j$ is the result of an MSN bounding algorithm (such as RTI) on the $n_j \times n_j$ Gram matrix of the samples in the $j$th bucket, we have the bound
\[
\max_{\substack{T_j \subseteq B_j \\ \abs{T_j} = k_j}} \set{\Norm{\sum_{i \in T_j} X_i u_{j, i}}} \leq \min\set{M_j(k_j), M_j(n_j - k_j)}\,.
\]

Similarly, we may bound the other terms.
Using equation~\eqref{eq:Sigma_S_one_hot}, we have
\begin{equation}
    \begin{aligned}
        \left(I - \hat{\Sigma}_S\right)e &= \left(\Sigma_T + \sum_{j \in [m]} \left( \sum_{i \in B_j \cap T} X_i u_{j, i} \right) \left( \sum_{i \in B_j \cap T} X_i u_{j, i} \right)^\intercal \right) e =\\
        &= \sum_{i\in T} X_i \IP{X_i}{e} + \sum_{j \in [m]} \frac{1}{\Norm{u_{j, S}}^2}  \left(\sum_{i \in T \cap B_j} X_i u_{j, i}\right) \left( \sum_{i \in T\cap B_j} \IP{X_i}{e} u_{j, i} \right)
    \end{aligned}
\end{equation}

As usual, we bound the norm of the first term $\sum_{i\in T} X_i \IP{X_i}{e}$ using an MSN-bounding algorithm, and for each bucket, we bound $\Norm{\sum_{i \in T \cap B_j} X_i u_{j, i}}$ and $\abs{\sum_{i \in T\cap B_j} \IP{X_i}{e} u_{j, i}}$ as a function of $k_j = \abs{T \cap B_j}$.
We then combine these bounds using the triangle inequality, and the same symmetry and dynamic programming algorithm as above.

Finally, to bound $\Norm{\sum_{i \in S} \hat{X}_i \hat{R}_i}$, we use the same analytic techniques to write
\begin{equation}
    \begin{aligned}
        \sum_{i \in S} \hat{X}_i \hat{R}_i &= \sum_{j \in [m]} \sum_{i \in B_j \cap S} (X_i - x_j u_{j, i}) \hat{R}_i  = \cdots\\
        &\cdots =\sum_{i \in T} X_i R_i + \sum_{j \in [m]}  \frac{1}{\Norm{u_{j, S}}^2} \left( \sum_{i \in B_j \cap T} X_i u_{j, i} \right) \left( \sum_{i \in B_j \cap T} R_i u_{j, i} \right)
    \end{aligned}
\end{equation}
which can be bounded by the same MSN + symmetry + dynamic programming above.

\subsection{The OHARE Algorithm}

\begin{algorithm}[H]
\caption{The OHARE Algorithm}\label{alg:OHARE}
\KwIn{Samples $X_i \in \mathbb{R}^d$ (s.t. $X^\intercal X = I \in \mathbb{R}^{d \times d}$) and residuals $R_i \in \mathbb{R}$ for $i \in [n]$, vector $e \in \mathbb{R}^d$, separation of the samples into buckets $b: [n] \rightarrow [m]$ (based on some additional categorical feature), weights $w \in \mathbb{R}^n$ (by default all 1s).}
\KwOut{$U, L \in \mathbb{R}^n$ that bound the removal effects $\Delta_k(e)$.}

\tcc{Step 1: Split the samples and residuals by their $b$ value into buckets $B_1, \ldots, B_m \subseteq [n]$}
\For{$j \leftarrow 1$ \KwTo $m$}{
    \For{$k_j \leftarrow 1$ \KwTo $\lvert B_j \rvert$}{
        Compute $\text{bound}_j(k_j) = d_j(k_j) + \frac{c_j(k_j)}{\lvert B_j \rvert - k_j}$ with $c_j$ and $d_j$ as defined above\;
    }
}
\tcc{Step 2: Compute Influences using the 1D Dynamic Programming Algorithm}
Use the ``1D Dynamic Programming Algorithm'' defined above to compute the bounds on the direct influences and store the result as Influences\;

\tcc{Step 3: Compute upper bounds on $M_{k_j}$, $U_{k_j}$, $\rho_{k_j}$, and $\zeta_{k_j}$ for each bucket}
\For{$j \leftarrow 1$ \KwTo $m$}{
    Use an MSN bounding algorithm to compute an upper bound on $M_{k_j} \geq \max_{T \subseteq B_j, \lvert T \rvert = k_j} \left\| \sum_{i \in T} X_i \right\|$ for all $k_j \in [\lvert B_j \rvert]$\;
    Use a sort + cumulative sum to compute $U_{k_j} = 1 - \max_{T \subseteq B_j, \lvert T \rvert = k_j} \left\{ \sum_{i \in T} u_{i, j}^2 \right\}$\;
    Use a similar sort + cumulative sum to compute $\rho_{k_j} = \max_{T \subseteq B_j, \lvert T \rvert = k_j} \left\{ \sum_{i \in T} \lvert R_i u_{i, j} \rvert \right\}$\;
    Use a similar sort + cumulative sum to compute $\zeta_{k_j} = \max_{T \subseteq B_j, \lvert T \rvert = k_j} \left\{ \sum_{i \in T} \lvert Z_i u_{i, j} \rvert \right\}$\;
    Use the symmetry to refine our bounds $M_{k_j} \coloneqq \min\set{M_{k_j}, M_{n_j - k_j}};\;\; \rho_{k_j} \coloneqq \min\set{\rho_{k_j}, \rho_{n_j - k_j}};\;\; \zeta_{k_j} \coloneqq \min\set{\zeta_{k_j}, \zeta_{n_j - k_j}}$\;
}

\tcc{Step 4: Compute indirect contributions using the 2D Dynamic Programming Algorithm}
Use 3 calls to the 2D Dynamic Programming Algorithm~\ref{alg:dynamic_programming_2d} to generate arrays from $k, u$ to the maximum over the choice of $k_1, \ldots, k_m$ with total $k$ of which $u$ are non-zero of:
\begin{enumerate}
    \item Indirect CS Contribution: $\sum_{j \in [m]} \frac{M_{k_j}^2}{U_{k_j}}$
    \item Indirect XR Contribution: $\sum_{j \in [m]} \frac{M_{k_j} \rho_{k_j}}{U_{k_j}}$
    \item Indirect XZ Contribution: $\sum_{j \in [m]} \frac{M_{k_j} \zeta_{k_j}}{U_{k_j}}$
\end{enumerate}

\tcc{Step 5: Compute direct contributions using the KU Triangle Inequality}
Use the KU Triangle Inequality~\ref{alg:ku_triangle_inequality} to also compute upper bounds on the Direct CS, XR, and XZ Contributions\;

\tcc{Step 6: Compute final bounds and return the result}
\Return $U_k, L_k = \text{Influences} \pm \max_{u} \frac{\lvert \text{Direct XR} + \text{Indirect XR} \rvert \times \lvert \text{Direct XZ} + \text{Indirect XZ} \rvert}{1 - \text{Direct CS} - \text{Indirect CS}}$
\end{algorithm}

\subsection{Supplementary Algorithms}

We begin by an extension of the RTI Algorithm~\ref{alg:refined_triangle_inequality} that bounds the triangle inequality terms as a function of $k$ (number of removals) and $u$ (number of unique buckets from which samples were removed).
The basic idea is the same as with the original RTI algorithm and requires only minor adaptations to track which row-column pairs are largest in their respective buckets.

\begin{algorithm}[H]
\caption{KU Triangle Inequality}
\label{alg:ku_triangle_inequality}
\KwData{Gram matrix $G$ of size $n \times n$ where $G_{ij} = \langle v_i, v_j \rangle$ representing an MSN-bounding problem}
\KwResult{Matrix $V$ of size $u_{max} \times k_{max}$, where $V_{u, k}$ is an upper bound on the $\ell_2$ norm of the sum of $k$ vectors taken from $u$ unique buckets.}

\tcc{Split each row of $G$ into the largest and the non-largest entries per bucket.}
Initialize $m$ as the number of buckets\;
Initialize $n$ as the number of vectors\;
Compute bucket indices from bucket sizes\;
\For{$j\leftarrow 1$ \KwTo $m$}{
    Sort entries in each bucket $j$ in decreasing order\;
    Store the largest entry of each bucket in $best\_entries$\;
    Remove these entries from the Gram matrix;
}

\tcc{Compute the cumulative contributions of the largest and non-largest entries per bucket.}
Compute $best\_u\_contributions$ as cumulative sums of sorted $best\_entries$\;
Compute $best\_kmu\_contributions$ as cumulative sums of the modified Gram matrix\;

\tcc{Compute the contributions of each sample for the triangle inequality.}
Initialize $sample\_contributions$ as an array of $-\infty$\;
\For{$u\leftarrow 1$ \KwTo $\min(u_{max}, k_{max})$}{
    \For{$k\leftarrow u$ \KwTo $k_{max}$}{
        Compute contributions by combining $best\_u\_contributions$ and $best\_kmu\_contributions$\;
    }
}

\tcc{Enforce the constraint that $T$ must use exactly $u$ separate buckets.}
\For{$j\leftarrow 1$ \KwTo $m$}{
    Move the largest elements of each bucket to the start using partition\;
    Copy the largest elements to $best\_contributions$\;
    Remove these elements from $sample\_contributions$;
}
Sort $best\_contributions$ and compute their cumulative sums as $cumsum\_best\_contributions$\;

\tcc{Compute the norms squared using the constraints.}
Initialize $norms\_squared$ as an array of $-\infty$\;
\For{$u\leftarrow 1$ \KwTo $u_{max}$}{
    \For{$k\leftarrow u$ \KwTo $k_{max}$}{
        Compute sum over $k-u$ largest elements of $sample\_contributions$ and $u$ largest elements of $cumsum\_best\_contributions$\;
        Update $norms\_squared[u, k]$\;
    }
}

\tcc{Compute the norms and handle invalid values.}
Compute $norms$ as the square root of $norms\_squared$\;

\Return $norms$\;

\end{algorithm}

We also adapt Algorithm~\ref{alg:dynamic_programming_1d} to the case where we wish to keep track of both the total number of removals and the number of unique buckets from which we remove samples.

\begin{algorithm}[H]
\caption{Dynamic Programming 2D}
\label{alg:dynamic_programming_2d}
\KwData{A list of bucket scores $B$}
\KwResult{A table $V$ where $V[u, k]$ is the maximal score for using $u$ unique buckets with total budget $k$}

\tcc{Initialize parameters and dynamic programming table}
Compute cumulative sums of bucket lengths\;
Initialize $dp\_table$ with $-\infty$ and set $dp\_table[:, 0, 0] = 0$\;

\tcc{Fill the dynamic programming table}
\For{$j\leftarrow 1$ \KwTo length of $B$}{
    Get the current bucket $B[j]$\;
    Set $u$ as $\min(m, j)$\;
    Set $k$ as $\min(n, \text{cumsum\_bucket\_lengths}[j])$\;

    \tcc{Base case for the first bucket}
    \If{$j == 0$}{
    Set $dp\_table[0, 1, :k] = B[0][:k]$\;
}
\Else{
    \tcc{Case where we do not update the table with new bucket values}
    Set $dp\_table[j, 1:u, :k] = dp\_table[j - 1, 1:u, :k]$\;

    \tcc{Case where we add values from the new bucket}
    \For{$\delta_k\leftarrow 1$ \KwTo $\min(\text{length of } B[j], k) - 1$}{
        Update $dp\_table[j, 1:u, \delta_k:k]$ with the maximum of the current value and $dp\_table[j-1, :u-1, :k - \delta_k] + B[j][\delta_k]$\;
    }
}
}

\Return $dp\_table[-1, :, :]$\;

\end{algorithm}

\section{Additional MSN-Bounding Algorithms}
\label{app:msn_algs}

\subsection{Spectral Decomposition}
\label{subsec:spectral_decomposition}

At a high-level, this algorithm similarly tries to bound $\Norm{\sum_{i \in T} Z_i}^2 = 1_T^\intercal   G 1_T$ for the same Gram matrix $G$, but here we use a spectral decomposition. We refine on the standard spectral algorithms which might output $\Norm{1_T} \max \lambda(G) = \sqrt{k}  \max \lambda(G)$, by computing naive bounds on the inner products between $1_G$ and each of the top eigenvalues using a standard greedy algorithm.

\begin{algorithm}[H]
\caption{Spectral Bound Algorithm}
\label{alg:spectral_bound_sum_ips}

\KwData{A matrix $G$ of size $n \times n$ representing a Gram matrix of a set of vectors $G_{i, j} = \langle Z_i, Z_j \rangle$}
\KwResult{A vector $V$ whose $k$th entry is an upper bound on $\max_{\abs{T} = k} \Norm{\sum_{i \in T} Z_i}$}

Compute eigenvalues $\lambda_1 \geq \cdots \geq \lambda_n$ and corresponding eigenvectors $v_1, \ldots, v_n$ of $G$\;

\For{$k = 1$ \KwTo $n-1$}{
    \tcc{For each of the eigenvectors, compute upper and lower bounds on $\alpha_i = \langle 1_T, v_i \rangle$. To do this, we note that $\alpha_i$ is the sum of $k$ entries of $v_i$, so it is bounded from below by $\ell_i = $ sum over $k$ smallest entries of $v_i$ and from above by $u_i =$ sum over $k$ largest entries of $v_i$.}
    Sort the entries of each $v_i$ in ascending order and store the cumulative sum in $\ell_i$\;
    Sort the entries of each $v_i$ in descending order and store the cumulative sum in $u_i$\;
    
    \tcc{Compute upper bound on ${\alpha_i}^2 \leq \max \{\ell_i^2, u_i^2\}$.}
    Compute $b_i = \max \{\ell_i^2, u_i^2\}$\;

    \tcc{Combine this with bound on overall weight of spectral decomposition $\sum_i {\alpha_i}^2 = k$.}
    If $\sum_{j \leq i} b_i \geq k$, set $b_i = \max \{0, k - \sum_{j < i} b_i \}$\;
    Set $V_k = \sqrt{\sum_i \lambda_i b_i}$\;
}
\Return $V$\;

\end{algorithm}

\subsection{Greedy Lower Bound}

In order to get some additional diagnostic capabilities and improve the interpretability of our results, we also implemented a simple greedy algorithm that helps us generate a \emph{lower-bound} on an MSN problem.
Algorithm~\ref{alg:greedy_lower_bound} initializes candidate set $T$ to contain the longest vector $v_i$ in our MSN instance, and greedily adds vectors to this set in an attempt to increase $\Norm{\sum_{i \in T} v_i}$ as fast as possible.

\begin{algorithm}[H]
\caption{Greedy Algorithm for Lower Bound and Candidate Set}
\label{alg:greedy_lower_bound}

\KwData{A matrix $G$ of size $n \times n$ representing a Gram matrix of a set of vectors, where $G_{i, j} = \langle v_i, v_j \rangle$}
\KwResult{An ordering of the indices $T$, and a series of lower bounds $L_k = 1_{T_{:k}}^\intercal   G 1_{T_{:k}} \leq \max_{T' \in \binom{[n]}{k}} 1_{T'}^\intercal   G 1_{T'}$}

Initialize an empty list $T \coloneqq []$\;
Initialize the scores array $\Delta$ where $\Delta_i = G_{i, i}$ for all $i \in [n]$\;
\tcc{$\Delta$ maps each index $i \in [n]$ to the change in $1_T^\intercal   G 1_T$ caused by adding $i$ to $T$.}
Initialize $L_0 \coloneq 0$\;

\While{$\abs{T} < n$}{
    Select $i \coloneqq \argmax_{j \notin T} \Delta_j$\;
    \tcc{Choose the index $i$ not in $T$ with the maximum change in score.}
    Add $i$ to $T$ and update $L_k = L_{k-1} + \Delta_i$\;
    \tcc{Update the lower bounds by adding the score of the newly added index.}
    Update the scores $\Delta \coloneqq \Delta + 2G_{i, :}$\;
    \tcc{Adjust $\Delta$ to reflect the change in $1_T^\intercal   G 1_T$ after adding $i$ to $T$.}
}
\Return{$T$ and $L$}\;
\end{algorithm}

In real-world datasets, we found that the upper bounds obtained by Algorithms~\ref{alg:refined_triangle_inequality} and~\ref{alg:spectral_bound_sum_ips} tend to be very close to the greedy lower bounds on their respective MSNs.

Moreover, when analyzing the Cash Transfer study (see Section~\ref{subsec:angelucci_results}), we noticed that our analysis behaved poorly in some regimes.
Using this greedy lower bound technique, we were able to identify that this behaviour was caused by a small number of households ($<0.5\%$) dominating the variance in land-ownership ($>80\%$), which was included in the study in linear scale.
Converting the land-ownership to a logarithmic scale resolved this issue (allowing us to certify robustness to removing many more samples).

\section{Applied Experiments}
\label{sec:results}

\paragraph{AMIP}

The AMIP algorithm by Broderick et al.~\cite{broderick2020automatic} can be used to estimate the robustness of a linear regression in one of two ways.
The fastest option is to compute the AMIP gradients / influence scores, and estimate that the number of samples one must remove in order to change the fit by a certain amount by taking a cummulative sum over the sorted gradients.

However, this method both typically tends to overestimate the robustness of a dataset, and despite this does not even provide a formal upper bound on $\ksign$ or $\ksigma$.
Instead, for our experiments we sort the samples from most influential to least influential and remove them one at a time until the fit crosses the threshold we are considering.

\subsection{How Much Should We Trust the Dictator’s GDP Growth Estimates?}
\label{subsec:martinez_res}
\subsubsection{Paper Overview}

Reliable estimates of GDP figures are crucial for analysts to assess the performance of an economy.
However, leaders often have a variety of political and financial incentives manipulate GDP figures in order to improve their perception.
Therefore, economists often use proxies to obtain independent estimates of these figures that may be harder to manipulate.
One such method which has gained a lot of attention in recent years is to simply measure the amount of light emitted from a region at night (nightlights or NTL), as observed by satellite imaging~\cite{henderson2012measuring}.

Martinez~\cite{martinez2022much} uses this proxy in conjunction with several well-known measures for the democracy of a country (such as the freedom-in-the-world or FiW metric), in an effort to find evidence of GDP figure manipulation in autocratic regimes.
One methods used by Martinez is to measure the ``autocracy gradient in the NTL elasticity of GDP''.
In practice, this translates to running a regression of the form
\begin{equation}
\label{eq:martinez_regressions}
\ln(\text{GDP})_{i,t} = \mu_i + \delta_t + \phi_0 \ln(\text{NTL})_{i,t} + \phi_1 \text{FiW}_{i,t} + \phi_2 \text{FiW}^2_{i,t} 
+ \phi_3 [\ln(\text{NTL}) \times \text{FiW}]_{i,t} + \varepsilon_{i,t}
\end{equation}
where $i$ represents the index of a country, $t$ the year from which this sample was taken, $\mu_i$ and $\delta_t$ control for country-specific or time-specific effects, $\phi_0, \phi_1, \phi_2$ control for the direct correlation between nightlights and GDP, as well as for any $\leq2$nd order dependence between GDP and democracy.
$\phi_3$ is the main effect we wish to observe, and a positive value of $\phi_3$ could be explained by autocratic regimes being more prone to embellishing their GDP figures.

Martinez reports a statistically significant positive value of $\phi_3$, as well as additional evidence for GDP figure manipulation in autocratic regimes (such as a larger difference between NTL-based estimates and reported GDP in years leading up to IMF evaluations of autocratic regimes).
Martinez hypothesises that the separation of powers and cross-examination of figures by opposition contribute to making GDP figures more reliable in democratic countries.

\subsubsection{Robustness}
Martinez uses the AMIP tool~\cite{broderick2020automatic} to assess the robustness of the regression~\eqref{eq:martinez_regressions}.
AMIP finds a set of 136 samples whose removal flips the sign of the OLS fit on the $[\ln(\text{NTL}) \times \text{FiW}]_{i,t}$ term.
Kusching et al.~\cite{kuschnig2021hidden} improve this upper bound on the stability by finding a set of 110 samples that flip the sign of the fit parameter.

Running our continuous regression toolkit on the Martinez dataset can only certify robustness to the removal of at most $7$ samples, as the dataset contains a one-hot encoding of the country and it contains only $8$ samples from Monetenegro.
To overcome this, we apply our one-hot aware algorithms to the same data, and are able to certify the sign of this parameter in Martinez' regression requires at least $\ksign \geq 29$ to overturn.

\subsection{Indirect Effects of an Aid Program: How Do Cash Transfers Affect Ineligibles’ Consumption?}
\label{subsec:angelucci_results}

\subsubsection{Paper Overview}

Angelucci and De Giorgi~\cite{angelucci2009indirect} study the indirect effect of the Progresa welfare program in Mexico.
This program gave financial aid to eligible households within ``treated'' villages, and did not give aid to ineligible households or households from ``untreated'' villages.

Angelucci and De Giorgi then track spending patterns of eligible and ineligible households from both treated and untreated villages.
They then use linear regressions to estimate the treatment effects on both eligible and ineligible households when controlling on various other features.

\subsubsection{Minor Difficulties in Using the Data}

We found two different regression formula that have been attributed to this paper.
The formula use slightly different control, with one option controlling for: head of household age, sex, literacy and education level, household poverty index and amount of land owned, local poverty index, number of households in the village and ``average shock'' (to the best of our knowledge, this was the regression used in Angelucci and De Giorgi's original paper).
The regression used in the later paper by Broderick et al.~\cite{broderick2020automatic} also controls for head of household marital status and which region the samples came from (using a one-hot encoding).
Ultimately, these additional controls had very little effect on the fit parameters or their error bars, and we chose to focus on the latter to maintain consistency with previous benchmarks.

A more significant issue is that the dataset also contained many clear outliers.
For instance, almost all the entries of the column for head of household sex were either ``hombre'' or ``mujer'' with a very small fraction (roughly 16 out of 59455 samples) having the value $9.0$, and many columns had entries of ``nr'' (presumably no response).

We removed these ``bad samples'' for the sake of our analysis (both to avoid regressing over a clearly problematic dataset, and also because not doing so would cause the regression to include nearly empty categories in several different one-hot encodings which is beyond the scope of the \OHARE algorithm).
This had only a minor effect on the regression results.

\begin{table}[h]
\centering
\begin{tabular}{|l|c|c|}
\hline
\textbf{Column} & \textbf{Values Removed} & \textbf{Samples Removed} \\
\hline
hhhsex & 9.0, nr & 16 \\
hhhalpha & nr & 21 \\
hhhspouse & nr, 2.0 & 546 \\
p16 & nr & 146 \\
hhhage & 97 y más, no sabe, nr & 205 \\
\hline
\textbf{Total} & & 921 \\
\hline
\end{tabular}
\caption{Number and values of outliers we removed from each of the covariates in Angelucci and De Giorgi's study~\cite{angelucci2009indirect}.}
\end{table}

\subsubsection{Robustness}

We ran our tools on the $6$ regressions from Angelucci and De Giorgi's paper (treatment effects on eligible / ineligible over $3$ periods).
Our robustness lower bounds nearly match the AMIP upper bound for the ineligible studies, but left room for improvement on the eligible regressions.

To find the root of this gap, we first noted that the main component of our bound that failed was that after removing $\approx 50$ samples, our bound on $\max\set{\lambda(\Sigma_S^{-1})}$ becomes very large.
Using Algorithm~\ref{alg:greedy_lower_bound}, we were able to find the samples responsible for these loose bounds: $\approx 50$ households (out of $>10000$) that account for more than $80\%$ of the variance in the amount of land owned by each household.

When one of the columns of a linear regression is heavy-tailed, it is not uncommon to replace this column with its logarithm.
This presented a slight challenge in this case, as many of the household in the eligible studies owned no land at all.
To overcome this, we replace the hectacres (amount of land owned) column with $\log\left( \text{hectacres} + \text{median hectacres} \right)$.
We then reran the regression and robustness analysis to see the effects of this change to produce the results in Table~\ref{tab:your_label_here} and Figure~\ref{fig:experiments}.

\subsection{The Oregon Health Insurance Experiment: Evidence from the First Year}

\label{subsec:OHIE}

\subsubsection{Paper Overview}

The Oregon Health Insurance Experiment provides a unique opportunity to assess the effects of expanding access to public health insurance on low-income adults using a randomized controlled design. In 2008, Oregon implemented a lottery to select uninsured low-income adults to apply for Medicaid. This random assignment allows researchers to compare the outcomes of the treatment group (those selected by the lottery) with the control group (those not selected).

Finkelstein et al.~\cite{finkelstein2012oregon} analyze data from the first year after the lottery to evaluate the impacts on health care utilization, financial strain, and health outcomes. The study finds that individuals in the treatment group were about 25 percentage points more likely to have health insurance compared to the control group. The results show that the treatment group experienced higher health care utilization, including increased primary and preventive care visits, and hospitalizations. They also faced lower out-of-pocket medical expenditures and medical debt, evidenced by fewer bills sent to collection agencies.

Moreover, the treatment group reported better physical and mental health than the control group. The authors suggest that the increase in health care utilization due to insurance coverage led to improved health outcomes and reduced financial strain. These findings provide significant evidence on the benefits of expanding Medicaid coverage to low-income populations.

\subsubsection{Regression Analysis}

The regression analysis conducted by Finkelstein et al. utilized instrumental variable (IV) regression to estimate the impact of Medicaid coverage on various health and financial outcomes. They used a treatment variable as the instrument and several dummy variable controls corresponding to different medical metrics (e.g., ``Not bad days physical'').

IV regression is particularly useful in this context as it helps address potential endogeneity issues, where the treatment (Medicaid enrollment) might be correlated with unobserved factors affecting the outcomes. By using the lottery selection as an instrument for Medicaid enrollment, they ensured a more accurate estimation of the causal effect.

The IV regression can be computed as the ratio of two ordinary least squares (OLS) regressions: one estimating the relationship between the instrument (lottery selection) and the endogenous variable (Medicaid enrollment), and the other estimating the relationship between the instrument and the outcome (health or financial metrics). This implies that if both OLS regressions are robust to sign changes, so is their ratio. In this study, the correlation between the instrument and the endogenous variable was very strong and robust, making the primary robustness concern the OLS regression between the endogenous variable and the outcome.

Finkelstein et al. used weighted OLS regressions with several dummy variables, some of which had very sparse categories. This sparsity caused singularities after 13-14 removals, making it difficult to certify robustness with the \ACRE algorithm. Therefore, the \OHARE algorithm was used to certify the robustness of these regressions.

\begin{table}[h]
\centering
\begin{tabular}{llcccc}
\hline
\textbf{Outcome} & \textbf{Type} & \textbf{ACRE} & \textbf{OHARE} & \textbf{AMIP} & \textbf{KZC21} \\
\hline
\multirow{2}{*}{Health genflip} & Instrument vs Endogenous & 14 & 752 & $\geq 10\%$ & $\geq 10\%$ \\
 & Instrument vs Outcome & 14 & 77 & 257 & 257 \\
\hline
\multirow{2}{*}{Health notpoor} & Instrument vs Endogenous & 14 & 752 & $\geq 10\%$ & $\geq 10\%$ \\
 & Instrument vs Outcome & 14 & 40 & 149 & 149 \\
\hline
\multirow{2}{*}{Health change flip} & Instrument vs Endogenous & 14 & 755 & $\geq 10\%$ & $\geq 10\%$ \\
 & Instrument vs Outcome & 14 & 52 & 184 & 184 \\
\hline
\multirow{2}{*}{Not bad days total} & Instrument vs Endogenous & 13 & 685 & $\geq 10\%$ & $\geq 10\%$ \\
 & Instrument vs Outcome & 13 & 21 & 72 & 72 \\
\hline
\multirow{2}{*}{Not bad days physical} & Instrument vs Endogenous & 13 & 669 & $\geq 10\%$ & $\geq 10\%$ \\
 & Instrument vs Outcome & 13 & 25 & 84 & 84 \\
\hline
\multirow{2}{*}{Not bad days mental} & Instrument vs Endogenous & 13 & 676 & $\geq 10\%$ & $\geq 10\%$ \\
 & Instrument vs Outcome & 13 & 31 & 118 & 118 \\
\hline
\multirow{2}{*}{Nodep Screen} & Instrument vs Endogenous & 14 & 742 & $\geq 10\%$ & $\geq 10\%$ \\
 & Instrument vs Outcome & 14 & 32 & 116 & 116 \\
\hline
\end{tabular}
\end{table}

\section{Tightness of \texorpdfstring{\ACRE}{ACRE}}
\label{sec:analysis}
In this section, we will prove that the \ACRE algorithm produces tight bounds on ``well-behaved'' distributions, proving Theorem~\ref{thm:tightness_informal}.

\subsection{Preliminaries}

Throughout this section, we use $\wt{O}/\wt{\Theta}/\wt{\Omega}$ to denote big-O statements that hold up to a factor of $\polylog(n)$.
We will make no attempt to optimize the $\polylog(n)$ factors in this analysis.

Moreover, we say that a term $\eta$ is \emph{negligible} if $\eta^{-1} = n^{\omega(1)}$ is superpolynomial in $n$.
Finally, we say an event happens with very high probability if it happens with probability $1 - \eta$ for negligible $\eta$.

\subsection{Main Result}

Our main goal for this section will be to prove that ACRE produces good bounds with high probability when the regression data is drawn from a well behaved distribution:

\begin{theorem}[\ACRE Bounds are Tight on Well-behaved Data]
    \label{thm:tightness_refined}
    Let $X \in \R^{n \times d}, Y \in \R^n$ be a linear regression problem such that the covariates (i.e., the rows of $X$) are drawn iid from a well-behaved distribution $X_i \sim \mcX$ and the outcomes $Y$ are drawn iid from $Y \sim \mcN \Paren{X \beta_{\textup{gt}}, I_n}$.
    
    Then, for any axis $e \in \Sbb^{d-1}$, with very high probability, the upper and lower bounds produced by ACRE on this regression are close to tight
    \[
    \frac{U_k}{L_k} =  1 + \wt{O} \Paren{ \frac{d + k \sqrt{d}}{n}}
    \]
    for all $k < \kthresh$, where $\kthresh = \wt{\Theta}\left(\min \left\{ \frac{n}{\sqrt{d}}, \frac{n^2}{d^2} \right\}\right)$
\end{theorem}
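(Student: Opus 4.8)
By construction ACRE returns $U_k = A_k + B_k$ and $L_k = A_k - B_k$, where $A_k = \max_{|T|=k}\sum_{i\in T} R_i \IP{X_i}{e}$ is the \emph{exact} maximum of the first-order term (computed as the cumulative sum of the sorted list $RZ$, $Z = Xe$) and $B_k = \frac{M_{XR,k}\, M_{XZ,k}}{1 - M_{X\otimes X,k}}$ is the product of the three MSN upper bounds from Section~\ref{sec:certifying_robustness}. Since $U_k/L_k = 1 + \frac{2B_k}{A_k - B_k}$, the theorem follows once we establish, with very high probability and simultaneously over all $k < \kthresh$, the two estimates
\begin{equation*}
A_k \;=\; \wt{\Omega}\!\Paren{\tfrac{k}{\sqrt n}}
\qquad\text{and}\qquad
B_k \;=\; \wt{O}\!\Paren{\tfrac{kd + k^2\sqrt d}{n^{3/2}}}\,,
\end{equation*}
because dividing gives $B_k / A_k = \wt{O}\Paren{(d + k\sqrt d)/n}$, which for $k < \kthresh$ and $n \gg d$ is bounded below $\tfrac14$ (this is what calibrates the $\polylog$ factor hidden in $\kthresh$), so that $A_k - B_k \geq \tfrac34 A_k$ and $U_k/L_k \leq 1 + \tfrac{8}{3}\, B_k/A_k$.

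\paragraph{Setup.}
After the normalization $X^\intercal X = I$ we write $X_i = n^{-1/2} U_i$ up to a multiplicative $1 + \wt{O}(\sqrt{d/n})$ distortion, where the $U_i = \Sigma^{-1/2} X_i^{\mathrm{raw}}$ are \iid, isotropic, and well-behaved; the distortion is controlled by the matrix concentration $\Norm{\tfrac1n \sum_i U_i U_i^\intercal - I} = \wt{O}(\sqrt{d/n})$, which holds w.h.p.\ for $n \gg d$ and enters every quantity below only through such relative factors. In particular $\Norm{X_i}^2 = \wt{\Theta}(d/n)$ uniformly in $i$. Writing $R = (I - P_X)\varepsilon$ with $P_X$ the rank-$d$ projection onto the column span of $X$, we have $\Norm{R}_2^2 = n(1 \pm o(1))$, $\max_i |R_i| = \wt{O}(1)$, and each $R_i$ is symmetric conditionally on $X$. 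Finally $Z_i = \IP{X_i}{e} = n^{-1/2} W_i\,(1 + \wt{O}(\sqrt{d/n}))$ with $W_i = \IP{U_i}{e}$ \iid, mean zero, variance one, and well-behaved, so $\sum_i Z_i^2 = 1$ exactly and $\max_i |Z_i| = \wt{O}(n^{-1/2})$.

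\paragraph{Upper bound on $B_k$.}
The key probabilistic input is a uniform tail bound on the off-diagonal Gram entries: conditioning on $U_i$ and applying the well-behaved tail bound to $\IP{U_i/\Norm{U_i}}{U_j}$ together with $\Norm{U_i}^2 = \wt{\Theta}(d)$ yields, after a union bound over the $n$ rows,
\begin{equation*}
\#\set{\, j \neq i \;:\; |\IP{X_i}{X_j}| > t\,\tfrac{\sqrt d}{n} \,} \;\leq\; n\exp\!\Paren{-\Omega(t^C)}
\qquad\text{for all } i \text{ and all } t > 0\,,
\end{equation*}
w.h.p. Hence in each row the sum of the $k$ largest entries of $G_X$ is $\wt{O}(d/n + k\sqrt d/n)$ (the diagonal $\Norm{X_i}^2$ contributing the first term); squaring entries, or multiplying by $R_iR_j$ (using $\max_i|R_i| = \wt{O}(1)$), or by $Z_iZ_j$, multiplies the two pieces of this row bound by $\wt{O}(d/n)$, $\wt{O}(1)$, $\wt{O}(1/n)$ respectively. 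Substituting into the RTI output formula $V_k^2 = S_{k,k}$ (a maximum over $T$ of a sum of $k$ row partial-sums, hence at most $k$ times the largest such partial-sum) gives, w.h.p.\ and uniformly in $k$,
\begin{equation*}
M_{X\otimes X,k} = \wt{O}\!\Paren{\tfrac{\sqrt k\, d + k\sqrt d}{n}}, \quad
M_{XR,k} = \wt{O}\!\Paren{\tfrac{\sqrt{kd} + k d^{1/4}}{\sqrt n}}, \quad
M_{XZ,k} = \wt{O}\!\Paren{\tfrac{\sqrt{kd} + k d^{1/4}}{n}}\,,
\end{equation*}
and the spectral backend of Section~\ref{subsec:spectral_decomposition} admits an analogous bound. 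In particular $M_{X\otimes X,k} \leq \tfrac12$ precisely when $k \leq \wt{O}\Paren{\min\set{n/\sqrt d,\, n^2/d^2}} = \kthresh$ — which is the origin of the stated threshold — so $\tfrac{1}{1-M_{X\otimes X,k}} \leq 2$, and multiplying the remaining two bounds yields $B_k = \wt{O}\Paren{(kd + k^2\sqrt d)/n^{3/2}}$.

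\paragraph{Lower bound on $A_k$, and the main obstacle.}
$A_k$ is the sum of the $k$ largest among the values $R_iZ_i$. From $\cE{\sum_i R_i^2 Z_i^2}{X} = \sum_i Z_i^2 (I - P_X)_{ii} = 1 - \wt{O}(d/n)$ together with a conditional-variance bound of order $\wt{O}(1/n)$ we get $\sum_i (R_iZ_i)^2 = 1 - o(1)$ w.h.p., and by conditional symmetry of the $R_i$ at least a $\tfrac14$-fraction of this squared mass sits on indices with $R_iZ_i > 0$. Listing those positive values as $a_1 \geq a_2 \geq \cdots$, each at most $\max_i|R_iZ_i| = \wt{O}(n^{-1/2})$, a short calculation from $\sum_\ell a_\ell^2 \geq \tfrac14$, $a_k \leq A_k/k$, and $\sum_\ell a_\ell \leq \sqrt n\cdot\wt{O}(1)$ forces $A_k = \wt{\Omega}(k/\sqrt n)$ for every $k \leq n$, completing the two estimates. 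The technical heart of the whole argument is the uniform concentration of the Gram-matrix order statistics under the (possibly heavy-tailed) well-behaved assumption — pinning down, simultaneously over all $n$ rows and all $k < \kthresh$, the decay of the sorted sequence $\set{|\IP{X_i}{X_j}|}_j$ and then verifying that the RTI combining step does not inflate these per-row estimates; the accompanying bookkeeping (propagating the $1 + \wt{O}(\sqrt{d/n})$ whitening errors and keeping the $\polylog$ factors from compounding) is the other delicate point.
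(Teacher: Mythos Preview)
Your overall plan---bound $B_k$ from above via the RTI outputs on the three Gram matrices and bound $A_k$ from below, then divide---matches the paper's strategy exactly, and your computation of the three MSN bounds from diagonal/off-diagonal Gram entries is identical to the paper's Claim~\ref{clm:acre_good}.

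There is, however, a genuine gap in your whitening step. The claim that $X_i = n^{-1/2} U_i$ ``up to a multiplicative $1 + \wt{O}(\sqrt{d/n})$ distortion'' that ``enters every quantity below only through such relative factors'' is false in the regime $d \ll n \ll d^2$. Concretely, after normalization $\IP{X_i}{X_j} = U_i^\intercal M^{-1} U_j$ with $M = \sum_k U_k U_k^\intercal$; the naive operator-norm bound
\[
\abs{U_i^\intercal(M^{-1} - n^{-1}I)U_j} \leq \Norm{U_i}\,\Norm{U_j}\,\Norm{M^{-1}-n^{-1}I} = \wt{O}\Paren{d^{3/2}/n^{3/2}}
\]
\emph{dominates} the target $\wt{O}(\sqrt d/n)$ whenever $n < d^2$, and plugging this looser off-diagonal bound into your RTI analysis shrinks $\kthresh$ below the stated $\wt{\Theta}(n^2/d^2)$. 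The paper identifies this as the ``main challenge'' of the proof and resolves it via Sherman--Morrison: peel $U_i$ out of $M$ so that $U_i^\intercal M^{-1} U_j = U_i^\intercal M_{-i}^{-1} U_j / (1 + U_i^\intercal M_{-i}^{-1} U_i)$, and then apply the well-behaved tail on $U_i$ against the now-independent vector $M_{-i}^{-1} U_j$. The same issue and fix apply to $Z_i = e^\intercal M^{-1} U_i$ (the paper's Claim~\ref{clm:e_Sigma_Xi}). This is not bookkeeping; it is the one place where the dependence of the empirical covariance on the specific sample $X_i$ must be explicitly decoupled.

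Your lower bound on $A_k$ via second moments is an interesting alternative to the paper's counting argument (which shows directly that $\wt{\Omega}(n)$ indices have $\alpha_i \geq 1/(10n)$). The step ``by conditional symmetry at least $1/4$ of the squared mass sits on positive indices'' needs more than symmetry, though: symmetry of $R$ gives only that $S^+ = \sum_{R_iZ_i > 0}(R_iZ_i)^2$ and $S^-$ are equidistributed, not that $|S^+ - S^-|$ is small w.h.p., and the $R_i$ are correlated through $I - P_X$. The cleanest fix---and what the paper does---is to pass to the i.i.d.\ ground-truth noise $\varepsilon_i$ via $|R_i - \varepsilon_i| = o(1)$ (Claim~\ref{clm:bounded_resids}), after which the signs really are independent fair coins.
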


In particular, if the samples $X_i, Y_i$ are drawn iid from some normal distribution $\mcN(0, \Sigma^\prime)$ (for some covariance $\Sigma^\prime \in \R^{(d+1) \times (d+1)}$), or if the covariates $X_i$ are drawn iid from the hypercube or unit sphere and the target variable $Y_i \sim \IP{\beta_{\textup{gt}}}{X_i} + \mcN(0, \sigma_R)$ are drawn iid from a normal distribution around some ground truth model, then Theorem~\ref{thm:tightness_refined} holds for them.
Therefore, Theorem~\ref{thm:tightness_informal} follows from Theorem~\ref{thm:tightness_refined}.

Our goal for the rest of this section will be to prove Theorem~\ref{thm:tightness_refined}.

\subsection{Proof Sketch}

First, we note that if $n = \wt{O}(d)$, Theorem~\ref{thm:tightness_refined} holds vacuously, as $\kthresh < 1$.
Therefore, we limit our analysis to the cases where $n > n_{\textup{threshold}}$ for some $n_{\textup{threshold}} = d \times \polylog(n)$ (the exact power of this polylogarithmic factor will depend on the specific constant $C$ in the exponential decay assumption in Definition~\ref{def:very_well_behaved_dist}.

To prove Theorem~\ref{thm:tightness_refined}, we first define a set of condition under which we can prove that ACRE will produce good bounds:

\begin{definition}[ACRE-friendly]
\label{def:well_behaved}
    Let $X, Y$ be the covariates and target variable of a regression. Let $\Sigma = X^\intercal X$ denote the unnormalized empirical covariance of the covariates, and let $R = Y - X \Sigma^{-1} X^T Y$ denote the residuals.
    
    We say that this regression is \emph{ACRE-friendly} for direction $e \in \R^d$ with $k$ removals and parameters $P_1, P_2, P_3, P_4, P_5 \geq 0$ if
    \begin{enumerate}
        \item The covariates are bounded in Mahalanobis distance $\max_{i \in \brac{n}} X_i^\intercal   \Sigma^{-1} X_i \leq P_1 \frac{d}{n}$.
        \item The inner products of samples are bounded $\max_{i \neq j \in \brac{n}} X_i^\intercal   \Sigma^{-1} X_j \leq P_2 \frac{\sqrt{d}}{n}$.
        \item The residuals are bounded $\max_{i \in \brac{n}} R_i^2 \leq P_3 \sigma_R^2$, where $\sigma_R = \sqrt{\frac{1}{n} \sum_{i \in [n]} R_i^2}$.
        \item The inner products between the covariates and the axis of interest are bounded $\max_{i \in \brac{n}} {\Paren{e^\intercal \Sigma^{-1} X_i}^2} \leq \frac{P_4}{n} \sum_{i \in \brac{n}} {\Paren{e^\intercal \Sigma^{-1} X_i}^2} = \frac{P_4}{n} e^\intercal \Sigma^{-1} e$.
        \item Let $\alpha_i = e^\intercal \Sigma^{-1} X_i R_i$ be the AMIP influence scores. We require that the sum over the $k$ largest influence scores is at least 
        \[
        a_k = \max_{T \in \binom{\brac{n}}{k}} \set{\sum_{i \in T} \alpha_i} \geq \frac{1}{P_5} \sigma_Z \sigma_R k
        \]
        where $\sigma_Z = \sqrt{\frac{1}{n} \sum_{i \in [n]} Z_i^2}$, for $Z_i = e^\intercal \Sigma^{-1} X_i$.
    \end{enumerate}

    When $P_1, \ldots, P_5$ are at most polylogarithmic in $n$, we say that the regression is ACRE-friendly.
\end{definition}

Our proof of Theorem~\ref{thm:tightness_refined} will have two main components.
The first and smaller portion of the proof will be to show that the bounds produced by ACRE are close to tight when a regression is ACRE-friendly.
It should not be surprising that Definition~\ref{def:well_behaved} is sufficient condition for producing good bounds with ACRE, since ACRE essentially checks a slightly more robust version of these conditions.

\begin{claim}[ACRE bounds are nearly tight on ACRE-friendly regressions]
    \label{clm:acre_good}
    Let $X, Y, e$ be an ACRE-friendly regression with parameters $P_1, \ldots, P_5$ for all $k \leq k_0$.
    Then, there exists $k_{\textup{threshold}} = \Theta\left(\sfrac{\min \left\{ \frac{n}{\sqrt{d}}, \frac{n^2}{d^2}, k_0 \right\}}{\poly \Paren{P_1, \ldots, P_5}} \right)$ such that for all $k \leq k_{\textup{threshold}}$, the bounds $L_k, U_k$ produced by \ACRE satisfy
    \[
    \frac{U_k}{L_k} \leq 1 + O\Paren{\frac{d + k \sqrt{d}}{n} \times \poly \Paren{P_1, \ldots, P_5}}
    \]
\end{claim}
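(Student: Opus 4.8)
The plan is to unwind the ACRE output termwise using equation~\eqref{eq:cs_relaxation} and show that, under ACRE-friendliness, the multiplicative gap between the upper bound $U_k$ and the lower bound $L_k$ is controlled by two sources of error: (i) the slack in the MSN-bounding subroutine applied to the three Gram matrices $G_{X\otimes X}$, $G_{XR}$, $G_{X\langle X,e\rangle}$, and (ii) the gap between the first-order term (which both $U_k$ and $L_k$ contain exactly via the cumulative-sum array $A$) and the high-order correction $\frac{1}{1-M_{X\otimes X}}M_{XR}M_{X\langle X,e\rangle}$ that is \emph{added} to get $U_k$ and \emph{subtracted} to get $L_k$. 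Since $U_k - L_k = 2\cdot(\text{high-order correction bound})$ and $L_k \geq a_k - (\text{correction})$, it suffices to show that the correction bound is $O\!\big(\frac{d + k\sqrt d}{n}\cdot \poly(P_1,\dots,P_5)\big)\cdot a_k$ and that $a_k \geq \frac{1}{P_5}\sigma_Z\sigma_R k$ (condition~5) dominates.

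First I would bound each MSN term using the RTI/greedy upper bound together with the ACRE-friendliness conditions. For $M_{X\otimes X}$: the Gram matrix $G_{X\otimes X}$ has entries $\langle X_i,X_j\rangle^2$, whose diagonal is at most $(P_1 d/n)^2$ (condition~1) and off-diagonal at most $(P_2\sqrt d/n)^2$ (condition~2); a crude triangle-inequality bound on $\max_{|T|=k}\|\sum_{i\in T}X_i\otimes X_i\|$ then gives $M_{X\otimes X} = O\big(\frac{k d}{n} + \frac{k^2 \sqrt d \cdot \sqrt d}{n}\cdot\frac1{\sqrt d}\big)$-type quantities — more carefully, $M_{X\otimes X}^2 \leq k\cdot P_1 d/n \cdot (\text{diag}) + k^2 \cdot (P_2\sqrt d/n)^2$, so $M_{X\otimes X} = O\big(\poly(P)\cdot(\sqrt{kd}/\sqrt n + k\sqrt d/n)\big)$. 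Choosing $\kthresh = \Theta\big(\min\{n/\sqrt d, n^2/d^2, k_0\}/\poly(P)\big)$ ensures $M_{X\otimes X} \leq 1/2$, so $\frac{1}{1-M_{X\otimes X}} \leq 2$ — this is exactly where the $n/\sqrt d$ and $n^2/d^2$ thresholds come from (matching $\sqrt{kd/n}\le \tfrac12$ resp. $k\sqrt d/n \le \tfrac12$). Similarly $M_{XR} = \max_{|T|=k}\|\sum_{i\in T}R_i X_i\|$: diagonal of $G_{XR}$ is $R_i^2 \|X_i\|_{\Sigma^{-1}}^2\cdot$(normalization) $\leq P_3\sigma_R^2 \cdot P_1 d/n$ and off-diagonal $\leq P_3\sigma_R^2 \cdot P_2\sqrt d/n$, giving $M_{XR} = O(\poly(P)\cdot\sigma_R(\sqrt{kd/n} + k\sqrt d/n))$. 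And $M_{X\langle X,e\rangle}$: using conditions~2 and~4, $M_{X\langle X,e\rangle} = O(\poly(P)\cdot\sigma_Z(\sqrt{kd/n} + k\sqrt d/n))$ where $\sigma_Z^2 = e^\intercal\Sigma^{-1}e$ (here I would be slightly careful: the diagonal of $G_{X\langle X,e\rangle}$ involves $\langle X_i,e\rangle^2\|X_i\|^2$, which by Cauchy–Schwarz against conditions 1 and 4 is $\lesssim (P_4/n)\sigma_Z^2\cdot(P_1 d/n)$).

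Next I would assemble: the high-order correction bound is at most $2\cdot M_{XR}\cdot M_{X\langle X,e\rangle} = O\big(\poly(P)\cdot \sigma_R\sigma_Z\cdot (\sqrt{kd/n}+k\sqrt d/n)^2\big) = O\big(\poly(P)\cdot\sigma_R\sigma_Z\cdot(kd/n + k^2 d/n^2)\big)$. Dividing by $a_k \geq \frac{1}{P_5}\sigma_R\sigma_Z k$ (condition~5) yields
\[
\frac{U_k - L_k}{2 L_k} \;\leq\; \frac{(\text{correction})}{a_k - (\text{correction})} \;=\; O\!\Big(\poly(P)\cdot\Big(\frac{d}{n} + \frac{kd}{n^2}\Big)\Big),
\]
and since $kd/n^2 \leq (k\sqrt d/n)\cdot(\sqrt d/n) \leq k\sqrt d/n$ in the relevant range, this is $O(\poly(P)\cdot\frac{d+k\sqrt d}{n})$, which gives the claimed $U_k/L_k \leq 1 + O(\poly(P)\cdot\frac{d+k\sqrt d}{n})$ after noting $L_k > 0$ on this range (the correction is a lower-order fraction of $a_k$). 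I would also double-check that $A_k$, the sorted cumulative sum of $R_i\langle X_i,e\rangle = \alpha_i$, equals $a_k$ exactly, so the first-order parts of $U_k$ and $L_k$ genuinely cancel.

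The main obstacle I anticipate is getting the MSN upper bounds $M_{X\otimes X}, M_{XR}, M_{X\langle X,e\rangle}$ tight enough as functions of $k$ — specifically, showing that the crude per-matrix bound from RTI does not lose more than $\poly(P_1,\dots,P_5)$ relative to the truth. The subtlety is the interplay between the diagonal contribution (scaling like $k\cdot(\text{diag entry})$, i.e. $\sim kd/n$) and the off-diagonal contribution (scaling like $k^2\cdot(\text{off-diag entry})$, i.e. $\sim k^2\sqrt d/n$); getting the crossover right is what produces both branches of $\kthresh$, and one has to be careful that the $\sqrt{\cdot}$ from taking norms is handled consistently across all three terms and in the product $M_{XR}M_{X\langle X,e\rangle}$. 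A secondary technical point is verifying $1 - M_{X\otimes X} \geq 1/2$ uniformly for $k \leq \kthresh$ with the right power of $\poly(P)$ absorbed into the definition of $\kthresh$; this is routine once the MSN bound above is in hand but needs the constants tracked.
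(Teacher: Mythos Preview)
Your approach is essentially the same as the paper's: normalize so that $\Sigma = I$, $\sigma_R = 1$, $\|e\|=1$; bound the diagonal and off-diagonal entries of $G_{X\otimes X}$, $G_{XR}$, $G_{X\langle X,e\rangle}$ using the five ACRE-friendliness conditions; apply the RTI estimate $M \leq \sqrt{k\cdot(\text{max diag}) + k^2\cdot(\text{max off-diag})}$; choose $\kthresh$ so that $M_{X\otimes X}$ stays bounded away from $1$; and finish with $U_k/L_k = (a_k+b_k)/(a_k-b_k) = 1 + O(b_k/a_k)$ using condition~5.

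There is, however, an arithmetic slip in your bounds for $M_{XR}$ and $M_{X\langle X,e\rangle}$. You correctly identify the off-diagonal entries of $G_{XR}$ as $O(P_2P_3\sqrt d/n)$, but then write the resulting contribution as $k\sqrt d/n$. Taking the square root of $k^2\cdot P_2P_3\sqrt d/n$ gives $k\,d^{1/4}/\sqrt n$, not $k\sqrt d/n$; the two differ by a factor of $\sqrt n/d^{1/4}$, and your version is too small. The same slip occurs for $M_{X\langle X,e\rangle}$. With the correct terms, the product $M_{XR}M_{X\langle X,e\rangle}$ becomes $O\big((kd + k^{3/2}d^{3/4} + k^2\sqrt d)/n^{3/2}\big)$, and after dividing by $a_k \geq k/(P_5\sqrt n)$ you get $O((d + k\sqrt d)/n)$ directly --- the $k\sqrt d/n$ term is genuinely needed, not an artificial relaxation of $kd/n^2$. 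Your suspiciously tight intermediate bound $(d/n + kd/n^2)$ is a symptom of this error; once it is fixed the argument is identical to the paper's.
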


The second and longer portion of our proof will be devoted to showing that well-behaved distributions yield ACRE-friendly regressions with high probability.

\begin{claim}[Well-behaved distributions yield ACRE-friendly regressions with high probability]
    \label{clm:well_behaved_dists}
    Let $n,d$ be as above, and let $X, Y$ be as in Theorem~\ref{thm:tightness_refined}.

    Then, for any axis $e \in \Sbb^{d-1}$, with very high probability, the regression $X, Y$ is ACRE-friendly for all $k \leq k_0$, for $k_0 = \wt{\Omega}(n)$.
\end{claim}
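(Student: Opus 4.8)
The plan is to verify each of the five conditions of Definition~\ref{def:well_behaved} with polylogarithmic parameters, reducing everything to concentration of measure for the covariates together with the conditional-on-$X$ Gaussian structure of the residuals. First I would normalize: since every quantity in Definition~\ref{def:well_behaved} is invariant under $X\mapsto XA$, we may assume the population covariance of $\mcX$ is $I$, so that well-behavedness just says each marginal $\IP{v}{X_i}$ ($v\in\Sbb^{d-1}$) has tail $\exp(-\Omega(t^C))$. The backbone will be a \emph{master event} $\Norm{X^\intercal X-nI}\le n/100$, obtained by truncating each $X_i$ to $\Norm{X_i}^2\le d\,\polylog(n)$ (the discarded mass is negligible by the marginal tail bound and slicing $\Norm{X_i}^2=\sum_j\IP{e_j}{X_i}^2$) and a matrix Bernstein inequality; in the regime $n\ge d\,\polylog(n)$ this holds with negligible failure probability. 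On it $\Sigma^{-1}=\tfrac1n(I+E)$ with $\Norm E\le\tfrac1{50}$, so $e^\intercal\Sigma^{-1}e=\Theta(1/n)$ and $\sigma_Z^2=\tfrac1n e^\intercal\Sigma^{-1}e=\Theta(1/n^2)$.

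Conditions 1, 2, 4 are then tail bounds on $\Norm{X_i}^2$, $\IP{X_i}{X_j}$, $\IP{e}{X_i}$; condition 1 follows from $\max_i\Norm{X_i}^2\le d\,\polylog(n)$ (slicing again), at failure probability $\poly(n)\exp(-\polylog(n))=n^{-\omega(1)}$. For conditions 2 and 4 the naive replacement $\Sigma^{-1}\approx\tfrac1n I$ would lose a spurious $\sqrt d$, so instead I would decouple $X_i$ from $\Sigma$ via Sherman--Morrison: with $\Sigma_{-i}=\sum_{l\ne i}X_lX_l^\intercal$ (independent of $X_i$, also concentrated at $nI$), $e^\intercal\Sigma^{-1}X_i=e^\intercal\Sigma_{-i}^{-1}X_i/(1+X_i^\intercal\Sigma_{-i}^{-1}X_i)$ is a $\Theta(1/n)$-scaled marginal of $X_i$ (since $X_i^\intercal\Sigma_{-i}^{-1}X_i=o(1)$), giving $\max_i(e^\intercal\Sigma^{-1}X_i)^2\le\polylog(n)\cdot\sigma_Z^2$ (condition 4); a leave-two-out version handles condition 2. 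For condition 3, the ground-truth part cancels and $R=(I-H)\epsilon$ with $H=X\Sigma^{-1}X^\intercal$, so conditional on $X$ the residuals are Gaussian with covariance $I-H$ (diagonal $1-h_i=1-o(1)$ by condition 1, off-diagonal $\wt O(\sqrt d/n)$ by condition 2); thus $\max_i R_i^2\le\polylog(n)$ with very high probability, and $\sigma_R^2=\tfrac1n\Norm R^2$ has law $\tfrac1n\chi^2_{n-d}$, concentrated at $1-d/n=\Theta(1)$.

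The real work is condition 5, $a_k\ge\tfrac1{P_5}\sigma_Z\sigma_R k$. Writing $\alpha_i=Z_iR_i$ with $Z_i=e^\intercal\Sigma^{-1}X_i$, I would first observe that $k\mapsto a_k$ is concave while the target is linear in $k$, so it suffices to check $k=1$ and $k=k_0$. Working conditionally on $X$: the map $\epsilon\mapsto a_k=\max_{|T|=k}\IP{(I-H)(1_T\odot Z)}{\epsilon}$ is convex and $\sqrt{kP_4}\,\sigma_Z$-Lipschitz (using $\max_i Z_i^2\le P_4\sigma_Z^2$ from condition 4 and $\Norm{I-H}=1$); and for its mean, since at least $n/3$ of the $\alpha_i$ are positive with very high probability, letting $T_1$ be the $k$ indices of largest $|Z_i|$ among $\{i:\alpha_i>0\}$ gives $a_k\ge\sum_{i\in T_1}|Z_i|\,|R_i|$, whence $\cE{a_k}{X}\gtrsim\sigma_R\sum_{i\in T_1}|Z_i|$ (the $|R_i|$ being conditionally independent half-normals of mean $\Theta(\sigma_R)$). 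Because the sign-alignment event is independent of $|Z_i|$ given $X$, a concentration step replaces $T_1$ by the top $k$ of $|Z_i|$ up to a constant, and then Cauchy--Schwarz plus $\max_i Z_i^2\le P_4\sigma_Z^2$ gives $\sum_{i\in T_1}|Z_i|\gtrsim(\sum_{\textup{top-}k}Z_i^2)/(\sqrt{P_4}\,\sigma_Z)\ge k\sigma_Z/\sqrt{P_4}$, using that a top-$k$ average is at least the overall average. Gaussian concentration then yields $a_k\ge\tfrac12\cE{a_k}{X}\gtrsim\sigma_R\sigma_Z k/\sqrt{P_4}$ except with probability $\exp(-\Omega(k/P_4^2))$, negligible once $k\ge\polylog(n)$; the remaining small $k$ follow from $a_k\ge a_1\ge(\sigma_Z/\sqrt2)\max_{i\in G}R_i=\Omega(\sigma_Z)$ where $G=\{i:Z_i>0,\,Z_i^2\ge\sigma_Z^2/2\}$ has size $\Omega(n/P_4)$. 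A union bound over the (few) relevant values of $k$ and all events above gives ACRE-friendliness with every $P_j=\polylog(n)$ for all $k\le k_0=\wt\Omega(n)$, with very high probability.

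I expect condition 5 to be the main obstacle: conditions 1--4 are routine concentration, but condition 5 requires both pinning down the conditional law of $R$ given $X$ (approximately independent Gaussians, via the hat matrix being nearly diagonal --- which is exactly condition 2) and lower-bounding a maximum over $\binom nk$ highly correlated Gaussian sums down to the \emph{exact} scale $\sigma_Z\sigma_R k$ with only polylogarithmic loss. The decisive point is that this needs no anticoncentration hypothesis on $\mcX$ (Definition~\ref{def:very_well_behaved_dist} provides none) --- only the non-degeneracy $\max_i Z_i^2\le\polylog(n)\cdot\tfrac1n\sum_i Z_i^2$, itself a byproduct of conditions 1 and 4 --- which is what keeps the Cauchy--Schwarz step lossless.
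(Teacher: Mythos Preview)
Your skeleton for conditions~1--4 is essentially the paper's proof: normalize to isotropic population covariance, establish the master event $\Norm{X^\intercal X-nI}=o(n)$ via (approximate) matrix Bernstein, slice $\Norm{X_i}^2$ over coordinates for condition~1, and use Sherman--Morrison leave-one-out (the paper needs only one, not two) to decouple $X_i$ from $\Sigma^{-1}$ for conditions~2 and~4. Your reduction of condition~3 to the hat-matrix structure $R=(I-H)\epsilon$ is also what the paper does (its Claim~\ref{clm:bounded_resids} shows $\Norm{R-\epsilon}_\infty=o(1)$ directly from conditions~1--2).

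For condition~5 you take a genuinely different and considerably more elaborate route. The paper's argument is a two-line counting step: from $\sum_i Z_i^2=n\sigma_Z^2$ and $\max_i Z_i^2\le P_4\sigma_Z^2$ one gets $\wt\Omega(n)$ indices with $|Z_i|\ge\sigma_Z/2$; for each such $i$ the event $\{\sgn{Z_i}\,\zeta_i\ge\tfrac14\}$ is an independent constant-probability coin, so by Chernoff a constant fraction of them also satisfy $\sgn{Z_i}R_i\ge\tfrac15$ (using $|R_i-\zeta_i|=o(1)$), hence $\alpha_i\ge c\,\sigma_Z\sigma_R$. This yields $a_k\ge c\,\sigma_Z\sigma_R k$ for all $k\le k_0=\wt\Omega(n)$ with $P_5=O(1)$, with no Gaussian concentration, no mean computation, and no concavity reduction. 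Your approach --- Lipschitz Gaussian concentration on $\epsilon\mapsto a_k$ plus a mean lower bound via a random-subset argument --- is valid in principle, but two steps are presently handwaved: (i) the ``concentration step'' replacing $T_1$ by the deterministic top-$k$ of $|Z_i|$ needs the Chernoff argument on the sign-alignment set to be spelled out, and (ii) treating the $|R_i|$ as conditionally independent half-normals is only approximately true (the off-diagonal of $I-H$ is $\wt O(\sqrt d/n)$, not zero), so one should really work with $\zeta$ and then pass to $R$ via $\Norm{R-\zeta}_\infty=\wt O(\sqrt{d/n})$. Also, your set $G=\{i:Z_i>0,\;Z_i^2\ge\sigma_Z^2/2\}$ need not have size $\Omega(n/P_4)$ as stated --- the counting argument only controls $\{i:Z_i^2\ge\sigma_Z^2/2\}$, so you should split by sign and use whichever half is larger. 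None of these are fatal, but the paper's direct counting avoids all of them and even lands $P_5=O(1)$ rather than $\polylog(n)$.
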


Combined, Claims~\ref{clm:acre_good} and~\ref{clm:well_behaved_dists} yield Theorem~\ref{thm:tightness_refined}.

\subsection{Symmetries}
\label{subsec:symmetries}

Before proceeding to the proofs of Claims~\ref{clm:acre_good} and~\ref{clm:well_behaved_dists}, we note that our definitions of well-behaved distributions and of ACRE-friendly regressions are normalized in a way that permits some symmetries.
Indeed, it is easy to see that for any invertible matrix $L \in \R^{d\times d}$ and positive scalars $\alpha_e, \alpha_y \in \R^+$, a regression $X, Y, e$ is well-behaved / ACRE-friendly if and only if the regression $\wt{X} = X L, \wt{Y} = \alpha_y Y, \wt{e} = \alpha_e L e$ is well-behaved / ACRE-friendly.
Moreover, such a reparametrizations also has no effect on the robustness of the regression, so we may apply this symmetry on the input and output of Theorem~\ref{thm:tightness_refined} or Claims~\ref{clm:acre_good} and~\ref{clm:well_behaved_dists}.

For Theorem~\ref{thm:tightness_refined} or Claim~\ref{clm:well_behaved_dists}, we may apply this symmetry with $L = \Sigma^{-1/2}$, where $\Sigma$ is the \emph{ground truth} covariance of the distribution $\mcX$, resulting in a distribution with covariance identity.
Note that this renormalization is \emph{not} the same as the renormalization process in the ACRE algorithm.
In the ACRE algorithm, we renormalize by the empirical covariance, whereas the renormalization above is by the ground-truth covariance.
One of the steps of our analysis will be to show that when the regression is drawn from a well-behaved distribution, the two are close to one another with high probability, but this is not immediate.

For the proof of Claim~\ref{clm:acre_good}, we renormalize the samples according to their empirical covariance.
This would result in a set of samples $X$ such that $\Sigma^* = X^\intercal X = I$.
Moreover, we use the $\alpha_Y$ symmetry to ensure that the residuals to have standard deviation $\sigma_R = 1$ and the $\alpha_e$ symmetry to ensure that $e$ has norm $1$.

\subsection{Proof of Claim~\ref{clm:acre_good}}
\label{subsec:acre_good}

After this renormalization step above, the conditions on the normalized regression simplify to:
\begin{enumerate}
    \item The covariates are bounded in $\ell_2$ norm $\max_{i \in \brac{n}} \Norm{X_i}^2 \leq P_1 \frac{d}{n}$
    \item The inner products of samples are bounded $\max_{i \neq j \in \brac{n}} \IP{X_i}{X_j} \leq P_2 \frac{\sqrt{d}}{n}$
    \item The residuals are bounded $\max_{i \in \brac{n}} R_i^2 \leq P_3$.
    \item The inner products between the covariates and the axis of interest are bounded $\max_{i \in \brac{n}} {\IP{e}{X_i}^2} \leq \frac{P_4}{n}$
    \item Let $\alpha_i = \IP{e}{X_i} R_i$ be the AMIP influence scores. We require that the sum over the $k$ largest influence scores is at least 
    \[
    a_k = \max_{T \in \binom{\brac{n}}{k}} \set{\sum_{i \in T} \alpha_i} \geq \frac{1}{P_5} \times \frac{k}{\sqrt{n}}
    \]
\end{enumerate}

We prove Claim~\ref{clm:acre_good}, working with the normalized regression.
Recall that ACRE produces its bounds by combining \RTI bounds on the following Gram matrices:
\begin{enumerate}
    \item $G_{X\otimes X}$ whose entries are the squared entries of the Gram matrix $G_{X}$ of the original covariates. By our assumptions on the maximal norm and inner products of covariates, we know that the diagonal entries of this matrix are bounded by $\frac{d^2}{n^2} P_1^2$ and its off-diagonal entries are bounded by $\frac{d}{n^2} P_2^2$.
    \item $G_{XR}$ whose entries are inner products between covariates multiplied by the product of two residuals. Therefore, the diagonal entries of this matrix are bounded (in absolute value) by $\frac{d}{n} P_1 P_3$ and its off-diagonal entries by $\frac{\sqrt{d}}{n} P_2 P_3$.
    \item and $G_{XZ}$ whose entries are inner products between covariates, rescaled by the product of their weights on the axis of interest $e$. Therefore, its diagonal entries are bounded (in absolute value) by $\frac{d}{n^2} P_1 P_4$ and its off-diagonal entries by  $\frac{\sqrt{d}}{n^2} P_2 P_4$.
\end{enumerate}

Note that the output of the \RTI algorithm is an upper bound on the resulting MSN problem.
The output of the \RTI algorithm squared is always equal to the sum of $k$ diagonal entries of the Gram matrix plus fewer than $k^2$ off-diagonal entries.
Therefore, we have:
\begin{equation}
    \label{eq:rti_bound}
    \text{RTI Bound} \leq \sqrt{k\times \text{Largest Diagonal Entry} + k^2 \times \text{Largest Off-Diagonal Entry}}
\end{equation}

\paragraph{The \texorpdfstring{$X \otimes X$}{X tensor X} Term}
Let $M_{X \otimes X}$ denote the MSN-bound obtained by running the \RTI algorithm on the $G_{X\otimes X}$.
Combining equation~\eqref{eq:rti_bound} with our knowledge of the diagonal and off-diagonal entries of $G_{X\otimes X}$, we clearly have:
\[
M_{X\otimes X} \leq \sqrt{k \frac{d^2}{n^2} P_1^2 + k^2 \frac{d}{n^2} P_2^2}
\]

Therefore, for all $k \leq k_{\text{threshold}} = \min \set{\frac{n}{\sqrt{d} P_2 2}, \frac{n^2}{d^2 P_1^2 4}}$, $M_{X\otimes X}$ is at most
\[
M_{X\otimes X} \leq \sqrt{\frac{1}{2}} < 1
\]

\paragraph{The \texorpdfstring{$XR$ and $XZ$}{XR and XZ} Terms}
Similarly, let $M_{XR}$ and $M_{XZ}$ denote the MSN bounds obtained by the \RTI algorithm on $G_{XR}$ and $G_{XZ}$ respectively.
As before, we combine equation~\eqref{eq:rti_bound} with our bounds on the diagonal and off-diagonal terms of $G_{XR}$ and $G_{XZ}$ to show that
\[
M_{XR} \leq \sqrt{k\frac{d}{n} P_1 P_3 + k^2 \frac{\sqrt{d}}{n} P_2 P_3} \leq \sqrt{k\frac{d}{n} P_1 P_3} + \sqrt{k^2 \frac{\sqrt{d}}{n} P_2 P_3}
\]
and
\[
M_{XZ} \leq \sqrt{k\frac{d}{n^2} P_1 P_4 + k^2 \frac{\sqrt{d}}{n^2} P_2 P_4} \leq \sqrt{k\frac{d}{n^2} P_1 P_4} + \sqrt{k^2 \frac{\sqrt{d}}{n^2} P_2 P_4}
\]

\paragraph{Wrapping Up}

Therefore, for all $k \leq \kthresh$, we have
\[
b_k = \frac{1}{1 - M_{X \otimes X}} M_{XR} M_{XZ} \leq \frac{\sqrt{2P_3 P_4}}{\sqrt{2} - 1} \times \Paren{\frac{k d}{n^{3/2}} P_1 + 2\frac{k^{3/2} d^{3/4}}{n^{3/2}} \sqrt{P_1 P_2} + \frac{k^2 \sqrt{d}}{n^{3/2}} P_2} = O\Paren{\frac{kd + k^2 \sqrt{d}}{n^{3/2}} \times \poly \Paren{P_1, \ldots, P_4}}
\]

Finally, applying the 5th condition, we have
\[
\frac{U_k}{L_k} = \frac{a_k + b_k}{a_k - b_k} = 1 + O\Paren{\frac{b_k}{a_k}} = 1 + O\Paren{\frac{d + k \sqrt{d}}{n} \times \poly\Paren{P_1, \ldots, P_5}}
\]
completing the proof of Claim~\ref{clm:acre_good}.

\subsection{Proof of Claim~\ref{clm:well_behaved_dists}}

We now move on to the main portion of the proof which will be devoted to showing that when the covariates $X_i \sim \mcX$ are drawn from a well-behaved distribution and the target variable $Y_i \sim N(X_i^\intercal \gt{\beta}, 1)$ are drawn from a normal distribution around some ground truth linear model, the resulting regression is ACRE-friendly with very high probability (i.e., to prove Claim~\ref{clm:well_behaved_dists}).
Recall that as we showed in Section~\ref{subsec:symmetries}, it suffices to prove this result for the case where the ground-truth covariance of the distribution $\mcX$ is equal to identity.

\paragraph{Main Challenge and Proof Strategy}

The main challenge will be that the requirements of Definition~\ref{def:well_behaved} (ACRE-friendly) require statements like $v^\intercal \Sigma^{-1} w = \textup{small}$, where $v$ and $w$ are related to the samples of the regression (e.g., $v, w$ might be two samples $X_i, X_j$) and from here on out, $\Sigma = X^\intercal X$ denotes the unnormalized empirical covariance (recall that we normalized the ground truth covariance to be the identity).

Because of our assumption that $n > d \times \polylog(n)$, we can use matrix Bernstein to prove that the empirical covariance is close to its expectation $\Sigma \approx n I$, but even for moderate dimensions $d$, our bounds on the overall error of this approximation (i.e., $\Norm{\Sigma - n I}$) would not be strong enough to prove Claim~\ref{clm:well_behaved_dists}.
For any fixed $v, w \in \R^d$ that do not depend on the sample distributions, we can easily prove sufficiently strong concentration bounds on $v^\intercal \Sigma^{-1} w$.
However, the $v, w$ pairs for which we will need to prove our concentration bounds \emph{do} depend on the samples, creating the potential for an alignment between the large eigenvectors of $\Sigma - nI$ and this pair.

Our proof strategy will be to expand $\Sigma^{-1}$ into terms that depend on $v, w$ and terms that do not.
The terms that do not depend on $v, w$ can be tightly bounded using simple concentration bounds, and the terms that do depend on $v, w$ will be so small to begin with that we can bound them very loosely using the matrix Bernstein inequality (see Lemma~\ref{lem:matrix_bernstein}).

\subsubsection{Matrix Bernstein Inequality}

Throughout our analysis we will often make use of the matrix Bernstein inequality:

\begin{lemma}[Matrix Bernstein~\cite{tropp2015introduction}]
\label{lem:matrix_bernstein}
    Let $Z_1, \ldots, Z_n$ be independently-distributed random symmetric $d \times d$ matrices with mean $\EE{Z_i} = 0$.
    Moreover, assume that $\Norm{Z_i} \leq L$ and $\Norm{\sum_{i \in \brac{n}} \EE{Z_i^2}} \leq \sigma^2$, then
    \[
    \Prob{}{\Norm{\sum_{i \in \brac{n}} Z_i} \geq t} \leq 2d \cdot \exp\Paren{\frac{-t^2/2}{\sigma^2 + Lt/3}}
    \]
\end{lemma}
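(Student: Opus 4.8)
Following Tropp's monograph (as cited), the plan is to prove Lemma~\ref{lem:matrix_bernstein} by the matrix Laplace-transform (Ahlswede--Winter / Tropp) method. First I would reduce the two-sided operator-norm tail to a one-sided eigenvalue tail: since $\Norm{\sum_i Z_i} = \max\set{\lambda_{\max}\Paren{\sum_i Z_i},\, \lambda_{\max}\Paren{-\sum_i Z_i}}$ and $\set{-Z_i}$ satisfies exactly the same hypotheses as $\set{Z_i}$, any bound of the form $\Prob{}{\lambda_{\max}(\sum_i Z_i) \geq t} \leq d\exp(\cdots)$ yields the claimed bound with the factor $2d$ after a union bound. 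For the one-sided tail, the master Laplace-transform inequality gives, for every $\theta > 0$,
\[
\Prob{}{\lambda_{\max}\Paren{\sum_i Z_i} \geq t} \leq e^{-\theta t}\,\EE{\trace{\exp\Paren{\theta \sum_i Z_i}}}\,,
\]
so the problem reduces to controlling the matrix moment generating function $\EE{\trace{\exp(\theta\sum_i Z_i)}}$.

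The central step is subadditivity of the matrix cumulant generating function. By Lieb's concavity theorem — the map $A \mapsto \trace{\exp(H + \log A)}$ is concave on positive-definite $A$ for any fixed Hermitian $H$ — combined with Jensen's inequality, one peels off the summands one at a time to obtain Tropp's master bound
\[
\EE{\trace{\exp\Paren{\sum_i \theta Z_i}}} \leq \trace{\exp\Paren{\sum_i \log \EE{e^{\theta Z_i}}}} \leq d\cdot\exp\Paren{\lambda_{\max}\Paren{\sum_i \log\EE{e^{\theta Z_i}}}}\,.
\]
It then remains to bound each $\log\EE{e^{\theta Z_i}}$. I would use the scalar inequality $e^{\theta z} \leq 1 + \theta z + g(\theta) z^2$, valid for $|z| \leq L$ with $g(\theta) \defeq \frac{e^{\theta L} - 1 - \theta L}{L^2}$ (which follows from monotonicity of $z \mapsto (e^{\theta z} - 1 - \theta z)/z^2$), and transfer it to Hermitian $Z_i$ with $\Norm{Z_i} \leq L$ via the spectral theorem: $e^{\theta Z_i} \preceq I + \theta Z_i + g(\theta) Z_i^2$. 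Taking expectations and using $\EE{Z_i} = 0$ gives $\EE{e^{\theta Z_i}} \preceq I + g(\theta)\EE{Z_i^2}$; then operator monotonicity of $\log$ together with $\log(I + A) \preceq A$ for $A \succeq 0$ gives $\log\EE{e^{\theta Z_i}} \preceq g(\theta)\EE{Z_i^2}$. Summing and using that $\lambda_{\max}$ is monotone under $\preceq$ and subadditive, we get $\lambda_{\max}(\sum_i \log\EE{e^{\theta Z_i}}) \leq g(\theta)\Norm{\sum_i\EE{Z_i^2}} \leq g(\theta)\sigma^2$, hence
\[
\Prob{}{\lambda_{\max}\Paren{\sum_i Z_i} \geq t} \leq d\cdot\exp\Paren{g(\theta)\sigma^2 - \theta t}\,.
\]

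Finally I would optimize over $\theta > 0$. Writing $u = Lt/\sigma^2$ and choosing $\theta = \frac{1}{L}\log(1+u)$ makes $g(\theta)\sigma^2 - \theta t = -\frac{\sigma^2}{L^2}\Paren{(1+u)\log(1+u) - u}$, and the elementary estimate $(1+u)\log(1+u) - u \geq \frac{u^2}{2 + 2u/3}$ for $u \geq 0$ turns this into exactly $-\frac{t^2/2}{\sigma^2 + Lt/3}$; combining with the two-sided reduction gives the stated bound with the factor $2d$. I expect the main obstacle to be Lieb's concavity theorem: it is the one genuinely deep ingredient, with no elementary proof, and everything downstream of it is scalar calculus transported to matrices by the spectral theorem together with the routine (but constant-sensitive) optimization of $\theta$. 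A secondary point of care is extracting the precise constant $3$ in the $Lt/3$ term, which relies on the particular choice of $\theta$ above rather than a cruder bound on $g(\theta)$; but since this lemma is only ever used here to derive polylog-in-$n$ concentration statements, even a looser exponent would suffice for the paper's applications.
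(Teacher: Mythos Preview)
Your proof sketch is correct and follows precisely the standard argument in Tropp's monograph, which is the cited source. However, the paper itself does not prove this lemma at all: it is stated as a black-box citation to \cite{tropp2015introduction} and used directly, so there is no ``paper's own proof'' to compare against. Your outline is exactly the proof one would find in the reference.
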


In many cases however, our input will not directly fit the assumptions of this inequality and instead of a statement of the form $\Norm{Z_i} \leq L$, we will only have a statement of the form $\Prob{}{\Norm{Z_i} \geq L} \leq \delta$ for some negligible probability $\delta$.
Therefore it will be useful for us to have an extension of this theorem to cases when the assumption fails, but with a very small probability:

\begin{lemma}[Approximate Matrix Bernstein]
\label{lem:approx_mat_bern}

Let $Z_1, \ldots, Z_n$ be independently-distributed random symmetric $d \times d$ matrices with mean $\EE{Z_i} = 0$.
Moreover, assume that for all $i$, $\Prob{}{\Norm{Z_i} \geq L} \leq \delta$ and $\Norm{\EE{Z_i^2}} \leq \tau^2$, and that $\Norm{\sum_{i \in \brac{n}} \EE{Z_i^2}} \leq \sigma^2$, then
\[
\Prob{}{\Norm{\sum_{i \in \brac{n}} Z_i} \geq t + n \tau \sqrt{\delta}} \leq 2d \cdot \exp\Paren{\frac{-t^2/2}{\sigma^2 + Lt/3}} + n \delta
\]

\end{lemma}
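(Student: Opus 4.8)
The plan is to derive Lemma~\ref{lem:approx_mat_bern} from the exact matrix Bernstein inequality (Lemma~\ref{lem:matrix_bernstein}) by a standard truncation-and-recentering argument. Set $Z_i' = Z_i \cdot \mathbf{1}\Brac{\Norm{Z_i} \le L}$ and let $\widetilde Z_i = Z_i' - \EE{Z_i'}$. The $\widetilde Z_i$ are independent, symmetric, mean-zero, and bounded: $\Norm{\widetilde Z_i} \le L + \Norm{\EE{Z_i'}}$. So once I control $\Norm{\sum_i \EE{Z_i'}}$ and $\Norm{\sum_i \EE{\widetilde Z_i^2}}$, I can feed the $\widetilde Z_i$ into Lemma~\ref{lem:matrix_bernstein} and then transfer the resulting bound back to $\sum_i Z_i$ using a union bound over the truncation events $\set{\Norm{Z_i} > L}$.

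The crux is bounding the recentering error $\Norm{\EE{Z_i'}}$. Since $\EE{Z_i} = 0$, we have $\EE{Z_i'} = -\EE{Z_i \mathbf{1}\Brac{\Norm{Z_i} > L}}$. For fixed unit vectors $u, v$, Cauchy--Schwarz gives $u^\intercal \EE{Z_i \mathbf{1}\Brac{\Norm{Z_i} > L}} v \le \sqrt{\EE{(u^\intercal Z_i v)^2}}\cdot\sqrt{\Prob{}{\Norm{Z_i} > L}}$, and using symmetry of $Z_i$ we bound $(u^\intercal Z_i v)^2 \le \Norm{Z_i v}^2 = v^\intercal Z_i^2 v$, so $\EE{(u^\intercal Z_i v)^2} \le v^\intercal \EE{Z_i^2} v \le \Norm{\EE{Z_i^2}} \le \tau^2$. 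Combined with $\Prob{}{\Norm{Z_i}>L}\le\delta$, this yields $\Norm{\EE{Z_i'}} \le \tau\sqrt{\delta}$, hence $\Norm{\sum_i \EE{Z_i'}} \le n\tau\sqrt{\delta}$. This is exactly the additive $n\tau\sqrt{\delta}$ shift appearing in the statement.

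For the variance, note $0 \preceq Z_i'^2 = Z_i^2 \mathbf{1}\Brac{\Norm{Z_i}\le L} \preceq Z_i^2$ and $\EE{\widetilde Z_i^2} = \EE{Z_i'^2} - (\EE{Z_i'})^2 \preceq \EE{Z_i'^2} \preceq \EE{Z_i^2}$, so $\Norm{\sum_i \EE{\widetilde Z_i^2}} \le \Norm{\sum_i \EE{Z_i^2}} \le \sigma^2$. Applying Lemma~\ref{lem:matrix_bernstein} to the $\widetilde Z_i$ (with uniform bound $L + \tau\sqrt{\delta}$, which is a lower-order correction absorbed into $L$, or one restates the lemma with $2L$) gives $\Prob{}{\Norm{\sum_i \widetilde Z_i} \ge t} \le 2d\exp\Paren{\frac{-t^2/2}{\sigma^2 + Lt/3}}$. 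Finally, decompose $\sum_i Z_i = \sum_i \widetilde Z_i + \sum_i \EE{Z_i'} + \sum_i Z_i\mathbf{1}\Brac{\Norm{Z_i}>L}$: on the event $\mathcal{E} = \set{\Norm{Z_i}\le L \text{ for all } i}$ the last sum vanishes, so $\Norm{\sum_i Z_i} \le \Norm{\sum_i \widetilde Z_i} + n\tau\sqrt{\delta}$, giving $\set{\Norm{\sum_i Z_i} \ge t + n\tau\sqrt{\delta}} \subseteq \set{\Norm{\sum_i \widetilde Z_i} \ge t} \cup \mathcal{E}^c$; since $\Prob{}{\mathcal{E}^c} \le n\delta$, a union bound completes the proof.

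The only genuinely delicate point is the recentering-error step: one must use the \emph{per-matrix} variance proxy $\tau^2 = \sup_i\Norm{\EE{Z_i^2}}$ (not the aggregate $\sigma^2$) and pass from operator-norm control on $\EE{Z_i^2}$ to a second-moment bound on the scalar bilinear forms $u^\intercal Z_i v$. Everything else is routine; the sole cosmetic loose end is the $L + \tau\sqrt{\delta}$ versus $L$ in the exponent, negligible whenever $\delta$ is superpolynomially small (as in all our applications).
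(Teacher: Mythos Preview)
Your proposal is correct and follows essentially the same approach as the paper: truncate $Z_i$ at level $L$, bound the recentering error $\Norm{\EE{Z_i'}}$ via Cauchy--Schwarz against the per-matrix second moment $\tau^2$ and the tail probability $\delta$, control the variance by $\EE{\widetilde Z_i^2}\preceq \EE{Z_i^2}$, apply Lemma~\ref{lem:matrix_bernstein} to the recentered truncations, and finish with a union bound over the $n$ truncation events. The paper's proof and yours are step-for-step the same, including the acknowledged cosmetic slack of $L+\tau\sqrt{\delta}$ versus $L$ in the exponent.
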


\begin{proof}[Proof of Lemma~\ref{lem:approx_mat_bern}]

Let $\mcS^d = \left\{ Z \in \R^{d\times d} \middle \vert Z = Z^T \right\}$ denote the space of symmetric real valued $d\times d$ matrices.

Consider the random variable $\zeta_i = Z_i \times 1_{\Norm{Z_i} \leq L}$.
At first it might seem like we can directly apply the matrix Bernstein inequality to these new variables, and then use the fact that $Z_i = \zeta_i$ for all $i$ w.p. $\geq 1 - n \delta$.

The issue is that we no longer know that these $\zeta_i$ maintain the other assumptions of the matrix Bernstein inequality.
In particular, we will bound $\Norm{\EE{\zeta_i}}$ and $\Norm{\sum_{i \in \brac{n}} \EE{{\zeta_i}^2}}$ from above.

Let $f_i$ denote the probability density function of $Z_i$, and consider 
\[
\EE{\zeta_i} = \EE{\zeta_i} - \EE{Z_i} = \int_{\Norm{Z} > L} Z f_i(Z)
\]

Let $v, w \in \Sbb^{d-1}$ be any two points on the unit sphere.
Using the CS inequality, we have
\[
v^\intercal \EE{\zeta_i} w = \int_{\Norm{Z} > L} v^\intercal Z w f_i(Z) = \int_{Z\in \mcS^d} 1_{\Norm{Z} > L} v^\intercal Z w f_i(Z) \leq \sqrt{\int_{Z \in \mcS^d} 1_{\Norm{Z} > L} f_i(Z)} \times \sqrt{\int_{Z \in \mcS^d} \Paren{v^\intercal Z w}^2 f_i(Z) }
\]

The first term in the RHS is bounded by
\[
\sqrt{\int_{Z \in \mcS^d} 1_{\Norm{Z} > L} f_i(Z)} = \sqrt{\Prob{}{\Norm{Z_i} \geq L}} \leq \sqrt{\delta}
\]

For the latter term, we use the CS inequality again:
\[
\sqrt{\int_{Z \in \mcS^d} \Paren{v^\intercal Z w}^2 f_i(Z) } \leq \sqrt{\int_{Z \in \mcS^d} \Norm{Z v}^2 \Norm{w}^2 f_i(Z) } = \sqrt{\int_{Z \in \mcS^d} v^\intercal Z^2 v f_i(Z)} = \sqrt{v^\intercal \EE{Z_i^2} v} \leq \tau
\]

Therefore
\[
\Norm{\EE{\zeta_i}} \leq \tau \sqrt{\delta}
\]

Finally, we bound the change in the variance of these variables.
\[
\textup{Cov}\Paren{\zeta_i} = \EE{\zeta_i^2} - \EE{\zeta_i}^2 \preceq \EE{\zeta_i^2} \preceq \EE{Z_i^2} \Rightarrow \Norm{\sum_{i \in [n]} \textup{Cov}\Paren{\zeta_i}} \leq \Norm{\sum_{i \in [n]} \EE{Z_i^2}} \leq \sigma^2
\]

Therefore, we may apply the matrix Bernstein inequality (Lemma~\ref{lem:matrix_bernstein}) on the matrices $z_i = \zeta_i - \EE{\zeta_i}$ to obtain the bound
\[
\Prob{}{\Norm{\sum_{i \in \brac{n}} \zeta_i} \geq t + n \tau \sqrt{\delta}} \leq \Prob{}{\Norm{\sum_{i \in \brac{n}} \zeta_i - \sum_{i \in \brac{n}} \EE{\zeta_i}} \geq t} \leq 2d \cdot \exp\Paren{\frac{-t^2/2}{\sigma^2 + Lt/3}} \, ,
\]
yielding the main claim.

\end{proof}

\subsubsection{A Useful Corollary of Lemma~\ref{lem:approx_mat_bern}}

Recall that we defined a ``well-behaved distribution'' to be a distribution whose tails decay rapidly when projected on any direction.
It will be beneficial to our use-case to work with a more relaxed condition on the decay of these tails, by allowing an additional $\poly(n)$ factor.
Note that any well-behaved distribution is clearly also almost well-behaved.

\begin{definition}
    \label{def:almost_well_behaved_dist}
    We say that a mean-zero distribution $\mcX$ on $\R^d$ is \emph{almost well-behaved} with respect to the scaling parameter $n$, if it has exponentially decaying tails in the sense that
    \[
        \exists C > 0 \;\; \forall v \in \Sbb^{d-1}, t > 0 \;\;\; \Prob{X \sim \mcX}{\abs{\IP{v}{\Sigma^{-1/2} X}} > t} \leq \poly(n) \times \exp \Paren{- \Omega\Paren{t^C}}
    \]
    where
    \[
    \Sigma = \Expect{X \sim \mcX}{X X^\intercal} = \textup{Covariance}(\mcX)
    \]
\end{definition}
 
In the previous section, we proved a generalisation of the matrix Bernstein inequality that can deal with a small probability that the norm bound assumption of the Bernstein inequality is violated.
A corollary of this result is that almost well-behaved distributions are closed to summation over polynomially many iid samples.
More concretely:
\begin{lemma}
    \label{lem:closed_to_summation}
    Let $\mcX$ be a distribution  that is almost well-behaved with respect to the scaling parameter $n$.
    Let $k = \poly(n)$ and let $X_1, \ldots, X_k \sim \mcX$ be iid random variables in $\R^d$, drawn from $\mcX$.
    
    Then, the distribution of their empirical mean (i.e., of the variable $\hat{X} = \frac{1}{k} \sum_{i \in [k]} X_i$) is also almost well-behaved.
\end{lemma}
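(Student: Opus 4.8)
The plan is to reduce the $d$-dimensional statement to a one-dimensional tail bound and then invoke the approximate matrix Bernstein inequality (Lemma~\ref{lem:approx_mat_bern}) in the scalar case $d = 1$, using truncation at a level that we optimize. Since the $X_i$ are iid and mean-zero, $\hat X = \tfrac1k \sum_{i \in [k]} X_i$ is mean-zero with covariance $\Sigma_{\hat X} = \tfrac1k \Sigma$, where $\Sigma = \mathrm{Cov}(\mcX)$, so $\Sigma_{\hat X}^{-1/2} = k^{1/2}\Sigma^{-1/2}$. Fix $v \in \Sbb^{d-1}$; then $\IP{v}{\Sigma_{\hat X}^{-1/2}\hat X} = \tfrac{1}{\sqrt k}\sum_{i \in [k]} W_i$ where $W_i = \IP{v}{\Sigma^{-1/2}X_i}$ are iid with $\EE{W_i} = 0$, $\EE{W_i^2} = v^\intercal \Sigma^{-1/2}\Sigma\Sigma^{-1/2}v = 1$, and, by the almost-well-behavedness of $\mcX$ applied to the direction $v$, tail bound $\Prob{}{\abs{W_i} > s} \le \poly(n)\, e^{-\Omega(s^C)}$ for all $s > 0$. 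Hence it suffices to exhibit some $C' > 0$ with $\Prob{}{\abs{\sum_{i \in [k]} W_i} > s\sqrt k} \le \poly(n)\, e^{-\Omega(s^{C'})}$ for all $s > 0$.

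Next I apply Lemma~\ref{lem:approx_mat_bern} to the $1 \times 1$ matrices $W_i$ with exception level $L > 0$ to be chosen and $\delta = \poly(n)\, e^{-\Omega(L^C)}$; here $\tau^2 = \EE{W_i^2} = 1$ and $\sigma^2 = \sum_i \EE{W_i^2} = k$, so for any $t > 0$ the lemma gives
\[
\Prob{}{\Big\lvert \textstyle\sum_{i \in [k]} W_i \Big\rvert \ge t + k\sqrt{\delta}} \;\le\; 2\exp\Paren{-\Omega\Paren{\min\set{t^2/k,\ t/L}}} + k\delta \,.
\]
Taking $t = \tfrac12 s\sqrt k$ and requiring $k\sqrt\delta \le \tfrac12 s\sqrt k$ bounds $\Prob{}{\abs{\sum_i W_i} \ge s\sqrt k}$ by $2\exp(-\Omega(\min\set{s^2,\ s\sqrt k/L})) + k\delta$. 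Now balance the truncation by choosing $L \asymp (s\sqrt k)^{1/(C+1)}$, so that $L^C \asymp s\sqrt k/L \asymp (s\sqrt k)^{C/(C+1)}$; then both the term $e^{-\Omega(s\sqrt k/L)}$ and $k\delta = k\poly(n)e^{-\Omega(L^C)}$ are at most $\poly(n)\, e^{-\Omega((s\sqrt k)^{C/(C+1)})}$, so
\[
\Prob{}{\Big\lvert \textstyle\sum_{i \in [k]} W_i \Big\rvert \ge s\sqrt k} \;\le\; \poly(n)\exp\Paren{-\Omega\Paren{\min\set{s^2,\ (s\sqrt k)^{C/(C+1)}}}} \,.
\]
Since $k \ge 1$ we have $(s\sqrt k)^{C/(C+1)} \ge s^{C/(C+1)}$, and for $s \ge 1$ also $s^2 \ge s^{C/(C+1)}$; thus the right-hand side is at most $\poly(n)\, e^{-\Omega(s^{C'})}$ with $C' = C/(C+1) > 0$, which is exactly the almost-well-behaved tail bound for $\hat X$ (with constant $C'$ and scaling parameter $n$).

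Finally, a few routine checks close the argument, and they are also where the only real care is needed. The side condition $k\sqrt\delta \le \tfrac12 s\sqrt k$ and the non-triviality of the target bound are handled by insisting that $L$ is at least a sufficiently large $\polylog(n)$ (replacing the balanced choice above by $L = \max\{(s\sqrt k)^{1/(C+1)},\ \polylog(n)\}$ when $s\sqrt k$ is small): using $\ln\poly(n) = O(\log n)$, such an $L$ gives $\delta \le n^{-\Omega(1)}$ with an exponent we may take to exceed the degree of the polynomial $k$, so $k\sqrt\delta = o(1)$, while for all $s$ below a $\polylog(n)$ threshold the claimed bound $\poly(n)e^{-\Omega(s^{C'})}$ already exceeds $1$ and is vacuously valid. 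The substantive point is the moderate-deviation regime $1 \lesssim s \lesssim \polylog(n)$: a \emph{fixed} truncation level would make the $t/L$ exponent too weak there, which is precisely why $L$ is allowed to grow with $s\sqrt k$, and why the $\poly(n)$ prefactor permitted in the definition of ``almost well-behaved'' (as opposed to the $O(1)$ prefactor in Definition~\ref{def:very_well_behaved_dist}) is what makes the statement go through. Everything else — the covariance computation, the reduction to a single direction $v$, and the scalar Bernstein bookkeeping — is routine.
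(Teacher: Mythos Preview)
Your proof is correct and follows essentially the same route as the paper: reduce to a one-dimensional projection $W_i=\IP{v}{\Sigma^{-1/2}X_i}$, apply the scalar case of Lemma~\ref{lem:approx_mat_bern} with a truncation level that grows with the target deviation, and dispose of the small-deviation regime by vacuity of the $\poly(n)$-prefixed bound. The only difference is bookkeeping in the truncation choice: the paper picks $L=\sqrt{T/2}$ and target $\tau=T/2$, yielding $C'=\min\{1/2,C/2\}$, whereas your balanced choice $L\asymp (s\sqrt{k})^{1/(C+1)}$ gives the slightly sharper $C'=C/(C+1)$; either suffices since any $C'>0$ meets Definition~\ref{def:almost_well_behaved_dist}.
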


\begin{proof}[Proof of Lemma~\ref{lem:closed_to_summation}]

Let $\hat{\mcX}$ denote the distribution of $\hat{X}$.

First, we note that from linearity of expectation $\EE{\hat{X}} = \frac{1}{k} \sum_{i \in [k]} \EE{X_i} = 0$, so $\hat{\mcX}$ is indeed a mean zero distribution.
Let $\Sigma = \Expect{X \sim \mcX}{X X^\intercal}$.
Again, using linearity of expectation, we have
\[
\hat{\Sigma} = \Expect{X \sim \hat{\mcX}}{X X^\intercal} = \frac{1}{k^2} \sum_{i \in [k]} \Sigma = \frac{1}{k} \Sigma
\]

Finally, we need to show that the tails of $\hat{\mcX}$ are bounded.
Let $v \in \Sbb^{d-1}$ be any vector on the unit sphere.
From our assumption that $\mcX$ is well-behaved, for any $t > 0$, we have
\[
\Prob{X \sim \mcX}{\abs{\IP{v}{\Sigma^{-1/2} X}} \geq t} \leq \poly(n) \times \exp\Paren{-\Omega\Paren{t^C}}
\]

Now, consider
\[
Z = \IP{v}{\hat{\Sigma}^{-1/2} \hat{X}} = \frac{1}{\sqrt{k}} \sum_{i \in [k]} \IP{v}{\Sigma^{-1/2} X_i}\,.
\]
We would like to bound the probability that $\abs{Z}$ is greater than some threshold $T$.

Let $Z_i = \IP{v}{\Sigma^{-1/2} X_i}$.
These are iid random variables and our goal is to prove a concentration bound on their sum, so we may try to use Bernstein-type inequalities to do so.

Since $\EE{X_i} = 0$ and $\EE{X_i X_i^\intercal} = \Sigma$, we have that $\EE{Z_i} = 0$ and $\EE{Z_i}^2 = 1$.
Moreover, from the assumption that $\mcX$ is well behaved, we have concentration bounds on these individual variables.
Indeed, for any $t$, we have that
\[
\delta_t \defeq \Prob{}{\abs{X_i} \geq t} = \poly(n) \times \exp\Paren{-\Omega\Paren{t^C}}
\]

Therefore, from the $d=1$ dimensional case of the approximate matrix Bernstein inequality (Lemma~\ref{lem:approx_mat_bern}), we have that
\[
\Prob{}{\abs{\IP{v}{\hat{\Sigma}^{-1/2} \hat{\mcX}}} \geq \tau + k \sqrt{\delta_t}} \leq \exp\Paren{- \frac{\frac{1}{2} \tau^2 k}{k + \frac{1}{3} t \tau \sqrt{k}}} + k \times \poly(n) \times \exp\Paren{-\Omega\Paren{t^C}}
\]

Setting $t = \sqrt{\frac{T}{2}}$, we consider two regimes.
When $t < t_\textup{threshold} = \Theta\Paren{\log(n)^{1/C}}$, the bound
\[
\Prob{}{\abs{\IP{v}{\hat{\Sigma}^{-1/2} \hat{X}}} \geq T} \leq \poly(n) \times \exp\Paren{-\Omega\Paren{t^C}}
\]
holds vacuously, since the RHS is greater than $1$ and the LHS is a probability.

When $t \geq t_\textup{threshold}$, we can ensure that $\delta_t < 1 / k^2 = 1 / \poly(n)$.
Therefore in this case for $\tau = t^2$, we have
\[
\Prob{}{\abs{\IP{v}{\hat{\Sigma}^{-1/2} \hat{X}}} \geq T} \leq \Prob{}{\abs{\IP{v}{\hat{\Sigma}^{-1/2} \hat{X}}} \geq \tau + k \sqrt{\delta_t}} \leq \exp\Paren{- \Omega\Paren{\tau^{1/2}}} + k \times \poly(n) \times \exp\Paren{-\Omega\Paren{\tau^{C/2}}}
\]
completing our proof of Lemma~\ref{lem:closed_to_summation}.

\end{proof} 

\subsubsection{Renormalization}

Our goal in this portion of the proof will be to show that the empirical covariance matrix $\Sigma = X^\intercal X$ is not far from the expectation $\gt{\Sigma} = \mathbb{E}_{X \sim \mathcal{X}^n}[X^\intercal X] = n I$.
In particular, we prove the following claim:

\begin{claim}
\label{clm:renormalization}
Let $X_1, \ldots, X_n \sim \mcX$ be samples drawn iid from an almost well-behaved distribution $\mcX$ with covariance identity.
Let $\Sigma = \sum_{i \in [n]} X_i X_i^\intercal$ denote the unnormalized empirical covariance, and set $\gt{\Sigma} \defeq \EE{\Sigma} = nI$.
Then,

With very high probability, $\Norm{\Sigma - \gt{\Sigma}} < \wt{O}\Paren{\sqrt{n d} + d} = o(n)$.

Moreover, for all $i$, we have $\Norm{\EE{\Norm{X_i}^2 X_i X_i^\intercal}} = O\Paren{{d}}$ and with very high probability $\Norm{X_i}^2 = \wt{O}\Paren{{d}}$.
\end{claim}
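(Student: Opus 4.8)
The plan is to establish the two ``moreover'' bounds first and then bootstrap them into a bound on $\Norm{\Sigma - \gt{\Sigma}}$ via the approximate matrix Bernstein inequality (Lemma~\ref{lem:approx_mat_bern}). \textbf{Per-sample norm bound.} Since $\mcX$ has covariance $I$, each coordinate $X_{i,j} = \IP{e_j}{X_i}$ is a unit-direction projection, so the (almost) well-behaved tail bound controls all of its moments: splitting the tail integral at the $\wt{O}(1)$ threshold past which the $\poly(n)$ prefactor is dominated by $\exp(-\Omega(t^C))$ and integrating gives $\EE{|X_{i,j}|^{q}} \leq \Paren{\wt{O}\Paren{q^{1/C}}}^{q}$ for all $q \geq 1$. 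By Minkowski's inequality in $L^p$,
\[
\Norm{\,\Norm{X_i}^2\,}_{L^p} \;=\; \Norm{\textstyle\sum_{j \in [d]} X_{i,j}^2}_{L^p} \;\leq\; \sum_{j \in [d]} \Norm{X_{i,j}}_{L^{2p}}^2 \;\leq\; d \cdot \wt{O}\Paren{p^{2/C}},
\]
so $\EE{\Norm{X_i}^{2p}} \leq \Paren{d\cdot\wt{O}(p^{2/C})}^{p}$. Taking $p = \log^2 n$ and applying Markov to $\Norm{X_i}^{2p}$ gives $\Norm{X_i}^2 \leq d\cdot\polylog(n)$ with failure probability $2^{-p} = n^{-\omega(1)}$; a union bound over $i \in [n]$ leaves the failure probability negligible, so $\max_{i} \Norm{X_i}^2 = \wt{O}(d)$ with very high probability.

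\textbf{The matrix $\EE{\Norm{X}^2 X X^\intercal}$.} For a unit $v \in \Sbb^{d-1}$ one has $v^\intercal \EE{\Norm{X}^2 X X^\intercal} v = \EE{\Norm{X}^2 \IP{X}{v}^2} = \sum_{j \in [d]} \EE{X_j^2 \IP{X}{v}^2}$, and Cauchy--Schwarz bounds each summand by $\sqrt{\EE{X_j^4}\,\EE{\IP{X}{v}^4}} = \wt{O}(1)$ using the fourth-moment estimate from the previous step applied to the unit directions $e_j$ and $v$. Summing over $j$ and maximizing over $v$ gives $\Norm{\EE{\Norm{X}^2 X X^\intercal}} = O(d)$ (up to polylog slack coming from the $\poly(n)$ factor in Definition~\ref{def:almost_well_behaved_dist}; for a genuinely well-behaved $\mcX$ with fixed $C$ these moments are $O(1)$ and the bound is exactly $O(d)$).

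\textbf{Matrix Bernstein.} Put $Z_i \defeq X_i X_i^\intercal - I$, so $\EE{Z_i} = 0$ and $\sum_{i} Z_i = \Sigma - \gt{\Sigma}$. The first step gives $\Norm{Z_i} \leq \Norm{X_i}^2 + 1 \leq L$ with $L = \wt{O}(d)$ outside an event of negligible probability $\delta = n^{-\omega(1)}$; expanding $\EE{Z_i^2} = \EE{\Norm{X_i}^2 X_i X_i^\intercal} - I$ and using the second step gives $\Norm{\EE{Z_i^2}} \leq \tau^2$ with $\tau^2 = O(d)$, hence $\Norm{\sum_i \EE{Z_i^2}} \leq \sigma^2$ with $\sigma^2 = O(nd)$. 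Feeding $L, \tau, \sigma, \delta$ into Lemma~\ref{lem:approx_mat_bern}, the additive slack $n\tau\sqrt{\delta}$ is negligible, and choosing $t = \polylog(n)\cdot\Paren{\sqrt{nd}+d}$ with a large enough power of $\log n$ makes $\min\{t^2/\sigma^2,\, t/L\} = \omega(\log n)$, so the tail $2d\cdot\exp\Paren{-\Omega\Paren{t^2/(\sigma^2+Lt)}} + n\delta$ is negligible. This yields $\Norm{\Sigma - \gt{\Sigma}} = \wt{O}\Paren{\sqrt{nd}+d}$ with very high probability, which is $o(n)$ because the analysis is restricted to $n \gg d\cdot\polylog(n)$.

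\textbf{Main obstacle.} The delicate point is the per-sample norm bound: to get the $\sqrt{nd}$ (rather than polynomially worse) scaling, the matrix Bernstein step needs an almost-sure bound $L = \wt{O}(d)$ on $\Norm{Z_i}$, which forces us to control \emph{all} moments of $\Norm{X_i}^2$ at once via Minkowski. A cruder $\varepsilon$-net bound on $\Norm{X_i} = \max_v \IP{X_i}{v}$ would only give $\Norm{X_i}^2 = \wt{O}(d^{2/C})$, which is useless for subexponential data ($C = 1$). The remaining work is routine bookkeeping of the polylog factors introduced by the $\poly(n)$ slack in the almost-well-behaved definition.
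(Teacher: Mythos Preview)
Your proposal is correct and follows essentially the same approach as the paper: set $Z_i = X_i X_i^\intercal - I$, establish $\Norm{Z_i} \leq \wt{O}(d)$ w.v.h.p.\ and $\Norm{\EE{Z_i^2}} = O(d)$, and plug into the approximate matrix Bernstein (Lemma~\ref{lem:approx_mat_bern}). The only differences are in the supporting sub-arguments: for the norm bound the paper uses the simpler route of union-bounding the coordinate projections $\abs{\IP{e_j}{X_i}} \leq \polylog(n)$ over all $i,j$ and summing, rather than your Minkowski-in-$L^p$ argument; and for the fourth-moment bound it integrates the tail of $\abs{\IP{X}{e_s}}^2\abs{\IP{X}{v}}^2$ directly rather than applying Cauchy--Schwarz. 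These are routine variations.
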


\begin{proof}[Proof of Claim~\ref{clm:renormalization}]

We prove Claim~\ref{clm:renormalization} using our adaptation of the matrix Bernstein inequality (see Lemma~\ref{lem:approx_mat_bern}).
We will apply this inequality for the \( Z_i = X_i X_i^\intercal - I_d \), allowing us to obtain probabilistic bounds on
\[
\Sigma - \gt{\Sigma} = \sum_{i=1}^n Z_i
\] 

To use the approximate matrix Bernstein, we begin by proving a bound on the norm of $\max_i \Norm{Z_i}$ that holds with probability $1 - \frac{1}{\textup{superpoly}(n)}$.
Note that
\[
\Norm{Z_i} \leq \Norm{X_i X_i^\intercal} + \Norm{I} = \Norm{X_i}^2 + 1 = \Paren{\sum_{j \in \brac{d}} \IP{e_j}{X_i}^2} + 1
\]

Next, we use the fact that $X_i$ is drawn from a well-behaved distribution $\mcX$ to prove strong tail bounds on the distribution of its norm.
In particular for any primary axis $e_j$ (for $j \in [d]$), we have $\IP{e_j}{X_i} \leq \polylog(n)$ with very high probability.
Therefore, using the union bound over all $i, j$, we also have that with very high probability
\[
\max_{\substack{i \in [n] \\ j \in [d]}} \set{\IP{e_j}{X_i}^2} \leq \polylog \Paren{n} \,.
\]

Therefore, with very high probability 
\begin{equation}
\label{eq:bernstein_bound}
    \Norm{Z_i} \leq \wt{O}\Paren{d}
\end{equation}

Next, we consider the second moment of $Z_i$.
\[
\EE{Z_i^2} = \EE{\Norm{X_i}^2 X_i X_i^\intercal - 2 X_i X_i^\intercal + I} = \EE{\Norm{X_i}^2 X_i X_i^\intercal} - I
\]

Fix some pair of primary axis $e_s$ and unit vector $v \in \Sbb^{d-1}$.
Recall that by our definition of $\mcX$ being a well-behaved distribution, we have an exponentially decaying concentration bound on the projection of our samples onto either of these axes
\[
\Prob{X \sim \mcX}{\max \set{\abs{\IP{X}{e_s}}, \abs{\IP{X}{v}}} \geq t} = \exp \Paren{- \Omega\Paren{t^C}}
\]

Therefore, a similar exponential tail bound also holds on the square of the product of these projections
\[
\Prob{X \sim \mcX}{\abs{\IP{X}{e_s}}^2 \abs{\IP{X}{v}}^2 \geq t} = \exp \Paren{- \Omega\Paren{t^{C/4}}}
\]

In particular, we can conclude the far milder bound that the expectation of the squared product of these projections has bounded mean:
\begin{equation}
    \label{eq:bounded_mean}
    \Expect{X \sim \mcX}{\abs{\IP{X}{e_s}}^2 \abs{\IP{X}{v}}^2} = \int_{t \in [0, \infty)} \Prob{X \sim \mcX}{\abs{\IP{X}{e_s}}^2 \abs{\IP{X}{v}}^2 \geq t} \leq \int_{t \in [0, \infty)} \exp \Paren{- \Omega\Paren{t^{C/4}}} = O(1)
\end{equation}

Note that equation~\eqref{eq:bounded_mean} is no longer a concentration bound that holds with high probability, but a bound on the expectation of a random variable, that holds for \emph{any} such pair $v, e_s$.
In particular, we have
\[
\max_{v, e_s \in \Sbb^{d-1}} \set{\Expect{X \sim \mcX}{\abs{\IP{X}{e_s}}^2 \abs{\IP{X}{v}}^2}} = O(1)
\]

Therefore, for any $v \in \Sbb^{d-1}$, 
\[
v^\intercal \EE{\Norm{X_i}^2 X_i X_i^\intercal} v = \EE{\IP{v}{X_i}^2 \Norm{X_i}^2} = \EE{\sum_{s \in [d]} \IP{v}{X_i}^2 \IP{e_s}{X_i}^2} = \sum_{s \in [d]}  \EE{\IP{v}{X_i}^2 \IP{e_s}{X_i}^2} = O(d)
\]
and since this holds for all $v$, it is also true when maximizing over the unit sphere
\[
\Norm{\EE{\Norm{X_i}^2 X_i X_i^\intercal}} = \max_{v \in \Sbb^{d-1}} \set{v^\intercal \EE{\Norm{X_i}^2 X_i X_i^\intercal} v} = O(d)
\]
Therefore
\begin{equation}
\label{eq:bernstein_2nd_moment}
    \Norm{\sum_{i \in [n]} \EE{Z_i^2}} \leq \sum_{i \in [n]} \Norm{\EE{Z_i^2}} = O(nd)
\end{equation}

Combining equations~\eqref{eq:bernstein_bound} and~\eqref{eq:bernstein_2nd_moment} with Lemma~\ref{lem:approx_mat_bern} yields Claim~\ref{clm:renormalization}
\end{proof}

\subsubsection{Maximal Norm}

We now proceed to prove that each of the conditions required for the regression to be well-behaved occurs with very high probability.
The first (and easiest to prove) is the condition that $\max_{i \in [n]} X_i^\intercal \Sigma^{-1} X_i$ is bounded.

\begin{claim}
    \label{clm:bounded_norms}
    Let $X_1, \ldots, X_n \sim \mcX$ be samples drawn iid from an almost well-behaved distribution $\mcX$ with covariance identity.
Let $\Sigma = \sum_{i \in [n]} X_i X_i^\intercal$ denote the unnormalized empirical covariance, and set $\gt{\Sigma} \defeq \EE{\Sigma} = nI$.

Then, with very high probability
    \[
    \max_{i \in [n]} \set{X_i^\intercal \Sigma^{-1} X_i} = \wt{O}\Paren{\frac{d}{n}}
    \]
\end{claim}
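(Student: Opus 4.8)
The plan is to combine the two conclusions of Claim~\ref{clm:renormalization}. First, since $\Norm{\Sigma - nI} = \wt{O}\Paren{\sqrt{nd} + d} = o(n)$ with very high probability (here we use the standing assumption $n > d \cdot \polylog(n)$, which forces $d = o(n)$), every eigenvalue of $\Sigma$ lies in $[n - o(n),\, n + o(n)]$; in particular $\Sigma$ is invertible and $\Sigma^{-1} \preceq \frac{1}{n - o(n)} I = \frac{1 + o(1)}{n} I$. Second, Claim~\ref{clm:renormalization} also gives that with very high probability $\Norm{X_i}^2 = \wt{O}\Paren{d}$ simultaneously for every $i \in [n]$; this follows from the almost-well-behaved tail bound applied to each coordinate $\IP{e_j}{X_i}$, together with a union bound over the $nd = \poly(n)$ pairs $(i,j)$, each of which fails only with negligible probability.

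On the intersection of these two very-high-probability events we then get, for every $i \in [n]$ simultaneously,
\[
X_i^\intercal \Sigma^{-1} X_i \;\leq\; \frac{1 + o(1)}{n}\,\Norm{X_i}^2 \;=\; \frac{1 + o(1)}{n} \cdot \wt{O}\Paren{d} \;=\; \wt{O}\Paren{\frac{d}{n}}\,,
\]
which is exactly the assertion of the claim. Taking a union bound over the (constantly many) events inherited from Claim~\ref{clm:renormalization} keeps the total failure probability negligible.

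There is essentially no obstacle here beyond what was already established in Claim~\ref{clm:renormalization}; the only points requiring (minor) care are that the spectral bound $\Sigma^{-1} \preceq \frac{1+o(1)}{n} I$ needs $\Sigma$ to be well-conditioned, which is precisely the content of the first half of Claim~\ref{clm:renormalization}, and that the union bound yielding $\max_i \Norm{X_i}^2 = \wt{O}\Paren{d}$ ranges over $\poly(n)$ events — fine since each per-coordinate deviation fails with superpolynomially small probability. Note that this argument deliberately discards a great deal of slack: the bound $\Sigma^{-1} \preceq \frac{1+o(1)}{n} I$ is far from tight along a typical direction, but it is more than enough to certify the first \hyperref[def:well_behaved]{ACRE-friendly} condition with $P_1 = \polylog(n)$.
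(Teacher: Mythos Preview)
Your proposal is correct and follows essentially the same approach as the paper: bound the largest eigenvalue of $\Sigma^{-1}$ via the first part of Claim~\ref{clm:renormalization}, bound $\max_i \Norm{X_i}^2$ via the second part, and take their product. The paper's proof is nearly identical, just slightly terser.
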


\begin{proof}[Proof of Claim~\ref{clm:bounded_norms}]

Let $\lambda$ denote the spectrum of a matrix.
In the proof of Claim~\ref{clm:renormalization}, we already showed that with very high probability $\Norm{\Sigma - \gt{\Sigma}} = o(n) = o\Paren{\min \lambda\Paren{\gt{\Sigma}}}$.
When this holds, we also have
\[
\lambda \Paren{\Sigma^{-1}} \subseteq \Paren{1 \pm o(1)} \times \frac{1}{n}\,.
\]

Moreover, the second part of Claim~\ref{clm:renormalization} states that with very high probability $\max_{i \in [n]} \Norm{X_i}^2 = \wt{O}(d)$.

Combining these results, we have that with very high probability
\[
\max_{i \in [n]} \set{X_i^\intercal \Sigma^{-1} X_i} \leq \max \lambda\Paren{\Sigma^{-1}} \times \max_{i \in [n]} \set{\Norm{X_i}^2} = \wt{O}\Paren{\frac{d}{n}}
\]
    
\end{proof}

\subsubsection{Bounded Inner Products}

For the next step of our proof, we show that the second condition of ACRE-friendliness of the regression holds with very high probability.
In particular, we will show that
\begin{claim}
    \label{clm:inner_products}
    Let $X_1, \ldots, X_n \sim \mcX$ be samples drawn iid from an almost well-behaved distribution $\mcX$ with covariance identity.
Let $\Sigma = \sum_{i \in [n]} X_i X_i^\intercal$ denote the unnormalized empirical covariance, and set $\gt{\Sigma} \defeq \EE{\Sigma} = nI$.

Then, with very high probability
    \[
    \max_{i \neq j \in [n]} \set{\abs{X_i^\intercal \Sigma^{-1} X_j}} = \wt{O}\Paren{\frac{\sqrt{d}}{n}}\,.
    \]

\end{claim}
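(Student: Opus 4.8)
\emph{Proof strategy.} The plan is to bound $X_i^\intercal \Sigma^{-1} X_j$ for a fixed ordered pair $i \neq j$ and then take a union bound over all pairs. The conceptual obstruction is that $\Sigma^{-1}$ depends on $X_i$ and $X_j$, so one cannot simply condition on $\Sigma$ and treat $X_i,X_j$ as fresh samples. To circumvent this I would introduce the leave-two-out matrix $\Sigma_{-ij} = \Sigma - X_iX_i^\intercal - X_jX_j^\intercal = \sum_{\ell \neq i,j} X_\ell X_\ell^\intercal$, which \emph{is} independent of $X_i$ and $X_j$, and apply the Sherman--Morrison--Woodbury identity with $U = \Brac{X_i \mid X_j} \in \R^{d\times 2}$ and $\Sigma = \Sigma_{-ij} + UU^\intercal$, obtaining
\begin{equation*}
X_i^\intercal \Sigma^{-1} X_j \;=\; r \;-\; \Paren{p,\, r}\,\Paren{I_2 + \begin{pmatrix} p & r \\ r & q\end{pmatrix}}^{-1}\Paren{r,\, q}^\intercal ,
\end{equation*}
where $p = X_i^\intercal\Sigma_{-ij}^{-1}X_i$, $q = X_j^\intercal\Sigma_{-ij}^{-1}X_j$, and $r = X_i^\intercal\Sigma_{-ij}^{-1}X_j$. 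The first summand $r$ involves only $\Sigma_{-ij}^{-1}$, which does not depend on the pair through an inversion, so it can be controlled tightly by conditioning; the second summand is a correction that I will argue is lower order.

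\emph{Bounding $p,q,r$.} By Claim~\ref{clm:renormalization} together with a union bound over $i$, with very high probability $\Norm{\Sigma - nI} = o(n)$ and $\max_\ell \Norm{X_\ell}^2 = \wt{O}(d)$; on this event $\Sigma_{-ij} \succeq \Sigma - \Norm{X_i}^2 I - \Norm{X_j}^2 I \succeq (n - o(n))I$, hence $\Norm{\Sigma_{-ij}^{-1}} = O(1/n)$. This already gives $p, q \le \Norm{\Sigma_{-ij}^{-1}}\max_\ell\Norm{X_\ell}^2 = \wt{O}(d/n)$. For $r$, condition on $\{X_\ell\}_{\ell \neq i}$; then $X_i \sim \mcX$ is a fresh sample with covariance $I$, and with $w = \Sigma_{-ij}^{-1}X_j$ a fixed vector we have $\abs{r} = \Norm{w}\cdot\abs{\IP{X_i}{w/\Norm{w}}} \le \Norm{\Sigma_{-ij}^{-1}}\Norm{X_j}\cdot\abs{\IP{X_i}{w/\Norm{w}}}$. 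The tail bound of Definition~\ref{def:almost_well_behaved_dist}, applied to the fixed unit vector $w/\Norm{w}$, gives $\abs{\IP{X_i}{w/\Norm{w}}} = \wt{O}(1)$ with very high probability, so $\abs{r} = \wt{O}(\sqrt d / n)$.

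\emph{The correction term, and the main obstacle.} Expanding the $2\times 2$ inverse explicitly, the correction equals $r(p+q+pq-r^2)/\big((1+p)(1+q)-r^2\big)$. Since $p,q,r = o(1)$ when $n \ge d\,\polylog(n)$, the denominator is $1+o(1)$ and the correction is $(1+o(1))\,r(p+q) = \wt{O}\Paren{\tfrac{\sqrt d}{n}\cdot\tfrac{d}{n}} = \wt{O}(d^{3/2}/n^2) = \wt{O}(\sqrt d/n)$, the last step using $d \le n$. I expect this to be the delicate point: the cheap bound $\abs{\text{correction}} \le \Norm{(p,r)}\,\Norm{(r,q)}$, coming from $\Norm{(I_2 + \text{PSD})^{-1}} \le 1$ and Cauchy--Schwarz, yields only $\wt{O}(d^2/n^2)$, which would force the far stronger assumption $n \gtrsim d^{3/2}$; one must carry out the explicit $2\times 2$ computation to see that the correction carries an extra factor of $r$, and it is precisely this cancellation that lets the argument go through in the full regime $n \gtrsim d$. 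Combining the two pieces, $\abs{X_i^\intercal\Sigma^{-1}X_j} \le \abs{r} + \abs{\text{correction}} = \wt{O}(\sqrt d/n)$.

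\emph{Union bound.} Each event invoked above fails with negligible probability for a fixed ordered pair $(i,j)$, so a union bound over the fewer than $n^2$ pairs shows that with very high probability $\max_{i\neq j}\abs{X_i^\intercal\Sigma^{-1}X_j} = \wt{O}(\sqrt d/n)$, which is the claim. (As a byproduct, the bound on $p$ above, applied with $i=j$, re-derives $\max_i X_i^\intercal\Sigma^{-1}X_i = \wt{O}(d/n)$, consistent with Claim~\ref{clm:bounded_norms}.)
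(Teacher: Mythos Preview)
Your proof is correct and follows the same general strategy as the paper: decouple the dependence between the samples and $\Sigma^{-1}$ via a leave-out Sherman--Morrison-type identity, then invoke the well-behaved tail bound on the now-fresh sample and union bound over pairs. The paper, however, uses the slightly simpler leave-\emph{one}-out variant. It sets $A = \Sigma - X_iX_i^\intercal$, which still depends on $X_j$; the point is that $A^{-1}X_j$ is a function of $\{X_\ell\}_{\ell\neq i}$ alone and hence already independent of $X_i$, so a single rank-$1$ Sherman--Morrison step gives
\[
X_i^\intercal\Sigma^{-1}X_j \;=\; \Paren{1 - \frac{X_i^\intercal A^{-1}X_i}{1 + X_i^\intercal A^{-1}X_i}}\, X_i^\intercal A^{-1}X_j,
\]
and the extra factor of $X_i^\intercal A^{-1}X_j \approx r$ that you had to extract from the explicit $2\times 2$ inverse comes for free. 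Your leave-two-out version is more symmetric in $i,j$ and makes $\Sigma_{-ij}$ independent of \emph{both} samples, which is conceptually clean, but at the cost of the Woodbury identity and the $2\times 2$ algebra that you correctly flagged as the ``main obstacle''. Both routes yield the same $\wt{O}(\sqrt d/n)$ bound.
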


\begin{proof}[Proof of Claim~\ref{clm:inner_products}]

Denote $A = \Sigma - X_i X_i^\intercal = \Sigma_{[n] \setminus \set{i}}$, and $v = X_i$.
Note that from Claim~\ref{clm:renormalization}, we know that with very high probability $v^\intercal A^{-1} v \leq \Norm{X_i}^2 \Norm{A^{-1}} = \wt{O}\Paren{\frac{d}{n}} = o(1)$.

Therefore, in this regime we may apply the Sherman-Morrison formula to show that
\[
\Sigma^{-1} = \Paren{A + v v^\intercal}^{-1} = A^{-1} - \frac{A^{-1} v v^\intercal A^{-1}}{1 + v^\intercal A^{-1} v^\intercal}\,.
\]

Note that neither $A$ nor $X_{j}$ depend on $X_i$, so from our assumption that $\mcX$ is well behaved, with very high probability
\[
\abs{X_i^\intercal A^{-1} X_j} = \wt{O}\Paren{\Norm{A^{-1} X_j}} = \wt{O}\Paren{\frac{\sqrt{d}}{n}}\,.
\]

Similarly, for our target expression, with very high probability,
\begin{align*}
    \abs{X_i^\intercal \Sigma^{-1} X_j} &= \abs{X_i^\intercal \Paren{A^{-1} - \frac{A^{-1} X_i X_i^\intercal A^{-1}}{1 + X_i^\intercal A^{-1} X_i^\intercal}} X_j}  = \\
    &= \Paren{1 - \frac{X_i^\intercal A^{-1} X_i}{1 - X_i^\intercal A^{-1} X_i}} \abs{X_i^\intercal A^{-1} X_j} = \wt{O}\Paren{\frac{\sqrt{d}}{n}}
\end{align*}

\end{proof}

\subsubsection{Projection on the \texorpdfstring{$e$}{e} Axis}

For the next property of well-behaved regressions, we will want to show that with very high probability $e^\intercal \Sigma^{-1} X_i$ is bounded for all $i$.
Note that from the exponential decay assumption due to $\mcX$ being well-behaved would suffice to give a good bound on $e^\intercal \hat{\Sigma}^{-1} X_i$, but as before, the challenge will be to show that $\Sigma^{-1}$ doesn't rotate $X_i$ onto $e$.

\begin{claim}
    \label{clm:e_Sigma_Xi}
    Let $X_1, \ldots, X_n \sim \mcX$ be samples drawn iid from an almost well-behaved distribution $\mcX$ with covariance identity.
Let $\Sigma = \sum_{i \in [n]} X_i X_i^\intercal$ denote the unnormalized empirical covariance, and set $\gt{\Sigma} \defeq \EE{\Sigma} = nI$.
Let $e \in \Sbb^{d-1}$ be any fixed vector independent of the $X_i$.

Then, with very high probability 
    \[
    \forall i \in [n] \;\;\;\; e^\intercal \Sigma^{-1} X_i = \frac{e^\intercal X_i}{n} \pm o\Paren{\frac{1}{n}}\,.
    \]

    In particular, with very high probability
    \[
    \max_{i \in [n]} \abs{\IP{\Sigma^{-1} X_i}{e}} = \wt{O}\Paren{\frac{1}{n}} = \sqrt{\wt{O}\Paren{\frac{1}{n}} \times e^\intercal \Sigma^{-1} e}
    \]
\end{claim}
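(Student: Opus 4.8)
The plan is to mimic the decoupling argument used for Claim~\ref{clm:inner_products}. Since $\Sigma = X^\intercal X$ contains the term $X_i X_i^\intercal$, I would first remove that dependence via Sherman--Morrison: fixing $i \in \brac{n}$ and setting $A \defeq \Sigma - X_i X_i^\intercal = \Sigma_{\brac{n}\setminus\set{i}}$, condition on $\set{X_j}_{j\neq i}$ so that $A$ is a fixed matrix independent of $X_i$. Claim~\ref{clm:renormalization} gives, with very high probability, $\Norm{\Sigma - nI} = \wt{O}(\sqrt{nd}+d) = o(n)$ and $\Norm{X_i}^2 = \wt{O}(d)$, hence $\Norm{A - nI} = o(n)$, $\lambda(A^{-1}) \subseteq (1 \pm o(1))/n$, and --- writing $A^{-1} - \tfrac1n I = \tfrac1n A^{-1}(nI - A)$ --- the operator-norm bound $\Norm{A^{-1} - \tfrac1n I} \le \tfrac1n \Norm{A^{-1}}\Norm{nI-A} = \wt{O}\Paren{\tfrac{\sqrt{nd}+d}{n^2}}$.

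Next I would use the Sherman--Morrison identity $\Sigma^{-1} = A^{-1} - \frac{A^{-1} X_i X_i^\intercal A^{-1}}{1 + X_i^\intercal A^{-1} X_i}$, which collapses to $e^\intercal \Sigma^{-1} X_i = \frac{e^\intercal A^{-1} X_i}{1 + X_i^\intercal A^{-1} X_i}$. The denominator correction is negligible, since $X_i^\intercal A^{-1} X_i \le \Norm{A^{-1}}\Norm{X_i}^2 = \wt{O}(d/n) = o(1)$ (using that $n$ exceeds $d$ by a sufficiently large polylogarithmic factor). For the numerator I would split $e^\intercal A^{-1} X_i = \tfrac{e^\intercal X_i}{n} + \IP{u}{X_i}$ with $u \defeq (A^{-1} - \tfrac1n I)e$. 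Crucially, $e$ is a \emph{fixed} direction, so after conditioning $u$ is a fixed vector with $\Norm{u} \le \Norm{A^{-1} - \tfrac1n I}$, and I can invoke well-behavedness of $X_i$ to get $\abs{\IP{u}{X_i}} \le \polylog(n)\Norm{u} = \wt{O}\Paren{\tfrac{\sqrt{nd}+d}{n^2}} = o(1/n)$ with very high probability; likewise $\abs{e^\intercal X_i} \le \polylog(n)$ with very high probability. Stitching these together gives $e^\intercal \Sigma^{-1} X_i = \tfrac{e^\intercal X_i}{n} \pm o(1/n)$, in particular $\abs{e^\intercal \Sigma^{-1} X_i} = \wt{O}(1/n)$; a union bound over $i \in \brac{n}$ (and over the very-high-probability good events for $A$) then finishes the main statement. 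For the ``in particular'' reformulation I would note that $\lambda(\Sigma^{-1}) \subseteq (1\pm o(1))/n$ forces $e^\intercal \Sigma^{-1} e = \Theta(1/n)$, so $\wt{O}(1/n) = \sqrt{\wt{O}(1/n)\cdot e^\intercal \Sigma^{-1} e}$ as stated.

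The main obstacle --- and the reason to be careful rather than crude --- is getting the error down to $o(1/n)$ while assuming only $n \gg d\,\polylog(n)$ rather than the much stronger $n \gg d^2$ that a naive bound would force. The naive estimate $\abs{\IP{u}{X_i}} \le \Norm{u}\Norm{X_i} \approx \Norm{u}\sqrt{d}$ combined with $\Norm{u} = \wt{O}\Paren{\tfrac{\sqrt{nd}+d}{n^2}}$ only yields $o(1/n)$ when $d/\sqrt{n} = o(1)$. The fix is exactly the ``expand $\Sigma^{-1}$ into a part aligned with the vectors of interest and a part that is not'' philosophy flagged in the proof strategy: here $e$ is external to the samples, so the only sample-dependent vector is $X_i$, which we decouple via Sherman--Morrison, and then we must charge $\IP{u}{X_i}$ to the \emph{operator-norm} smallness of $A^{-1} - \tfrac1n I$ times the dimension-free $\polylog(n)$ tail factor from well-behavedness --- never to $\Norm{X_i}$. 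Everything else is routine bookkeeping of the Sherman--Morrison terms against the renormalization bounds of Claim~\ref{clm:renormalization}.
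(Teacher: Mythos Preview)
Your proposal is correct and follows essentially the same route as the paper: decouple $X_i$ from $\Sigma$ via Sherman--Morrison with $A=\Sigma-X_iX_i^\intercal$, use Claim~\ref{clm:renormalization} to bound $\Norm{A^{-1}-\tfrac1n I}=\wt{O}((\sqrt{nd}+d)/n^2)$ and $X_i^\intercal A^{-1}X_i=\wt{O}(d/n)$, and then exploit well-behavedness of $X_i$ against the fixed direction $(A^{-1}-\tfrac1n I)e$ to get the $o(1/n)$ error. The only cosmetic difference is that you first collapse the Sherman--Morrison expression to $e^\intercal\Sigma^{-1}X_i=\tfrac{e^\intercal A^{-1}X_i}{1+X_i^\intercal A^{-1}X_i}$ and then handle numerator and denominator separately, whereas the paper keeps the two Sherman--Morrison terms apart and bounds $e^\intercal(A^{-1}-\gt{\Sigma}^{-1})X_i$ and the rank-one correction individually; these are the same computation rearranged.
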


\begin{proof}[Proof of Claim~\ref{clm:e_Sigma_Xi}]

As in the proof of Claim~\ref{clm:inner_products}, let $v = X_i$ and $A = \Sigma - v v^\intercal$.
Moreover, because with very high probability $v^\intercal A^{-1} v = o(1) < 1$, we may apply the Sherman-Morrison formula
\[
\Sigma^{-1} = \Paren{A + v v^\intercal}^{-1} = A^{-1} - \frac{A^{-1} v v^\intercal A^{-1}}{1 + v^\intercal A^{-1} v^\intercal}\,.
\]

Therefore, with high very high probability,
\begin{align*}
    e^\intercal \Sigma^{-1} X_i &= e^\intercal A^{-1} X_i - \frac{e^\intercal A^{-1} X_i X_i^\intercal A^{-1} X_i^\intercal}{1 + X_i^\intercal A^{-1} X_i}=\\
    &=e^\intercal \gt{\Sigma}^{-1} X_i + e^\intercal \Paren{A^{-1} - \gt{\Sigma}^{-1}} X_i - \frac{e^\intercal A^{-1} X_i X_i^\intercal A^{-1} X_i^\intercal}{1 + X_i^\intercal A^{-1} X_i}
\end{align*}

From Claim~\ref{clm:renormalization}, we know that with very high probability
\begin{align*}
&\Norm{A^{-1} - \gt{\Sigma}^{-1}} \leq \frac{1}{2 n^2} \Norm{A - \gt{\Sigma}} = \otilde{\frac{d + \sqrt{nd}}{n^2}}\\
&X_i^\intercal A^{-1} X_i \leq \Norm{X_i}^2 \Norm{A^{-1}} = \otilde{\frac{d}{n}}\,.
\end{align*}

Therefore, using the fact that $X_i$ is well-behaved and independent of $e$ and $A$, we have that with very high probability
\[
\abs{e^\intercal \Sigma^{-1} X_i - e^\intercal \gt{\Sigma}^{-1} X_i} \leq \abs{e^\intercal \Paren{A^{-1} - \gt{\Sigma}^{-1}} X_i} + \abs{\frac{e^\intercal A^{-1} X_i X_i^\intercal A^{-1} X_i^\intercal}{1 + X_i^\intercal A^{-1} X_i}} = \otilde{\frac{d + \sqrt{nd}}{n^2}} = o\Paren{\frac{1}{n}}\,.
\]

\end{proof}

\subsubsection{Bounded Residuals}

For the next step of our analysis we will show that the residuals are bounded with very high probability.
Recall that under the assumptions of Claim~\ref{clm:well_behaved_dists}, we assume that the labels are drawn from the distribution
\[
Y = X \betagt + \zeta \sim X \betagt + \mcN\Paren{\Vec{0}, I_n}
\]

Therefore, we clearly have that with very high probability $\max_{i \in [n]} \abs{\zeta_i} \leq \log(n) = \wt{O}(1)$ as required.
The issue is that the residuals of the regression are not necessarily equal to $\zeta$.

Recall that the residuals are equal to
\[
R \defeq Y - X \Sigma^{-1} X^\intercal Y = \zeta - X \Sigma^{-1} X^\intercal \zeta
\]

\begin{claim}
    \label{clm:bounded_resids}
    With very high probability
    \[
    \forall i \;\;\;\; \abs{R_i - \zeta_i} = o(1)
    \]
\end{claim}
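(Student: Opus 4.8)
The plan is to start from the identity $R = \zeta - X\Sigma^{-1}X^\intercal\zeta$ stated just above the claim, which gives
\[
R_i - \zeta_i \;=\; -\,X_i^\intercal \Sigma^{-1}\Paren{\sum_{j\in[n]} X_j\zeta_j}\,.
\]
As in the proofs of Claims~\ref{clm:inner_products} and~\ref{clm:e_Sigma_Xi}, I would set $v = X_i$ and $A = \Sigma - vv^\intercal$, which is independent of $X_i$. Since $v^\intercal A^{-1}v \le \Norm{X_i}^2\Norm{A^{-1}} = \wt{O}(d/n) = o(1)$ with very high probability by Claim~\ref{clm:renormalization}, the Sherman-Morrison formula applies and yields $R_i - \zeta_i = -\,\dfrac{X_i^\intercal A^{-1}\Paren{\sum_{j\in[n]} X_j\zeta_j}}{1 + X_i^\intercal A^{-1}X_i}$. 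The denominator is $1 + o(1)$, so it suffices to bound the numerator. I would split $\sum_{j\in[n]} X_j\zeta_j = \zeta_i X_i + w$ with $w = \sum_{j\ne i} X_j\zeta_j$. The diagonal piece contributes at most $\abs{\zeta_i}\cdot X_i^\intercal A^{-1}X_i \le \abs{\zeta_i}\,\Norm{X_i}^2\Norm{A^{-1}}$, which is $\wt{O}(d/n) = o(1)$ with very high probability (using $\abs{\zeta_i} = \wt{O}(1)$, $\Norm{X_i}^2 = \wt{O}(d)$, and $\Norm{A^{-1}} = (1\pm o(1))/n$ from Claim~\ref{clm:renormalization}), since $n \ge d\cdot\polylog(n)$.

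It remains to bound $X_i^\intercal A^{-1}w$. The key point is that $w$ and $A$ are both functions of $\{(X_j,\zeta_j)\}_{j\ne i}$, hence independent of $X_i$. First I would control $\Norm{A^{-1}w}$: conditioned on $\{(X_j,\zeta_j)\}_{j\ne i}$, $w$ is a sum of $n-1$ i.i.d.\ mean-zero vectors with $\EE{\zeta_j^2 X_j X_j^\intercal} = I$, whose one-dimensional marginals $\zeta_j\IP{v}{X_j}$ are products of a Gaussian and a well-behaved coordinate and therefore have stretched-exponential tails; so by Lemma~\ref{lem:closed_to_summation}, $\tfrac{1}{n-1}w$ is almost well-behaved with covariance $\tfrac{1}{n-1}I$. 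A union bound over the $d$ coordinate axes then gives $\Norm{w}^2 = \wt{O}(nd)$, i.e.\ $\Norm{w} = \wt{O}(\sqrt{nd})$, with very high probability, whence $\Norm{A^{-1}w} \le \Norm{A^{-1}}\Norm{w} = \wt{O}(\sqrt{d/n})$. Next, conditioning on $A$ and $w$, we have $X_i^\intercal A^{-1}w = \Norm{A^{-1}w}\cdot \IP{A^{-1}w/\Norm{A^{-1}w}}{X_i}$, a projection of $X_i$ onto a fixed unit direction, so well-behavedness of $\mcX$ gives $\abs{X_i^\intercal A^{-1}w} = \wt{O}(\Norm{A^{-1}w}) = \wt{O}(\sqrt{d/n})$ with very high probability. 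Taking a union bound over $i\in[n]$ and combining the two pieces, $\abs{R_i - \zeta_i} = \wt{O}(\sqrt{d/n} + d/n) = \wt{O}(\sqrt{d/n})$, which is $o(1)$ whenever $n \ge d\cdot\polylog(n)$ with a large enough polylog power (depending on the constant $C$), as assumed in this section.

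The main obstacle — and the reason the Sherman-Morrison splitting is essential — is precisely the alignment issue flagged in the proof strategy above: bounding $\abs{X_i^\intercal \Sigma^{-1}(X^\intercal\zeta)}$ directly fails because $X^\intercal\zeta$ depends on $X_i$ (and on $\zeta_i$), so one cannot treat $\Sigma^{-1}(X^\intercal\zeta)$ as a fixed direction when applying the tail bound to $X_i$; deleting $X_i$ from $\Sigma$ and peeling off the $j=i$ term decouples the relevant randomness. The secondary technical point is obtaining the sharp estimate $\Norm{w} = \wt{O}(\sqrt{nd})$ rather than the trivial $\Norm{w}^2 \le \Norm{\zeta}^2\Norm{\Sigma} = \wt{O}(n^2)$; this is where Lemma~\ref{lem:closed_to_summation} (or, alternatively, a Hanson-Wright / matrix-Bernstein estimate for the quadratic form $\zeta_{-i}^\intercal X_{-i}X_{-i}^\intercal\zeta_{-i}$) does the work, and it is what makes the final bound $o(1)$ under the mild assumption $n \gg d$ rather than $n \gg d^2$.
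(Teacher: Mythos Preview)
Your proof is correct but takes a genuinely different route from the paper's. The paper works directly with $\Sigma^{-1}$ (no Sherman--Morrison for this claim), writing
\[
R_i - \zeta_i \;=\; -\,\zeta_i\, X_i^\intercal \Sigma^{-1} X_i \;-\; \sum_{j\neq i}\zeta_j\, X_i^\intercal\Sigma^{-1}X_j,
\]
bounds the first term via Claim~\ref{clm:bounded_norms}, and for the second term \emph{conditions on all covariates} and exploits the Gaussianity of $\zeta$: once the $X$'s are fixed, $\sum_{j\neq i}\zeta_j X_i^\intercal\Sigma^{-1}X_j$ is a centered Gaussian with variance $\sum_{j\neq i}(X_i^\intercal\Sigma^{-1}X_j)^2 = \wt{O}(d/n)$ by Claim~\ref{clm:inner_products}. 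You instead condition on $\{(X_j,\zeta_j)\}_{j\neq i}$ and use the well-behavedness of the single fresh vector $X_i$; this forces you to decouple $X_i$ from $\Sigma^{-1}$ via Sherman--Morrison and to separately control $\Norm{w}$ through Lemma~\ref{lem:closed_to_summation}. The paper's route is a bit shorter since it piggybacks on the already-proved Claim~\ref{clm:inner_products} and uses the exact Gaussian law of the noise; your route has the pleasant feature that it only needs $\zeta_j X_j$ to have stretched-exponential tails, so it would carry over verbatim to non-Gaussian (well-behaved) noise models.
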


\begin{proof}[Proof of Claim~\ref{clm:bounded_resids}]

Fix some index $i \in [n]$.

\begin{equation}
\label{eq:R_i_zeta_i}
    R_i = \zeta_i - X_i^\intercal \Sigma^{-1} X^\intercal \zeta = \zeta_i - \sum_j X_i^\intercal \Sigma^{-1} X_j \zeta_j = \zeta_i - \zeta_i X_i^\intercal \Sigma^{-1} X_i - \sum_{j \neq i} \zeta_j X_i^\intercal \Sigma^{-1} X_j
\end{equation}

Therefore, from Claim~\ref{clm:bounded_norms}, we have
\[
\abs{R_i - \zeta_i} = \abs{\sum_{j \neq i} \zeta_j X_i^\intercal \Sigma^{-1} X_j} + \otilde{\frac{d}{n}} = \abs{\sum_{j \neq i} \zeta_j X_i^\intercal \Sigma^{-1} X_j} + o(1)
\]

This leaves us with the task of analyzing the term
\[
X_i^\intercal \Sigma^{-1} \Paren{\sum_{j \neq i} \zeta_j X_j} = \sum_{j \neq i} Z_j
\]
where $Z_j = X_i^\intercal \Sigma^{-1} X_j \zeta_j$.
To bound this term, we view the process of generating the samples as first generating the covariates $X_i$, and then after fixing some values for the $X_i$, it generates the errors $\zeta_j$.

In other words, we will show that with very high probability over the $X_i$
\[
\Prob{\zeta_j \sim \mcN(0, 1)}{\abs{\sum_{j \neq i} Z_j} > o(1)} < \frac{1}{\textup{super-poly}(n)}\,.
\]

In particular, with very high probability over the covariates, we have
\[
\forall j \neq i\;\;\;\; \Paren{X_i^\intercal \Sigma^{-1} X_j}^2 = \otilde{\frac{d}{n^2}} \Rightarrow \sum_{j \in [n] \setminus\set{i}} \Paren{X_i^\intercal \Sigma^{-1} X_j}^2 = \otilde{\frac{d}{n}}\,.
\]

Therefore, fixing the covariates $X_i$, and viewing $\sum_{j \neq i} Z_j$ as a random variable dependent on the randomness of the errors $\zeta_j$, we have 
\[
\sum_{j \neq i} Z_j \sim \mcN\Paren{0, \sum_{j \in [n] \setminus\set{i}} \Paren{X_i^\intercal \Sigma^{-1} X_j}^2} = \mcN\Paren{0, \otilde{\frac{d}{n}}}\,,
\]
yielding the claim.

\end{proof}

\subsubsection{Large Influence Scores}

For the final step of our proof of Claim~\ref{clm:well_behaved_dists}, we will show that with very high probability, there are many samples in the regression that have relatively high AMIP influence scores.

\begin{claim}
    \label{clm:influence_scores}
    Let $\alpha_i = e^\intercal \Sigma^{-1} X_i R_i$ denote the AMIP influence score of the $i$th sample.
    Then with very high probability, there are is a set $T_0 \subseteq [n]$ of least $k_0 = \wt{\Omega}(n)$ ``influential samples'' -- i.e., such that $\forall i \in T \;\; \alpha_i \geq \frac{1}{10 n}$.
\end{claim}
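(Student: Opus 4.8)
The plan is to reduce each influence score $\alpha_i = e^\intercal \Sigma^{-1} X_i R_i$ to the rank-one product $\tfrac1n \IP{e}{X_i}\zeta_i$, show this product exceeds a fixed constant multiple of $\tfrac1n$ with constant probability, and then apply a Chernoff bound to produce $\Omega(n)$ such samples.

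First I would combine Claim~\ref{clm:e_Sigma_Xi} and Claim~\ref{clm:bounded_resids}: with very high probability, simultaneously for all $i \in \brac n$, one has $e^\intercal \Sigma^{-1} X_i = \tfrac1n\IP{e}{X_i} \pm o(1/n)$ and $R_i = \zeta_i \pm o(1)$. Since $\mcX$ is well-behaved (Definition~\ref{def:very_well_behaved_dist}) and $e \in \Sbb^{d-1}$ is fixed, also $\max_{i} \abs{\IP{e}{X_i}} = \wt{O}(1)$ with very high probability, and $\max_i \abs{\zeta_i} = \wt{O}(1)$ since the $\zeta_i$ are iid standard Gaussians. Writing
\[
\alpha_i = \Paren{e^\intercal \Sigma^{-1} X_i - \tfrac1n\IP{e}{X_i}} R_i + \tfrac1n\IP{e}{X_i}\Paren{R_i - \zeta_i} + \tfrac1n\IP{e}{X_i}\zeta_i
\]
and bounding the first two summands with these estimates, the standing assumption $n \geq d\cdot\polylog(n)$ makes each of them $o(1/n)$, so with very high probability, uniformly in $i$, $\alpha_i = \tfrac1n\IP{e}{X_i}\zeta_i + E_i$ with $\abs{E_i} \leq \tfrac{1}{100n}$. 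In particular, on this event $\IP{e}{X_i}\zeta_i \geq \tfrac15$ forces $\alpha_i \geq \tfrac{1}{10n}$.

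Next I would lower bound, by a positive constant, the probability that a single sample is good. Put $W_i = \IP{e}{X_i}\zeta_i$. Because the ground-truth covariance of $\mcX$ is the identity, $\EE{\IP{e}{X_i}^2} = 1$ exactly, and well-behavedness forces $\EE{\IP{e}{X_i}^4} \leq M$ for a constant $M$ depending only on $\mcX$. Splitting $1 = \EE{\IP{e}{X_i}^2}$ at the level $\tfrac12$ and bounding the large part by Cauchy--Schwarz yields $\Prob{}{\IP{e}{X_i}^2 \geq \tfrac12} \geq \tfrac{1}{4M}$; conditioned on that event, $\IP{e}{X_i}$ has some sign $s = \sgn{\IP{e}{X_i}}$ and magnitude at least $\tfrac{1}{\sqrt2}$, and since $\zeta_i$ is an independent standard Gaussian it satisfies $\sgn{\zeta_i} = s$ and $\abs{\zeta_i} \geq \tfrac{1}{\sqrt2}$ with probability $q := \Phi(-1/\sqrt2) > 0$, forcing $W_i \geq \tfrac12 \geq \tfrac15$. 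By independence of $X_i$ and $\zeta_i$, $p_0 := \Prob{}{W_i \geq \tfrac15} \geq \tfrac{q}{4M}$ is a positive constant. The indicators $\mathbf{1}[W_i \geq \tfrac15]$ are then iid Bernoulli with mean at least $p_0$, so a Chernoff bound gives $\Prob{}{\sum_{i \in \brac n} \mathbf{1}[W_i \geq \tfrac15] < \tfrac{p_0}{2}n} \leq e^{-\Omega(n)}$, which is negligible. Intersecting with the very-high-probability event of the previous paragraph, with very high probability at least $\tfrac{p_0}{2}n$ of the scores satisfy $\alpha_i \geq \tfrac{1}{10n}$; take $T_0$ to be any $k_0 := \floor{\tfrac{p_0}{2}n} = \wt{\Omega}(n)$ of them. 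The same $T_0$ then certifies Condition 5 of Definition~\ref{def:well_behaved} with $P_5 = O(1)$, since $a_k \geq \tfrac{k}{10n}$ while $\sigma_Z\sigma_R = (1 \pm o(1))/n$ in the renormalized regression.

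The main obstacle is the constant lower bound $\Prob{}{W_i \geq \tfrac15} \geq p_0$: this is a genuine \emph{anticoncentration} statement about $\IP{e}{X_i}$ that does not follow from the exponential tail decay alone, and the argument above leans on the exact normalization $\EE{\IP{e}{X_i}^2} = 1$ together with a bounded fourth moment (itself a consequence of well-behavedness). The remaining pieces --- propagating the $o(\cdot)$ errors from Claims~\ref{clm:e_Sigma_Xi}--\ref{clm:bounded_resids} into $E_i$ under $n \geq d\cdot\polylog(n)$, and the Chernoff step --- are routine.
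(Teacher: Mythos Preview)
Your proof is correct, but it takes a genuinely different route to the anticoncentration step than the paper. After the common reduction via Claims~\ref{clm:e_Sigma_Xi} and~\ref{clm:bounded_resids}, you work with the ground-truth iid products $\IP{e}{X_i}\zeta_i$ and use a Paley--Zygmund argument ($\EE{\IP{e}{X_i}^2}=1$ together with a bounded fourth moment) to show $\abs{\IP{e}{X_i}} \geq 1/\sqrt{2}$ with constant probability, then Chernoff. The paper instead stays with the empirical quantities $e^\intercal\Sigma^{-1}X_i$ and uses a pigeonhole/counting argument: since $\sum_i (e^\intercal\Sigma^{-1}X_i)^2 = e^\intercal\Sigma^{-1}e \approx 1/n$ exactly, while Claim~\ref{clm:e_Sigma_Xi} caps each summand at $\wt{O}(1/n^2)$, at least $\wt{\Omega}(n)$ indices must satisfy $\abs{e^\intercal\Sigma^{-1}X_i} \geq 1/(2n)$; it then conditions on the sign and applies Hoeffding to the independent Gaussians $\zeta_i$. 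Your approach buys a clean $\Omega(n)$ (no polylog loss) and is more modular---once reduced to iid scalars it is standard concentration---at the price of invoking the fourth-moment bound explicitly. The paper's pigeonhole avoids any distributional moment computation beyond what is already encapsulated in Claim~\ref{clm:e_Sigma_Xi}, but pays a $\polylog(n)$ factor in the count. Both are perfectly adequate for the stated $\wt{\Omega}(n)$ conclusion.
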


Proving Claim~\ref{clm:influence_scores} will also conclude our proof of Claim~\ref{clm:well_behaved_dists}, as this will show that all conditions required for a regression to be well-behaved are fulfilled with very high probability.

\begin{proof}[Proof of Claim~\ref{clm:influence_scores}]
To prove Claim~\ref{clm:influence_scores} we first show that a very large number of samples must have a relatively high inner product with the axis of interest.
In other words, we will show that with very high probability
\begin{equation}
    \label{eq:num_large_Z}
    \abs{\left\{i \middle\vert \abs{e^\intercal \Sigma^{-1} X_i} \geq \frac{1}{2 n}\right\}} = \wt{\Omega}(n)
\end{equation}

To prove equation~\eqref{eq:num_large_Z}, note that:
\begin{itemize}
    \item $e^\intercal \Sigma^{-1} e = \frac{1 \pm o(1)}{n}$ (this follows immediately from Claim~\ref{clm:renormalization}).
    \item $\sum_{i \in [n]} \abs{e^\intercal \Sigma^{-1} X_i}^2 = \sum_{i \in [n]} e^\intercal \Sigma^{-1} X_i X_i^\intercal \Sigma^{-1} e = e^\intercal \Sigma^{-1} e$.
    \item In Claim~\ref{clm:e_Sigma_Xi}, we showed that wvhp $\forall i \;\; \Paren{e^\intercal \Sigma^{-1} X_i}^2 \leq \wt{O}\Paren{\frac{1}{n}} \times e^\intercal \Sigma^{-1} e$.
\end{itemize}

Therefore, we must have at least $\wt{\Omega}(n)$ samples with $\abs{e^\intercal \Sigma^{-1} X_i} \geq \frac{1}{2 n}$.
Let $i$ be the index of such a sample.
If $\sgn{e^\intercal \Sigma^{-1} X_i} \times R_i \geq \frac{1}{5}$, then we will also have $\alpha_i \geq \frac{1}{10 n}$.

In the proof of Claim~\ref{clm:bounded_resids}, we showed that wvhp $\abs{R_i - \zeta_i} = o(1)$ for all $i$ (where $\zeta_i = Y_i - X_i^\intercal \betagt$ are the ``ground truth residuals'', and are drawn iid from a normal distribution).
In particular, wvhp $\forall i \;\; \abs{R_i - \zeta_i} < \frac{1}{4} - \frac{1}{5}$, so as long as $\sgn{e^\intercal \Sigma^{-1} X_i} \zeta_i \geq \frac{1}{4}$, we have $\alpha_i \geq \frac{1}{10 n}$.

But $\zeta_i$ is drawn iid from a normal distribution, so $\sgn{e^\intercal \Sigma^{-1} X_i} \zeta_i \geq \frac{1}{4}$ has constant probability and is independent of $X_i$.
Therefore, applying the Hoeffding-Chernoff bound, we can easily see that wvhp at least a constant fraction of the $\wt{\Omega}(n)$ samples for which $\abs{e^\intercal \Sigma^{-1} X_i} \geq \frac{1}{2 n}$ also have $\alpha_i \geq \frac{1}{10 n}$, thus concluding our proof of Claims~\ref{clm:influence_scores} and~\ref{clm:well_behaved_dists}.

\end{proof}

\section{Tightness of \texorpdfstring{\OHARE}{OHARE}}
\label{sec:ohare_tightness}

In the previous section, we proved Theorem~\ref{thm:tightness_refined} which says that for ``well-behaved'' data, the ACRE algorithm outputs nearly tight bounds on the removal effects for a range of removal set sizes $k$.
In this section, we will extend those results to the one-hot aware version of the algorithm -- \OHARE.

\begin{theorem}[\OHARE Bounds are Tight on Well-behaved Data]
    \label{thm:tightness_ohare}
    Consider a linear regression from a set of continuous features $X \in \R^{n \times d}$ and a set of $m$ dummy variables, representing a categorical feature $B_1 \sqcup \cdots \sqcup B_m  = [n]$, to a target variable $Y$.

    For any fixed $\varepsilon > 0$, there exists $\nu \in \polylog(n)$ such that:

    If $n_j = \abs{B_j}$ denote the number of samples that take the value $j$ in the categorical feature, and for all $j\in [m]$, we have
    \[
    n^{\varepsilon} + \nu \sqrt{d} < n_j < 0.49 n\,,
    \]
    that the dimension of the continuous features $d$ is at most $d \leq n^{4/5} / \nu$, and that the continuous features are then drawn iid from a well-behaved distribution $X_i \sim \mcX$ independently of their value on the categorical feature.
    
    And if the outcomes $Y$ are drawn iid from a normal distribution around a linear model of the features
    \[
    Y_i \sim \underbrace{\mu_{j(i)}}_{\textup{categorical contribution}} + \underbrace{\IP{X_{i}}{\betagt}}_{\textup{continuos contribution}} + \underbrace{\mcN \Paren{0, 1}}_{\textup{error}}\,,
    \]
    for some unknown ground truth linear model $(\mu,\betagt)  \in \R^{m+d}$.

    Then, for any axis $e \in \Sbb^{d-1}$, with very high probability, the upper and lower bounds produced by OHARE on this regression are close to tight
    \[
    \frac{U_k}{L_k} =  1 + O\Paren{\frac{\textup{polyloglog}(n)}{\sqrt{\log(n)}}}
    \]
    for all $k < \kthresh$, where
    \[
    \kthresh = \wt{\Theta}\left(\min \left\{ \frac{n}{\sqrt{d}}, \frac{n^2}{d^2}, n^{1 - \varepsilon} \right\}\right)\,.
    \]
\end{theorem}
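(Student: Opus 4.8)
To prove Theorem~\ref{thm:tightness_ohare}, the plan is to follow the two-part template of the \ACRE analysis (Claims~\ref{clm:acre_good} and~\ref{clm:well_behaved_dists}), adapted to the reaveraged regression that \OHARE actually operates on. By Claim~\ref{clm:new_regression}, \OHARE's output is determined by the $d$-dimensional regression on the reaveraged covariates $\wt{X}_i = X_i - f_{b(i)}$ and labels $\wt{Y}_i = Y_i - \ell_{b(i)}$, together with the bucket-level quantities $\|u_{j,S_j}\|^2$, $M_{k_j}$, $\rho_{k_j}$, $\zeta_{k_j}$ that feed Steps~3--5 of Algorithm~\ref{alg:OHARE}. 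First I would define a notion of \emph{\OHARE-friendliness}, bundling: (a) \ACRE-friendly-type bounds (Definition~\ref{def:well_behaved}) on $(\wt{X},\wt{Y},e)$ with respect to $\wt{\Sigma} = \wt{X}^\intercal\wt{X}$; (b) per-bucket bounds ensuring $\min_{|S_j| = n_j-k_j}\|u_{j,S_j}\|^2 \gtrsim (n_j-k_j)/n_j$ and that $M_{k_j},\rho_{k_j},\zeta_{k_j}$ (after the zero-sum symmetrization) are of the expected order; and (c) a quantitative lower bound on the integer-knapsack value $\max_{\sum k_j = k}\sum_j d_j(k_j)$ of the direct first-order term, stating not merely that it is $\wt{O}(k/\sqrt n)$ from above and below, but that it exceeds the total of all correction and higher-order contributions by a $\Theta(\sqrt{\log n})$ factor.

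The first half of the proof (``\OHARE-friendly $\Rightarrow$ tight bounds'') mirrors Section~\ref{subsec:acre_good}: I would plug the \OHARE-friendly parameters into the outputs of the 1D and 2D dynamic programs and the KU triangle inequality, bounding each of the Direct/Indirect CS, XR, and XZ contributions exactly as the three Gram matrices $G_{X\otimes X}, G_{XR}, G_{XZ}$ are bounded in Claim~\ref{clm:acre_good}, but now split bucket-by-bucket and recombined through the knapsack DP. For $k \le \kthresh$ the CS contributions stay below $\tfrac12$ (this is where the $n/\sqrt d$ and $n^2/d^2$ terms of $\kthresh$ come from), and the product of the XR and XZ contributions divided by the knapsack lower bound on the first-order effect is $O(1/\sqrt{\log n})$ by ingredient~(c). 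Since the dominant term $\sum_j d_j(k_j)$ appears identically in $U_k$ and $L_k$, it cancels in the ratio, giving $U_k/L_k = 1 + O(1/\sqrt{\log n})$.

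The bulk of the work is the second half (``well-behaved data $\Rightarrow$ \OHARE-friendly w.h.p.''). I would: (i) establish the renormalization $\wt{\Sigma} = nI - \sum_j n_j f_j f_j^\intercal$ with $\|\wt{\Sigma} - nI\| = o(n)$ --- the correction is a Wishart-type matrix $\sum_j g_j g_j^\intercal$ with $g_j = \sqrt{n_j}f_j$, of operator norm $\wt{O}(\max\{m,d\}) = o(n)$ since $m \le n^{1-\varepsilon}/\sqrt d$ and $d \le n^{4/5}$; here $g_j$ is almost well-behaved by Lemma~\ref{lem:closed_to_summation}, so the approximate matrix Bernstein inequality (Lemma~\ref{lem:approx_mat_bern}, and the sharper Lemma~\ref{lem:warmup_ohare3}) applies, yielding $\lambda(\wt\Sigma^{-1}) = (1\pm o(1))/n$; (ii) deduce the Mahalanobis and residual bounds on $\wt X_i,\wt R_i$ from $\|f_j\|^2 = \wt{O}(d/n_j)$ and $|\bar\zeta_{b(i)}| = \wt{O}(1/\sqrt{n_j}) = o(1)$, using $n_j > n^\varepsilon\sqrt d$; (iii) prove the reaveraged inner-product bounds $\wt X_i^\intercal\wt\Sigma^{-1}\wt X_j = \wt{O}(\sqrt d/n)$ and $e^\intercal\wt\Sigma^{-1}\wt X_i = e^\intercal\wt X_i/n + o(1/n)$ by a Sherman--Morrison-style peeling of $\wt\Sigma^{-1}$ into a part independent of $X_i,X_j$ (which concentrates tightly by well-behavedness) plus small $f_{b(i)},f_{b(j)}$-dependent remainders bounded crudely --- the delicate case being $i,j$ in the same small bucket, where $\wt X_i$ and $\wt X_j$ are both shifted by the same $f_j$, itself a function of both; (iv) control the per-bucket quantities via elementary Gaussian tail bounds and the within-bucket zero-sum symmetry $\sum_{i\in B_j}X_i = 0$ for the $M_{k_j}$ MSN (as in the symmetry refinement of Algorithm~\ref{alg:OHARE}); and (v) prove ingredient~(c) by Gaussian anticoncentration --- show a constant fraction of samples have influence score $\alpha_i$ in the tail at level $\asymp \sqrt{\log(n/k)}/\sqrt n$, so $\max_{\sum k_j = k}\sum_j d_j(k_j) = \wt{O}\!\big(\tfrac{k}{\sqrt n}\sqrt{\log(n/k)}\big)$ from below while every correction and higher-order term is only $\wt{O}(k/\sqrt n)$ for $k < \kthresh \le n^{1-\varepsilon}$ (this last requirement, that $\log(n/k) = \Omega(\log n)$, is exactly the source of the $n^{1-\varepsilon}$ term in $\kthresh$), producing the $\Theta(\sqrt{\log n})$ gap.

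I expect step~(v), together with the same-bucket case of~(iii), to be the main obstacle. Obtaining the honest $1/\sqrt{\log n}$ rate (rather than $O(1)$) forces a two-sided, fine-grained analysis of the order statistics of the products $\wt R_i\,\IP{e}{\wt\Sigma^{-1}\wt X_i}$ and of the bucket-wise sums $\sum_{i\in T_j}\wt R_i u_{j,i}$, $\sum_{i\in T_j}\IP{e}{\wt X_i} u_{j,i}$, where a lower bound on the former and an upper bound on the latter must hold simultaneously for the \emph{same} removal set $T_j$; it is the anticoncentration (not just concentration) of the Gaussian errors that certifies the adversary cannot make the reaveraging correction catch up to the direct first-order effect. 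The weakness of the bucket-size assumption $n_j > n^\varepsilon\sqrt d$ is what makes the matrix-concentration arguments of the Lemma~\ref{lem:warmup_ohare3} flavor and the careful peeling in~(iii) necessary: the crude bound $\|\wt\Sigma - nI\| = o(n)$ does not on its own control $\wt\Sigma^{-1}$ applied to sample-dependent vectors, which is precisely the $\xi$-inner-product analysis of Section~\ref{subsec:proof_xi_ip}.
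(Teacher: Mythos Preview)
Your proposal is correct and follows essentially the same route as the paper: first show the reaveraged regression $(\wt X,\wt Y)$ is ACRE-friendly with very high probability (the paper's Claim~\ref{clm:well_behaved_dists_ohare}, proved via the renormalization/inner-product/residual/influence-score sequence you outline, with Lemma~\ref{lem:warmup_ohare3} doing the heavy lifting for step~(iii)), then show the \OHARE correction terms are dominated by the AMIP first-order effect via the $\sqrt{\log n}$ gap coming from Gaussian tails (the paper's Claims~\ref{clm:first_order} and~\ref{clm:first_order_ind}). Two small points to tighten: in step~(v) the number of samples at the $\Theta(\sqrt{\log n})$ tail level is $\wt\Omega(n^{1-\varepsilon})$, not a constant fraction of $n$ (this is precisely why $\kthresh$ picks up the $n^{1-\varepsilon}$ term, as you note), and the paper handles buckets with $k_j \gtrsim n_j/\polylog(n)$ via a separate $\textup{polyloglog}(n)$ bound on the per-sample quantities (Claim~\ref{clm:first_order_ind}), which is where the $\textup{polyloglog}(n)$ numerator in the final rate comes from.
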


Our goal for the rest of this section will be to prove Theorem~\ref{thm:tightness_ohare}.

\subsection{Main Challenges and Proof Structure}

Recall from Section~\ref{sec:ohare} that the key idea of the OHARE algorithm is to analyse a process that is equivalent to the regression with a one-hot encoding.
In this alternative formulation of one-hot controlled regression, we first split our samples into buckets $B_j \subseteq [n]$ corresponding to each of the potential values of the categorical feature, reaverage the samples in each bucket
\begin{align*}
    &\wt{X}_i = X_i - \Expect{i^\prime \in B_{j(i)}}{X_{i^\prime}} \in \R^d\\
    &\wt{Y}_i = Y_i - \Expect{i^\prime \in B_{j(i)}}{Y_{i^\prime}} \in \R\\
\end{align*}
and perform a regression with just the reaveraged continuous features.
The OHARE algorithm then computes the same MSN bounds as the ACRE algorithm would, but on these reaveraged continous features and combines them with terms corresponding to the effect a removal might have on the reaveraging process.

Our proof of Theorem~\ref{thm:tightness_ohare} will follow a similar path.
We will first prove a claim very similar to Claim~\ref{clm:well_behaved_dists} adapted to the OHARE case:

\begin{claim}[Well-behaved distributions yield well-behaved regressions with high probability after reaveraging]
    \label{clm:well_behaved_dists_ohare}
    Let $n,d,m, X, Y$ be as in Theorem~\ref{thm:tightness_ohare}.

    Then, for any axis $e \in \Sbb^{d-1}$, with very high probability, the regression $\wt{X}, \wt{Y}$ is ACRE-friendly for all $k \leq k_0$, for $k_0 = \wt{\Omega}(n^{1 - \varepsilon})$.
\end{claim}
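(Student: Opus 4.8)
The plan is to mirror the proof of Claim~\ref{clm:well_behaved_dists}: verify the five conditions of Definition~\ref{def:well_behaved} for the reaveraged regression $\wt X,\wt Y$ (with $\wt\Sigma := \wt X^\intercal\wt X$), preceded by a ``renormalization'' step showing $\wt\Sigma$ is close to a multiple of the identity. Two structural facts set this up. First, reaveraging \emph{cancels} the categorical term in the label: since $Y_i = \mu_{j(i)} + \IP{X_i}{\betagt} + \zeta_i$, we get $\wt Y_i = \IP{\wt X_i}{\betagt} + \wt\zeta_i$ with $\wt\zeta_i = \zeta_i - \Expect{i'\in B_{j(i)}}{\zeta_{i'}}$, so $\wt X,\wt Y$ is again a linear model plus Gaussian noise, with $\Var(\wt\zeta_i) = 1 - 1/n_{j(i)} \le 1$. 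Second, by Lemma~\ref{lem:closed_to_summation} each bucket mean $\bar X_j := \Expect{i'\in B_j}{X_{i'}}$ is almost well-behaved with covariance $\tfrac1{n_j}I$, so $\Norm{\bar X_j}^2 = \wt{O}(d/n_j)$ and, using $n_j > \nu\sqrt d$, $\Norm{\bar X_j}^2 = \wt{O}(\sqrt d)$ --- precisely the slack the bucket-size lower bound provides.

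For renormalization, write $\wt\Sigma = \Sigma - \sum_{j} n_j\bar X_j\bar X_j^\intercal$, with $\Sigma = X^\intercal X$ controlled by Claim~\ref{clm:renormalization}. The correction is a sum of symmetric matrices that are \emph{independent across buckets} with $\EE{n_j\bar X_j\bar X_j^\intercal} = I$, so approximate matrix Bernstein (Lemma~\ref{lem:approx_mat_bern}) applies, using the fourth-moment estimate from the proof of Claim~\ref{clm:renormalization} to bound $\Norm{\EE{(n_j\bar X_j\bar X_j^\intercal)^2}} = O(d)$ and the tail bound on $\Norm{\bar X_j}$ for the uniform bound. This gives $\Norm{\wt\Sigma - (n-m)I} = \wt{O}(\sqrt{nd} + d) = o(n)$, hence $\lambda(\wt\Sigma^{-1}) = \wt\Theta(1/n)$, and $\Norm{\wt X_i}^2 = \wt{O}(d)$ for all $i$ with very high probability. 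Conditions~1 and~3 then follow quickly: condition~1 from $\wt X_i^\intercal\wt\Sigma^{-1}\wt X_i \le \Norm{\wt\Sigma^{-1}}\Norm{\wt X_i}^2$, and condition~3 by adapting Claim~\ref{clm:bounded_resids} --- the residual identity $\wt R = (I - \wt X\wt\Sigma^{-1}\wt X^\intercal)\wt\zeta$ has the same form, giving $\wt R_i = \wt\zeta_i \pm o(1)$, together with $\sigma_{\wt R}^2 = 1 - o(1)$.

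The heart of the proof is conditions~2 and~4: $\abs{\wt X_i^\intercal\wt\Sigma^{-1}\wt X_j} = \wt{O}(\sqrt d/n)$ for $i \ne j$, and $\abs{e^\intercal\wt\Sigma^{-1}\wt X_i} = \wt{O}(1/n)$. The difficulty is the one already present in Claim~\ref{clm:well_behaved_dists}: the vectors whose bilinear form we control are correlated with the sample-dependent error $\wt\Sigma - (n-m)I$, so the naive bound $\Norm{\wt X_i}\Norm{\wt X_j}\Norm{\wt\Sigma^{-1} - \tfrac1{n-m}I}$ is too lossy once $d$ is polynomially large. In the continuous case a single rank-one Sherman--Morrison peel sufficed; here $\wt X_i$ enters $\wt\Sigma$ through the \emph{entire} reaveraged bucket $B := B_{j(i)}$, so we instead peel off that whole bucket via Woodbury: writing $\wt\Sigma = A + \wt U_B\wt U_B^\intercal$, where $A$ collects the contributions of the other buckets (hence is independent of the samples in $B$ and still $\Theta(n)$-well-conditioned) and $\wt U_B$ has the reaveraged bucket-$B$ samples as columns, the identity $\wt X_i = \wt U_B e_i$ collapses the bilinear form to a single coordinate, $\wt X_i^\intercal\wt\Sigma^{-1}\wt X_j = \big[(I + \wt U_B^\intercal A^{-1}\wt U_B)^{-1}\wt U_B^\intercal A^{-1}\wt X_j\big]_i$ (with analogous formulas for the same-bucket case and for $e$ in place of $\wt X_j$). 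Now $\wt X_j$ (or $e$) is independent of bucket $B$, so $q := \wt U_B^\intercal A^{-1}\wt X_j$ has all entries $\wt{O}(\sqrt d/n)$ --- projections of the almost-well-behaved $\wt X_{i'}$, $i'\in B$, onto a \emph{fixed} vector of that norm --- and what remains is to show that applying $(I + \wt U_B^\intercal A^{-1}\wt U_B)^{-1}$ does not concentrate the mass of $q$ onto coordinate $i$. I expect \textbf{this coordinate-level control}, a dedicated matrix-concentration statement in the spirit of Lemma~\ref{lem:warmup_ohare3}, to be the main obstacle. It is exactly here that the hypotheses $d \le n^{4/5+o(1)}$ and $n_j > \nu\sqrt d$ are used: they force the per-bucket matrix $\wt U_B^\intercal A^{-1}\wt U_B$ to have diagonal $o(1)$ and enough structure in its off-diagonal part for the argument to survive even when $n_j$ is as large as $0.49n$; the stronger assumptions $n_j > d$ or $n \ge d^2$ would make this routine but are unavailable for the benchmark datasets.

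Finally, condition~5 follows the template of Claim~\ref{clm:influence_scores}. From $e^\intercal\wt\Sigma^{-1}e = \wt\Theta(1/n)$, the identity $\sum_i (e^\intercal\wt\Sigma^{-1}\wt X_i)^2 = e^\intercal\wt\Sigma^{-1}e$, and the uniform bound of condition~4, there are $\wt\Omega(n)$ samples with $\abs{e^\intercal\wt\Sigma^{-1}\wt X_i} = \wt\Omega(1/n)$; pairing the sign of each such score with the independent, anticoncentrated Gaussian part of $\wt R_i \approx \wt\zeta_i$ and a Chernoff bound yields a set of $k_0 = \wt\Omega(n^{1-\varepsilon})$ samples with $\alpha_i = \wt\Omega(\sigma_{\wt Z}\sigma_{\wt R})$, which gives condition~5 for all $k \le k_0$. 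Assembling the five conditions proves Claim~\ref{clm:well_behaved_dists_ohare}.
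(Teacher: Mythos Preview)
Your high-level plan --- renormalization followed by the five ACRE-friendly conditions --- is exactly the paper's, and your sketches for renormalization, conditions~1, 3, and~5 are essentially the arguments of Claims~\ref{clm:renormalization_ohare}, \ref{clm:bounded_norms_ohare}, \ref{clm:bounded_residuals_ohare}, and~\ref{clm:influence_scores_ohare}. The divergence is at conditions~2 and~4, which is where the real content lies.

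The paper does \emph{not} peel off the bucket containing $X_i$ via Woodbury. Instead it peels off the rank-$m$ correction $C := \Sigma - \wt\Sigma = \sum_j n_j\xi_j\xi_j^\intercal$ through the identity $\wt\Sigma^{-1} = \Sigma^{-1} + \wt\Sigma^{-1}C\Sigma^{-1}$, reducing every bilinear form through $\wt\Sigma^{-1}$ to one through the \emph{unreaveraged} $\Sigma^{-1}$ plus a sum of products of terms of the form $\xi_j^\intercal M v$ for $M\in\{\Sigma^{-1},\wt\Sigma^{-1}\}$. The point is that $\Sigma$ is a sum of $n$ genuinely i.i.d.\ rank-one terms, so the only residual dependence to untangle is between a single bucket mean $\xi$ and $\Sigma$. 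That is the job of Lemma~\ref{lem:warmup_ohare3}: it bounds $\xi^\intercal\Sigma^{-1}v$ (for $v$ independent of the bucket) by a recursive bit-splitting of the bucket's sample indices, separating ``diagonal'' contributions (the same half of the bucket appears in both $\xi$ and in the $\Sigma$-correction) from ``off-diagonal'' ones, and recursing down to single samples. A short self-improving argument (the proof of Claim~\ref{clm:inner_products_ohare}) then upgrades the $M=\Sigma^{-1}$ bounds to $M=\wt\Sigma^{-1}$.

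Your bucket-wise Woodbury peel, in contrast, packs all the dependence into the $n_B\times n_B$ matrix $(I+K)^{-1}$ with $K=\wt U_B^\intercal A^{-1}\wt U_B$. The ``coordinate-level control'' you correctly flag as the main obstacle is not obviously any easier than the original problem: when $n_B$ is as large as $0.49n$, the naive bound $\abs{[(I+K)^{-1}q]_i}\le \Norm{q} \le \sqrt{n_B}\,\Norm{q}_\infty$ loses exactly the factor you need, and recovering it inside an inverted matrix seems to require the same recursive decoupling the paper performs on a bilinear form. So your diagnosis is right, but your decomposition does not by itself supply the missing idea; the paper's rank-$m$ peel together with Lemma~\ref{lem:warmup_ohare3} is that idea.
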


Note that Claim~\ref{clm:well_behaved_dists_ohare} does not follow immediately from the corresponding Claim~\ref{clm:well_behaved_dists} for ACRE, since the continuous features $\wt{X}$ are no longer drawn iid from a well-behaved distribution as the reaveraging step could have changed them and similarly the reaveraged labels $\wt{Y}$ are not drawn iid from a normal distribution around a linear combination of the continous features.
The proof of Claim~\ref{clm:well_behaved_dists_ohare} will follow a very similar path to the proof of Claim~\ref{clm:well_behaved_dists}, but will also have to account for these additional corrections.

Finally, we will prove that the additional corrections taken into account by OHARE will not change the upper and lower bounds too much, yielding Theorem~\ref{thm:tightness_ohare}.

\subsection{Proof of Claim~\ref{clm:well_behaved_dists_ohare}}

We begin by adapting the analysis from the continuous features (Claim~\ref{clm:well_behaved_dists}) to the reaveraged samples.
Throughout this section, let $X \in \R^{n\times d}$ denote just the continuous covariates and $\wt{X} \in \R^{n \times d}$ denote the reaveraged continuous covariates.

As in the proof of Claim~\ref{clm:well_behaved_dists}, we normalize our samples so that the ground truth covariance of the continuous features is equal to the identity, and denote by $\Sigma = X^\intercal X \in \R^{d\times d}$ the unnormalized empirical covariance of $X$ and by $\wt{\Sigma} = \wt{X}^\intercal \wt{X}\in \R^{d\times d}$ denote the unnormalized empirical covariance of the reaveraged samples covariates.

\subsubsection{Reaveraging}

The first step of our analysis will be to show that with very high probability the reaveraging step makes only a small change to the covariates as well as the target variable.
Let $j \in [m]$ be the index of any bucket of samples and let 
\begin{align*}
    &\xi_j = \frac{1}{n_j} \sum_{i \in B_j} X_i = \Expect{i \in B_j}{X_i}\\
    &y_j = \frac{1}{n_j} \sum_{i \in B_j} Y_i = \Expect{i \in B_j}{Y_i}
\end{align*}
denote the averaging effect for this bucket.

\begin{claim}
    \label{clm:reaveraging}
    The mean and covariance of $\xi_j$ are
    \begin{align*}
        &\EE{\xi_j} = 0\\
        &\EE{\xi_j \xi_j^\intercal} =  \frac{1}{n_j} I
    \end{align*}

    Moreover, with very high probability
    \begin{align*}
        &\Norm{\xi_j} \leq \wt{O}\Paren{\sqrt{d / n_j}} = o(\sqrt{d})\\
        &\abs{y_j - \mu_j - \IP{\xi_j}{\betagt}} \leq \wt{O}\Paren{1 / \sqrt{n_j}} = o(1)
    \end{align*}

    Finally, the fourth moments of $\xi_j$ are also bounded
    \[
    \Norm{\EE{\Norm{\xi_j}^2 \xi_j \xi_j^\intercal}} = O\Paren{\frac{d}{n_j^2}}
    \]
\end{claim}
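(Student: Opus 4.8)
The plan is to handle the four assertions of Claim~\ref{clm:reaveraging} in increasing order of difficulty. The mean and second-moment identities are immediate from linearity of expectation and independence: $\E[\xi_j] = n_j^{-1}\sum_{i \in B_j}\E[X_i] = 0$, and since the $X_i$ are independent with $\E[X_i X_i^\intercal] = I$ (recall that in this section we have normalized the ground-truth covariance of $\mcX$ to the identity), the cross terms vanish and $\E[\xi_j\xi_j^\intercal] = n_j^{-2}\sum_{i\in B_j}\E[X_iX_i^\intercal] = n_j^{-1} I$.

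For the high-probability bound on $\Norm{\xi_j}$, the key observation is that $\xi_j$ is exactly the empirical mean of the $n_j$ iid draws $\{X_i\}_{i \in B_j}$ from $\mcX$, so by Lemma~\ref{lem:closed_to_summation} the rescaled vector $\sqrt{n_j}\,\xi_j$ is almost well-behaved with covariance identity; concretely its tail along each coordinate axis $e_s$ satisfies $\Prob{}{\abs{\IP{e_s}{\sqrt{n_j}\,\xi_j}} > t} \le \poly(n)\exp(-\Omega(t^C))$. Choosing $t = \polylog(n)$ appropriately and union bounding over all $s \in [d]$ and all $j \in [m]$ (both at most $\poly(n)$) gives, with very high probability, $\Norm{\sqrt{n_j}\,\xi_j}^2 = \sum_{s}\IP{e_s}{\sqrt{n_j}\,\xi_j}^2 \le d\cdot\polylog(n)$, i.e.\ $\Norm{\xi_j} \le \wt{O}(\sqrt{d/n_j})$, which is $o(\sqrt d)$ since $n_j > n^{\varepsilon}$. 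For the label, decompose $y_j = \mu_j + \IP{\xi_j}{\betagt} + n_j^{-1}\sum_{i \in B_j}\zeta_i$; the last term is distributed as $\mcN(0, 1/n_j)$, so it is $\wt{O}(1/\sqrt{n_j}) = o(1)$ with very high probability by a Gaussian tail bound together with a union bound over $j$.

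The fourth-moment bound is the only part requiring real work. Writing $S = \sum_{i \in B_j} X_i$ so that $\xi_j = S/n_j$, we have $\Norm{\EE{\Norm{\xi_j}^2 \xi_j \xi_j^\intercal}} = n_j^{-4}\max_{v \in \Sbb^{d-1}}\EE{\Norm{S}^2 \IP{v}{S}^2}$. I would expand $\Norm{S}^2 = \sum_{i,i'}\IP{X_i}{X_{i'}}$ and $\IP{v}{S}^2 = \sum_{\ell,\ell'}\IP{v}{X_\ell}\IP{v}{X_{\ell'}}$ and use independence and mean-zero, so that only index patterns in which every index appears at least twice among the four slots survive. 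Enumerating: the ``all four equal'' pattern contributes $n_j\,\EE{\Norm{X}^2\IP{v}{X}^2} = O(n_j d)$, where the bound $\EE{\Norm{X}^2\IP{v}{X}^2} = O(d)$ is exactly the fourth-moment estimate for well-behaved distributions already established in the proof of Claim~\ref{clm:renormalization} via~\eqref{eq:bounded_mean}; the ``$\{i,i'\}$ paired, $\{\ell,\ell'\}$ paired, pairs disjoint'' pattern contributes $(n_j^2 - n_j)\,\EE{\Norm{X}^2}\cdot\EE{\IP{v}{X}^2} = O(n_j^2 d)$; and each of the two ``crossed'' pairings ($i=\ell$, $i'=\ell'$ and $i=\ell'$, $i'=\ell$) contributes only $(n_j^2-n_j)\cdot\EE{\IP{v}{X}^2} = O(n_j^2)$, since crossing forces the contraction $\EE{\IP{v}{X}X^\intercal}\cdot\EE{XX^\intercal}\cdot v = \EE{\IP{v}{X}^2} = 1$ rather than a free trace over the $d$ coordinates. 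Summing gives $\EE{\Norm{S}^2\IP{v}{S}^2} = O(n_j^2 d)$ uniformly in $v$, hence $\Norm{\EE{\Norm{\xi_j}^2\xi_j\xi_j^\intercal}} = O(d/n_j^2)$.

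The main obstacle is the combinatorial bookkeeping in the fourth-moment expansion — in particular making sure the crossed pairings contribute only $O(n_j^2)$ rather than $O(n_j^2 d)$, and confirming that the disjoint-pair term of size $\Theta(n_j^2 d)$ dominates and produces precisely the claimed $d/n_j^2$ scaling after dividing by $n_j^4$. Everything else reduces to the tail and fourth-moment properties of well-behaved distributions already available from the ACRE analysis.
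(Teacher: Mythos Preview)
Your proof is correct. For the mean, covariance, the high-probability norm bound on $\xi_j$, and the label deviation, you follow essentially the same line as the paper, which likewise invokes Lemma~\ref{lem:closed_to_summation} to conclude that $\sqrt{n_j}\,\xi_j$ is almost well-behaved and then reads off the norm bound via the argument in Claim~\ref{clm:renormalization}.

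The one genuine difference is the fourth-moment bound. The paper's route is black-box: since $\sqrt{n_j}\,\xi_j$ is almost well-behaved with identity covariance, it simply applies the estimate $\Norm{\EE{\Norm{W}^2 WW^\intercal}} = O(d)$ from Claim~\ref{clm:renormalization} to $W = \sqrt{n_j}\,\xi_j$ and rescales. You instead expand $\Norm{S}^2\IP{v}{S}^2$ as a four-fold sum over $B_j$ and enumerate the surviving index patterns. This is more hands-on but also more elementary: it uses only the single-sample fourth-moment bound $\EE{\Norm{X}^2\IP{v}{X}^2}=O(d)$ for the original well-behaved $\mcX$ (where~\eqref{eq:bounded_mean} is proved without any $\poly(n)$ slack) plus independence, rather than first passing the average through the almost-well-behaved abstraction. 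Your treatment of the crossed pairings, which contribute only $O(n_j^2)$ because $\EE{\IP{v}{X}X}=v$ so each crossed term collapses to $v^\intercal v=1$, is the key observation and is handled correctly. Both routes land on $O(d/n_j^2)$; yours trades a modular citation for a short explicit calculation.
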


\begin{proof}[Proof of Claim~\ref{clm:reaveraging}]

First, recall that our covariate distribution $\mcX$ was normalized to have mean $0$ and covariance identity, yielding the first part of Claim~\ref{clm:reaveraging} immediately from the fact that $\xi_j$ is the empirical average of $n_j$ iid samples drawn from $\mcX$.

Recall that we assumed the target variable was drawn from a normal distribution
\[
Y_i \sim \mu_j + \IP{X_i}{\betagt} + \mcN(0, 1)
\]

In particular,
\[
\Expect{i \in B_j}{Y_i} - \mu_j - \Expect{i \in B_j}{\IP{X_i}{\betagt}}
\]
is the empirical average over $n_j$ samples of this normal distribution, yielding the claim that with very high probability
\[
\abs{y_j - \mu_j - \IP{\xi_j}{\betagt}} \leq \wt{O}\Paren{1 / \sqrt{n_j}} = o(1) \,.
\]

The rest of Claim~\ref{clm:reaveraging} will follow from a combination of Claim~\ref{clm:renormalization} (which bounds the norms and higher moments of well-behaved distributions), and Lemma~\ref{lem:closed_to_summation} (which states that the sum over iid samples from a well-behaved distribution is also well-behaved).

Indeed $\xi_j$ is the empirical average over $n_j = \poly(n)$ samples from a well-behaved distribution $\mcX$, so Lemma~\ref{lem:closed_to_summation} ensures that for all $j$, the variable $\sqrt{n_j} \xi_j$ is well behaved, and, as noted above, it has covariance identity.
Therefore, it follows from Claim~\ref{clm:renormalization} that
\[
\Norm{\EE{\Norm{\xi_j}^2 \xi_j \xi_j^\intercal}} \leq O\Paren{\frac{d}{n_j^2}}
\]
and that with very high probability
\[
\Norm{\xi_j} \leq \wt{O}\Paren{\sqrt{d / n_j}}
\]

\end{proof}

\subsubsection{Renormalization}

Let $\Sigma = X^\intercal X = \sum_{i \in [n]} X_i X_i^\intercal$ be the unnormalized empirical covariance of the continuous features, and let $\gt{\Sigma} = n I = \EE{\Sigma}$ be the ground truth mean of these features.
We continue along the same lines as the proof of Claim~\ref{clm:well_behaved_dists} by adapting Claim~\ref{clm:renormalization} to the reaveraged setting:

\begin{claim}
\label{clm:renormalization_ohare}
With very high probability, $\Norm{\wt{\Sigma} - \Sigma} = \wt{O}\Paren{d + m}$.

In particular, due to Claim~\ref{clm:renormalization} and the triangle inequality
\[
\Norm{\wt{\Sigma} - \gt{\Sigma}}\leq \Norm{\wt{\Sigma} - {\Sigma}} + \Norm{{\Sigma} - \gt{\Sigma}} = \wt{O}\Paren{\sqrt{n d} + d + m} = \wt{O}\Paren{\sqrt{n d} + m} = o(n)
\]
\end{claim}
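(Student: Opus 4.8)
\section*{Proof proposal for Claim~\ref{clm:renormalization_ohare}}

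The plan is to reduce the claim to a clean matrix identity and then apply the approximate matrix Bernstein inequality (Lemma~\ref{lem:approx_mat_bern}), exactly paralleling the proof of Claim~\ref{clm:renormalization} but with the averaging matrices $\xi_j\xi_j^\intercal$ in place of $X_iX_i^\intercal$. First I would record the algebraic identity
\[
\Sigma - \wt{\Sigma} = \sum_{j \in [m]} n_j\, \xi_j \xi_j^\intercal \, ,
\]
which follows by writing $\wt{\Sigma} = \sum_{i \in [n]} \wt{X}_i \wt{X}_i^\intercal$ with $\wt{X}_i = X_i - \xi_{j(i)}$, grouping the sum over $i$ by buckets, and using $\sum_{i \in B_j} X_i = n_j \xi_j$ to cancel the cross terms (this is the $T = \emptyset$ specialization of equation~\eqref{eq:Sigma_S_one_hot}). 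Since the right-hand side is PSD, it suffices to upper bound $\Norm{\sum_{j} n_j \xi_j \xi_j^\intercal}$.

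Next I would set $W_j = n_j \xi_j \xi_j^\intercal$ and $Z_j = W_j - I$. Since $B_1,\ldots,B_m$ partition $[n]$ and the $X_i$ are i.i.d., the $W_j$ (hence the $Z_j$) are independent, and by Claim~\ref{clm:reaveraging} we have $\EE{W_j} = I$, so $\EE{Z_j} = 0$ and $\sum_j \EE{W_j} = mI$. The three inputs needed for Lemma~\ref{lem:approx_mat_bern} are: (i) a high-probability norm bound $\Norm{Z_j} \le \wt{O}(d) =: L$ --- by Lemma~\ref{lem:closed_to_summation} the variable $\sqrt{n_j}\,\xi_j$ is almost well-behaved with covariance identity, so a union bound over the $d$ coordinate directions and the $m \le n$ buckets gives $n_j \Norm{\xi_j}^2 = \Norm{\sqrt{n_j}\,\xi_j}^2 \le \wt{O}(d)$ with very high probability; (ii) a per-term second-moment bound, using $(\xi_j\xi_j^\intercal)^2 = \Norm{\xi_j}^2 \xi_j\xi_j^\intercal$ and the fourth-moment estimate of Claim~\ref{clm:reaveraging},
\[
\Norm{\EE{Z_j^2}} \le \Norm{\EE{W_j^2}} + 1 = n_j^2 \Norm{\EE{\Norm{\xi_j}^2 \xi_j\xi_j^\intercal}} + 1 = O(d) \, ;
\]
and (iii) the total variance $\Norm{\sum_j \EE{Z_j^2}} \le \sum_j \Norm{\EE{Z_j^2}} = O(md) =: \sigma^2$. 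Plugging $L = \wt{O}(d)$ and $\sigma^2 = O(md)$ into Lemma~\ref{lem:approx_mat_bern} and taking the threshold $t = \wt{\Theta}(\sqrt{md} + d)$ with a large enough $\polylog(n)$ factor makes both the Bernstein term $2d\exp(-\Omega(t^2/(\sigma^2 + Lt)))$ and the correction term $m\tau\sqrt{\delta}$ (with $\tau = O(\sqrt d)$ and $\delta$ negligible) negligible, so $\Norm{\sum_j Z_j} \le \wt{O}(\sqrt{md} + d)$ with very high probability.

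Finally I would conclude $\Norm{\Sigma - \wt{\Sigma}} = \Norm{\sum_j W_j} \le \Norm{\sum_j Z_j} + \Norm{mI} \le \wt{O}(\sqrt{md} + d) + m = \wt{O}(d + m)$, using $\sqrt{md} \le (m+d)/2$. The ``in particular'' statement is then immediate by combining this with Claim~\ref{clm:renormalization} and the triangle inequality, noting that $d \le \sqrt{nd}$ (since $d \le n$), that $\sqrt{nd} = o(n)$ because $d \le n^{4/5}/\nu$, and that the assumption $n_j > n^\varepsilon$ forces $m \le n^{1-\varepsilon} = o(n)$. I expect the only mildly delicate point to be ensuring the failure probability in ingredient (i) is superpolynomially small, so that the union bound over the $m$ buckets and the approximate-Bernstein correction term both remain negligible --- but this is precisely what the ``almost well-behaved'' machinery of Definition~\ref{def:almost_well_behaved_dist} and Lemma~\ref{lem:closed_to_summation} is built to provide.
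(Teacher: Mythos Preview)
Your proposal is correct and follows essentially the same approach as the paper's own proof: both establish the identity $\Sigma - \wt{\Sigma} = \sum_{j} n_j \xi_j \xi_j^\intercal$, then apply the approximate matrix Bernstein inequality (Lemma~\ref{lem:approx_mat_bern}) to the centered summands using the moment bounds from Claim~\ref{clm:reaveraging}, obtaining $\wt{O}(\sqrt{md} + d) + m = \wt{O}(d+m)$. The only cosmetic difference is that you center explicitly via $Z_j = W_j - I$ before invoking Bernstein, whereas the paper centers implicitly at the end by subtracting $\sum_j \EE{Z_j} = mI$.
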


\begin{proof}[Proof of Claim~\ref{clm:renormalization_ohare}]

We have
\[
\wt{\Sigma} = \sum_{i \in [n]} \wt{X}_i \wt{X}_i^\intercal = \sum_{i \in [n]} \Paren{X_i - \xi_{j(i)}} \Paren{X_i - \xi_{j(i)}}^\intercal = \sum_{i \in [n]} X_i X_i^\intercal - \sum_{i \in [n]} \xi_{j(i)} \xi_{j(i)}^\intercal = \Sigma - \sum_{j \in [m]} n_j \xi_{j} \xi_{j}^\intercal
\]

Therefore, it only remains to bound 
\[
\Norm{\sum_{i \in [n]} \xi_{j(i)} \xi_{j(i)}^\intercal} = \Norm{\sum_{j \in [m]} n_j \xi_j \xi_j^\intercal}
\]

We do this using our approximate matrix Bernstein inequality -- Lemma~\ref{lem:approx_mat_bern}.
Let $Z_j = n_j \xi_{j(i)} \xi_{j(i)}^\intercal$.

In Claim~\ref{clm:reaveraging}, we showed that $\EE{Z_j} = I$, that $\EE{Z_j^2} = \EE{n_j^2 \Norm{\xi_j}^2 \xi_j \xi_j^\intercal}$ has bounded norm
\[
\Norm{\EE{n_j^2 \Norm{\xi_j}^2 \xi_j \xi_j^\intercal}} = O\Paren{d}\, ,
\]
and that with very high probability
\[
\Norm{Z_j} = n_j \Norm{\xi_j}^2 = \wt{O}(d)\, .
\]

Therefore, from Lemma~\ref{lem:approx_mat_bern}, with very high probability
\[
\Norm{\sum_{j \in [m]} Z_j} \leq \Norm{\sum_{j \in [m]} Z_j - \sum_{j \in [m]}\EE{Z_j}} + m = \wt{O}\Paren{d + \sqrt{m d} + m} = \wt{O}\Paren{d + m}
\]

Finally, recall that we assumed to have at least $n^{\varepsilon}$ samples in the smallest category.
Therefore, $m \leq n^{1 - c} \ll n$, completing the proof.

\end{proof}

\subsubsection{Maximal Norm}

We proceed to prove that each of the conditions required for the regression to be well-behaved occurs with very high probability.
The first (and easiest to prove) is the condition that $\max_{i \in [n]} X_i^\intercal \Sigma^{-1} X_i$ is bounded.

\begin{claim}
    \label{clm:bounded_norms_ohare}
    With very high probability
    \[
    \max_{i \in [n]} \set{\wt{X}_i^\intercal \wt{\Sigma}^{-1} \wt{X}_i} = \wt{O}\Paren{\frac{d}{n}}
    \]
\end{claim}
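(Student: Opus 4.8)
The plan is to mimic the proof of Claim~\ref{clm:bounded_norms} almost verbatim, substituting each continuous-data ingredient with the reaveraged analogue that has already been established. First I would control the largest eigenvalue of $\wt{\Sigma}^{-1}$: by Claim~\ref{clm:renormalization_ohare}, with very high probability $\Norm{\wt{\Sigma} - \gt{\Sigma}} = \wt{O}(\sqrt{nd} + m) = o(n) = o\Paren{\min\lambda(\gt{\Sigma})}$, and since $\gt{\Sigma} = nI$, a standard eigenvalue perturbation bound gives that on this event $\lambda\Paren{\wt{\Sigma}^{-1}} \subseteq (1 \pm o(1)) \cdot \tfrac{1}{n}$, so in particular $\max\lambda\Paren{\wt{\Sigma}^{-1}} \leq \tfrac{1 + o(1)}{n}$.

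Next I would bound $\max_{i \in [n]} \Norm{\wt{X}_i}^2$. Writing $\wt{X}_i = X_i - \xi_{j(i)}$ and using the triangle inequality, it suffices to bound $\Norm{X_i}$ and $\Norm{\xi_{j(i)}}$ separately. The second part of Claim~\ref{clm:renormalization} (applied to the well-behaved distribution $\mcX$) gives, after a union bound over $i \in [n]$, that with very high probability $\max_{i \in [n]} \Norm{X_i}^2 = \wt{O}(d)$; and Claim~\ref{clm:reaveraging}, after a union bound over $j \in [m]$, gives that with very high probability $\max_{j \in [m]} \Norm{\xi_j} = \wt{O}\Paren{\sqrt{d/n_j}} = o(\sqrt{d})$ (here I use $n_j > n^\varepsilon + \nu\sqrt d$). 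Hence on the intersection of these two very-high-probability events, $\max_{i \in [n]} \Norm{\wt{X}_i}^2 \leq \Paren{\wt{O}(\sqrt d) + o(\sqrt d)}^2 = \wt{O}(d)$.

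Combining the two bounds, on the intersection of the three very-high-probability events above,
\[
\max_{i \in [n]} \set{\wt{X}_i^\intercal \wt{\Sigma}^{-1} \wt{X}_i} \leq \max\lambda\Paren{\wt{\Sigma}^{-1}} \cdot \max_{i \in [n]} \Norm{\wt{X}_i}^2 = \frac{1 + o(1)}{n} \cdot \wt{O}(d) = \wt{O}\Paren{\frac{d}{n}}\,,
\]
which is exactly the claim. I do not expect any real obstacle here: every bound used is either the eigenvalue perturbation estimate already proven in Claim~\ref{clm:renormalization_ohare} or a norm bound on the reaveraged data already extracted in Claims~\ref{clm:renormalization} and~\ref{clm:reaveraging}. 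The only point requiring a modicum of care is to take the union bounds over $i \in [n]$ and $j \in [m]$ \emph{before} intersecting events, which costs only polynomial factors and therefore preserves the ``very high probability'' guarantee; in particular the assumption $m \leq n^{1-\varepsilon}$ from Theorem~\ref{thm:tightness_ohare} ensures this union bound is harmless.
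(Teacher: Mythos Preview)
Your proposal is correct and follows essentially the same approach as the paper: bound $\max\lambda(\wt{\Sigma}^{-1})$ via Claim~\ref{clm:renormalization_ohare}, bound $\Norm{\wt{X}_i}$ by the triangle inequality using Claims~\ref{clm:renormalization} and~\ref{clm:reaveraging}, and combine. The paper's proof is slightly terser but the steps are identical.
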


\begin{proof}[Proof of Claim~\ref{clm:bounded_norms_ohare}]

Let $\lambda$ denote the spectrum of a matrix.
In the proof of Claim~\ref{clm:renormalization_ohare}, we already showed that with very high probability $\Norm{\wt{\Sigma} - \gt{\Sigma}} = o(n) = o\Paren{\min \lambda\Paren{\gt{\Sigma}}}$ (where $\gt{\Sigma} = nI$).
When this holds, we also have
\[
\lambda \Paren{\wt{\Sigma}^{-1}} \subseteq \Paren{1 \pm o(1)} \times \frac{1}{n}\,.
\]

Moreover, in Claim~\ref{clm:renormalization} we show that with very high probability $\max_{i \in [n]} \Norm{X_i}^2 = \wt{O}(d)$.
Combined with Claim~\ref{clm:reaveraging} and the triangle inequality, we can derive a bound on $\Norm{\wt{X}_i} < \Norm{X_i} + \Norm{\xi_{j(i)}} = \otilde{\sqrt{d}}$ with very high probability.

Therefore, with very high probability
\[
\max_{i \in [n]} \set{\wt{X}_i^\intercal \wt{\Sigma}^{-1} \wt{X}_i} \leq \max \set{\lambda\Paren{\wt{\Sigma}^{-1}}} \times \max_{i \in [n]} \set{\Norm{\wt{X}_i}^2} = \wt{O}\Paren{\frac{d}{n}}
\]
\end{proof}

\subsubsection{Bounded Inner Products}

For the next step of our proof, we show that the second condition of ACRE-friendliness of the regression holds with very high probability.
In particular, we will show that
\begin{claim}
    \label{clm:inner_products_ohare}
    For $M \in \set{\Sigma^{-1}, \wt{\Sigma}^{-1}}$, the following inequalities hold with very high probability
    \begin{itemize}
        \item 
        \[
        \max_{j_1 \neq j_2 \in [m]} \set{\abs{\xi_{j_1}^\intercal M \xi_{j_2}}} = \wt{O}\Paren{\frac{\sqrt{d}}{n \sqrt{n_{j_1} n_{j_2}}} \times \Paren{1 + \frac{d \sqrt{n_{j_1}}}{n}}}
        \]
        \item 
        \[
        \forall i\in[n],j\in[m] \;\;\;\; \abs{X_i^\intercal M \xi_{j}} = \wt{O}\Paren{\frac{\sqrt{d}}{n \sqrt{n_j}} + \frac{d}{n n_j} 1_{i \in B_j}}
        \]
        \item 
        \[
        \max_{i_1 \neq  i_2 \in [n]} \set{\abs{\wt{X}_{i_1}^\intercal M^{-1} \wt{X}_{i_2}}} = \wt{O}\Paren{\frac{\sqrt{d}}{n}}
        \]
    \end{itemize}
    
\end{claim}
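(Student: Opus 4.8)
The plan is to mirror the ACRE-side arguments of Claims~\ref{clm:inner_products} and~\ref{clm:e_Sigma_Xi}: for each of the three bilinear forms, decompose the relevant inverse $M \in \{\Sigma^{-1}, \wt{\Sigma}^{-1}\}$ into a piece that is \emph{independent} of the random direction(s) appearing in the form and a controllably small correction, and then invoke Lemma~\ref{lem:closed_to_summation} --- which makes each $\sqrt{n_j}\,\xi_j$ a mean-zero, identity-covariance, almost-well-behaved vector --- so that the inner product of a \emph{fixed} direction with $\xi_j$ is $\wt{O}(1/\sqrt{n_j})$ with very high probability. Before that I would collect the spectral inputs: from Claims~\ref{clm:renormalization},~\ref{clm:renormalization_ohare} and~\ref{clm:reaveraging}, with very high probability $\lambda(\Sigma^{-1}),\lambda(\wt{\Sigma}^{-1}) \subseteq (1\pm o(1))/n$, $\Norm{\Sigma^{-1}-\gt{\Sigma}^{-1}}, \Norm{\wt{\Sigma}^{-1}-\gt{\Sigma}^{-1}} = \wt{O}((\sqrt{nd}+m)/n^2)$, $\Norm{\wt{\Sigma}^{-1}-\Sigma^{-1}} = \wt{O}((d+m)/n^2)$, $\Norm{\xi_j}=\wt{O}(\sqrt{d/n_j})$, and $\Norm{\wt{X}_i}=\wt{O}(\sqrt d)$. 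I would also record a one-bucket version of the matrix-Bernstein computation in Claim~\ref{clm:renormalization}: $\Norm{\sum_{i\in B_j} X_i X_i^\intercal - n_j I} = \wt{O}(\sqrt{n_j d}+d)$. This last bound is the key new tool, since for large buckets $\sum_{i\in B_j}X_iX_i^\intercal$ is \emph{not} a low-rank perturbation of $\Sigma$.

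For the first bullet ($\xi_{j_1}^\intercal M \xi_{j_2}$, $j_1\neq j_2$) I would condition on all samples outside bucket $j_1$, so that $\xi_{j_2}$ is fixed, and write $M^{-1} = A + P$ with $P = \sum_{i\in B_{j_1}} X_i X_i^\intercal$ and $A$ the corresponding submatrix of $\Sigma$ (or $\wt{\Sigma}$), which is invertible with $\Norm{A^{-1}} = \wt{O}(1/n)$ since $n_{j_1}<0.49n$. Using the resolvent identity $M = A^{-1} - A^{-1} P M$ and splitting $P = n_{j_1} I + E$ with $\Norm{E}=\wt{O}(\sqrt{n_{j_1}d}+d)$: the term $\xi_{j_1}^\intercal A^{-1}\xi_{j_2}$ pairs the almost-well-behaved $\xi_{j_1}$ with the fixed vector $A^{-1}\xi_{j_2}$ of norm $\wt{O}(\sqrt d/(n\sqrt{n_{j_2}}))$, giving the first claimed term; the $n_{j_1}I$ part of $P$ reproduces $\tfrac{n_{j_1}}{n}\xi_{j_1}^\intercal A^{-1}\xi_{j_2}$ up to a term carrying the factor $\Norm{\Sigma^{-1}-\gt{\Sigma}^{-1}}$, both dominated by the first claimed term since $n_{j_1}/n<1/2$; and the $E$ part, bounded by $\Norm{\xi_{j_1}}\,\Norm{A^{-1}}\,\Norm{E}\,\Norm{M}\,\Norm{\xi_{j_2}}$, yields the second claimed term $\wt{O}(d^{3/2}/(n^2\sqrt{n_{j_2}}))$ after simplifying $\Norm{E}=\wt{O}(\sqrt{n_{j_1}d})$ in the regime $n_{j_1}\gtrsim d$ (the complementary case $n_{j_1}\lesssim d$ being handled separately using $n_{j_1}>n^\varepsilon\sqrt d$ and $d\le n^{4/5}$). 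The case $M=\wt{\Sigma}^{-1}$ runs identically with $\wt{X}$ in place of $X$.

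For the second bullet ($X_i^\intercal M \xi_j$): if $i\notin B_j$ this is the same argument with $X_i$ ($\Norm{X_i}=\wt{O}(\sqrt d)$) in the role of $\xi_{j_1}$, giving $\wt{O}(\sqrt d/(n\sqrt{n_j}))$; if $i\in B_j$, write $\xi_j = \tfrac1{n_j}X_i + \tfrac1{n_j}\sum_{i'\in B_j\setminus\{i\}}X_{i'}$, treat the sum as before, and observe that the self-term $\tfrac1{n_j}X_i^\intercal M X_i = \wt{O}(d/(n n_j))$ (from the bound in Claim~\ref{clm:bounded_norms_ohare}) is exactly the $1_{i\in B_j}$ correction. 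For the third bullet I would expand $\wt{X}_{i_1}^\intercal M \wt{X}_{i_2}$ via $\wt{X}_i = X_i - \xi_{j(i)}$ into four pieces: the cross- and double-$\xi$ pieces are $\wt{O}(\sqrt d/n)$ (indeed much smaller) by the first two bullets, while for $X_{i_1}^\intercal M X_{i_2}$ I would repeat the Sherman--Morrison computation of Claim~\ref{clm:inner_products} with the present $M$, replacing $M$ by $\gt{\Sigma}^{-1}$ up to negligible error using the spectral bounds above, which gives $\wt{O}(\sqrt d/n)$; summing the four pieces (and absorbing an $\wt{O}(d/(n\min_j n_j))$ self-term when $j(i_1)=j(i_2)$) yields the claim.

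The main obstacle is the first bullet when a bucket is large: because $n_j$ may be a constant fraction of $n$, $\sum_{i\in B_j}X_iX_i^\intercal$ is full-rank with $\Norm{\sum_{i\in B_j}X_iX_i^\intercal}=\Theta(n_j)$, so neither plain Sherman--Morrison nor a crude norm bound on the removed block suffices; extracting the scalar part $n_j I$ exactly and controlling only the fluctuation $E$ spectrally is what makes the argument survive the weak hypotheses $n_j>n^\varepsilon\sqrt d$ and $d\le n^{4/5}$. Under the stronger assumptions $n_j>d$ or $n\ge d^2$ the case analysis collapses and the estimates become routine; the delicate part is exactly the interface between these two regimes.
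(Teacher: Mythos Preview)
There is a genuine gap in your handling of the first bullet. Your bound on the $E$-term,
$\Norm{\xi_{j_1}}\Norm{A^{-1}}\Norm{E}\Norm{M}\Norm{\xi_{j_2}}$, gives in the regime $n_{j_1}\lesssim d$ (where $\Norm{E}=\wt{O}(d)$)
\[
\wt{O}\Paren{\sqrt{\tfrac{d}{n_{j_1}}}\cdot\tfrac{1}{n}\cdot d\cdot\tfrac{1}{n}\cdot\sqrt{\tfrac{d}{n_{j_2}}}}
=\wt{O}\Paren{\frac{d^{2}}{n^{2}\sqrt{n_{j_1}n_{j_2}}}}\,,
\]
and this does \emph{not} sit under the claim: you would need either $d^{3/2}\le n$ (false once $d>n^{2/3}$, while the hypotheses allow $d$ up to $n^{4/5}$) or $n_{j_1}\ge d$ (precisely the case you are excluding). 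At the admissible corner $n_{j_1}\approx\sqrt{d}$, $d\approx n^{4/5}$ your estimate exceeds the claim by a factor of order $n^{1/5}$, and the constraints $n_{j_1}>n^{\varepsilon}\sqrt d$ and $d\le n^{4/5}$ you invoke \emph{delimit} the problematic regime rather than resolve it. The root cause is that $\xi_{j_1}$ and $E=\sum_{i\in B_{j_1}}X_iX_i^\intercal-n_{j_1}I$ are built from the same samples, so passing to $\Norm{E}$ throws away the cancellation between them. The paper handles exactly this term (Lemma~\ref{lem:warmup_ohare}, via Claims~\ref{clm:xi_A_C_B_xi}--\ref{clm:xi_Sig_C_S_xi}) by a recursive dyadic decomposition of the bucket: writing $\zeta=\sum_{i\in B_{j_1}}X_i$ and $C=\sum_{i\in B_{j_1}}X_iX_i^\intercal$, one splits $\zeta^\intercal S^{-1}Cw$ into ``off-diagonal'' pieces $\zeta_a^\intercal S^{-1}C_bw$ over disjoint sub-clusters (independence of $\zeta_a$ from $C_bw$ recovers a $\sqrt{\abs{\mcC_a}}$ gain via well-behavedness and Lemma~\ref{lem:Norm_C_w}) and ``diagonal'' residues that shrink to single-sample terms $X_i^\intercal S^{-1}X_i\,X_i^\intercal w=\wt{O}(d\Norm{w}/n)$. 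Your centering $P=n_{j_1}I+E$ is the right intuition but is one level too coarse.

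A second, independent issue: your claim that the case $M=\wt{\Sigma}^{-1}$ ``runs identically with $\wt X$ in place of $X$'' is not right. The reaveraged samples $\wt X_{i'}$ with $i'\in B_{j(i)}$ share the common shift $\xi_{j(i)}$, so after peeling off one sample from $\wt{\Sigma}$ the remainder is \emph{not} independent of that sample, and the Sherman--Morrison-plus-independence step breaks. Nor can you instead replace $M$ by $\gt{\Sigma}^{-1}$ (or by $\Sigma^{-1}$) via a spectral bound: $\Norm{M-\gt{\Sigma}^{-1}}=\wt{O}(\sqrt{nd}/n^{2})$ multiplied by $\Norm{X_{i_1}}\Norm{X_{i_2}}=\wt{O}(d)$ yields $\wt{O}(d^{3/2}/n^{3/2})$, which already exceeds $\sqrt d/n$ once $d>\sqrt n$. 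The paper instead transfers every $\wt{\Sigma}^{-1}$ statement back to the already-established $\Sigma^{-1}$ case through the rank-$m$ resolvent identity $\wt{\Sigma}^{-1}=\Sigma^{-1}+\sum_{j}\wt{\Sigma}^{-1}\xi_j\,n_j\,\xi_j^\intercal\Sigma^{-1}$ together with a short bootstrap on $\eta=\max_{j_1\neq j_2}\abs{\xi_{j_1}^\intercal\wt{\Sigma}^{-1}\xi_{j_2}}$ normalised by the target, showing $\eta\le\wt{O}(1)+o(\eta)$. Your plan for the second and third bullets (split off the self-term $\tfrac{1}{n_j}X_i^\intercal MX_i$; expand $\wt X_i=X_i-\xi_{j(i)}$) is fine and matches the paper once the first bullet and the $\wt{\Sigma}^{-1}$ transfer are in hand.
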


Claim~\ref{clm:inner_products} proves the third inequality of Claim~\ref{clm:inner_products_ohare} for the case where $M = \Sigma^{-1}$.
The following Lemma~\ref{lem:warmup_ohare} will prove the first inequality for this same case.

\begin{lemma}
    \label{lem:warmup_ohare}
     Let $j_1 \neq j_2 \in [m]$ be the indices of two distinct buckets, and let $\Sigma = X^\intercal X = \sum_{i \in [n]} X_i X_i^\intercal$ denote the unnormalized empirical covariance of the unaveraged samples.

    With very high probability
    \[
    \abs{\xi_{j_1}^\intercal \Sigma^{-1} \xi_{j_2}} = \wt{O}\Paren{\frac{\sqrt{d}}{\sqrt{n_{j_1} n_{j_2}} \times n} \times \Paren{1 + \frac{d \sqrt{n_{j_1}}}{n}}}\,.
    \]
   
\end{lemma}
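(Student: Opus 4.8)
The plan is to treat bucket $j_1$'s samples as ``fresh'' conditioned on everything else and peel them off one at a time. Decompose $\Sigma = A + W$, where $W := \sum_{i \in B_{j_1}} X_i X_i^\intercal$ and $A := \Sigma_{[n]\setminus B_{j_1}} = \sum_{i \notin B_{j_1}} X_i X_i^\intercal$ is independent of the samples in $B_{j_1}$. I first record the a priori bounds used throughout, each holding with very high probability. Since $n_{j_1} < 0.49n$, the matrix $A$ is a sum of at least $0.51 n$ iid draws from $\mcX$, so the matrix-Bernstein argument of Claim~\ref{clm:renormalization} gives $\Norm{A - (n-n_{j_1})I} = \wt O(\sqrt{nd} + d) = o(n)$, hence $A \succ 0$ and $\Norm{A^{-1}} = \Theta(1/n)$. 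Applying Claim~\ref{clm:renormalization} to the $n_{j_1}$ samples of $B_{j_1}$ gives $\Norm{W} = \wt O(n_{j_1} + d)$ and $\max_{i \in B_{j_1}} \Norm{X_i}^2 = \wt O(d)$, while Claim~\ref{clm:reaveraging} gives $\Norm{\xi_{j_1}} = \wt O(\sqrt{d/n_{j_1}})$, $\Norm{\xi_{j_2}} = \wt O(\sqrt{d/n_{j_2}})$, and Lemma~\ref{lem:closed_to_summation} ensures that $\sqrt{n_{j_1}}\,\xi_{j_1}$ is almost well-behaved with covariance $I$. From here on I condition on all samples outside $B_{j_1}$, so that $A$, $\xi_{j_2}$, and $b := A^{-1}\xi_{j_2}$ (with $\Norm{b} = \wt O(\tfrac{\sqrt d}{n\sqrt{n_{j_2}}})$) are fixed and only $\set{X_i}_{i \in B_{j_1}}$ is random.

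Factor $\Sigma^{-1} = A^{-1/2}(I+H)^{-1}A^{-1/2}$ with $H := A^{-1/2} W A^{-1/2} \succeq 0$, and write $Y_i := A^{-1/2} X_i$, so that $H = \sum_{i \in B_{j_1}} Y_i Y_i^\intercal$ and $A^{-1/2}\xi_{j_1} = \tfrac1{n_{j_1}}\sum_i Y_i$. Then
\[
\xi_{j_1}^\intercal \Sigma^{-1}\xi_{j_2} = \Paren{A^{-1/2}\xi_{j_1}}^\intercal (I+H)^{-1} A^{-1/2}\xi_{j_2} = \frac1{n_{j_1}} \sum_{i \in B_{j_1}} Y_i^\intercal (I+H)^{-1} A^{-1/2}\xi_{j_2}\,.
\]
For each $i$ put $H^{(-i)} := H - Y_i Y_i^\intercal$, which is independent of $X_i$; the Sherman--Morrison identity collapses the $i$-th summand to
\[
Y_i^\intercal (I+H)^{-1} A^{-1/2}\xi_{j_2} = \frac{Y_i^\intercal (I+H^{(-i)})^{-1} A^{-1/2}\xi_{j_2}}{1 + Y_i^\intercal (I+H^{(-i)})^{-1} Y_i} = \bigl(1+o(1)\bigr)\,\IP{X_i}{v_i}\,,
\]
where $v_i := A^{-1/2}(I+H^{(-i)})^{-1} A^{-1/2}\xi_{j_2}$; the denominator is $1+o(1)$ because $Y_i^\intercal(I+H^{(-i)})^{-1} Y_i \le \Norm{Y_i}^2 \le \Norm{A^{-1}}\Norm{X_i}^2 = \wt O(d/n) = o(1)$. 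Since $(I+H^{(-i)})^{-1} \preceq I$, we have $\Norm{v_i} \le \Norm{A^{-1}}\Norm{\xi_{j_2}} = \wt O(\tfrac{\sqrt d}{n\sqrt{n_{j_2}}})$, and crucially $v_i$ is independent of $X_i$.

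To extract the leading term I split $v_i = b + (v_i - b)$, where $b = A^{-1}\xi_{j_2}$ is the ``$H^{(-i)}=0$'' value and is fixed. The $b$-part contributes $\tfrac1{n_{j_1}}\sum_i \IP{X_i}{b} = \IP{\xi_{j_1}}{b}$, and since $\sqrt{n_{j_1}}\xi_{j_1}$ is almost well-behaved with covariance $I$ while $b$ is a fixed vector, $\abs{\IP{\xi_{j_1}}{b}} \le \tfrac{\polylog(n)}{\sqrt{n_{j_1}}}\Norm{b} = \wt O(\tfrac{\sqrt d}{n\sqrt{n_{j_1} n_{j_2}}})$ — the claimed leading term. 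The remaining piece $\tfrac{1+o(1)}{n_{j_1}}\sum_i \IP{X_i}{v_i-b}$ produces the correction. Here $v_i - b = -A^{-1/2}(I+H^{(-i)})^{-1} H^{(-i)} A^{-1/2}\xi_{j_2}$, and using $\Norm{(I+H^{(-i)})^{-1} H^{(-i)}} \le \min\set{1,\Norm{H}} = \wt O(\tfrac{n_{j_1}+d}{n})$ together with well-behavedness of each $X_i$ (independent of $v_i - b$) gives a per-sample bound $\abs{\IP{X_i}{v_i-b}} = \wt O(\tfrac{(n_{j_1}+d)\sqrt d}{n^2\sqrt{n_{j_2}}})$. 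A plain triangle inequality over the $n_{j_1}$ summands is too lossy; instead one exploits that the summands $\IP{X_i}{v_i-b}$ are conditionally mean-zero (as $\EE{X_i}=0$ and $v_i-b$ is independent of $X_i$) and only weakly dependent (the $\set{v_i}_i$ agree up to rank-one updates of operator norm $\wt O(d/n)$), and applies a matrix/vector concentration bound in the spirit of Lemma~\ref{lem:approx_mat_bern}, or equivalently estimates the second moment $\EE{\IP{X_1}{v_1-b}\IP{X_2}{v_2-b}}$ directly via the Sherman--Morrison expansion above; this recovers (almost) a factor $\sqrt{n_{j_1}}$ and leaves exactly the stated overhead $1 + \tfrac{d\sqrt{n_{j_1}}}{n}$.

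The main obstacle is precisely this correction estimate. The vector $\xi_{j_1}$, the matrix $W$ (hence $H$, hence every $(I+H^{(-i)})^{-1}$), and the direction $v_i$ onto which $X_i$ is projected all depend on bucket $j_1$'s samples, so one cannot treat $v_i$ as a fixed vector and harvest $\sqrt{n_{j_1}}$-cancellation for free; controlling the joint fluctuations of $\set{\IP{X_i}{v_i}}_{i \in B_{j_1}}$ is exactly where the section's novel matrix-concentration machinery (cf. the lemmas surrounding Claim~\ref{clm:well_behaved_dists_ohare}) is required. The non-symmetry of $\mcX$ also enters here: the third-moment vector $\EE{X\Norm{X}^2}$ can have norm $\Theta(d)$ and does not cancel in the average, which is what forces the $d^{3/2}$ scaling of the correction rather than $d$.
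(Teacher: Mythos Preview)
Your setup is essentially the same as the paper's (the paper writes $S$ and $C$ for your $A$ and $W$), and your extraction of the leading term $\xi_{j_1}^\intercal A^{-1}\xi_{j_2}$ is correct and matches the paper's Claim~\ref{clm:xi_S_xi}. The gap is in the correction term, where your argument is a hand-wave at exactly the hard step.

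You write that the summands $\IP{X_i}{v_i-b}$ are ``conditionally mean-zero'' and ``only weakly dependent,'' and propose to apply ``a matrix/vector concentration bound in the spirit of Lemma~\ref{lem:approx_mat_bern}'' or to ``estimate the second moment directly.'' Neither of these works as stated. Lemma~\ref{lem:approx_mat_bern} (and every Bernstein-type inequality in the paper) requires independent summands; your $v_i - b$ depends on \emph{every} $X_\ell$ with $\ell\in B_{j_1}\setminus\{i\}$, so the family $\{\IP{X_i}{v_i-b}\}_i$ is not independent, and it is not a martingale difference sequence in any natural ordering either (since $v_i$ depends on samples both ``before'' and ``after'' $i$). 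A second-moment estimate gives only a Chebyshev-type bound, not the superpolynomial probability the lemma demands; and even computing the cross-moments $\EE{\IP{X_i}{v_i-b}\IP{X_j}{v_j-b}}$ is itself nontrivial, because $X_i$ sits inside $v_j$ and vice versa through $H^{(-j)}$. Your closing paragraph in effect concedes this: you point to ``the section's novel matrix-concentration machinery'' as what is ``required,'' but the paper contains no such black box for sums of leave-one-out terms.

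What the paper actually does is structurally different. It expands $\Sigma^{-1} = S^{-1} - S^{-1}CS^{-1} - (\Sigma^{-1}-S^{-1})CS^{-1}$ and then attacks the bilinear form $\zeta^\intercal M C w$ (with $\zeta = n_{j_1}\xi_{j_1}$ and $M\in\{S^{-1},\,\Sigma^{-1}-S^{-1}\}$) by a \emph{recursive dyadic splitting} of the bucket: write $\zeta = \zeta_0+\zeta_1$ and $C = C_0+C_1$ over the two halves, separate the cross terms $\zeta_0^\intercal M C_1 w,\ \zeta_1^\intercal M C_0 w$ (where $\zeta$ and $C$ now use disjoint samples, so well-behavedness of $\zeta_a$ applies directly) from the diagonal terms $\zeta_b^\intercal M C_b w$, and recurse on the latter. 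At the bottom of the recursion one is left with single-sample terms $X_i^\intercal M X_i X_i^\intercal w$, which are bounded by a crude triangle inequality and contribute precisely the $d\sqrt{n_{j_1}}/n$ overhead. For $M = \Sigma^{-1}-S^{-1}$ a second layer of the same dyadic idea is needed to peel the $\zeta$-dependence out of $\Sigma^{-1}$ (Claim~\ref{clm:xi_Sig_C_S_xi}). This is the genuine content of Section~\ref{subsec:proof_xi_ip}; your leave-one-out Sherman--Morrison reduction, while clean for the leading term, does not avoid it.
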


The proof of Lemma~\ref{lem:warmup_ohare} is very long and technical, and we devote Section~\ref{subsec:proof_xi_ip} to it.
For now, let us continue with our proof of the rest of the inequalities in Claim~\ref{clm:inner_products_ohare} assuming Lemma~\ref{lem:warmup_ohare}.

\begin{proof}[Proof of Claim~\ref{clm:inner_products_ohare}]
We begin by bounding the inner product between the bucket averages through the covariance $\wt{\Sigma}^{-1}$.
Define
\[
\eta \defeq \max_{j_1, j_2} \set{ \frac{\abs {\xi_{j_1}^\intercal \wt{\Sigma}^{-1} \xi_{j_2}}}{\frac{\sqrt{d}}{\sqrt{n_{j_1} n_{j_2}} \times n} \times \Paren{1 + \frac{d \sqrt{n_{j_1}}}{n}}}}\,.
\]
Our first goal will be to show that with very high probability $\eta = \wt{O}(1)$.

Consider the following identity (where $A$ and $B$ are $d\times d$ matrices such that $A$ and $A-B$ are invertible):
\[
A = \Paren{A - B} + B\,.
\]

Multiplying this equation by $\Paren{A - B}^{-1}$ from the left and $A^{-1}$ yields the following identity that we will use in our analysis.
\begin{equation}
    \label{eq:matrix_indentity}
    \Paren{A - B}^{-1} = A^{-1} + \Paren{A - B}^{-1} B A^{-1}
\end{equation}

We apply equation~\eqref{eq:matrix_indentity} for $A = \Sigma, B = \Sigma - \wt{\Sigma} = \sum_{j \in [m]} n_j \xi_j \xi_j^\intercal$, giving us the identity
\[
\wt{\Sigma}^{-1} = \Sigma^{-1} + \sum_{j \in [m]} \wt{\Sigma}^{-1} \xi_j n_j \xi_j^\intercal \Sigma^{-1}\,.
\]

Therefore, from the triangle inequality,
\[
\eta \times \frac{\sqrt{d}}{\sqrt{n_{j_1} n_{j_2}} n} \times \Paren{1 + \frac{d \sqrt{n_{j_1}}}{n}} = \abs {\xi_{j_1}^\intercal \wt{\Sigma}^{-1} \xi_{j_2}} \leq \abs {\xi_{j_1}^\intercal {\Sigma}^{-1} \xi_{j_2}} + \sum_{j \in [m]} \abs{\xi_{j_1}^\intercal \wt{\Sigma}^{-1} \xi_j n_j \xi_j^\intercal \Sigma^{-1} \xi_{j_2}}\,.
\]

For all $j\neq j_1, j_2$, from Lemma~\ref{lem:warmup_ohare}, with very high probability
\[
\abs{\xi_{j_1}^\intercal \wt{\Sigma}^{-1} \xi_j n_j \xi_j^\intercal \Sigma^{-1} \xi_{j_2}} =  \otilde{\eta \frac{d}{n^2 \sqrt{n_{j_1} n_{j_2}}} \Paren{1 + \frac{d^2 n_j}{n^2}}}\,.
\]

Therefore, with very high probability
\[
\sum_{j \neq j_1, j_2} \abs{\xi_{j_1}^\intercal \wt{\Sigma}^{-1} \xi_j n_j \xi_j^\intercal \Sigma^{-1} \xi_{j_2}} = \otilde{\frac{m d}{n^2 \sqrt{n_{j_1} n_{j_2}}} + \frac{d^3}{n^3 \sqrt{n_{j_1} n_{j_2}}}} \times \eta = \otilde{\frac{\sqrt{d}}{\sqrt{n_{j_1} n_{j_2}} n} \eta}\,,
\]
where the last step utilized our assumptions that $m \leq \frac{n}{n_\textup{min}} \ll \frac{n}{\sqrt{d}}$ and that $n^4 \gg d^5$.

When $j = j_1$, we have
\begin{align*}
    \abs{\xi_{j_1}^\intercal \wt{\Sigma}^{-1} \xi_j n_j \xi_j^\intercal \Sigma^{-1} \xi_{j_2}} &= \abs{\xi_{j_1}^\intercal \wt{\Sigma}^{-1} \xi_{j_1} n_{j_1} \xi_{j_1}^\intercal \Sigma^{-1} \xi_{j_2}} = \otilde{\frac{d}{n \times n_{j_1}} \times n_{j_1} \times \frac{\sqrt{d}}{\sqrt{n_{j_1} n_{j_2}}}\Paren{1 + \frac{d \sqrt{n_{j_1}}}{n}}} =\\
    &= \otilde{\frac{\sqrt{d}}{\sqrt{n_{j_1} n_{j_2}}}\Paren{1 + \frac{d \sqrt{n_{j_1}}}{n}}}\,.
\end{align*}
Similarly, for $j = j_2$, we have
\begin{align*}
    \abs{\xi_{j_1}^\intercal \wt{\Sigma}^{-1} \xi_j n_j \xi_j^\intercal \Sigma^{-1} \xi_{j_2}} &= \abs{\xi_{j_1}^\intercal \wt{\Sigma}^{-1} \xi_{j_2} n_{j_2} \xi_{j_2}^\intercal \Sigma^{-1} \xi_{j_2}} = \otilde{\frac{d}{n \times n_{j_2}} \times n_{j_2} \times \frac{\sqrt{d} \eta }{\sqrt{n_{j_1} n_{j_2}}}\Paren{1 + \frac{d \sqrt{n_{j_1}}}{n}}} =\\
    &=o\Paren{\frac{\sqrt{d} \eta}{\sqrt{n_{j_1} n_{j_2}}}\Paren{1 + \frac{d \sqrt{n_{j_1}}}{n}}}\,.
\end{align*}

Therefore, with very high probability
\[
\eta  = \frac{\abs{\xi_{j_1}^\intercal \wt{\Sigma}^{-1} \xi_{j_2}}}{\frac{\sqrt{d}}{\sqrt{n_{j_1} n_{j_2}} n} \times \Paren{1 + \frac{d \sqrt{n_{j_1}}}{n}}} \leq \frac{\abs{\xi_{j_1}^\intercal {\Sigma}^{-1} \xi_{j_2}} + \sum_{j \in [m]} \abs{\xi_{j_1}^\intercal \wt{\Sigma}^{-1} \xi_j n_j \xi_j^\intercal \Sigma^{-1} \xi_{j_2}}}{\frac{\sqrt{d}}{\sqrt{n_{j_1} n_{j_2}} n} \times \Paren{1 + \frac{d \sqrt{n_{j_1}}}{n}}} = \otilde{1} + o(\eta)\,.
\]
Therefore, with very high probability $\eta = \otilde{1}$, proving the first portion of our claim.

Next, consider a term of the form $X_i^\intercal \wt{\Sigma}^{-1} \xi_j$.
To bound this term, we open $\wt{\Sigma}^{-1}$ again using equation~\eqref{eq:matrix_indentity}.
Indeed, we have
\[
\abs{X_i^\intercal \wt{\Sigma}^{-1} \xi_j} \leq \abs{X_i^\intercal {\Sigma}^{-1} \xi_j} + \sum_{j^\prime}\abs{X_i^\intercal \Sigma^{-1} \xi_{j^\prime} n_{j^\prime} \xi_{j^\prime}^\intercal \wt{\Sigma}^{-1} \xi_j}\,.
\]

Define 
\[
\zeta_{i, j} = \xi_j - \frac{1}{n_j} X_i 1_{i \in B_j} = \begin{cases}
    \xi_j - \frac{1}{n_j} X_i & i \in B_j \\
    \xi_j & i \notin B_j
\end{cases}
\]
$\zeta_{i, j}$ over $B_j$ bucket had the $X_i$ sample been replaced with the $0$ vector.
Note that with very high probability
\[
\abs{X_i^\intercal {\Sigma}^{-1} \Paren{\xi_j - \zeta_{i, j}}} \leq \frac{1_{i\in B_j}}{n_j}\Norm{X_i}^2 \Norm{{\Sigma}^{-1}} = \wt{O}\Paren{\frac{d 1_{i \in B_j}}{n n_j}}\,.
\]

To proceed, we apply the Sherman-Morrison identity with $v = X_i$ and $A = \Sigma_{[n] \setminus\set{i}} = \Sigma - v v^\intercal$, to show that
\[
\Sigma^{-1} = A^{-1} - \frac{A^{-1} v v^\intercal A^{-1}}{1 - v^\intercal A^{-1} v}\,.
\]

Moreover, because both $A$ and $\zeta_{i, j}$ are independent of the $i$th sample and $\mcX$ is well-behaved, with very high probability
\[
\abs{X_i^\intercal A^{-1} \zeta_{i, j}} = \wt{O}\Paren{\Norm{A^{-1} \zeta_{i, j}}} = \wt{O}\Paren{\frac{\sqrt{d}}{\sqrt{n_j} n}}
\]

Combining this with the Sherman-Morrison formula, we have
\[
\abs{X_i^\intercal \Sigma^{-1} \zeta_{i, j}} = \Paren{1 - \frac{X_i^\intercal A^{-1} X_i}{1 - X_i^\intercal A^{-1} X_i}}\abs{X_i^\intercal A^{-1} \zeta_{i, j}} = \wt{O}\Paren{\frac{\sqrt{d}}{\sqrt{n_j} n}}\,.
\]

Therefore, for all $j \in [m]$, with very high probability
\[
\abs{X_i^\intercal \Sigma^{-1} \xi_j} = \wt{O}\Paren{\frac{\sqrt{d}}{\sqrt{n_j} n} + \frac{1_{i \in B_j} d}{n_j n}}\,.
\]

Therefore, with very high probability
\begin{align*}
    &\sum_{j^\prime \in [m]}\abs{X_i^\intercal \Sigma^{-1} \xi_{j^\prime} n_{j^\prime} \xi_{j^\prime}^\intercal \wt{\Sigma}^{-1} \xi_j} \leq \\
    &\leq  \sum_{j^\prime \in [m]} \wt{O}\Paren{\Paren{\frac{\sqrt{d}}{n \sqrt{n_{j^\prime}}} + \frac{d 1_{i \in B_{j^\prime}}}{n n_{j^\prime}}} \times n_{j^\prime} \times \Paren{\frac{\sqrt{d}}{\sqrt{n_j n_{j^\prime}} n} \Paren{1 + \frac{d \sqrt{n_{j^\prime}}}{n}} + \frac{1_{j = j^\prime} d}{\sqrt{n_j n_{j^\prime}} n}}} =\\
    &=  \sum_{j^\prime \in [m]} \wt{O}\Paren{\frac{d}{n^2 \sqrt{n_j}} + \frac{d^2 \sqrt{n_{j^\prime}}}{n^3 \sqrt{n_j}} + \frac{d^{3/2} 1_{j = j^\prime}}{\sqrt{n_j} n^2} + \frac{d^{3/2} 1_{i \in B_{j^\prime}}}{\sqrt{n_j n_{j^\prime}} n^2} \times \Paren{1 + \frac{d \sqrt{n_{j^\prime}}}{n}} + \frac{d^2 1_{i \in B_j} 1_{j = j^\prime}}{n_j n^2}} =\\
    &= \otilde{\frac{\sqrt{d}}{\sqrt{n_j} n} + \frac{1_{i \in B_j} d}{n_j n}} + \otilde{\frac{md}{n^2 \sqrt{n_j}} + \frac{\sqrt{mn} d^2}{n^3 \sqrt{n_j}} + \frac{d^{3/2}}{\sqrt{n_{j^\prime}} n^2} + \frac{d 1_{i \in B_j}}{n n_j}} = \wt{O}\Paren{\frac{\sqrt{d}}{\sqrt{n_j} n} + \frac{1_{i \in B_j} d}{n n_j}}\,,
\end{align*}
where the last step utilizes our assumptions that $m = \wt{O}\Paren{\frac{n}{\sqrt{d}}}$ and that $d^5 = \otilde{n^4}$.

Finally, let $i_1 \neq i_2$ be the indices of two samples.
From Claim~\ref{clm:inner_products}, we have that with very high probability $\abs{X_{i_1}^\intercal \Sigma^{-1} X_{i_2}} = \wt{O}\Paren{\frac{\sqrt{d}}{n}}$.
Opening $\wt{\Sigma}^{-1}$ again using equation~\eqref{eq:matrix_indentity}, we see that with very high probability
\begin{align*}
    \abs{X_{i_1}^\intercal \wt{\Sigma}^{-1} X_{i_2}} &\leq \abs{X_{i_1}^\intercal \Sigma^{-1} X_{i_2}} + \sum_{j \in [m]} \abs{X_{i_1}^\intercal \Sigma^{-1} \xi_j n_j \xi_j^\intercal \wt{\Sigma}^{-1} X_{i_2}} = \\
    &= \wt{O}\Paren{\frac{\sqrt{d}}{n}} + \sum_{j \in [m]} \wt{O}\Paren{\Paren{\frac{\sqrt{d}}{\sqrt{n_j} n} + \frac{1_{i_1 \in B_j} d}{n_j n}} \times \Paren{\frac{\sqrt{d}}{\sqrt{n_j} n} + \frac{1_{i_2 \in B_j} d}{n_j n}} \times n_j} = \\
    &= \wt{O}\Paren{\frac{\sqrt{d}}{n}} + \wt{O}\Paren{\frac{md}{n^2} + \frac{d^2}{n^2 n_\textup{min}}} = \wt{O}\Paren{\frac{\sqrt{d}}{n}}\,.
\end{align*}
where $n_\textup{min} \defeq \min_{j \in [m]} \set{n_j} = \wt{\Omega}\Paren{\sqrt{d}}$.

\end{proof}

\subsubsection{Projection on the \texorpdfstring{$e$}{e} Axis}

For the next property of well-behaved regressions, we will want to show that with very high probability $e^\intercal \Sigma^{-1} X_i$ is bounded for all $i$.
Note that from the exponential decay assumption due to $\mcX$ being well-behaved would suffice to give a good bound on $e^\intercal \gt{\Sigma}^{-1} X_i$ (where $\gt{\Sigma} = n I$), but as before, the challenge will be to show that $\Sigma^{-1}$ doesn't rotate $X_i$ onto $e$.

\begin{claim}
    \label{clm:e_Sigma_Xi_ohare}
    With very high probability

    \[
    \forall i \in [n] \;\;\;\; \IP{\wt{\Sigma}^{-1} \wt{X}_i}{e} = \frac{e^\intercal X_i}{n} \pm o\Paren{\frac{1}{n}}\,.
    \]
    
    In particular, with very high probability
    \[
    \max_{i \in [n]} \abs{\IP{\wt{\Sigma}^{-1} \wt{X}_i}{e}} = \wt{O}\Paren{\frac{1}{n}} = \sqrt{\wt{O}\Paren{\frac{1}{n}} \times e^\intercal \wt{\Sigma}^{-1} e}
    \]
\end{claim}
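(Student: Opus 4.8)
The plan is to mirror the proof of Claim~\ref{clm:e_Sigma_Xi}, while handling the two new complications: the target vector is now $\wt{X}_i = X_i - \xi_{j(i)}$ rather than a single well-behaved sample, and the inverse covariance is the reaveraged $\wt{\Sigma}^{-1}$ rather than $\Sigma^{-1}$. First I would split $\IP{\wt{\Sigma}^{-1}\wt{X}_i}{e} = e^\intercal\wt{\Sigma}^{-1}X_i - e^\intercal\wt{\Sigma}^{-1}\xi_{j(i)}$, and then, exactly as in the proof of Claim~\ref{clm:inner_products_ohare}, apply the matrix identity~\eqref{eq:matrix_indentity} with $A = \Sigma$ and $B = \Sigma - \wt{\Sigma} = \sum_{j} n_j \xi_j \xi_j^\intercal$ to write $\wt{\Sigma}^{-1} = \Sigma^{-1} + \sum_j \wt{\Sigma}^{-1}\xi_j\, n_j\, \xi_j^\intercal\Sigma^{-1}$. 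Substituting this into $e^\intercal\wt{\Sigma}^{-1}X_i$ produces the ``clean'' term $e^\intercal\Sigma^{-1}X_i$, which Claim~\ref{clm:e_Sigma_Xi} pins down as $\tfrac{e^\intercal X_i}{n} \pm o(\tfrac1n)$ (the claim applies verbatim, since the continuous covariates are iid well-behaved and $\Sigma = X^\intercal X$ is exactly their empirical covariance), plus a correction $\sum_j e^\intercal\wt{\Sigma}^{-1}\xi_j\, n_j\, \xi_j^\intercal\Sigma^{-1}X_i$.

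The crux is a sharp auxiliary estimate $\abs{e^\intercal\wt{\Sigma}^{-1}\xi_j} = \wt{O}\Paren{\tfrac{1}{n\sqrt{n_j}} + \tfrac{d + \sqrt{nd}}{n^2}}$ for every $j$, which simultaneously controls the $e^\intercal\wt{\Sigma}^{-1}\xi_{j(i)}$ term (it is $o(1/n)$, since $n_{j(i)}\ge n^\varepsilon$ and $d\le n^{4/5}/\nu$) and makes the correction sum converge. To obtain it I would note that $e^\intercal\Sigma^{-1}\xi_j = \tfrac1{n_j}\sum_{i\in B_j}e^\intercal\Sigma^{-1}X_i$, so averaging the per-sample estimate of Claim~\ref{clm:e_Sigma_Xi} (whose error is $\wt{O}(\tfrac{d+\sqrt{nd}}{n^2})$) over the bucket gives $e^\intercal\Sigma^{-1}\xi_j = \tfrac{e^\intercal\xi_j}{n} \pm \wt{O}(\tfrac{d+\sqrt{nd}}{n^2})$; since $\sqrt{n_j}\,\xi_j$ is well-behaved with identity covariance (Lemma~\ref{lem:closed_to_summation}), $\abs{e^\intercal\xi_j} = \wt{O}(1/\sqrt{n_j})$ with very high probability, establishing the estimate for $\Sigma^{-1}$. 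Passing from $\Sigma^{-1}$ to $\wt{\Sigma}^{-1}$ is then a bootstrapping argument structurally identical to the one in the proof of Claim~\ref{clm:inner_products_ohare}: set $\eta = \max_j \abs{e^\intercal\wt{\Sigma}^{-1}\xi_j}/\tau_j$ with $\tau_j$ the target bound, expand via~\eqref{eq:matrix_indentity}, bound each off-diagonal $\abs{\xi_{j'}^\intercal\Sigma^{-1}\xi_j}$ with Lemma~\ref{lem:warmup_ohare} and the diagonal one by $\xi_j^\intercal\Sigma^{-1}\xi_j \le \Norm{\Sigma^{-1}}\Norm{\xi_j}^2 = \wt{O}(d/(n n_j))$, and use $m = \wt{O}(n/\sqrt d)$ and $d^5 = \wt{O}(n^4)$ to see the induced sum is $o(\eta)\,\tau_j$, forcing $\eta = \wt{O}(1)$.

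With the auxiliary estimate in hand I would finish the main computation: the correction sum is $\sum_j \abs{e^\intercal\wt{\Sigma}^{-1}\xi_j}\, n_j\, \abs{\xi_j^\intercal\Sigma^{-1}X_i} \le \sum_j \wt{O}(\tfrac{1}{n\sqrt{n_j}} + \delta)\, n_j\, \wt{O}(\tfrac{\sqrt d}{n\sqrt{n_j}} + \tfrac{1_{i\in B_j}\,d}{n n_j})$ with $\delta = \tfrac{d+\sqrt{nd}}{n^2}$ and $\abs{\xi_j^\intercal\Sigma^{-1}X_i}$ taken from Claim~\ref{clm:inner_products_ohare}; expanding, applying Cauchy--Schwarz $\sum_j\sqrt{n_j}\le\sqrt{mn}$, and invoking $m=\wt{O}(n/\sqrt d)$ and $d\le n^{4/5}/\nu$ shows each resulting piece is $o(1/n)$ (this is where the $\polylog$ slack $\nu$ in the bucket-size hypothesis is spent). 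Hence $\IP{\wt{\Sigma}^{-1}\wt{X}_i}{e} = \tfrac{e^\intercal X_i}{n}\pm o(\tfrac1n)$. For the ``in particular'' statement, $\abs{e^\intercal X_i} = \wt{O}(1)$ for all $i$ with very high probability by the tail bound on $\mcX$ and a union bound, so $\max_i\abs{\IP{\wt{\Sigma}^{-1}\wt{X}_i}{e}} = \wt{O}(1/n)$, while $e^\intercal\wt{\Sigma}^{-1}e = (1\pm o(1))/n$ follows from $\Norm{\wt{\Sigma} - nI} = o(n)$ (Claim~\ref{clm:renormalization_ohare}), giving $\sqrt{\wt{O}(1/n)\cdot e^\intercal\wt{\Sigma}^{-1}e} = \wt{O}(1/n)$ as claimed.

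The main obstacle I expect is the auxiliary bound on $\abs{e^\intercal\wt{\Sigma}^{-1}\xi_j}$: the naive estimate $\abs{e^\intercal\wt{\Sigma}^{-1}\xi_j}\le\Norm{\wt{\Sigma}^{-1}e}\,\Norm{\xi_j} = \wt{O}(\sqrt d/(n\sqrt{n_j}))$ loses a full factor of $\sqrt d$ — enough to break the claim — precisely because $\wt{\Sigma}$ depends on the bucket-$j$ samples, so the largest eigendirections of $\wt{\Sigma} - nI$ may align with $e$ and $\xi_j$. Eliminating this $\sqrt d$ needs the two-step decoupling described above (average Claim~\ref{clm:e_Sigma_Xi} over the bucket to remove the dependence between $\xi_j$ and $\Sigma$, then bootstrap through~\eqref{eq:matrix_indentity} to move from $\Sigma$ to $\wt{\Sigma}$), and making the bootstrap sum come out as $o(\eta)\,\tau_j$ under the weak hypotheses $n_j > \nu\sqrt d$ and $d \le n^{4/5}/\nu$ is the delicate book-keeping.
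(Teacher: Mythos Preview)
Your overall architecture matches the paper's proof: split $\wt{X}_i = X_i - \xi_{j(i)}$, expand $\wt{\Sigma}^{-1}$ via~\eqref{eq:matrix_indentity}, reduce $e^\intercal\wt{\Sigma}^{-1}X_i$ to $e^\intercal\Sigma^{-1}X_i$ plus a correction sum, and control both $e^\intercal\wt{\Sigma}^{-1}\xi_{j(i)}$ and the correction sum via an auxiliary bound on $\abs{e^\intercal\wt{\Sigma}^{-1}\xi_j}$. The bootstrap from $\Sigma^{-1}$ to $\wt{\Sigma}^{-1}$ is also the right idea.

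The gap is in your shortcut for $\abs{e^\intercal\Sigma^{-1}\xi_j}$. Averaging the per-sample estimate of Claim~\ref{clm:e_Sigma_Xi} over the bucket gives you $\wt{O}\Paren{\tfrac{1}{n\sqrt{n_j}} + \tfrac{d+\sqrt{nd}}{n^2}}$, whereas the paper invokes Lemma~\ref{lem:warmup_ohare2} (a specialization of the delicate Lemma~\ref{lem:warmup_ohare3}) to get $\wt{O}\Paren{\tfrac{1}{n\sqrt{n_j}} + \tfrac{d}{n^2}}$. Your extra $\tfrac{\sqrt{nd}}{n^2} = \tfrac{\sqrt d}{n^{3/2}}$ term is precisely the $\sqrt d$ you warned about in your last paragraph, and it is not harmless: when propagated through the correction sum, the cross-piece
\[
\sum_j \delta\, n_j\, \frac{\sqrt d}{n\sqrt{n_j}} \;=\; \frac{\delta\sqrt d}{n}\sum_j\sqrt{n_j} \;\le\; \delta\sqrt{\tfrac{md}{n}} \;=\; \wt{O}\Paren{\frac{\sqrt d}{n^{3/2}}\cdot d^{1/4}} \;=\; \wt{O}\Paren{\frac{d^{3/4}}{n^{3/2}}}
\]
is $o(1/n)$ only when $d = o(n^{2/3})$. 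Since the theorem allows $d$ up to $n^{4/5}/\nu$ and arbitrarily small $\varepsilon$, your argument breaks in the range $n^{2/3} \lesssim d \lesssim n^{4/5}$; neither the $m \le n/(\nu\sqrt d)$ bound nor $m \le n^{1-\varepsilon}$ rescues it. The averaging shortcut cannot do better because the errors $\text{err}_i$ in Claim~\ref{clm:e_Sigma_Xi} all share the same $\Sigma$-dependence and do not cancel.

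This is exactly why the paper routes through Lemma~\ref{lem:warmup_ohare3}: its recursive divide-and-conquer on the bucket decouples $\xi_j$ from the part of $\Sigma$ it contributes to, and is what buys the sharper $\tfrac{d}{n^2}$ in place of $\tfrac{\sqrt{nd}}{n^2}$. With that sharper input, the analogous cross-piece is $\wt{O}(d^{5/4}/n^2)$, which \emph{is} $o(1/n)$ under $d \le n^{4/5}/\nu$. So you should replace your averaging step by a direct appeal to Lemma~\ref{lem:warmup_ohare2}; everything else in your plan then goes through as written.
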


Our proof of Claim~\ref{clm:e_Sigma_Xi_ohare}, will make use of the following Lemma~\ref{lem:warmup_ohare2} that will also be proved in Section~\ref{subsec:proof_xi_ip}.
\begin{lemma}
    \label{lem:warmup_ohare2}
    For any $j \in [m]$, with very high probability,
    \[
    \abs{e^\intercal \Sigma^{-1} \xi_j} = \wt{O}\Paren{\frac{1}{n \sqrt{n_j}} \times \Paren{1 + \frac{d \sqrt{n_j}}{n}}}\,.
    \]
\end{lemma}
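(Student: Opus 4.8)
The obstacle, exactly as with Lemma~\ref{lem:warmup_ohare}, is that $\xi_j$ and $\Sigma = X^\intercal X$ both involve the samples of bucket $j$: the naive split $\abs{e^\intercal \Sigma^{-1}\xi_j} \le \Norm{\Sigma^{-1}}\Norm{\xi_j}$ gives only $\wt{O}\Paren{\tfrac{\sqrt d}{n^{3/2}\sqrt{n_j}}}$, which exceeds the claimed bound once $d \gg \sqrt n$. The plan is to decouple by leaving bucket $j$ out of the covariance: write $\Sigma = C + M_j$ with $C = \Sigma - \sum_{i \in B_j} X_i X_i^\intercal$ the unnormalized empirical covariance of the $n - n_j$ out-of-bucket samples and $M_j = \sum_{i \in B_j} X_i X_i^\intercal$. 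Since $n_j < 0.49 n$, applying Claim~\ref{clm:renormalization} to those $n - n_j = \Omega(n)$ samples gives $\Norm{C - (n - n_j) I} = o(n)$, hence $\Norm{C^{-1}} = \wt{O}(1/n)$ and $e^\intercal C^{-1} e = (1 \pm o(1))/(n - n_j)$, and — crucially — $C$ is independent of $\set{X_i}_{i \in B_j}$ and therefore of $\xi_j$.

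For the clean term, conditioning on $C$ makes $w = C^{-1} e$ a fixed vector with $\Norm{w}^2 = e^\intercal C^{-2} e \le \Norm{C^{-1}}\, e^\intercal C^{-1} e = \wt{O}(1/n^2)$, and $e^\intercal C^{-1}\xi_j = \tfrac{1}{n_j}\sum_{i \in B_j} w^\intercal X_i$ is an average of $n_j$ conditionally i.i.d.\ mean-zero scalars, each with second moment $w^\intercal I w = \Norm{w}^2$ and well-behaved tails at scale $\Norm{w}$ (project $\mcX$ onto $w/\Norm{w}$). Scalar concentration — i.e.\ Lemma~\ref{lem:approx_mat_bern} with $d = 1$ — then yields $\abs{e^\intercal C^{-1}\xi_j} = \wt{O}\Paren{\Norm{w}/\sqrt{n_j}} = \wt{O}\Paren{1/(n\sqrt{n_j})}$ with very high probability, which is the first term of the claim.

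For the correction term, the identity $\Sigma^{-1} = C^{-1} - \Sigma^{-1} M_j C^{-1}$ (a rearrangement of equation~\eqref{eq:matrix_indentity}) leaves $-e^\intercal \Sigma^{-1} M_j C^{-1}\xi_j = -\sum_{i \in B_j} (e^\intercal \Sigma^{-1} X_i)(X_i^\intercal C^{-1}\xi_j)$. I would (i) apply Sherman--Morrison to $\Sigma = A_i + X_i X_i^\intercal$ to write $e^\intercal \Sigma^{-1} X_i = e^\intercal A_i^{-1} X_i/(1 + X_i^\intercal A_i^{-1} X_i)$ with $X_i^\intercal A_i^{-1} X_i = \wt{O}(d/n)$, so that each factor $e^\intercal \Sigma^{-1} X_i$ equals $e^\intercal A_i^{-1} X_i = \wt{O}(1/n)$ up to a factor $1 + \wt{O}(d/n)$ (here $A_i$ is independent of $X_i$, so well-behavedness applies, cf.\ Claim~\ref{clm:e_Sigma_Xi}); and (ii) expand $\xi_j = \tfrac1{n_j}\sum_{i' \in B_j} X_{i'}$ and split the resulting sum over $B_j \times B_j$ into its diagonal ($i = i'$) and off-diagonal ($i \neq i'$) parts. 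The diagonal part $\tfrac1{n_j}\sum_i (e^\intercal \Sigma^{-1} X_i)(X_i^\intercal C^{-1} X_i)$ is bounded crudely by $\wt{O}(1/n)\cdot \wt{O}(d/n) = \wt{O}(d/n^2)$, inside the target.

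The off-diagonal double sum $\tfrac1{n_j}\sum_{i \neq i'} (e^\intercal \Sigma^{-1} X_i)(X_i^\intercal C^{-1} X_{i'})$ carries all the difficulty: crude operator-norm bounds lose a factor of $\sqrt{n_j d}$. I would iterate the decoupling — further peel $X_i$ (and $X_{i'}$ inside the inner sum) out of $\Sigma^{-1}$ with Sherman--Morrison — so that, after conditioning first on $C$ and then on $X_i$, the inner $i'$-sum is a sum of $n_j$ conditionally i.i.d.\ terms whose conditional mean collapses (using $\EE{X_{i'}X_{i'}^\intercal} = I$) to a scalar multiple of the already-controlled clean quantity $e^\intercal C^{-1}\xi_j$, up to pieces that are negligible thanks to $m \le n/n_\textup{min} \ll n/\sqrt d$ and $n^4 \gg d^5$, while its fluctuation is bounded at scale $\wt{O}(\sqrt{n_j d}/n)$ by approximate Bernstein; the same bias/fluctuation split is then applied to the outer $i$-sum. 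Bookkeeping all these contributions so that each lands inside $\wt{O}\Paren{\tfrac{1}{n\sqrt{n_j}} + \tfrac{d}{n^2}}$ is the technical heart of the argument, and it is a strictly easier instance of the estimate behind Lemma~\ref{lem:warmup_ohare} — since here the outer vector $e$ is deterministic rather than a second random bucket average — which is why we present it as a warm-up.
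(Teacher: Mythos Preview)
Your decomposition $\Sigma = C + M_j$ (out-of-bucket plus in-bucket; the paper writes $\Sigma = S + C$) and your treatment of the leading term $e^\intercal C^{-1}\xi_j$ and of the diagonal correction match the paper. The paper does not, however, prove Lemma~\ref{lem:warmup_ohare2} as a separate warm-up: it states one general estimate, Lemma~\ref{lem:warmup_ohare3}, namely $\abs{\xi^\intercal \Sigma^{-1} v} = \wt{O}\Paren{\tfrac{\Norm{v}}{n\sqrt k}\Paren{1 + \tfrac{d\sqrt k}{n}}}$ whenever $v$ is independent of the bucket samples, and specializes it to $v = e$ and to $v = \xi_{j_2}$. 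Your instinct that the two lemmas share one argument is therefore right, but the argument is literally identical for both --- determinism of $e$ buys no simplification.

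Where your sketch has a gap is the off-diagonal step. You propose to peel $X_i$ and $X_{i'}$ out of $\Sigma^{-1}$ by Sherman--Morrison and then treat the inner $i'$-sum as conditionally i.i.d.\ given $C$ and $X_i$. But after removing $X_i$ and $X_{i'}$, the residual $\Sigma - X_iX_i^\intercal - X_{i'}X_{i'}^\intercal$ still contains the remaining $n_j-2$ bucket samples, so the factor $e^\intercal(\cdot)^{-1}X_i$ still depends on every other $X_{i''}$ and the summands are \emph{not} conditionally independent. Two rank-one updates cannot decouple $n_j$ correlated samples; iterating sample-by-sample would require controlling $n_j$ nested correction terms, which is no easier than the original problem. (Your ``conditional mean collapses to a multiple of $e^\intercal C^{-1}\xi_j$'' is also suspect: $\xi_j$ itself is a function of all the $X_{i'}$ you are averaging over.)

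The paper's device for this step is a \emph{dyadic recursive split} of the bucket (Claims~\ref{clm:xi_A_C_B_xi} and~\ref{clm:xi_Sig_C_S_xi}). Writing $\zeta = k\xi$ and the in-bucket covariance each as a sum of two halves by index parity, $\zeta = \zeta_0 + \zeta_1$ and $C = C_0 + C_1$, the cross terms $\zeta_0^\intercal S^{-1} C_1 w$ have $\zeta_0$ and $C_1$ built from disjoint sample sets and are bounded directly via well-behavedness and Lemma~\ref{lem:Norm_C_w}; the same-half terms $\zeta_b^\intercal S^{-1} C_b w$ are split again. After $O(\log n_j)$ levels only single-sample diagonals $X_i^\intercal S^{-1} X_i X_i^\intercal w$ remain, handled exactly as you handle yours. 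A second layer of the same recursion (Claim~\ref{clm:xi_Sig_C_S_xi}) then dispatches the residual dependence of $\Sigma^{-1}$ itself on the bucket. This recursive halving --- rather than sample-by-sample Sherman--Morrison --- is the missing idea.
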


\begin{proof}[Proof of Claim~\ref{clm:e_Sigma_Xi_ohare}]

As in the proof of Claim~\ref{clm:inner_products_ohare}, we will use the matrix identity in equation~\eqref{eq:matrix_indentity} to show that
\[
\wt{\Sigma}^{-1} = \Sigma^{-1} + \Sigma^{-1} C \wt{\Sigma}^{-1} \,,
\]
where $C = \Sigma - \wt{\Sigma} = \sum_{j \in [m]} n_j \xi_j \xi_j^\intercal$.

Define
\[
\eta \defeq \max_{j \in [m]} \set{\abs{e^\intercal \wt{\Sigma}^{-1} \xi_j}}
\]

Using Lemma~\ref{lem:warmup_ohare2}, we see that with very high probability
\[
\eta = \abs{e^\intercal \wt{\Sigma}^{-1} \xi_j} \leq \abs{e^\intercal {\Sigma}^{-1} \xi_j} + \sum_{j^\prime} \abs{e^\intercal \Sigma^{-1} \xi_{j^\prime} n_{j^\prime} \xi_{j^\prime}^\intercal \wt{\Sigma}^{-1} \xi_j} = \wt{O}\Paren{\frac{1}{n} \times \Paren{\frac{m\sqrt{d} + d}{n \sqrt{n_j}}}} = \wt{O}\Paren{\frac{1}{n \sqrt{n_j}}}\,.
\]

Opening $\wt{\Sigma}^{-1}$ again, we have
\[
\abs{e^\intercal \wt{\Sigma}^{-1} X_i - e^\intercal \Sigma^{-1} X_i} = \sum_{j \in [m]} \abs{e^\intercal \wt{\Sigma}^{-1} \xi_j n_j \xi_j^\intercal \Sigma^{-1} X_i}
\]

From Claim~\ref{clm:e_Sigma_Xi}, we know that with very high probability
\[
\abs{e^\intercal \Sigma^{-1} X_i - \frac{e^\intercal X_i}{n}} = o\Paren{\frac{1}{n}}\,.
\]

Therefore, from Claim~\ref{clm:inner_products_ohare}, with very high probability
\begin{align*}
    \abs{e^\intercal \wt{\Sigma}^{-1} X_i - e^\intercal \Sigma^{-1} X_i} &\leq \sum_{j \in [m]} \abs{e^\intercal \wt{\Sigma}^{-1} \xi_j n_j \xi_j^\intercal \Sigma^{-1} X_i} = \\
    &= \sum_{j \in [m]} \wt{O}\Paren{\frac{1}{n \sqrt{n_j}} \times n_j \times \Paren{\frac{\sqrt{d}}{n \sqrt{n_j}} + \frac{d 1_{i \in B_j}}{n {n_j}}}} = \\
    &= \wt{O}\Paren{\frac{\sqrt{d}m}{n^2} + \frac{d}{n^2 \sqrt{n_\textup{min}}}} = o\Paren{\frac{1}{n}}\,.
\end{align*}

Altogether, we have that with very high probability
\[
\abs{e^\intercal \wt{\Sigma}^{-1} \wt{X}_i - \frac{e^\intercal X_i}{n}} \leq \abs{e^\intercal \wt{\Sigma}^{-1} \xi_{j(i)}} + \abs{e^\intercal \wt{\Sigma}^{-1} X_i - e^\intercal \Sigma^{-1} X_i} + \abs{e^\intercal \Sigma^{-1} X_i - \frac{e^\intercal X_i}{n}} = o\Paren{\frac{1}{n}}\,.
\]

\end{proof}

\subsubsection{Bounded Residuals}

Recall that we assumed that the samples of our regression were drawn from a ground truth linear model plus an iid normally distributed error.
Therefore, in order to bound the empirical residuals, it suffices to show that they are close to the ground truth residuals with very high probability.

In particular, in the setting of Theorem~\ref{thm:tightness_ohare}, we assumed that
\[
Y_i \sim \underbrace{\mu_{j(i)}}_{\textup{categorical contribution}} + \underbrace{\IP{X_{i}}{\betagt}}_{\textup{continuous contribution}} + \underbrace{\mcN \Paren{0, 1}}_{\textup{error}}
\]

Define the ground truth residuals to be
\[
\gt{R} = \Paren{Y_i - \mu_{j(i)} - \IP{X_{i}}{\betagt}}_{i \in [n]}\,,
\]
and the empirical residuals to be
\[
R = \wt{Y} - \wt{X} \beta = \wt{Y} - \wt{X} \wt{\Sigma}^{-1} \wt{X}^\intercal \wt{Y}\,.
\]

\begin{claim}
    \label{clm:bounded_residuals_ohare}
    With very high probability,
    \[
    \Norm{R - \gt{R}}_\infty = o(1)\,.
    \]
\end{claim}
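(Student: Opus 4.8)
The plan is to follow the same strategy as the proof of Claim~\ref{clm:bounded_resids} in the \ACRE analysis, adapted to the reaveraged regression, with one genuine new twist coming from the fact that the reaveraged noise is no longer i.i.d. First I would rewrite the reaveraged labels in terms of the ground-truth model: writing $\zeta_i = Y_i - \mu_{j(i)} - \IP{X_i}{\betagt}$ for the ground-truth errors (so $\gt{R}_i = \zeta_i$ and the $\zeta_i$ are i.i.d.\ $\mcN(0,1)$), and $\bar\zeta_j = \Expect{i \in B_j}{\zeta_i}$ for the within-bucket error averages, a one-line computation using $\wt{X}_i = X_i - \xi_{j(i)}$ gives $\wt{Y}_i = \IP{\wt{X}_i}{\betagt} + \tilde\zeta_i$ with $\tilde\zeta_i = \zeta_i - \bar\zeta_{j(i)}$; equivalently $\tilde\zeta = P\zeta$, where $P$ is the orthogonal projection onto the complement of the span of the bucket columns. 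Since $\wt{X}^\intercal \wt{X} = \wt{\Sigma}$, the hat-projection annihilates the $\IP{\wt{X}_i}{\betagt}$ part of $\wt Y$, so $R = \wt Y - \wt X \wt\Sigma^{-1}\wt X^\intercal \wt Y = \tilde\zeta - \wt{X}\wt{\Sigma}^{-1}\wt{X}^\intercal\tilde\zeta$, and therefore
\[
R_i - \gt{R}_i \;=\; -\bar\zeta_{j(i)} \;-\; \wt{X}_i^\intercal \wt{\Sigma}^{-1}\wt{X}^\intercal \tilde\zeta \,.
\]

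Next I would bound the two terms on the right separately for a fixed $i$, then union bound over $i \in [n]$. The first term is a single within-bucket Gaussian average, $\bar\zeta_{j(i)} \sim \mcN(0, 1/n_{j(i)})$ (this is exactly the quantity bounded in Claim~\ref{clm:reaveraging}), so with very high probability $\abs{\bar\zeta_{j(i)}} = \otilde{1/\sqrt{n_{j(i)}}} = \otilde{n^{-\varepsilon/2}} = o(1)$ using $n_{j(i)} > n^{\varepsilon}$. For the second term, the crucial point is that the covariates $X$ (hence $\wt{X}$, $\wt{\Sigma}$, the projection $P$, and the vector $w \in \R^n$ defined by $w_j = \wt{X}_i^\intercal\wt{\Sigma}^{-1}\wt{X}_j$) are all independent of the errors $\zeta$. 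Conditioning on $X$, we have $\wt{X}_i^\intercal\wt{\Sigma}^{-1}\wt{X}^\intercal\tilde\zeta = \IP{w}{\tilde\zeta} = \IP{Pw}{\zeta}$, a centered Gaussian whose variance is $\Norm{Pw}^2 \le \Norm{w}^2 = \sum_{j \in [n]} \Paren{\wt{X}_i^\intercal\wt{\Sigma}^{-1}\wt{X}_j}^2 = \wt{X}_i^\intercal\wt{\Sigma}^{-1}\wt{X}_i$, where the last equality uses $\sum_{j}\wt{X}_j\wt{X}_j^\intercal = \wt{\Sigma}$. By Claim~\ref{clm:bounded_norms_ohare} this variance is $\otilde{d/n}$ with very high probability over $X$, so on that event the conditional probability (over $\zeta$) that $\abs{\IP{Pw}{\zeta}}$ exceeds $\otilde{\sqrt{d/n}}$ is superpolynomially small; and $\otilde{\sqrt{d/n}} = o(1)$ since $d \le n^{4/5}/\nu$. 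Combining the two bounds and union bounding over $i \in [n]$ yields $\Norm{R - \gt{R}}_\infty = o(1)$ with very high probability.

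I do not anticipate a serious obstacle: this is essentially a reaveraged copy of Claim~\ref{clm:bounded_resids}, and every ingredient is already packaged (the bucket-average tail bound in Claim~\ref{clm:reaveraging}, the Mahalanobis-norm bound in Claim~\ref{clm:bounded_norms_ohare}, and the invertibility of $\wt\Sigma$ from Claim~\ref{clm:renormalization_ohare}). The only two points needing care are (i) verifying the exact algebraic identity $R = \tilde\zeta - \wt{X}\wt{\Sigma}^{-1}\wt{X}^\intercal\tilde\zeta$, which relies on the reaveraging being a linear operator applied jointly to $X$ and $Y$ together with $\wt X^\intercal \wt X = \wt\Sigma$; and (ii) keeping the conditioning clean — first reveal the covariates (freezing $w$ and $P$), then reveal the Gaussian noise — exactly as in the proof of Claim~\ref{clm:bounded_resids}, so that the one-dimensional Gaussian tail bound applies to $\IP{Pw}{\zeta}$. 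Everything else is routine.
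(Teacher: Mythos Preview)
Your proposal is correct and follows essentially the same route as the paper: both derive the identity $R_i - (\gt{R})_i = -\bar\zeta_{j(i)} - \wt{X}_i^\intercal\wt{\Sigma}^{-1}\wt{X}^\intercal\tilde\zeta$, bound the bucket-average term using $n_{j(i)} > n^\varepsilon$, and handle the second term by conditioning on $X$ so it becomes a one-dimensional Gaussian in $\zeta$. Your variance computation is in fact a bit cleaner than the paper's --- you use the identity $\sum_{j}\wt{X}_j\wt{X}_j^\intercal = \wt{\Sigma}$ to get $\Norm{Pw}^2 \le \Norm{w}^2 = \wt{X}_i^\intercal\wt{\Sigma}^{-1}\wt{X}_i$ directly and invoke Claim~\ref{clm:bounded_norms_ohare}, whereas the paper bounds the variance term-by-term via the inner-product estimates of Claim~\ref{clm:inner_products_ohare}.
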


\begin{proof}[Proof of Claim~\ref{clm:bounded_residuals_ohare}]
Denote
\[
\mu^*_{j} \defeq \frac{1}{n_j}\sum_{i \in B_j} Y_i = \mu_j + \xi_j^\intercal \betagt + \frac{1}{n_j} \sum_{i \in B_j} \Paren{\gt{R}}_i
\]

From the definitions of the residuals and the ground truth residuals, we have
\begin{align*}
    R_i - \Paren{\gt{R}}_i &= \Paren{Y_i - \mu^*_{j(i)}} - \Paren{X_i - \xi_{j(i)}}^\intercal \beta - \Paren{Y_i - \mu_{j(i)} - X_i^\intercal \betagt} =\\
    &= -\xi_{j(i)}^\intercal \betagt - \frac{1}{n_{j(i)}} \sum_{i^\prime \in B_{j(i)}} \Paren{\gt{R}}_{i^\prime} - X_i^\intercal \Paren{\beta - \betagt} + \xi_{j(i)}^\intercal \beta=\\
    &=- \frac{1}{n_{j(i)}} \sum_{i^\prime \in B_{j(i)}} \Paren{\gt{R}}_{i^\prime}  - \wt{X}_i^\intercal \Paren{\beta - \betagt}\,.
\end{align*}
Expanding on the difference between the results of the reaveraged OLS and the ground truth linear model, we have
\begin{align*}
    \beta - \betagt &= \wt{\Sigma}^{-1} \wt{X}^\intercal \Paren{\wt{Y} - \wt{X} \betagt} = \wt{\Sigma}^{-1} \wt{X}^\intercal \Paren{Y_i - \mu^*_{j(i)} - \Paren{Y_i - \mu_{j(i)} - \Paren{\gt{R}}_i} + \xi_{j(i)}^\intercal \betagt}_{i \in [n]} =\\
    &= \wt{\Sigma}^{-1} \wt{X}^\intercal \Paren{\Paren{\gt{R}}_i - \xi_{j(i)}^\intercal \betagt - \frac{1}{n_{j(i)}} \sum_{i^\prime \in B_{j(i)}} \Paren{\gt{R}}_{i^\prime} + \xi_{j(i)}^\intercal \betagt}_{i \in [n]} =\\
    &= \wt{\Sigma}^{-1} \wt{X}^\intercal \Paren{\Paren{\gt{R}}_i - \frac{1}{n_{j(i)}} \sum_{i^\prime \in B_{j(i)}} \Paren{\gt{R}}_{i^\prime}}_{i \in [n]}\,.
\end{align*}

Therefore, 
\[
R_i - \Paren{\gt{R}}_i = - \frac{1}{n_{j(i)}} \sum_{i^\prime \in B_{j(i)}} \Paren{\gt{R}}_{i^\prime} - \wt{X}_i^\intercal \wt{\Sigma}^{-1} \wt{X}^\intercal \Paren{\Paren{\gt{R}}_{i^*} - \frac{1}{n_{j({i^*})}} \sum_{i^\prime \in B_{j({i^*})}} \Paren{\gt{R}}_{i^\prime}}_{{i^*} \in [n]}\,.
\]

Summing over the contributions of the second term, we have
\begin{align*}
    &\sum_{{i^*} \in [n]} \wt{X}_i^\intercal \wt{\Sigma}^{-1} \wt{X}_{i^*} \Paren{\Paren{\gt{R}}_{i^*} - \frac{1}{n_{j({i^*})}} \sum_{i^\prime \in B_{j({i^*})}} \Paren{\gt{R}}_{i^\prime}} = \sum_{{i^*} \in [n]} \wt{X}_i^\intercal \wt{\Sigma}^{-1} \Paren{\wt{X}_{i^*} - \xi_{j(i^*)}}\Paren{\gt{R}}_{i^*} \sim \\
    &\;\;\;\;\;\;\;\;\;\;\sim \mcN\Paren{0, \sum_{i^* \in [n]} \Paren{\wt{X}_i^\intercal \wt{\Sigma}^{-1} \Paren{\wt{X}_{i^*} - \xi_{j(i^*)}}}^2} = \mcN\Paren{0,\otilde{\frac{d}{n} + \frac{d}{n n_\textup{min}} + \frac{d^2}{n^2}}}\,,
\end{align*}
so with very high probability this contribution is bounded in absolute value by $o(1)$.

Similarly,
\[
\frac{1}{n_{j(i)}} \sum_{i^\prime \in B_{j(i)}} \Paren{\gt{R}}_{i^\prime} \sim \mcN\Paren{0, \frac{1}{n_{j(i)}}} = \mcN\Paren{0, O\Paren{\frac{1}{n^{\varepsilon}}}}\,,
\]
so with very high probability this term is also $o(1)$, concluding our proof of Claim~\ref{clm:bounded_residuals_ohare}.
    
\end{proof}

\subsubsection{Large Influence Scores}

For the final step of our proof of Claim~\ref{clm:well_behaved_dists_ohare}, we will show that with very high probability, there are many samples in the regression that have relatively high AMIP influence scores.

\begin{claim}
    \label{clm:influence_scores_ohare}
    Let $\alpha_i = e^\intercal \wt{\Sigma}^{-1} \wt{X}_i R_i$ denote the AMIP influence score of the $i$th sample.
    Then with very high probability, there are is a set $T_0 \subseteq [n]$ of least $k_0 = \wt{\Omega}(n^{1 - \varepsilon})$ ``influential samples'' -- i.e., such that 
    \[
    \forall i \in T \;\; \alpha_i \geq \Paren{\frac{\sqrt{\log(n)}}{n}} = \omega\Paren{\frac{1}{n}}
    \]
\end{claim}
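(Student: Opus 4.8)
The plan is to imitate the proof of Claim~\ref{clm:influence_scores} (the \ACRE analogue), the one genuinely new ingredient being that to reach the threshold $\sqrt{\log n}/n$ we must exploit the tail of the Gaussian residuals rather than a constant-probability event. First I would reduce $\alpha_i$ to a clean product: by Claim~\ref{clm:e_Sigma_Xi_ohare}, with very high probability $n\cdot\IP{\wt{\Sigma}^{-1}\wt{X}_i}{e} = e^\intercal X_i \pm n^{-\Omega(1)}$ for every $i$; by Claim~\ref{clm:bounded_residuals_ohare}, with very high probability $R_i = \Paren{\gt{R}}_i \pm n^{-\Omega(1)}$ for every $i$; and by well-behavedness of $\mcX$ together with Gaussianity of $\gt{R}$, also $\max_i\set{\abs{e^\intercal X_i},\abs{\Paren{\gt{R}}_i}} \leq \polylog(n)$ with very high probability. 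Writing $Z_i = e^\intercal X_i$ and $g_i = \Paren{\gt{R}}_i$, these combine to $n\alpha_i = Z_i g_i \pm n^{-\Omega(1)}$ for all $i$, so it suffices to produce $\wt{\Omega}(n^{1-\varepsilon})$ indices with $Z_i g_i \geq \sqrt{\log n}$ (up to a constant factor, which the $\omega(1/n)$ conclusion absorbs).

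Next I would show that a constant fraction of samples have $\abs{Z_i}$ bounded below by a constant. After the normalization of Section~\ref{subsec:symmetries} the ground-truth covariance of $\mcX$ is $I$, so $\EE{Z_i^2} = \Norm{e}^2 = 1$, and the exponentially decaying tail in Definition~\ref{def:very_well_behaved_dist} gives $\EE{Z_i^4} = O(1)$. Paley--Zygmund applied to $Z_i^2$ then produces absolute constants $z_0, p_0 > 0$ with $\Prob{X\sim\mcX}{\abs{Z_i} \geq z_0} \geq p_0$, and a Chernoff bound over the $n$ independent covariates shows that with very high probability at least $\tfrac{p_0}{4}n$ indices have $Z_i \geq z_0$ (or at least $\tfrac{p_0}{4}n$ have $Z_i \leq -z_0$; the two cases are symmetric, using $g_i$ positive resp.\ negative below). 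Call this set $G$.

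Finally I would bootstrap using the Gaussian tail of the residuals. Conditioning on $X_1,\dots,X_n$ — so that $G$ and the $Z_i$ are fixed while the $g_i$ remain independent $\mcN(0,1)$ variables independent of the covariates — for $i \in G$ the event $g_i \geq \tfrac{c\sqrt{\log n}}{z_0}$ forces $Z_i g_i \geq c\sqrt{\log n}$, and Gaussian anticoncentration gives $\Prob{}{g_i \geq \tfrac{c\sqrt{\log n}}{z_0}} \geq \tfrac{\Omega(1)}{\sqrt{\log n}}\,n^{-c^2/(2 z_0^2)}$. Picking the constant $c$ with $c^2/(2 z_0^2) \leq \varepsilon/2$, the conditional expected number of $i \in G$ with $Z_i g_i \geq c\sqrt{\log n}$ is $\wt{\Omega}(n^{1-\varepsilon/2})$; since these events are independent across $i \in G$, a Chernoff bound pins the count at $\wt{\Omega}(n^{1-\varepsilon})$ with very high probability. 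Intersecting with the very-high-probability events of the first two steps and undoing the reduction yields a set $T_0$ of size $\wt{\Omega}(n^{1-\varepsilon})$ on which $\alpha_i \geq \tfrac{c\sqrt{\log n} - n^{-\Omega(1)}}{n} = \omega(1/n)$.

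The step I expect to be the crux is the interplay between the last two. Unlike in Claim~\ref{clm:influence_scores}, a constant-probability event for $g_i$ would only certify $\alpha_i$ of order $1/n$, so we are pushed into the Gaussian tail; and the larger the constant we demand multiplying $\sqrt{\log n}$, the larger the threshold $z_0$ we need in the Paley--Zygmund step — whereas well-behavedness (through the bounded fourth moment) only supplies a positive fraction of samples with $\abs{Z_i}$ above a fixed constant. Tracking this trade-off — i.e.\ balancing the exponent $c^2/(2 z_0^2)$ against the available fraction of ``large-$Z_i$'' samples so that the count stays $\wt\Omega(n^{1-\varepsilon})$ — is where the real care goes; everything else is a routine combination of Claims~\ref{clm:e_Sigma_Xi_ohare} and~\ref{clm:bounded_residuals_ohare} with the standard Chernoff and Paley--Zygmund inequalities.
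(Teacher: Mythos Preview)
Your proposal is correct and follows essentially the same approach as the paper's proof: reduce $\alpha_i$ via Claims~\ref{clm:e_Sigma_Xi_ohare} and~\ref{clm:bounded_residuals_ohare} to a product of the projection $e^\intercal X_i$ and the ground-truth Gaussian residual, find a constant fraction of samples with projection bounded away from zero, and then invoke the Gaussian tail (costing a factor $n^{-O(\varepsilon)}$) together with Chernoff to obtain $\wt{\Omega}(n^{1-\varepsilon})$ samples with residual of order $\sqrt{\log n}$. The only cosmetic difference is that the paper obtains the ``many large projections'' step by an energy/pigeonhole argument (total $\sum_i (e^\intercal \wt{\Sigma}^{-1}\wt{X}_i)^2 = e^\intercal \wt{\Sigma}^{-1} e$ combined with the per-sample bound from Claim~\ref{clm:e_Sigma_Xi_ohare}) rather than your Paley--Zygmund route, but both yield $\wt{\Omega}(n)$ candidates and the rest of the argument is identical.
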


Proving Claim~\ref{clm:influence_scores_ohare} will also conclude our proof of Claim~\ref{clm:well_behaved_dists_ohare}, as this will show that all conditions required for a regression to be well-behaved are fulfilled with very high probability.

\begin{proof}[Proof of Claim~\ref{clm:influence_scores_ohare}]

The proof of Claim~\ref{clm:influence_scores_ohare} will follow the same approach as our proof of its ACRE counterpart -- Claim~\ref{clm:influence_scores}.
Indeed, we first note that from the definition of $\wt{\Sigma}$ and Claim~\ref{clm:renormalization_ohare}, with very high probability,
\[
\sum_{i \in [n]} \Paren{e^\intercal \wt{\Sigma}^{-1} \wt{X}_i}^2 =  \sum_{i \in [n]} e^\intercal \wt{\Sigma}^{-1} \wt{X}_i \wt{X_i}^\intercal \wt{\Sigma}^{-1} e = e^\intercal \wt{\Sigma} e = \frac{1 \pm o(1)}{n}\,.
\]

Moreover, from Claim~\ref{clm:e_Sigma_Xi_ohare}, with very high probability, the contribution of each individual sample to this sum is at most $\otilde{\frac{1}{n}}$-fraction of this total.
Therefore, with very high probability
\[
\abs{\left \{ i \middle\vert \abs{e^\intercal \wt{\Sigma}^{-1} \wt{X}_i} \geq \frac{1}{2n} \right\}} = \wt{\Omega}\Paren{n}\,.
\]

As in the proof of Claim~\ref{clm:influence_scores}, we note that Claim~\ref{clm:bounded_residuals_ohare} guarantees that with very high probability $\forall i \;\; \abs{R_i - \Paren{\gt{R}}_i} = o(1) < \frac{1}{20}$.
Moreover, because the ground truth residuals $\Paren{\gt{R}}_i$ are normally distributed independently of anything else, it follows that so are $\rho_i \defeq \sgn{e^\intercal \wt{\Sigma}^{-1} \wt{X}_i} \times \Paren{\gt{R}}_i$.

Therefore, with very high probability, at least $\wt{\Omega}\Paren{n^{1 - \varepsilon}}$ of the samples such that $\abs{e^\intercal \wt{\Sigma}^{-1} \wt{X}_i} \geq \frac{1}{2n}$ have $\rho_i \geq \Omega(\sqrt{\log(n)})$ (this is because we can set the constants in the $\Omega$ to be such that the probability of each of $\rho_i$ being above this threshold is $\gg n^{-\varepsilon}$, allowing us to apply Hoeffding on the $\wt{\Omega}\Paren{n}$ iid $\rho_i$).

Therefore, for this set of samples it holds that
\[
\alpha_i = e^\intercal \wt{\Sigma}^{-1} \wt{X}_i R_i = \Paren{1 \pm o(1)} \abs{e^\intercal \wt{\Sigma}^{-1} \wt{X}_i} \rho_i = \Omega\Paren{\frac{\sqrt{\log(n)}}{n}}\,.
\]

\end{proof}

\subsection{Proof of Lemmas~\ref{lem:warmup_ohare} and~\ref{lem:warmup_ohare2}}
\label{subsec:proof_xi_ip}

Recall the lemma we wish to prove:
\begin{lemma}
\label{lem:warmup_ohare3}
    Let $X_1, \ldots, X_n \sim \mcX$ be $n$ iid samples of a well-behaved distribution $\mcX$ with covariance $I_{d\times d}$.
    Let $\xi = \frac{1}{k} \sum_{i \in [k]} X_i$ be the empirical average over the first $k < 0.49n$ of these samples, and let $v \in \R^d$ be any vector that is independent of these first $k$ samples (but may depend on the other samples).
    Finally, denote by $\Sigma \defeq \sum_{i\in[n]} X_i X_i^\intercal$ the unnormalized empirical second moment of these samples.

    Then, with very high probability,
    \[
    \abs{\xi^\intercal \Sigma^{-1} v} = \wt{O}\Paren{\frac{\Norm{v}}{\sqrt{k} n} \times \Paren{1 + \frac{d \sqrt{k}}{n}}}\,.
    \]
    
\end{lemma}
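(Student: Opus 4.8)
The plan is to condition on the ``tail'' samples $X_{k+1},\dots,X_n$ and analyse $\xi^\intercal\Sigma^{-1}v$ over the fresh randomness of $X_1,\dots,X_k$ only, after first peeling the contribution of the empirical mean $\xi$ off the covariance. Write $N=n-k\ge 0.51n$, $A=\sum_{i=k+1}^nX_iX_i^\intercal$ and $M=[X_1\mid\cdots\mid X_k]\in\R^{d\times k}$, so that $\Sigma=A+MM^\intercal$ and $\xi=\tfrac1kM\mathbf1_k$. Applying Claim~\ref{clm:renormalization} to the $N$ iid tail samples, with very high probability $\Norm{A-NI}=\wt O(\sqrt{nd}+d)=o(n)$, hence $\Norm{A^{-1}-\tfrac1NI}=\wt O\Paren{\tfrac{\sqrt{nd}+d}{n^2}}$ and $\Norm{A^{-1}}=\wt O(1/n)$; from here on I condition on this event and on $X_{k+1},\dots,X_n$, so $A$ and $v$ become fixed while $X_1,\dots,X_k$ are still iid from $\mcX$. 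In particular $A^{-1}v$ is a fixed vector of norm $\wt O(\Norm v/n)$, $\xi$ is independent of $(A,v)$, and by Lemma~\ref{lem:closed_to_summation} the vector $\sqrt k\,\xi$ is well-behaved with covariance $I$; so for any fixed $z$ one has $\abs{\xi^\intercal z}=\wt O(\Norm z/\sqrt k)$ with very high probability.

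First I would peel off $\xi$: set $\tilde M=[X_1-\xi\mid\cdots\mid X_k-\xi]$, so $\tilde M\mathbf1_k=0$, and $B=A+\tilde M\tilde M^\intercal=\Sigma-k\xi\xi^\intercal$. Sherman--Morrison gives $\xi^\intercal\Sigma^{-1}v=\tfrac{\xi^\intercal B^{-1}v}{1+k\,\xi^\intercal B^{-1}\xi}$, and since $\xi^\intercal B^{-1}\xi\ge0$ it suffices to bound $\abs{\xi^\intercal B^{-1}v}$. I would split $B^{-1}=\tfrac1NI+\Paren{A^{-1}-\tfrac1NI}+\Paren{B^{-1}-A^{-1}}$. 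The first two pieces are fixed given the tail, so they pair with $\xi$ cleanly: $\tfrac1N\abs{\xi^\intercal v}=\wt O\Paren{\tfrac{\Norm v}{n\sqrt k}}$ and $\abs{\xi^\intercal(A^{-1}-\tfrac1NI)v}=\wt O\Paren{\Norm{(A^{-1}-\tfrac1NI)v}/\sqrt k}=\wt O\Paren{\tfrac{(\sqrt{nd}+d)\Norm v}{n^2\sqrt k}}=\wt O\Paren{\tfrac{\Norm v}{n\sqrt k}}$, using $d=o(n)$. These already account for the $\tfrac{\Norm v}{\sqrt kn}$ part of the target.

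The third piece is the crux. By Woodbury, $\xi^\intercal(B^{-1}-A^{-1})v=-\,p^\intercal(I_k+\tilde G)^{-1}q$ with $p=\tilde M^\intercal A^{-1}\xi$, $q=\tilde M^\intercal A^{-1}v$ and $\tilde G=\tilde M^\intercal A^{-1}\tilde M\succeq0$; crucially $p,q\in\mathcal V:=\mathrm{col}(\tilde M^\intercal)$. Routine estimates give $\Norm q=\wt O(\sqrt k\,\Norm v/n)$ (each coordinate $(X_i-\xi)^\intercal A^{-1}v=\wt O(\Norm v/n)$ since $X_i\perp A^{-1}v$), and $\Norm p^2\le\Norm{\tilde M\tilde M^\intercal}\,\xi^\intercal A^{-2}\xi=\wt O(k+d)\cdot\wt O\Paren{\tfrac d{kn^2}}$, so $\Norm p=\wt O\Paren{\tfrac{\sqrt d}{n}\Paren{1+\sqrt{d/k}}}$; here $\Norm{\tilde M\tilde M^\intercal}=\wt O(k+d)$ comes from matrix Bernstein applied to $MM^\intercal=\sum_iX_iX_i^\intercal$, and $\xi^\intercal A^{-2}\xi\le\Norm{A^{-1}}^2\Norm\xi^2=\wt O(d/(kn^2))$. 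When $k=\wt O(d)$ I would simply use $\Norm{(I_k+\tilde G)^{-1}}\le1$, giving $\abs{p^\intercal(I_k+\tilde G)^{-1}q}\le\Norm p\Norm q=\wt O(d\Norm v/n^2)$, which lies inside the target.

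The hard case — and the main obstacle — is $k=\wt\Omega(d)$, where that last estimate is too lossy and one must exploit that $(I_k+\tilde G)^{-1}$ is nearly scalar on $\mathcal V$ while $p,q$ are nearly orthogonal there. Writing $\tilde M\tilde M^\intercal=MM^\intercal-k\xi\xi^\intercal=(k-1)I_d+E$ with $\Norm E=\wt O(\sqrt{kd}+d)=o(k)$ (matrix Bernstein on $MM^\intercal$ together with $\Norm{k\xi\xi^\intercal-I}=\wt O(d)$), the nonzero eigenvalues of $\tilde G$ — equivalently those of $A^{-1/2}\tilde M\tilde M^\intercal A^{-1/2}$ — lie in $\tfrac{k-1}{N}\Paren{1\pm\wt O(\sqrt{d/k})}$, so $(I_k+\tilde G)^{-1}=(I_k-\Pi_{\mathcal V})+c\,\Pi_{\mathcal V}+R$ with $c=\tfrac{N}{n-1}=\Theta(1)$, $R=\Pi_{\mathcal V}R\Pi_{\mathcal V}$ and $\Norm R=\wt O(\sqrt{d/k})$. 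Since $p,q\in\mathcal V$ this yields $p^\intercal(I_k+\tilde G)^{-1}q=c\,p^\intercal q+p^\intercal Rq$; here $\abs{p^\intercal Rq}\le\Norm p\Norm q\Norm R=\wt O(d\Norm v/n^2)$, and $\abs{p^\intercal q}=\abs{(A^{-1}\xi)^\intercal\tilde M\tilde M^\intercal(A^{-1}v)}\le(k-1)\abs{\xi^\intercal A^{-2}v}+\abs{\xi^\intercal A^{-1}EA^{-1}v}=\wt O\Paren{\tfrac{\sqrt k\Norm v}{n^2}}+\wt O\Paren{\tfrac{(d+d^{3/2}/\sqrt k)\Norm v}{n^2}}=\wt O\Paren{\tfrac{\Norm v}{\sqrt kn}+\tfrac{d\Norm v}{n^2}}$, where the first term uses $\xi\perp A^{-2}v$ and the second uses $k=\wt\Omega(d)$. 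Summing the three pieces of $\xi^\intercal B^{-1}v$ gives $\abs{\xi^\intercal\Sigma^{-1}v}=\wt O\Paren{\tfrac{\Norm v}{\sqrt kn}+\tfrac{d\Norm v}{n^2}}=\wt O\Paren{\tfrac{\Norm v}{\sqrt kn}\Paren{1+\tfrac{d\sqrt k}{n}}}$. The recurring difficulty is that $\xi$ is \emph{not} independent of $B$, $\tilde M$ or $E$, so every term involving these must be handled either by reducing it to a quantity that \emph{is} independent of the first $k$ samples (the $A^{-1}-\tfrac1NI$ split and the $\xi^\intercal A^{-2}v$ reduction) or by a crude operator-norm bound, which is only affordable because the relevant matrix already has small norm; the delicate part is arranging the algebra so that these two mechanisms always suffice, uniformly over all $k<0.49n$ and all $d\le n^{4/5}/\nu$.
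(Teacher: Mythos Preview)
Your argument is correct and takes a genuinely different route from the paper's. The paper decomposes $\Sigma^{-1}=S^{-1}-\Sigma^{-1}CS^{-1}$ with $S=A$ and $C=MM^\intercal$, and then handles the resulting $\xi^\intercal S^{-1}Cw$ and $\xi^\intercal(\Sigma^{-1}-S^{-1})Cw$ terms by a \emph{recursive dyadic splitting} of $[k]$ into clusters (Claims~\ref{clm:xi_A_C_B_xi} and~\ref{clm:xi_Sig_C_S_xi}): off-diagonal blocks pair independent pieces and can be bounded directly, diagonal blocks recurse down to single-sample terms. This is uniform in $k$ but rather heavy --- in particular Claim~\ref{clm:xi_Sig_C_S_xi} requires two nested recursions.

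You instead peel $\xi$ off $\Sigma$ via Sherman--Morrison and then apply Woodbury to $B=A+\tilde M\tilde M^\intercal$, reducing everything to the bilinear form $p^\intercal(I_k+\tilde G)^{-1}q$ with $p,q\in\mathrm{col}(\tilde M^\intercal)$. The key structural observation --- that in the regime $k\gg d$ the nonzero spectrum of $\tilde G$ is tightly concentrated, so $(I_k+\tilde G)^{-1}$ acts on $\mathrm{col}(\tilde M^\intercal)$ as a scalar $c=\Theta(1)$ plus a remainder of norm $\wt O(\sqrt{d/k})$ --- is exactly what lets you avoid recursion: the scalar piece collapses to $c\,p^\intercal q=(k-1)c\,\xi^\intercal A^{-2}v+c\,\xi^\intercal A^{-1}EA^{-1}v$, where the first summand is again an inner product of $\xi$ with a vector independent of $[k]$, and the second can be handled by the crude operator-norm bound because $\Norm E=\wt O(\sqrt{kd}+d)$ is already small. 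The price you pay is a case split on $k$ versus $d$ (the spectral concentration is only useful when $k\ge d$ up to polylogs; for smaller $k$ you fall back on $\Norm{(I_k+\tilde G)^{-1}}\le 1$), whereas the paper's recursion is case-free. On balance your route is shorter and more transparent.
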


Clearly, Lemma~\ref{lem:warmup_ohare3} implies Lemma~\ref{lem:warmup_ohare} (set $\xi = \xi_{j_1}$ and $v = \xi_{j_2}$) and Lemma~\ref{lem:warmup_ohare2} (set $\xi = \xi_j$ and $v = e$).

\subsubsection{Proof Sketch}

We will split the proof of Lemma~\ref{lem:warmup_ohare} into 3 main steps.
To do this, let $S = \sum_{i=k+1}^{n} X_i X_i^\intercal = \Sigma_{[n] \setminus [k]}$ be the contributions to the empirical covariance due to samples not amongst the first $k$, and let $C = \Sigma - S$ be the contributions from within the bucket.
From a standard analysis, so long as $C \prec S$, we have
\[
\Sigma^{-1} = S^{-1} - \Sigma^{-1} C S^{-1} = S^{-1} - S^{-1} C S^{-1} - \Paren{\Sigma^{-1} - S^{-1}} C S^{-1}\,,
\]
and these components will correspond to the main components of our analysis.
We will show that the inequality $C \prec S$ holds with very high probability, before proving the following claims over the rest of this section:

\begin{claim}
    \label{clm:xi_S_xi}
    With very high probability
    \[
    \abs{\xi^\intercal S^{-1} v} = \wt{O}\Paren{\frac{\Norm{v}}{\sqrt{k} \times n}}
    \]
\end{claim}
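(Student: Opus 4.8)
The plan is to condition on the first $k$ samples and exploit the fact that $\xi = \frac{1}{k}\sum_{i \le k} X_i$ depends \emph{only} on those samples while $S = \sum_{i > k} X_i X_i^\intercal$ depends only on the remaining $n - k$ samples. Thus $S$ and $\xi$ are independent, and — crucially — $v$ is assumed independent of the first $k$ samples, hence of $\xi$, even though $v$ may depend on $S$. First I would fix the realizations of $X_{k+1},\ldots,X_n$, which fixes both $S$ and $v$; the only randomness left is in $\xi$. Since $\EE{\xi} = 0$ and $\EE{\xi \xi^\intercal} = \frac{1}{k} I$ (Claim~\ref{clm:reaveraging}), the scalar random variable $w^\intercal \xi$, for the fixed vector $w = S^{-1} v$, has mean zero and variance $\frac{1}{k}\Norm{w}^2 = \frac{1}{k} v^\intercal S^{-2} v$.

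Next I would invoke Lemma~\ref{lem:closed_to_summation}: $\sqrt{k}\,\xi$ is the empirical mean of $k = \poly(n)$ iid draws from the well-behaved distribution $\mcX$ rescaled, so $\sqrt{k}\,\xi$ is itself almost well-behaved with covariance identity. Therefore, for the fixed unit vector $w/\Norm{w}$, the projection $\langle w/\Norm{w}, \sqrt{k}\,\xi\rangle$ has exponentially decaying tails up to $\poly(n)$ factors, which gives $\abs{w^\intercal \xi} = \wt{O}\Paren{\Norm{w}/\sqrt{k}}$ with very high probability over $\xi$ (conditioned on $S, v$). It remains to control $\Norm{w}^2 = v^\intercal S^{-2} v \le \Norm{S^{-1}}^2 \Norm{v}^2$. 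Here $S$ is a sum of $n - k > 0.51 n$ iid outer products from an almost well-behaved distribution with covariance identity, so Claim~\ref{clm:renormalization} (applied to the $n-k$ samples in place of $n$) gives $\Norm{S - (n-k)I} = \wt{O}(\sqrt{nd} + d) = o(n)$ with very high probability, whence $\Norm{S^{-1}} = (1 \pm o(1))/(n-k) = \wt{O}(1/n)$. Combining, $\abs{\xi^\intercal S^{-1} v} = \wt{O}\Paren{\Norm{S^{-1}}\Norm{v}/\sqrt{k}} = \wt{O}\Paren{\Norm{v}/(\sqrt{k}\,n)}$, as desired. A union bound over the (at most $\poly(n)$-many) instantiations of $v$ we will use — namely $v = \xi_{j_2}$ over $j_2 \in [m]$ and $v = e$ — keeps everything at very high probability.

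The main subtlety, and the step I would be most careful about, is the conditioning structure: I must make sure that the bound on $\Norm{S^{-1}}$ (which I want to hold ``with very high probability over $S$'') and the anticoncentration-of-the-inner-product bound (which I want ``with very high probability over $\xi$, conditioned on $S$ and $v$'') are chained correctly so that the final statement holds unconditionally with very high probability. Concretely: let $\mcE_1$ be the event $\Norm{S^{-1}} \le 2/n$, which has very high probability and is $\xi$-measurable-complement (i.e.\ depends only on $X_{k+1},\ldots,X_n$); on $\mcE_1$, apply the tail bound for $w^\intercal \xi$ with $\Norm{w} \le (2/n)\Norm{v}$ to get event $\mcE_2$, also of very high probability; then $\Pr[\mcE_1 \cap \mcE_2] = \EE{1_{\mcE_1} \Pr[\mcE_2 \mid S]} \ge (1 - \eta)(1 - \eta')$ is still very high probability. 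A second, more minor point is that $\Norm{v}$ itself may be random (e.g.\ $v = \xi_{j_2}$), but by Claim~\ref{clm:reaveraging} we have $\Norm{\xi_{j_2}} = \wt{O}(\sqrt{d/n_{j_2}})$ with very high probability, so this only contributes an extra polylog factor and is absorbed into the $\wt{O}$. Everything else is a routine application of the tools already developed in Section~\ref{sec:ohare_tightness}.
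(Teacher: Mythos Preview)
Your proposal is correct and follows essentially the same approach as the paper: use Lemma~\ref{lem:closed_to_summation} to conclude that $\xi$ is almost well-behaved, exploit the independence of $\xi$ from $S^{-1}v$ to bound $\abs{\xi^\intercal S^{-1} v} = \wt{O}(\Norm{S^{-1}v}/\sqrt{k})$, and then bound $\Norm{S^{-1}} = O(1/n)$ via Claim~\ref{clm:renormalization}. Your extra care with the conditioning structure and union bounds is more explicit than the paper's terse treatment, but the underlying argument is identical.
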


\begin{claim}
    \label{clm:xi_A_C_B_xi}
    Let $w\in \R^d$ be any vector that does not depend on the first $k$ samples.

    Then, with very high probability,
    \[
    \abs{\xi^\intercal S^{-1} C w} = \wt{O}\Paren{\frac{\Norm{w}}{\sqrt{k}} \Paren{1 + \frac{d\sqrt{k}}{n}}}
    \]

\end{claim}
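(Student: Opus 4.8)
The plan is to first condition on the ``late'' samples $X_{k+1},\dots,X_n$, so that $S$ --- and hence $M \defeq S^{-1}$ --- and the vector $w$ all become fixed, while the only remaining randomness lives in the i.i.d.\ well-behaved block $X_1,\dots,X_k$. Because $n-k>0.51n$, the late block consists of $n-k\gg d\cdot\polylog(n)$ i.i.d.\ well-behaved samples, so the argument of Claim~\ref{clm:renormalization} gives $\Norm{S-(n-k)I}=o(n)$ and therefore $\Norm{M}=\wt{O}(1/n)$ with very high probability over the late block; fix any realization on which this holds. Writing
\[
\xi^\intercal S^{-1} C w \;=\; w^\intercal C M \xi \;=\; \frac1k\sum_{i,i'\in[k]}\Paren{w^\intercal X_i}\Paren{X_i^\intercal M X_{i'}}\,,
\]
I split the double sum into its diagonal part ($i=i'$) and its off-diagonal part $\tfrac1k\Phi$ with $\Phi\defeq\sum_{i\neq i'}(w^\intercal X_i)(X_i^\intercal M X_{i'})$.

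The diagonal part $\tfrac1k\sum_{i\in[k]}(w^\intercal X_i)(X_i^\intercal M X_i)$ is harmless: each $|w^\intercal X_i|=\wt{O}(\Norm{w})$ by well-behavedness of $X_i$ against the fixed vector $w$, and each $X_i^\intercal M X_i\le\Norm{M}\Norm{X_i}^2=\wt{O}(d/n)$ using $\Norm{X_i}^2=\wt{O}(d)$ from Claim~\ref{clm:renormalization}, so with very high probability this part is $\wt{O}(d\Norm{w}/n)$, already within the target budget. For the off-diagonal part I perform a Hoeffding decomposition of the kernel $g(x,y)=(w^\intercal x)(x^\intercal M y)$. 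Using $\EE{X_i}=0$ and $\EE{X_iX_i^\intercal}=I$ one checks $\EE{g}=0$, $\EE{g\mid X_i}=0$, and $\EE{g\mid X_{i'}}=X_{i'}^\intercal M w$, so $g(x,y)=y^\intercal M w+g_2(x,y)$ with $g_2$ a canonical (degenerate) kernel, and consequently
\[
\Phi \;=\; (k-1)\sum_{i'\in[k]}X_{i'}^\intercal M w \;+\;\sum_{i\neq i'}g_2(X_i,X_{i'}) \;=\; k(k-1)\,\xi^\intercal M w\;+\;A\,.
\]

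The linear term is easy: since $\sqrt{k}\,\xi$ is well-behaved with covariance $I$ (Lemma~\ref{lem:closed_to_summation}) and independent of the fixed vector $Mw$, $|\xi^\intercal M w|=\wt{O}(\Norm{Mw}/\sqrt k)=\wt{O}(\Norm{w}/(n\sqrt k))$, hence $\tfrac1k\cdot k(k-1)|\xi^\intercal M w|=\wt{O}(\sqrt k\,\Norm{w}/n)\le\wt{O}(\Norm{w}/\sqrt k)$ because $k<n$. It remains to control the degenerate order-$2$ U-statistic $A=\sum_{i\neq i'}g_2(X_i,X_{i'})$. A second-moment computation, bounding $\EE{g_2(X_i,X_{i'})^2}\le\EE{g(X_i,X_{i'})^2}\le\Norm{M}^2\,\EE{(w^\intercal X)^2\Norm{X}^2}=\wt{O}(d\Norm{w}^2/n^2)$ via the fourth-moment bound $\Norm{\EE{\Norm{X}^2XX^\intercal}}=O(d)$ from the proof of Claim~\ref{clm:renormalization}, together with degeneracy to annihilate all cross terms between distinct index pairs, gives $\EE{A^2}=\wt{O}(k^2 d\Norm{w}^2/n^2)$. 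Since the $X_i$ are well-behaved, $A$ is a quadratic chaos in stretched-exponentially-concentrated variables; decoupling $A$ into $\sum_{i,i'}g_2(X_i,X'_{i'})$ for an independent copy $\{X'_{i'}\}$, conditioning on $\{X'_{i'}\}$ to linearize in $\{X_i\}$, and applying the scalar well-behaved tail bound (equivalently, a Hanson--Wright / Adamczak-type moment inequality for U-statistics) yields $|A|=\wt{O}\Paren{\sqrt{\EE{A^2}}}=\wt{O}(k\sqrt d\,\Norm{w}/n)$ with very high probability, so $\tfrac1k|A|=\wt{O}(\sqrt d\,\Norm{w}/n)\le\wt{O}(d\Norm{w}/n)$. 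Summing the three contributions gives $|\xi^\intercal S^{-1}Cw|=\wt{O}(\Norm{w}/\sqrt k+d\Norm{w}/n)=\wt{O}\Paren{\tfrac{\Norm{w}}{\sqrt k}\Paren{1+\tfrac{d\sqrt k}{n}}}$, which is Claim~\ref{clm:xi_A_C_B_xi}.

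The main obstacle is the concentration of $A$: pushing a Hanson--Wright-type bound for a degenerate quadratic chaos through under only the well-behaved (rather than subgaussian) tail hypothesis, while losing at most $\polylog(n)$ factors and keeping the failure probability $n^{-\omega(1)}$. The decoupling-and-condition reduction turns this into scalar well-behaved concentration, but some care is needed because products of well-behaved variables have a degraded stretched-exponential exponent; evaluating the tail at a suitable $\polylog(n)$ multiple of the standard deviation still produces a super-polynomially small failure probability, which is all that is required here.
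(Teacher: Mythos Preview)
Your argument is correct and takes a genuinely different route from the paper's.

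The paper proves Claim~\ref{clm:xi_A_C_B_xi} by a recursive dyadic splitting: writing $\zeta=k\xi$ and repeatedly decomposing $\zeta^\intercal S^{-1}Cw$ along the least-significant bit of the sample index into ``diagonal'' pieces (same cluster in $\zeta$ and $C$) and ``off-diagonal'' pieces (disjoint clusters). Each off-diagonal piece $\zeta_a^\intercal S^{-1}C_b w$ is handled by the single primitive ``$\zeta_a$ is well-behaved and independent of $S^{-1}C_b w$,'' together with Lemma~\ref{lem:Norm_C_w} to bound $\Norm{C_b w}$; the recursion bottoms out at the single-sample diagonal terms $X_i^\intercal S^{-1}X_iX_i^\intercal w$, which give the $d/n$ contribution. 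No U-statistic or decoupling machinery is invoked.

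Your approach instead does the separation in one shot via the Hoeffding decomposition of the bilinear kernel $g(x,y)=(w^\intercal x)(x^\intercal My)$: the first-order projection produces the easy linear term $k(k-1)\xi^\intercal Mw$, and the degenerate remainder $A$ is controlled by a second-moment calculation plus decoupling and a Bernstein-type bound on the conditionally i.i.d.\ sum $\sum_i h(X_i)$. This is shorter and even gives a slightly sharper $\sqrt d/n$ for the chaos piece (the diagonal already costs $d/n$, so this gain is not visible in the final bound). The price is that you must import decoupling for order-$2$ U-statistics and verify that the approximate-Bernstein lemma (Lemma~\ref{lem:approx_mat_bern} in dimension one) applies to the quadratic summands $h(X_i)=(w^\intercal X_i)(v^\intercal X_i)-v^\intercal w$, whose tails are stretched-exponential with a degraded exponent; you flag this correctly as the main point needing care, and it does go through. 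One minor wording note: after conditioning on the $X'$ copy, $A'$ becomes a sum of independent terms in $i$, but each $h(X_i)$ is quadratic (not linear) in $X_i$; your use of ``linearize'' should be read as ``make i.i.d.\ in $i$.'' The paper's dyadic scheme is more elementary and self-contained, and, importantly, it is the template the paper reuses verbatim for the harder Claim~\ref{clm:xi_Sig_C_S_xi}, where a second layer of splitting (through $\Sigma$) is needed; your Hoeffding-decomposition route would require a fresh argument there.
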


\begin{claim}
    \label{clm:xi_Sig_C_S_xi}
    Let $w\in \R^d$ be any vector that does not depend on the first $k$ samples.
    Then, with very high probability,
    \[
    \abs{\xi^\intercal \Paren{S^{-1} - \Sigma^{-1}} C w} = \wt{O}\Paren{\frac{\Norm{w}}{\sqrt{k}} \Paren{1 + \frac{d\sqrt{k}}{n}}}
    \]

\end{claim}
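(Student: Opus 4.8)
The plan is to bound $\abs{\xi^\intercal(S^{-1}-\Sigma^{-1})Cw}$ by a self‑improving (``$\eta$'') argument in the style of the proof of Claim~\ref{clm:inner_products_ohare}, fed by three ingredients: the operator‑norm control of $C$, $S^{-1}$ and $\Sigma^{-1}$ assembled for these claims, the already‑proved Claim~\ref{clm:xi_A_C_B_xi}, and the linear‑form estimate behind Claim~\ref{clm:xi_S_xi} — for any vector $z$ independent of the first $k$ samples, $\abs{\IP{\xi}{z}}, \abs{\xi^\intercal S^{-1} z} = \wt O\Paren{\Norm z/\sqrt k}$ (hence $\Norm{S^{-1}\xi} = \wt O\Paren{\sqrt d/(\sqrt k\, n)}$), which follows from Lemma~\ref{lem:closed_to_summation} and a Hanson--Wright‑type tail bound, using that $S$ is independent of $\xi$. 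First I would record the auxiliary facts, all with very high probability, obtained by running the matrix‑Bernstein argument of Claim~\ref{clm:renormalization} separately on the first $k$ and the last $n-k$ samples: since $k<0.49n$ and $d\le n^{4/5}/\nu$ one has $C\prec S$, $\Norm{S^{-1}}=\tfrac{1+o(1)}{n-k}$, $\Norm{\Sigma^{-1}}=\tfrac{1+o(1)}{n}$, $\Norm C = \wt O(k+d)$, and, writing $C = kI + \bar C$, $\Norm{\bar C}=\wt O\Paren{\sqrt{kd}+d}$; combining the first two, $\Norm{S^{-1}-\Sigma^{-1}}=\Norm{S^{-1}C\Sigma^{-1}}=\wt O\Paren{(k+d)/n^2}$.

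Next I would use the Woodbury‑type identities $S^{-1}-\Sigma^{-1}=S^{-1}C\Sigma^{-1}$ and $\Sigma^{-1}=S^{-1}-\Sigma^{-1}CS^{-1}$ together with the splitting $Cw = kw + \bar C w$ to write
\[
\xi^\intercal(S^{-1}-\Sigma^{-1})Cw = \underbrace{k\,\xi^\intercal S^{-1}C(S^{-1}w)}_{(A)} \;-\; \underbrace{k\,\xi^\intercal(S^{-1}-\Sigma^{-1})C(S^{-1}w)}_{(B)} \;+\; \underbrace{\xi^\intercal(S^{-1}-\Sigma^{-1})\bar C w}_{(C)}\,.
\]
In $(A)$ the vector $S^{-1}w$ is independent of the first $k$ samples with $\Norm{S^{-1}w}=O(\Norm w/n)$, so Claim~\ref{clm:xi_A_C_B_xi} bounds $(A)$ by $\wt O(k/n)=\wt O(1)$ times the target $\tfrac{\Norm w}{\sqrt k}\Paren{1+\tfrac{d\sqrt k}{n}}$. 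Term $(B)$ has \emph{exactly} the form of the quantity being bounded, with $w$ replaced by $S^{-1}w$; since $\Norm{S^{-1}w}\le\tfrac{1+o(1)}{n-k}\Norm w$, replacing $(B)$ by its ``$\eta$‑value'' costs a factor $\tfrac{k}{n-k}(1+o(1))$, which is bounded away from $1$ precisely because the buckets have size $<0.49n$. Term $(C)$, in which $\bar C$ still depends on the first $k$ samples but is small, I would bound crudely by $\Norm\xi\cdot\Norm{S^{-1}-\Sigma^{-1}}\cdot\Norm{\bar C}\cdot\Norm w$; plugging in the estimates above, this is $\wt O(1)$ times the target exactly when $\sqrt d(k+d)(\sqrt{kd}+d)=\wt O\Paren{n^2+dn\sqrt k}$, a short exponent check valid under $k<0.49n$, $d\le n^{4/5}/\nu$ (the $d^{5/2}$‑versus‑$n^2$ comparison here is what forces the $n^{4/5}$ threshold).

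Assembling these bounds over the (polynomially many) vectors $w$ that arise in the applications of the lemma, together with their iterates $S^{-1}w,S^{-2}w,\dots$ (which decay geometrically, so only $O(\log n)$ of them matter), a union bound yields, with very high probability, $\eta \le \wt O(1) + \theta\,\eta$ for a constant $\theta = \tfrac{0.49}{0.51}(1+o(1))<1$, where $\eta$ is the worst‑case ratio of $\abs{\xi^\intercal(S^{-1}-\Sigma^{-1})Cw}$ to $\tfrac{\Norm w}{\sqrt k}\Paren{1+\tfrac{d\sqrt k}{n}}$ over normalized admissible $w$; hence $\eta=\wt O(1)$, which is the claim. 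The main obstacle I expect is the bookkeeping: making sure the contraction factor $\theta$ carries \emph{no} hidden polylogarithmic blow‑up (this is why one works with $S^{-1}-\Sigma^{-1}=S^{-1}C\Sigma^{-1}$ and the exact operator‑norm bound $\Norm{S^{-1}}=\tfrac{1+o(1)}{n-k}$ rather than a cruder estimate, and why the bucket bound $<0.49n$ rather than $<0.5n$ is used), and verifying that every crudely‑bounded remainder is genuinely dominated by the target throughout the ranges $k<0.49n$ and $d\le n^{4/5}/\nu$.
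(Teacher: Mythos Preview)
Your approach is correct and genuinely different from the paper's. The paper proves this claim by a second layer of the same dyadic cluster decomposition used for Claim~\ref{clm:xi_A_C_B_xi}: it splits $\zeta^\intercal(S^{-1}-\Sigma^{-1})Cw$ into diagonal and off-diagonal pieces according to whether the samples feeding $\zeta$ and $C$ coincide, and then, inside each off-diagonal piece (Claim~\ref{clm:off_diagonal}), runs a further ``easy/hard'' recursion to peel apart the dependence of $\Sigma^{-1}$ on $\zeta_a$. Your argument instead bypasses both recursions by writing $C=kI+\bar C$, reducing the $kI$ part to a self-referential term with contraction factor $\tfrac{k}{n-k}\le\tfrac{0.49}{0.51}<1$ (this is where the $0.49n$ hypothesis does real work for you, whereas the paper only uses it to ensure $C\prec S$), and absorbing the $\bar C$ part by a crude operator-norm bound. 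The tradeoff is that your crude bound on $(C)$ needs $d^{5/2}=\wt O(n^2)$, i.e.\ exactly the standing hypothesis $d\le n^{4/5}/\nu$, whereas the paper's cluster argument for this particular claim goes through without that restriction (the $n^{4/5}$ assumption is invoked only later, in Claim~\ref{clm:inner_products_ohare}). Since the hypothesis is in force throughout Theorem~\ref{thm:tightness_ohare} anyway, your route is a legitimate and considerably shorter alternative; just be explicit that the unrolling is to depth $J=O(\log n)$ with the tail controlled by the trivial bound $\text{LHS}(w')\le\Norm{\xi}\Norm{S^{-1}-\Sigma^{-1}}\Norm{C}\Norm{w'}=\wt O(\sqrt n)\cdot\text{target}(w')$, so that $\theta^J\cdot\wt O(\sqrt n)=o(1)$, rather than defining $\eta$ as a supremum over an unspecified family of $w$'s.
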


The proofs of Claims~\ref{clm:xi_S_xi},~\ref{clm:xi_A_C_B_xi} and~\ref{clm:xi_Sig_C_S_xi} will grow progressively more complex and each claim will build on ideas from the previous one.
Throughout the latter two, the key challenge will be to deal with cases where both the $\zeta\defeq k \xi$ term and the $C$ or $\Sigma$ multiplicand may depend on the same samples.

In these cases, we will proceed by applying a sort of divide-and-conquer approach by splitting the bucket into two subsets.
For instance, instead of analyzing $\zeta^\intercal S^{-1} C w$ directly, we will split the samples in the bucket into two subsets and track their contributions to both $\zeta = \zeta_0 + \zeta_1$ and $C = C_0 + C_1$.
\begin{equation}
    \begin{aligned}
        \zeta^\intercal S^{-1} C w &= \Paren{\zeta_0 + \zeta_1} S^{-1} \Paren{C_0 + C_1} w = \\
        &= \underbrace{\zeta_0^\intercal S^{-1} C_0 w + \zeta_1^\intercal S^{-1} C_1 w}_{\textup{diagonal terms}} + \underbrace{\zeta_1^\intercal S^{-1} C_0 w + \zeta_0^\intercal S^{-1} C_1 w}_{\textup{off-diagonal terms}}
    \end{aligned}
\end{equation}

The ``off-diagonal'' subsets will be relatively simple to bound as they contain inner products of independent vectors in $\R^d$ (and this independence will give us a $1/\sqrt{d}$ scaling to their inner product), and the diagonal elements will be split again recursively.
This will also leave us with a large number of ``single sample'' diagonal elements of the form
\[
X_i^\intercal S^{-1} X_i X_i^\intercal w
\]

These single-sample terms will no longer enjoy the same $1/\sqrt{d}$ scaling the other terms gain due to independence, but will instead gain a sort of $1/k$ scaling, because instead of having $k^2$ sample-times-sample contributions in
\[
\zeta^\intercal S^{-1} C = \sum_{i, i^\prime \in [k]} X_i^\intercal A X_{i^\prime} X_{i^\prime}^\intercal w\,,
\]
we have only $k$ terms in the sum
\[
\sum_{i \in [k]} X_i^\intercal A X_i X_i^\intercal w
\]

Finally, in Claim~\ref{clm:xi_Sig_C_S_xi}, we will have 2 types of diagonal vs off-diagonal splits.
The first will be to track the cases where $\zeta$ and $C$ may depend on the same samples and will be very similar to our analysis of Claim~\ref{clm:xi_A_C_B_xi}.
The second and much more difficult of the two will be dealing with dependencies between $\Sigma$ and $\zeta$.

\subsubsection{Setup and Proof of Claim~\ref{clm:xi_S_xi}}
Recall our assumption from Lemma~\ref{lem:warmup_ohare} that $k \leq 0.49n$.
Therefore $n - k \geq 0.51 n > k + \Omega(n)$.

Let $S = \sum_{i \in [n] \setminus [k]} X_i X_i^\intercal = \Sigma_{[n] \setminus [k]}$.
From Claim~\ref{clm:renormalization}, we have that with very high probability
\[
\Norm{S - \Paren{n - k} I} = o(n) \,.
\]
Combined with Lemma~\ref{lem:closed_to_summation}, which shows that $\xi$ is well-behaved, we have that with high probability
\[
\abs{\xi^\intercal S^{-1} v} = \wt{O}\Paren{\frac{1}{\sqrt{k}} \times \Norm{S^{-1} v}} = \wt{O}\Paren{\frac{1}{\sqrt{k}} \times \max\set{\lambda\Paren{S^{-1}}} \times \Norm{v}} = \wt{O}\Paren{\frac{\Norm{v}}{\sqrt{k}\times n}}
\]

Our goal for the rest of the proof will be to show a similar bound for $\xi^\intercal \Sigma^{-1} v$.
Let $S$ and $C = \Sigma - S$ be our ``main'' and ``correction'' terms.
Applying Claim~\ref{clm:renormalization} again, we have that with very high probability $C \preceq k I + o(n) \prec (n-k)I - o(n) \preceq S$, so both $S$ and $I \pm S^{-1} C$ are invertible.
We have
\[
\Sigma^{-1} = \Paren{S + C}^{-1} = \Paren{I + S^{-1} C}^{-1} S^{-1} = S^{-1} - \Paren{I + S^{-1} C}^{-1} S^{-1} C S^{-1} = S^{-1} - \Sigma^{-1} C S^{-1}
\]

Therefore, it remains to bound $\xi^\intercal \Sigma^{-1} C S^{-1} v = \xi^\intercal \Sigma^{-1} C w$ (where $w = S^{-1} v$) in absolute value.
To do this, we once again use the intuition that in some sense $S \approx \Sigma$, and first bound $\xi^\intercal S^{-1} C S^{-1} v$.
We will then slowly break down the difference between $\xi^\intercal S^{-1} C S^{-1} v$ and $\xi_{j_1}^\intercal \Sigma^{-1} C S^{-1} v$ into a series of corrections and bound each of these corrections in absolute value.

\subsubsection{Proof of Claim~\ref{clm:xi_A_C_B_xi}}

Before proving Claim~\ref{clm:xi_A_C_B_xi}, we prove a lemma that will help us in our analysis.
\begin{lemma}
    \label{lem:Norm_C_w}
    With very high probability
    \[
    \Norm{C w} = \wt{O}\Paren{\Paren{k + \sqrt{kd}} \Norm{w}}\,.
    \]
\end{lemma}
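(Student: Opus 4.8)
The plan is to split $Cw=\sum_{i\in[k]}X_i\langle X_i,w\rangle$ into its conditional expectation plus a fluctuation and bound the two pieces separately. Since $w$ does not depend on the first $k$ samples, throughout I condition on $X_{k+1},\dots,X_n$ (so that $w$, and in particular $\Norm{w}$, is deterministic); the bound I obtain will hold for every realization of those samples, hence unconditionally. Writing $Y_i=X_i\langle X_i,w\rangle$, the $Y_i$ are i.i.d.\ with $\EE{Y_i}=\EE{X_iX_i^\intercal}w=w$ because the ground-truth covariance of $\mcX$ is the identity; thus $\EE{Cw}=kw$ and $\Norm{\EE{Cw}}=k\Norm{w}$. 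It remains to show that the fluctuation satisfies $\Norm{\sum_{i\in[k]}(Y_i-w)}=\wt{O}\Paren{\sqrt{kd}\,\Norm{w}}$ with very high probability, after which the triangle inequality gives $\Norm{Cw}=\wt{O}\Paren{(k+\sqrt{kd})\Norm{w}}$.

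For the fluctuation I would apply the approximate matrix Bernstein inequality (Lemma~\ref{lem:approx_mat_bern}) to the symmetric dilations $W_i=\begin{pmatrix}0 & Y_i-w\\ (Y_i-w)^\intercal & 0\end{pmatrix}\in\R^{(d+1)\times(d+1)}$, which satisfy $\Norm{W_i}=\Norm{Y_i-w}$ and $\Norm{\sum_iW_i}=\Norm{\sum_i(Y_i-w)}$, so that the vector-versus-matrix mismatch disappears at the cost of an irrelevant $d+1$ in the dimension factor. The inputs to the lemma are: (i) the high-probability norm bound $\Norm{Y_i-w}\le\Norm{X_i}\,\abs{\langle X_i,w\rangle}+\Norm{w}=\wt{O}\Paren{\sqrt d\,\Norm{w}}$, which follows from $\Norm{X_i}^2=\wt{O}(d)$ with very high probability (Claim~\ref{clm:renormalization}) together with $\abs{\langle X_i,w/\Norm{w}\rangle}=\wt{O}(1)$ with very high probability (well-behavedness of $\mcX$, Definition~\ref{def:very_well_behaved_dist}); the exceptional probability $\delta$ here is $\exp(-\polylog n)$, hence negligible; (ii) the per-matrix second-moment bound $\tau^2:=\Norm{\EE{W_i^2}}\le\EE{\Norm{Y_i-w}^2}+\Norm{\EE{Y_iY_i^\intercal}}$, where $\EE{\Norm{Y_i-w}^2}\le\EE{\Norm{X_i}^2\langle X_i,w\rangle^2}=\Norm{w}^2\,\EE{\Norm{X_i}^2\langle X_i,w/\Norm{w}\rangle^2}=O\Paren{d\,\Norm{w}^2}$ using $\Norm{\EE{\Norm{X_i}^2X_iX_i^\intercal}}=O(d)$ (this is exactly the estimate proved inside Claim~\ref{clm:renormalization}, cf.\ equation~\eqref{eq:bounded_mean}), and $\Norm{\EE{Y_iY_i^\intercal}}=\Norm{\EE{\langle X_i,w\rangle^2X_iX_i^\intercal}}=O\Paren{\Norm{w}^2}$ by the same argument, so $\tau^2=O\Paren{d\,\Norm{w}^2}$ and the additive term $k\tau\sqrt\delta$ in Lemma~\ref{lem:approx_mat_bern} is super-polynomially small; and (iii) the variance proxy $\sigma^2:=\Norm{\sum_{i\in[k]}\EE{W_i^2}}=k\tau^2=O\Paren{kd\,\Norm{w}^2}$, since the $\EE{W_i^2}$ are identical positive semidefinite matrices.

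Feeding $L=\wt{O}\Paren{\sqrt d\,\Norm{w}}$ and $\sigma^2=O\Paren{kd\,\Norm{w}^2}$ into Lemma~\ref{lem:approx_mat_bern} with $t=\Theta(\log^2 n)\cdot(\sigma+L)$ makes the failure probability $2(d+1)\exp\Paren{-\Omega(\log^2 n)}+k\delta$ negligible and yields $\Norm{\sum_{i\in[k]}(Y_i-w)}=\wt{O}(\sigma+L)=\wt{O}\Paren{\sqrt{kd}\,\Norm{w}+\sqrt d\,\Norm{w}}=\wt{O}\Paren{\sqrt{kd}\,\Norm{w}}$, where the last step uses $k\ge1$. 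Combining with $\Norm{\EE{Cw}}=k\Norm{w}$ gives $\Norm{Cw}=\wt{O}\Paren{(k+\sqrt{kd})\Norm{w}}$, as required. The only real care needed is bookkeeping: one must route the argument through the \emph{approximate} matrix Bernstein inequality because $\Norm{Y_i}$ is controlled only with high probability, and one must extract the fourth-moment estimate $\EE{\Norm{X_i}^2\langle X_i,u\rangle^2}=O(1)$ for unit $u$ from well-behavedness exactly as in the proof of Claim~\ref{clm:renormalization}; it is precisely this $O(d)$ (rather than $O(d^2)$) fourth-moment bound that drives the variance proxy down to $O\Paren{kd\,\Norm{w}^2}$ and hence produces the $\sqrt{kd}$, rather than $k$, in the second term.
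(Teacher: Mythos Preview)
Your proposal is correct and follows essentially the same approach as the paper: write $Cw=\sum_{i\in[k]}X_iX_i^\intercal w$ as a sum of i.i.d.\ vectors, observe the mean is $kw$, and control the fluctuation via the approximate matrix Bernstein inequality applied to the symmetric dilation, with the key variance input $\Norm{\EE{\Norm{X_i}^2X_iX_i^\intercal}}=O(d)$ coming from Claim~\ref{clm:renormalization}. If anything you are slightly more explicit than the paper about centering before invoking Bernstein and about conditioning on the last $n-k$ samples so that $w$ is deterministic.
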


\begin{proof}[Proof of Lemma~\ref{lem:Norm_C_w}]

At first glance, it might seem like Lemma~\ref{lem:Norm_C_w} should follow immediately from Claim~\ref{clm:renormalization}, but this is only true when $k = \wt{\Omega}\Paren{d}$, and we will want to apply Lemma~\ref{lem:Norm_C_w} even when $k \ll d$.

We prove Lemma~\ref{lem:Norm_C_w} using the approximate matrix Bernstein inequality (Lemma~\ref{lem:approx_mat_bern}).
Indeed,
\[
C w = \sum_{i \in [k]} X_i X_i^\intercal w = \sum_{i \in [k]} v_i\,,
\]
can be written as the sum of $k$ iid vectors $v_i \defeq X_i X_i^\intercal w$.

From the fact that the $X_i$s are well behaved and independent of $w$, we have that with very high probability,
\[
\Norm{v_i} \leq \sqrt{d} \Norm{w}\,.
\]

Moreover, from our assumption that $\mcX$ has covariance identity, we have that
\[
\EE{v_i} = \EE{X_i X_i^\intercal} w = I w = w\,.
\]

Finally, from Claim~\ref{clm:renormalization}, we have that
\[
\EE{\Norm{v_i}^2} = w^\intercal \EE{\Norm{X_i}^2 X_i X_i^\intercal} w = \wt{O}\Paren{d \Norm{w}^2}\,.
\]

Therefore, applying the approximate matrix Bernstein inequality on the standard embedding matrix 
\[
V_i = \begin{pmatrix}
    0 & v_i^\intercal\\
    v_i & 0
\end{pmatrix} \in \R^{(d+1)\times (d+1)}\,,
\]
yields the desired result.
\end{proof}

We now return to the proof of of Claim~\ref{clm:xi_A_C_B_xi}.
\begin{proof}[Proof of Claim~\ref{clm:xi_A_C_B_xi}]
Let $\zeta \defeq k \xi = \sum_{i \in [k]} X_i$.
Using this notation, we have
\begin{equation}
    \label{eq:switch_to_zeta}
    \xi^\intercal S^{-1} C w = \frac{1}{k} \zeta^\intercal S^{-1} C w
\end{equation}

We will bound the RHS of equation~\eqref{eq:switch_to_zeta} by breaking the contributions to $\zeta$ and to $C$ into smaller and smaller subsets of the bucket $[k]$.

We will split the contributions from the first $k$ samples into ``clusters'' based on their index modulus some number, and the assumption above is just to ensure that the subsets are of roughly equal size.

Indeed, for any string of bits $a = (a_0, \ldots, a_t) \in \set{0, 1}^*$, define the $a$th \emph{cluster} of samples to be the set of indices from $[k]$ whose bitwise representation ends with the string $a$:
\[
\mcC_a \defeq \left\{ i \in [k] \middle \vert i \mod 2^{t+1} = a_0 + a_1 2^1 + \cdots a_t 2^t \right\}
\]

If $\epsilon$ is the empty string, then $\mcC_\epsilon = [k]$, and for all $a$, we have
\[
\mcC_a = \mcC_{a 0} \sqcup \mcC_{a 1}
\]

Similarly, we may split the contributions of samples in $[k]$ to $C$ and $\zeta$ based on their cluster
\begin{equation}
    \begin{aligned}
        &C_a \defeq \sum_{i \in \mcC_a} X_i X_i^\intercal\\
        &\zeta_a \defeq \sum_{i \in \mcC_a} X_i
    \end{aligned}
\end{equation}

Moreover, we have the property that $C_{a} = C_{a0} + C_{a1}$ and $\zeta_a = \zeta_{a0} + \zeta_{a1}$.
Using this property, we may begin to split the RHS of equation~\eqref{eq:switch_to_zeta} to smaller components
\begin{equation}
\label{eq:split_zetaC}
    \begin{aligned}
        \zeta^\intercal S^{-1} C w &= \Paren{\zeta_0 + \zeta_1} A \Paren{C_0 + C_1} w = \\
        &= \underbrace{\zeta_0^\intercal S^{-1} C_0 w + \zeta_1^\intercal S^{-1} C_1 w}_{\textup{diagonal terms}} + \underbrace{\zeta_1^\intercal S^{-1} C_0 w + \zeta_0^\intercal S^{-1} C_1 w}_{\textup{off-diagonal terms}}
    \end{aligned}
\end{equation}

We split the terms in the RHS of equation~\eqref{eq:split_zetaC} into ``diagonal'' terms which correspond to the contributions where the $C$ term and the $\zeta$ term correspond to the same samples and ``off-diagonal'' terms where $\zeta$ and $C$ depend on disjoint sets of samples.

To bound the contribution of the off-diagonal terms, we note that for any bitstring $a = (a_0, \ldots, a_t)$, it holds that $\zeta_a$ is well-behaved (as it is the sum over $\abs{\mcC_a}$ iid samples from $\mcX$) and has covariance $\abs{\mcC_a} I = \Theta\Paren{{\frac{k}{2^t}}} I$ (this is because $\abs{\mcC_a} \approx \frac{k}{2^{t+1}}$).
Moreover, the other terms in the product do not depend on the samples in $\mcC_a$, so with very high probability
\[
\abs{\zeta_0^\intercal S^{-1} C_1 w} = \wt{O} \Paren{\sqrt{k} \times \Norm{S^{-1} C_1 w}} \,.
\]

Moreover, applying Claim~\ref{clm:renormalization}, we have that with very high probability
\[
\Norm{S - (n-k)I} = o(n) \Rightarrow \Norm{S^{-1}} = O\Paren{\frac{1}{n}}\,,
\]
and applying Lemma~\ref{lem:Norm_C_w}, we have that with very high probability,
\[
\Norm{C_1 w} = \wt{O}\Paren{\Paren{k + \sqrt{kd}} \Norm{w}}\,.
\]

Therefore, with very high probability
\[
\abs{\zeta_0^\intercal S^{-1} C_1 w} = \wt{O} \Paren{\sqrt{k} \times \Norm{S^{-1} C_1 w}} = \wt{O}\Paren{\sqrt{k} \frac{k + \sqrt{kd}}{n} \Norm{w}} \,,
\]
and similarly for the other off-diagonal term, with very high probability
\[
\abs{\zeta_1^\intercal S^{-1} C_0 w} = \wt{O}\Paren{\sqrt{k} \frac{k + \sqrt{kd}}{n} \Norm{w}}\,.
\]

It now remains to bound the diagonal terms.
Consider $\zeta_0^\intercal S^{-1} C_0 w$.
We can open up the next bit of the indices of the samples to obtain
\begin{equation}
\label{eq:split_zetaC2}
    \begin{aligned}
        \zeta_0^\intercal S^{-1} C_0 w &= \Paren{\zeta_{00} + \zeta_{01}}^\intercal A \Paren{C_{00} + C_{01}} w = \\
        &= \zeta_{00}^\intercal S^{-1} C_{00} w + \zeta_{01}^\intercal S^{-1} C_{01} w + \zeta_{00}^\intercal S^{-1} C_{01} w + \zeta_{01}^\intercal S^{-1} C_{00} w
    \end{aligned}
\end{equation}

We split the RHS of equation~\eqref{eq:split_zetaC2} again into diagonal and off-diagonal terms.
The off-diagonal terms can be bounded again in exactly the same manner and the off diagonal can again be split by specifying another bit of the sample indices.
Applying this logic recursively, we have
\begin{equation}
\label{eq:split_zetaC_recursive}
    \begin{aligned}
        \textup{Diagonal}_{\epsilon} &= \zeta^\intercal S^{-1} C w = \underbrace{\zeta_0^\intercal S^{-1} C_0 w}_{\textup{Diagonal}_{0}} + \underbrace{\zeta_1^\intercal S^{-1} C_1 w}_{\textup{Diagonal}_{1}} + \underbrace{\zeta_1^\intercal S^{-1} C_0 w}_{\textup{Off-Diagonal}_{1, 0}} + \underbrace{\zeta_0^\intercal S^{-1} C_1 w}_{\textup{Off-Diagonal}_{0, 1}} \\
        &= \textup{Diagonal}_{00} + \textup{Diagonal}_{01} + \textup{Diagonal}_{10} + \textup{Diagonal}_{11} + \textup{Off-Diagonal}_{1, 0} + \textup{Off-Diagonal}_{0, 1} + \\
        &\;\;\;\;\;\;\;\; + \textup{Off-Diagonal}_{00, 01} + \textup{Off-Diagonal}_{01, 00} + \textup{Off-Diagonal}_{10, 11} + \textup{Off-Diagonal}_{11, 10} = \cdots \\
        &\cdots = \sum_{i \in [k]} \textup{Diagonal}_{i} + \sum_{t = 1}^{t = \ceil{\log_2(k)}} \sum_{a \in \set{0, 1}^t} \textup{Off-Diagonal}_{a\mid 0, a\mid 1} + \textup{Off-Diagonal}_{a\mid 1, a\mid 0}
    \end{aligned}
\end{equation}

From the same analysis as the one above, we see that for any bitstring $a \in \set{0, 1}^t$ and bit $b \in \set{0, 1}$, it holds that with very high probability
\[
\abs{\textup{Off-Diagonal}_{a\mid b, a \mid \ol{b}}} = \wt{O}\Paren{\sqrt{\abs{\mcC_a}} \frac{\abs{\mcC_a} + \sqrt{\abs{\mcC_a} d}}{n} \Norm{w}} = \wt{O}\Paren{2^{-t} \sqrt{k} \frac{k + \sqrt{k d}}{n} \Norm{w}}
\]

Union bounding over the $O\Paren{2^t} = \poly(n)$ off-diagonal combinations, we see that with very high probability they are all bounded.
Summing over these off-diagonal terms gives us
\begin{align*}
    &\abs{\sum_{t = 1}^{t = \ceil{\log_2(k)}} \sum_{a \in \set{0, 1}^t} \textup{Off-Diagonal}_{a\mid 0, a\mid 1} + \textup{Off-Diagonal}_{a\mid 1, a\mid 0}} \leq\\
    &\;\;\;\;\;\;\;\;\;\;\;\;\;\;\;\;\;\;\;\leq \sum_{t = 1}^{t = \ceil{\log_2(k)}} 2^t \wt{O}\Paren{2^{-t} \sqrt{k} \frac{k + \sqrt{k d}}{n} \Norm{w}} = \wt{O}\Paren{\sqrt{k} \frac{k + \sqrt{k d}}{n} \Norm{w}} \,.
\end{align*}

Now, consider a $\text{Diagonal}_i$ term $X_i^\intercal S^{-1} X_i X_i^\intercal w$.
Because $w$ is independent of the $i$th sample $X_i$, with very high probability,
\[
\abs{X_i^\intercal w} = \wt{O}\Paren{\Norm{w}}\,,
\]
and from Claim~\ref{clm:renormalization}, we have that with very high probability,
\[
\abs{X_i^\intercal S^{-1} X_i} \leq \Norm{X_i}^2 \Norm{S^{-1}} = \wt{O}\Paren{\frac{d}{n}}\,.
\]

Therefore, from the triangle inequality, with very high probability
\[
\abs{\sum_{i \in [k]} \textup{Diagonal}_{i}} \leq \sum_{i \in [k]} \abs{\textup{Diagonal}_{i}} = \wt{O}\Paren{\frac{kd}{n} \Norm{w}}\,.
\]

Altogether, we have
\[
\abs{\xi^\intercal S^{-1} C w} = \otilde{\frac{\Norm{w}}{\sqrt{k}} \times \Paren{1 + \frac{d \sqrt{k}}{n}}}\,,
\]
concluding the proof of Claim~\ref{clm:xi_A_C_B_xi}.
\end{proof}

\subsubsection{Proof of Claim~\ref{clm:xi_Sig_C_S_xi}}

In the previous portion of the proof, we bounded $\xi^\intercal S^{-1} C w$, when $w$ is independent of $\xi$.
The rest of our analysis will be devoted to bounding the effect that replacing $S^{-1}$ with $\Sigma^{-1}$ will not make this inner product much larger.

As in the previous portion of the proof, we set $\zeta = k \xi$, and separate samples into clusters based on the least significant bits of the bitwise representations of their indices.
In particular, for any bitstring $a$, let $\mcC_a$, $\zeta_a$ and $C_a$ be as defined above, and define
\[
S_a = \Sigma - C_a = \Sigma_{[n] \setminus \mcC_a} = \sum_{i \in [n] \setminus \mcC_a} X_i X_i^\intercal
\]
to be the unnormalized empirical covariance of the samples \emph{not} in the $\mcC_a$ cluster.

As in the proof of Claim~\ref{clm:xi_A_C_B_xi}, we will separate the contributions to $\zeta^\intercal \Paren{S^{-1} - \Sigma^{-1}} C w$ based on the cluster of the samples and label these contributions as diagonal or off-diagonal based on whether or not the same samples were used in $\zeta$ and $C$.
\begin{equation}
\label{eq:zeta_Sigma_xi}
    \begin{aligned}
        \zeta^\intercal \Paren{S^{-1} - \Sigma^{-1}} C w &= \Paren{\zeta_0 + \zeta_1}^\intercal \Paren{S^{-1} - \Sigma^{-1}} \Paren{C_0 + C_1} w = \\
        &= \underbrace{\zeta_0 ^\intercal \Paren{S^{-1} - \Sigma^{-1}} C_0 w + \zeta_1 ^\intercal \Paren{S^{-1} - \Sigma^{-1}} C_1 w}_{\textup{diagonal terms}} +\\
        &\;\;\;\;\;\;\;\;\;\;\;\; + \underbrace{\zeta_0 ^\intercal \Paren{S^{-1} - \Sigma^{-1}} C_1 w + \zeta_1 ^\intercal \Paren{S^{-1} - \Sigma^{-1}} C_0 w}_{\textup{off-diagonal terms}}
    \end{aligned}
\end{equation}

In other words, we have the recursive formula that for all $a \in \set{0, 1}^*$,
\begin{equation}
\label{eq:recursion_diag_off_diag}
    \begin{aligned}
        \textup{Diagonal Term}_a &= \textup{Diagonal Term}_{a\mid 0} + \textup{Diagonal Term}_{a\mid 1} +\\
        &+ \textup{Off-Diagonal Term}_{a\mid 0, a\mid 1} +\textup{Off-Diagonal Term}_{a\mid 1, a\mid 0}
    \end{aligned}
\end{equation}

Applying equation~\eqref{eq:recursion_diag_off_diag} recursively, we have
\begin{align*}
    \zeta^\intercal \Paren{S^{-1} - \Sigma^{-1}} C w &= \textup{Diagonal Term}_{\epsilon} = \cdots = \sum_{i \in [k]} \textup{Diagonal Term}_{i} +\\
&\;\;\;\;\;\;\;\;\;\;\; + \sum_{t = 0}^{\ceil{\log(k)}} \sum_{a \in \set{0, 1}^t} \Paren{\textup{Off-Diagonal Term}_{a\mid 0, a\mid 1} + \textup{Off-Diagonal Term}_{a\mid 1, a\mid 0}}
\end{align*}

In Claim~\ref{clm:off_diagonal}, we will bound the off-diagonal terms.
We will show that with very high probability,
\begin{align*}
    \abs{\textup{Off-Diagonal Term}_{a, b}} &= \wt{O}\Paren{\frac{\sqrt{\abs{\mcC_a}}}{n} \times \Paren{\abs{\mcC_b} + \sqrt{\abs{\mcC_b}d}} \times \Norm{w} \times \Paren{1 + \frac{d \sqrt{k}}{n}}} = \\
    &= \wt{O}\Paren{2^{-\Paren{\abs{a} + \abs{b}}/2} \frac{\sqrt{k}}{n^2} \times \Paren{k + \sqrt{k d}} \times \Norm{w} \times \Paren{1 + \frac{d \sqrt{k}}{n}}}\,.
\end{align*}

Since in our case $\abs{a} = \abs{b} = t$, we may conclude that with very high probability
\[
\abs{\textup{Off-Diagonal Term}_{a, b}} = \wt{O}\Paren{2^{-t} \frac{\sqrt{k}}{n^2} \times \Paren{k + \sqrt{k d}} \times \Norm{w} \times \Paren{1 + \frac{d \sqrt{k}}{n}}}\,.
\]

Applying the triangle inequality, with very high probability the total contribution of all the off-diagonal terms is of order
\begin{align*}
    &\abs{\sum_{t = 0}^{\ceil{\log(k)}} \sum_{a \in \set{0, 1}^t} \Paren{\textup{Off-Diagonal Term}_{a\mid 0, a\mid 1} + \textup{Off-Diagonal Term}_{a\mid 1, a\mid 0}}} =\\
    &\;\;\;\;\;\;\;\;\;=\sum_{t = 0}^{\ceil{\log(k)}} \sum_{a \in \set{0, 1}^t} \abs{\textup{Off-Diagonal Term}_{a\mid 0, a\mid 1}} + \abs{\textup{Off-Diagonal Term}_{a\mid 1, a\mid 0}} =\\
    &\;\;\;\;\;\;\;\;\;=\sum_{t = 0}^{\ceil{\log(k)}} \sum_{a \in \set{0, 1}^t} \wt{O}\Paren{2^{-t}\frac{\sqrt{k}}{n^2} \times \Paren{k + \sqrt{k d}} \times \Norm{w} \times \Paren{1 + \frac{d \sqrt{k}}{n}}} =\\
    &\;\;\;\;\;\;\;\;\;=\sum_{t = 0}^{\ceil{\log(k)}} \wt{O}\Paren{\frac{\sqrt{k}}{n^2} \times \Paren{k + \sqrt{k d}} \times \Norm{w} \times \Paren{1 + \frac{d \sqrt{k}}{n}}} =\\
    &\;\;\;\;\;\;\;\;\;=\wt{O}\Paren{\frac{\sqrt{k}}{n^2} \times \Paren{k + \sqrt{k d}} \times \Norm{w} \times \Paren{1 + \frac{d \sqrt{k}}{n}}}
\end{align*}

This leaves us with only the ``single-sample'' diagonal terms
\[
\textup{Diagonal Term}_{i} = X_i^\intercal \Paren{S^{-1} - \Sigma^{-1}} X_i X_i^\intercal w\,.
\]

To analyse this term, simply note that from Claim~\ref{clm:renormalization} and the CS inequality, with very high probability,
\[
\abs{X_i^\intercal \Paren{S^{-1} - \Sigma^{-1}} X_i} \leq \Norm{X_i}^2 \Paren{\Norm{S^{-1}} + \Norm{\Sigma^{-1}}} = \otilde{\frac{d}{n}}\,.
\]

Moreover, from our assumption that $X_i\sim \mcX$ is well-behaved and that $w$ is independent of $X_i$, we have that with very high probability
\[
\abs{X_i^\intercal w} = \otilde{\Norm{w}}\,.
\]

Therefore, with very high probability
\[
\frac{1}{k} \sum_{i \in [k]} \abs{\textup{Diagonal Term}_{i}} = \otilde{\frac{\Norm{w}}{\sqrt{k}} \times \frac{d \sqrt{k}}{n}}\,.
\]

\paragraph{Bounding the Off-Diagonal Terms}
\begin{claim}
    \label{clm:off_diagonal}
    Consider the off-diagonal term 
    \[
    \zeta_a ^\intercal \Paren{S^{-1} - \Sigma^{-1}} C_b w\,,
    \]
    where $a \neq b$ are bitstrings representing disjoint clusters $\mcC_a \cap \mcC_b = \emptyset$.

    With very high probability,
    \[
    \zeta_a ^\intercal \Paren{S^{-1} - \Sigma^{-1}} C_b w = \wt{O}\Paren{ \frac{\sqrt{\abs{\mcC_a}}}{n} \times \Paren{\abs{\mcC_b} + \sqrt{\abs{\mcC_b}d}} \times \Norm{w} \times \Paren{1 + \frac{d \sqrt{k}}{n}}}\,.
    \]
\end{claim}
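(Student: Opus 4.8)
I plan to prove Claim~\ref{clm:off_diagonal} by expanding $\Sigma^{-1}$ around the covariance $S_a \defeq \Sigma - C_a = \Sigma_{[n]\setminus\mcC_a}$, which (like $S$) is independent of the samples in $\mcC_a$. The obstacle is that $S^{-1}-\Sigma^{-1}$ depends, through $\Sigma$, on all of the first $k$ samples, including those making up $\zeta_a$, so the independence of $\zeta_a$ and $C_b w$ cannot be used directly; a naive bound $\Norm{\zeta_a}\Norm{S^{-1}-\Sigma^{-1}}\Norm{C_bw}$ loses a factor $\approx \sqrt d\,\Norm{C}/n$ relative to the target, which is polynomially large in the regime $d \approx n^{4/5}$ allowed by Theorem~\ref{thm:tightness_ohare} — precisely the $1/\sqrt d$ one has to recover from independence. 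Expanding around $S_a$ collapses the $\mcC_a$-dependence of the operator onto a few explicit $C_a$ factors.

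Writing $\alpha = |\mcC_a|$, $\beta = |\mcC_b|$, I first record the events holding with very high probability (choosing the $\polylog$ factor $\nu$ in $d \leq n^{4/5}/\nu$ large enough): by Claim~\ref{clm:renormalization} and matrix Bernstein, $\Norm{S^{-1}}, \Norm{\Sigma^{-1}}, \Norm{S_a^{-1}} = O(1/n)$, $\Norm{C_a} = \otilde{\alpha + d}$, $\Norm{C - C_a} = \otilde{k + d}$, hence $\Norm{S_a^{-1}C_a} = \otilde{(\alpha+d)/n} = o(1)$; by Lemma~\ref{lem:Norm_C_w} applied with the cluster $\mcC_b$ (disjoint from the samples determining $w$), $\Norm{C_b w} = \otilde{(\beta+\sqrt{\beta d})\Norm{w}}$; and by Lemma~\ref{lem:closed_to_summation} the vector $\zeta_a/\sqrt\alpha$ is well-behaved with covariance $I$, so $\Norm{\zeta_a} = \otilde{\sqrt{\alpha d}}$. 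Since $\Norm{S_a^{-1}C_a} < 1$, the Neumann expansion $\Sigma^{-1} = \sum_{\ell\geq 0}(-1)^\ell (S_a^{-1}C_a)^\ell S_a^{-1}$ converges, and combining it with $S^{-1} - S_a^{-1} = S^{-1}(C-C_a)S_a^{-1}$ gives
\[
S^{-1} - \Sigma^{-1} \;=\; \underbrace{S^{-1}(C - C_a)S_a^{-1}}_{(\mathrm{I})} \;+\; \underbrace{S_a^{-1}C_a S_a^{-1}}_{(\mathrm{II})} \;+\; \underbrace{\sum_{\ell\geq 2}(-1)^{\ell+1}(S_a^{-1}C_a)^\ell S_a^{-1}}_{(\mathrm{III})},
\]
which I sandwich with $\zeta_a^\intercal(\cdot)C_b w$ and bound piece by piece.

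For $(\mathrm{I})$: $S^{-1}(C-C_a)S_a^{-1}$, $C_b$, and $w$ are all functions of samples outside $\mcC_a$, so conditioning on those, $u \defeq S^{-1}(C-C_a)S_a^{-1}C_bw$ is fixed while $\zeta_a$ is a fresh sum of $\alpha$ well-behaved samples; hence $|\zeta_a^\intercal u| = \sqrt\alpha\,\abs{\IP{\zeta_a/\sqrt\alpha}{u}} = \otilde{\sqrt\alpha\,\Norm{u}}$ with very high probability — this is the $\sqrt d \mapsto 1$ gain. Using $\Norm{u} \leq \Norm{S^{-1}}\Norm{C-C_a}\Norm{S_a^{-1}}\Norm{C_b w} = \otilde{\tfrac{k+d}{n^2}(\beta+\sqrt{\beta d})\Norm{w}}$ together with $k/n \leq 1/\sqrt d$ and $d/(n + d\sqrt k)\leq 1/\sqrt k$ (both from $k < \kthresh \leq n/\sqrt d$), the piece $(\mathrm{I})$ falls under the target. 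For $(\mathrm{II})$: with $\zeta_a = \alpha\xi_a$ and $u'' \defeq S_a^{-1}C_b w$ (a function of samples outside $\mcC_a$), $(\mathrm{II}) = \alpha\,\xi_a^\intercal S_a^{-1}C_a u''$ is exactly the quantity bounded by Claim~\ref{clm:xi_A_C_B_xi}, applied with $\mcC_a$ in place of $[k]$, $S_a = \Sigma_{[n]\setminus\mcC_a}$ in place of $S$, and $u''$ in place of $w$ (all hypotheses hold: $S_a$ and $u''$ are independent of $\mcC_a$, $\Norm{S_a^{-1}} = O(1/n)$, and the sub-clusters of $\mcC_a$ are well-behaved); thus $|(\mathrm{II})| = \otilde{\sqrt\alpha\,\Norm{u''}(1 + \tfrac{d\sqrt\alpha}{n})} = \otilde{\tfrac{\sqrt\alpha}{n}(\beta+\sqrt{\beta d})\Norm{w}(1 + \tfrac{d\sqrt k}{n})}$, matching the target. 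For $(\mathrm{III})$ a crude estimate suffices, since each term with $\ell \geq 2$ carries the factor $\Norm{S_a^{-1}C_a}^\ell = \otilde{((\alpha+d)/n)^\ell}$: $|(\mathrm{III})| \leq \sum_{\ell\geq 2}\Norm{\zeta_a}\Norm{S_a^{-1}C_a}^\ell\Norm{S_a^{-1}}\Norm{C_b w} = \otilde{\sqrt{\alpha d}\,\tfrac{(\alpha+d)^2}{n^3}(\beta+\sqrt{\beta d})\Norm{w}}$, and $\sqrt d\,(\alpha+d)^2/n^2 = \otilde{1}$ because $\alpha \leq k \leq n/\sqrt d$ and $d^{5/2}\leq n^2$. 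Union bounding over the $\poly(n)$ clusters appearing in the $O(\log n)$-deep recursion and summing $(\mathrm{I}) + (\mathrm{II}) + (\mathrm{III})$ gives the claim.

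The hard part is genuinely the coupling between $\zeta_a$ and $\Sigma$: expanding around $S_a = \Sigma - C_a$ is what makes the leading term's operator independent of $\mcC_a$ (so well-behavedness of $\zeta_a$ restores the missing $1/\sqrt d$) and reduces the next term to the already-established Claim~\ref{clm:xi_A_C_B_xi}, leaving only a tail that carries enough built-in smallness for a crude bound. One should note that a naive operator-norm bound on the first two pieces fails in the range $n^{2/3}\lesssim d \lesssim n^{4/5}$, so the refinement is necessary rather than cosmetic, and the bookkeeping consists of checking that each piece fits under the stated bound using the slack from $k < \kthresh$ and $d \leq n^{4/5}/\nu$.
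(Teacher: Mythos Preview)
Your decomposition via the Neumann expansion around $S_a$ is natural, and Parts (I) and (II) are essentially correct: (I) coincides with the paper's treatment of the piece $S^{-1}-S_a^{-1}$, and (II) cleanly reduces to Claim~\ref{clm:xi_A_C_B_xi}. (Your justification that ``$k/n\leq 1/\sqrt d$'' for (I) is spurious, but the conclusion $(k+d)/n \leq O(1+d\sqrt k/n)$ holds anyway since $k<0.49n$ and $d\leq d\sqrt k$.)

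Part (III), however, has a genuine gap. You are conflating two different quantities both called $k$: in Lemma~\ref{lem:warmup_ohare3} and its sub-claims, $k$ is the \emph{bucket size}, constrained only by $k<0.49n$; it is not the removal count, and $k\leq \kthresh\leq n/\sqrt d$ is simply not available. Consequently your assertion $\|S_a^{-1}C_a\|=o(1)$ is false when $\alpha=\Theta(n)$ --- it is only $<1$ (barely), which suffices for convergence but gives no extra smallness --- and your final estimate $\sqrt d\,(\alpha+d)^2/n^2=\otilde{1}$ fails. Concretely, with $d=n^{0.7}$ (allowed by $d\leq n^{4/5}/\nu$) and $k=0.4n$, at the top level $\alpha\approx k/2$ and your (III) bound carries a factor $\sqrt d\,\alpha^2/n^2=\Theta(n^{0.35})$, whereas the target factor is only $1+d\sqrt k/n=\Theta(n^{0.2})$. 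The crude estimate $\|\zeta_a\|=\otilde{\sqrt{\alpha d}}$ loses the critical $1/\sqrt d$ coming from well-behavedness, and for large buckets no power of $\|S_a^{-1}C_a\|$ can recover it.

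The paper handles your (II)$+$(III) --- i.e.\ the ``hard term'' $\zeta_a^\intercal(S_a^{-1}-\Sigma^{-1})C_bw$ --- by a second recursive splitting rather than a Neumann tail: writing $\zeta_a=\zeta_{a0}+\zeta_{a1}$ and $(S_a^{-1}-\Sigma^{-1})=(S_a^{-1}-S_{az}^{-1})+(S_{az}^{-1}-\Sigma^{-1})$, the easy piece uses $S_a^{-1}-S_{az}^{-1}=S_{az}^{-1}C_{a\bar z}S_a^{-1}$, which involves $C_{a\bar z}$ of half the size and is independent of $\zeta_{az}$, so both Lemma~\ref{lem:Norm_C_w} and well-behavedness apply. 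Iterating to depth $t$, the relevant cluster shrinks to $|\mcC_a|/2^t$ and the contributions telescope without the $\sqrt d$ loss; the leftover single-sample terms supply exactly the $d\sqrt k/n$ factor in the target. Your Neumann approach keeps all of $C_a$ intact in every tail term and so cannot exploit this shrinkage.
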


\begin{proof}[Proof of Claim~\ref{clm:off_diagonal}]

Our goal is to bound the following term in absolute value
\[
\zeta_a ^\intercal \Paren{S^{-1} - \Sigma^{-1}} C_b w\,.
\]

The difficulty in analysing this term is that both $\zeta_a$ and $\Paren{S^{-1} - \Sigma^{-1}}$ may depend on the same samples.
To circumvent this, we begin by splitting $\Paren{S^{-1} - \Sigma^{-1}}$ into a term that is easy to deal with and a small correction:
\[
\Paren{S^{-1} - \Sigma^{-1}} = \Paren{S^{-1} - S_a^{-1}} + \Paren{S_a^{-1} - \Sigma^{-1}}\,,
\]
where $S_a = \Sigma - C_a = \sum_{i \notin \mcC_a} X_i X_i^\intercal$.

The first component has the property that it does not depend on the samples in $\mcC_a$, while $\zeta_a$ depends only on the samples in $a$.
Therefore, because $\zeta_a$ is well-behaved with covariance $\abs{\mcC_a} I$, with very high probability
\[
\abs{\zeta_a ^\intercal \Paren{S^{-1} - S_a^{-1}} C_b w} = \wt{O}\Paren{\sqrt{\abs{\mcC_a}} \Norm{\Paren{S^{-1} - S_a^{-1}} C_b w}}
\]

We begin by bounding the norm of $\Paren{S^{-1} - S_a^{-1}} C_b w$.
First, we note that Lemma~\ref{lem:Norm_C_w}, with very high probability
\[
\Norm{C_b w} = \otilde{\Paren{\abs{\mcC_b} + \sqrt{\abs{\mcC_b} d}} \Norm{w}}\,.
\]

To continue, we bound the norm of $\Paren{S^{-1} - S_a^{-1}}$.
Let $A = S_a$ and $B = S_a - S = \sum_{i \in [k] \setminus \mcC_a} X_i X_i^\intercal$.
We utilize the identity
\[
\Paren{A - B}^{-1} - A^{-1} = \Paren{A - B}^{-1} B A^{-1}\,,
\]
as well as Claim~\ref{clm:renormalization} which states that with very high probability $\Norm{\Paren{A - B}^{-1}}, \Norm{A^{-1}} = O\Paren{\frac{1}{n}}$ and that $\Norm{B} = \wt{O}\Paren{k + d}$ to show that with very high probability
\[
\Norm{S^{-1} - S_a^{-1}} = \wt{O}\Paren{\frac{k + d}{n^2}}
\]

Therefore, with very high probability
\begin{align*}
    \abs{\zeta_a ^\intercal \Paren{S^{-1} - S_a^{-1}} C_b w} &= \wt{O}\Paren{\sqrt{\abs{\mcC_a}} \times \frac{k + d}{n^2} \times \Paren{\abs{\mcC_b} + \sqrt{\abs{\mcC_b}d}} \times \Norm{w}} =\\
    &= \wt{O}\Paren{\frac{\sqrt{\abs{\mcC_a}}}{n} \times \Paren{\abs{\mcC_b} + \sqrt{\abs{\mcC_b}d}} \times \Norm{w}}\,.
\end{align*}

It remains to bound the contribution of the $S_a^{-1} - \Sigma^{-1}$ term, which brings with it the added difficulty that it may depend on the samples in $\mcC_a$.
To bound the effect of this term, we split the contributions to $\zeta$ once more
\begin{equation}
\label{eq:off_diagonal_easy_hard}
    \begin{aligned}
        \textup{Hard Term}_{a, b} &\defeq \zeta_a ^\intercal \Paren{S_a^{-1} - \Sigma^{-1}} C_b w =\\
        &= \Paren{\zeta_{a0} + \zeta_{a1}} ^\intercal \Paren{S_a^{-1} - \Sigma^{-1}} C_b w = \\
        &= \underbrace{\zeta_{a0}^\intercal \Paren{S_{a}^{-1} - S_{a0}^{-1}} C_b w + \zeta_{a1}^\intercal \Paren{S_{a}^{-1} - S_{a1}^{-1}} C_b w}_{\textup{Easy Terms}} +\\
        &\;\;\;\;\;\;\;\;\;\;\;\; + \underbrace{\zeta_{a0}^\intercal \Paren{ S_{a0}^{-1} - \Sigma^{-1}} C_b w + \zeta_{a1}^\intercal \Paren{S_{a1}^{-1} - \Sigma^{-1}} C_b w}_{\textup{Hard Terms}} = \\
        &= \textup{Easy Term}_{a0, b} + \textup{Easy Term}_{a1, b} + \textup{Hard Term}_{a0, b} + \textup{Hard Term}_{a1, b}
    \end{aligned}
\end{equation}

We split the right hand side of equation~\eqref{eq:off_diagonal_easy_hard} into ``easy'' terms which can be dealt with using the same logic above and hard terms which can again be split into smaller easy and hard terms.
Applying equation~\eqref{eq:off_diagonal_easy_hard} recursively, we have that:

\begin{equation}
\label{eq:off_diagonal_easy_hard_recursive}
    \begin{aligned}
        \textup{Hard Term}_{a, b} &= \textup{Easy Term}_{a, 0, b} + \textup{Easy Term}_{a, 1, b} + \textup{Hard Term}_{a0, b} + \textup{Hard Term}_{a1, b}
        = \\
        &\;\;\;\;\;\vdots\\
        &= \sum_{i \in \mcC_a} \textup{Hard Term}_{i, b} + \sum_{t = 1}^{\log\Paren{\abs{\mcC_a}} + 1} \sum_{\substack{a^\prime \in \set{0, 1}^{t-1} \\ z \in \set{0, 1}}} \textup{Easy Term}_{a \mid a^\prime, z, b}
    \end{aligned}
\end{equation}

To bound the easy terms, we note that as in the analysis above, the samples included in their $\zeta$ are disjoint from the samples included in their other factors.
Therefore, with very high probability
\[
\abs{\textup{Easy Term}_{a, z, b}} \defeq \abs{\zeta_{az}^\intercal \Paren{S_{a}^{-1} - S_{az}^{-1}} C_b w} = \wt{O}\Paren{\sqrt{\abs{\mcC_{az}} } \times \Norm{\Paren{S_{a}^{-1} - S_{az}^{-1}} C_b w}}
\]

Recall that we showed above that with very high probability
\[
\Norm{C_b w} = \wt{O}\Paren{\Paren{\abs{\mcC_b} + \sqrt{\abs{\mcC_b}d}} \times \Norm{w}}\,.
\]

From here, we set $A = S_{az}$ and $B = C_{a\ol{z}} = S_{az} - S_a$, and recall the identity
\[
\Paren{A - B}^{-1} - A^{-1} = A^{-1} B \Paren{A - B}^{-1}
\]
to obtain the equation
\[
S_a^{-1} - S_{az}^{-1} = S_{az}^{-1} C_{a\ol{z}} S_a^{-1}\,.
\]

From here, it might be tempting to simply bound the norm of this product, but that would result in too loose of a bound.
Instead, we perform the somewhat finer relaxation
\[
\Norm{\Paren{S_{a}^{-1} - S_{az}^{-1}} C_b w} = \Norm{S_{az}^{-1} C_{a\ol{z}} S_a^{-1} C_b w} \leq  \Norm{S_{az}^{-1}} \times \Norm{C_{a\ol{z}} S_a^{-1} C_b w}\,.
\]

The advantage of this finer analysis is that we can now use Lemma~\ref{lem:Norm_C_w} again, but this time on the matrix $C_{a\ol{z}}$.
Indeed, $S_a$ depends only on the samples outside $\mcC_a \supseteq \mcC_{a \ol{z}}$, $C_b$ depends only on the samples in $\mcC_b$ which is disjoint from $\mcC_a$, and $S$ and $w$ do not depend on the first $k$ samples.
Therefore, $C_{a\ol{z}}$ is independent of them all, so applying Lemma~\ref{lem:Norm_C_w}, we have that with very high probability
\begin{equation*}
    \begin{aligned}
        \Norm{C_{a\ol{z}} S_a^{-1} C_b w} &= \wt{O}\Paren{\abs{\mcC_{a\ol{z}}} + \sqrt{\abs{\mcC_{a\ol{z}}} d}} \times \Norm{S_a^{-1} C_b w} \leq \wt{O}\Paren{\abs{\mcC_{a\ol{z}}} + \sqrt{\abs{\mcC_{a\ol{z}}} d}} \times \Norm{S_a^{-1}} \Norm{C_b w} =\\
        &= \wt{O}\Paren{\frac{\abs{\mcC_{a\ol{z}}} + \sqrt{\abs{\mcC_{a\ol{z}}} d}}{n} \times \Paren{\abs{\mcC_b} + \sqrt{\abs{\mcC_b} d}} \times \Norm{w}}
    \end{aligned}
\end{equation*}

Altogether, we have that with very high probability
\[
\abs{\textup{Easy Term}_{a, z, b}} = \wt{O}\Paren{\sqrt{\abs{\mcC_a}} \times \frac{\abs{\mcC_a} + \sqrt{\abs{\mcC_a} d}}{n^2} \times \Paren{\abs{\mcC_b} + \sqrt{\abs{\mcC_b} d}} \times \Norm{w}}\,.
\]

Therefore, for $a^\prime$ of length $t-1$, we have $\abs{\mcC_{a \mid a^\prime}} \approx 2^{-t} \abs{\mcC_a}$, so with very high probability
\begin{align*}
    \abs{\textup{Easy Term}_{a\mid a^\prime, z, b}} &= \wt{O}\Paren{\sqrt{\abs{\mcC_{a\mid a^\prime}}} \times \frac{\abs{\mcC_{a\mid a^\prime}} + \sqrt{\abs{\mcC_{a\mid a^\prime}}d}}{n^2} \times \Paren{\abs{\mcC_b} + \sqrt{\abs{\mcC_b}d}} \times \Norm{w}} = \\
    &= \wt{O}\Paren{2^{-t} \sqrt{\abs{\mcC_a}} \times \frac{2^{-t/2}\abs{\mcC_{a}} + \sqrt{\abs{\mcC_{a}}d}}{n^2} \times \Paren{\abs{\mcC_b} + \sqrt{\abs{\mcC_b}d}} \times \Norm{w}}
\end{align*}

Therefore, summing over all of the easy terms and applying the triangle inequality, we have that with very high probability
\begin{align*}
    \abs{\sum_{t = 1}^{\log\Paren{\abs{\mcC_a}} + 1} \sum_{\substack{a^\prime \in \set{0, 1}^{t-1} \\ z \in \set{0, 1}}} \textup{Easy Term}_{a \mid a^\prime, z, b}} &\leq \sum_{t = 1}^{\log\Paren{\abs{\mcC_a}} + 1} \sum_{\substack{a^\prime \in \set{0, 1}^{t-1} \\ z \in \set{0, 1}}} \abs{\textup{Easy Term}_{a \mid a^\prime, z, b}} =\\
    &=\wt{O}\Paren{ \sqrt{\abs{\mcC_{a}}} \frac{\abs{\mcC_{a}}  + \sqrt{d\abs{\mcC_{a}}}}{n^2} \times \Paren{\abs{\mcC_b} + \sqrt{\abs{\mcC_b}d}} \times \Norm{w}}=\\
    &=\wt{O}\Paren{\frac{\sqrt{\abs{\mcC_{a}}}}{n} \times \Paren{\abs{\mcC_b} + \sqrt{\abs{\mcC_b}d}} \times \Norm{w}}
\end{align*}

It remains to bound the single-sample hard terms.
That is, we want to bound
\[
\abs{X_i^\intercal \Paren{\Sigma_{[n] \setminus \set{i}}^{-1} - \Sigma^{-1}} C_b w}\,.
\]

Using the same matrix identity as in the previous terms, we have that
\[
\Sigma_{[n] \setminus \set{i}}^{-1} - \Sigma^{-1} = \Sigma^{-1} X_i X_i^\intercal \Sigma_{[n] \setminus \set{i}}^{-1}
\]

Therefore, with very high probability, we have
\begin{equation}
    \begin{aligned}
        \abs{X_i^\intercal \Paren{\Sigma_{[n] \setminus \set{i}}^{-1} - \Sigma^{-1}} C_b w} &\leq \Norm{X_i} \Norm{\Paren{\Sigma_{[n] \setminus \set{i}}^{-1} - \Sigma^{-1}} C_b w} =\\
        &= \Norm{X_i} \Norm{\Sigma^{-1} X_i X_i^\intercal \Sigma_{[n] \setminus \set{i}}^{-1} C_b w} \leq \\
        &\leq \Norm{X_i} \Norm{\Sigma^{-1}} \Norm{X_i} \abs{X_i^\intercal \Sigma_{[n] \setminus \set{i}}^{-1} C_b w} \leq\\
        &\leq \Norm{X_i} \Norm{\Sigma^{-1}} \Norm{X_i} \times \wt{O}\Paren{\Norm{\Sigma_{[n] \setminus \set{i}}^{-1} C_b w}} =\\
        &= \wt{O}\Paren{\Norm{X_i}^2 \Norm{\Sigma^{-1}} \Norm{\Sigma_{[n] \setminus \set{i}}^{-1}} \Norm{C_b w}} = \wt{O}\Paren{\frac{d \Paren{\abs{\mcC_b} + \sqrt{\abs{\mcC_b} d}}}{n^2} \Norm{w}}
    \end{aligned}
\end{equation}

Therefore, from the triangle inequality, with very high probability
\[
\abs{\sum_{i \in \mcC_a} \textup{Hard Term}_{i, b}} \leq \sum_{i \in \mcC_a} \abs{\textup{Hard Term}_{i, b}} = \wt{O}\Paren{\abs{\mcC_a} \frac{d \Paren{\abs{\mcC_b + \sqrt{\mcC_b d}}}}{n^2} \Norm{w}}\,,
\]
completing our proof of Claim~\ref{clm:off_diagonal}.
\end{proof}

\subsection{Proof of Theorem~\ref{thm:tightness_ohare}}

Recall that the~\OHARE algorithm works by computing each of the MSN style bounds produced by the~\ACRE algorithm and then adds a correction term to each one, where these correction terms correspond to the indirect removal effects due to the change in the reaveraging step.

In the previous subsection, we proved Claim~\ref{clm:well_behaved_dists_ohare} which shows that with very high probability the ACRE components of the OHARE algorithm will produce good bounds on well-behaved regressions with categorical features.
In order to conclude Theorem~\ref{thm:tightness_ohare}, we would also need to bound the higher order corrections that OHARE takes into account.

Finally, note that for the case of an unweighted one-hot encoding, we have $u_{i, j} = 1_{i \in B_j}$ (i.e., the columns corresponding to the dummy variables are indicators of their respective buckets).

\subsubsection{Indirect Contributions to the First Order Term}

Analysing the indirect contributions to the first order term in OHARE will require significantly more care than our analysis of the higher order terms.
This is because the first order term is the dominant one to begin with and the indirect contributions to it are smaller than the main effect by only a $\sqrt{\log(n)}$ factor, forcing us to track $\polylog(n)$ factors much more carefully.

Recall from Section~\ref{subsec:ohare_first_order}, that the OHARE algorithm bounds the first order effect of removals on the regression result from above/below by
\begin{equation*}
    \begin{aligned}
        \text{bound}^{\pm}_j(k_j) = d_j(k_j) + \frac{c^{\pm}_j(k_j)}{n_j - k_j}
    \end{aligned}
\end{equation*}
where $d_j$ represents AMIP gradients on the reaveraged samples $\wt{X}_i$:
\begin{equation*}
    \begin{aligned}
        &d_j(k_j) = \max_{\substack{T_j \subseteq B_j \\ \abs{T_j} = k_j}} \sum_{i \in T_j} R_i e^\intercal\wt{\Sigma}^{-1}\wt{X}_i\,,
    \end{aligned}
\end{equation*}
and $c^\pm_j(k_j)$ represents the effect that reaveraging after the removals can have on the AMIP gradients of the retained samples:
\begin{equation*}
    \begin{aligned}
        &c^+_j(k_j) = \max\set{\left(\max_{\substack{T_j \subseteq B_j \\ \abs{T_j} = k_j}} \sum_{i \in T_j} R_i\right) \left( \max_{\substack{T_j \subseteq B_j \\ \abs{T_j} = k_j}} \sum_{i \in T_j} e^\intercal\wt{\Sigma}^{-1}\wt{X}_i  \right), \left(\min_{\substack{T_j \subseteq B_j \\ \abs{T_j} = k_j}} \sum_{i \in T_j} R_i \right) \left( \min_{\substack{T_j \subseteq B_j \\ \abs{T_j} = k_j}} \sum_{i \in T_j} e^\intercal\wt{\Sigma}^{-1}\wt{X}_i  \right) }\\
        &c^-_j(k_j) = \max\set{\left(\min_{\substack{T_j \subseteq B_j \\ \abs{T_j} = k_j}} \sum_{i \in T_j} R_i \right) \left( \max_{\substack{T_j \subseteq B_j \\ \abs{T_j} = k_j}} \sum_{i \in T_j} e^\intercal\wt{\Sigma}^{-1}\wt{X}_i  \right), \left(\max_{\substack{T_j \subseteq B_j \\ \abs{T_j} = k_j}} \sum_{i \in T_j} R_i \right) \left( \min_{\substack{T_j \subseteq B_j \\ \abs{T_j} = k_j}} \sum_{i \in T_j} e^\intercal\wt{\Sigma}^{-1}\wt{X}_i  \right) }
    \end{aligned}
\end{equation*}

For $k_j = n_j$, there is no reaveraging effect and $\text{bound}^{\pm}_j(n_j) = d_j(n_j)$.

Our goal will be to show that $d_j(k_j)$, which is the contribution of the AMIP gradients to this first order effect, is the dominant effect.
In particular, the main claim we will prove in this subsection is
\begin{claim}
\label{clm:first_order}
Let $\kthresh = \wt{\Omega}\Paren{n^{1 - \varepsilon}}$ be as promised by Claim~\ref{clm:influence_scores_ohare}.

Then, with very high probability
\[
\forall k \leq \kthresh, \;\;\;\; \max_{k_1 + \cdots + k_m = k}\set{ \sum_{j \in [m]} \textup{bound}_j^\pm(k_j)} = \Paren{1 \pm O\Paren{\frac{1}{\sqrt{\log(n)}}}} \textup{AMIP}(k)\,,
\]
where
\[
\textup{AMIP}(k) = \max_{T \in \binom{[n]}{k}} \set{\sum_{i \in T} \alpha_i}
\]
is the sum over the $k$ largest AMIP influence scores 
\[
\alpha_i = e^\intercal \wt{\Sigma}^{-1} \wt{X}_i R_i\,.
\]
\end{claim}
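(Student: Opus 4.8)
The plan is to split the knapsack objective into its ``direct'' part $\sum_j d_j(k_j)$ and its ``reaveraging correction'' $\sum_j c_j^\pm(k_j)/(n_j-k_j)$, and to show the latter is negligible on the relevant scale. The direct part is combinatorial: for \emph{any} partition $k_1+\cdots+k_m=k$, the union of the top-$k_j$ influence scores of each bucket $B_j$ is a size-$k$ subset of $[n]$, so $\sum_j d_j(k_j)\le\textup{AMIP}(k)$, with equality at the partition $k^*_j=\abs{T^*\cap B_j}$ where $T^*$ carries the $k$ globally largest scores. Since each $c_j^+(k_j)\ge 0$ and each $c_j^-(k_j)\le 0$ (being a maximum of products of two equal-sign quantities), this gives $\textup{AMIP}(k)\le\max_{k_1+\cdots=k}\sum_j\textup{bound}^+_j(k_j)\le\textup{AMIP}(k)+\max_{k_1+\cdots=k}\sum_j\frac{c_j^+(k_j)}{n_j-k_j}$ and $\max_{k_1+\cdots=k}\sum_j\textup{bound}^-_j(k_j)\in\bigl[\textup{AMIP}(k)-\sum_j\frac{\abs{c_j^-(k^*_j)}}{n_j-k^*_j},\ \textup{AMIP}(k)\bigr]$. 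So it suffices to prove that, for every partition,
\begin{equation}
\label{eq:correction_goal}
\sum_j\frac{\abs{c_j^\pm(k_j)}}{n_j-k_j}=\wt{O}(k/n)
\end{equation}
with only $\textup{polyloglog}(n)$ factors hidden; combined with the lower bound $\textup{AMIP}(k)\ge\Omega(k\sqrt{\log n}/n)$ from Claim~\ref{clm:influence_scores_ohare} (valid for all $k\le\kthresh$), \eqref{eq:correction_goal} yields the ratio $1+O(\textup{polyloglog}(n)/\sqrt{\log n})$.

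To prove~\eqref{eq:correction_goal}, bound $\abs{c_j^\pm(k_j)}\le\tau_j(k_j)\,\psi_j(k_j)$ where $\tau_j(k_j)=\max_{T\subseteq B_j,\abs{T}=k_j}\abs{\sum_{i\in T}R_i}$ and $\psi_j(k_j)=\max_{T\subseteq B_j,\abs{T}=k_j}\abs{\sum_{i\in T}e^\intercal\wt{\Sigma}^{-1}\wt{X}_i}$. The key structural fact is that reaveraging forces $\sum_{i\in B_j}R_i=0$ and $\sum_{i\in B_j}e^\intercal\wt{\Sigma}^{-1}\wt{X}_i=0$, so by complementation $\tau_j$ and $\psi_j$ depend only on $\ell_j:=\min(k_j,n_j-k_j)$. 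Using Claim~\ref{clm:bounded_residuals_ohare} ($\Norm{R-\gt{R}}_\infty=o(1)$, with $\gt{R}$ having i.i.d.\ $\mcN(0,1)$ entries), Claim~\ref{clm:e_Sigma_Xi_ohare} ($e^\intercal\wt{\Sigma}^{-1}\wt{X}_i=\tfrac1n e^\intercal X_i(1\pm o(1))\pm o(1/n)$) and the tail bound of Definition~\ref{def:very_well_behaved_dist}, a ``sum of the top $\ell$ order statistics'' concentration estimate (proved WHP, uniformly in $\ell$ and $j$, by a union bound over subsets of each given size) gives $\tau_j(k_j)=\wt{O}\bigl(\ell_j\sqrt{\log(en_j/\ell_j)}\bigr)$ and $\psi_j(k_j)=\wt{O}\bigl(\tfrac1n\ell_j(\log(en_j/\ell_j))^{1/C}\bigr)$, hence $\abs{c_j^\pm(k_j)}=\wt{O}\bigl(\tfrac1n\ell_j^{\,2}(\log(en_j/\ell_j))^{q}\bigr)$ for the fixed constant $q=\tfrac12+\tfrac1C$.

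The algebra then closes as follows. For a fixed constant $p$, the function $x\mapsto x(\log(en_j/x))^{p}$ on $(0,n_j]$ is maximized at an $O(1)$-fraction of $n_j$ with maximum value $O_p(n_j)$, so $\ell_j(\log(en_j/\ell_j))^{p}=O(n_j)$ for all $\ell_j\le n_j$. Split the buckets into those with $k_j\le n_j/2$ (so $\ell_j=k_j$ and $n_j-k_j\ge n_j/2$) and those with $k_j>n_j/2$ (so $\ell_j=n_j-k_j$ and $n_j<2k_j$). In the first group $\tfrac{\abs{c_j^\pm(k_j)}}{n_j-k_j}\le\wt{O}\bigl(\tfrac{k_j^{2}(\log(en_j/k_j))^{q}}{n\,n_j}\bigr)=\wt{O}(k_j/n)$ using $k_j(\log(en_j/k_j))^{q}=O(n_j)$; in the second, $\tfrac{\abs{c_j^\pm(k_j)}}{n_j-k_j}=\tfrac{\abs{c_j^\pm(k_j)}}{\ell_j}\le\wt{O}\bigl(\tfrac{\ell_j(\log(en_j/\ell_j))^{q}}{n}\bigr)=\wt{O}(n_j/n)=\wt{O}(k_j/n)$. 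Summing and using $\sum_j k_j=k$ gives~\eqref{eq:correction_goal}. Finally, the remaining error terms -- the indirect higher-order corrections (indirect CS/XR/XZ) that \OHARE\ adds on top of $\textup{Influences}$, and the \ACRE-style corrections that survive reaveraging (inherited via Claim~\ref{clm:well_behaved_dists_ohare} and the \ACRE\ analysis) -- are $\wt{O}\bigl((d+k\sqrt d)/n\bigr)$ relative to $\textup{AMIP}(k)$, hence lower order than the first-order correction just bounded, and do not affect the leading term of $U_k/L_k$.

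The main obstacle is precisely this $\textup{polyloglog}$-versus-$\textup{polylog}$ bookkeeping: the naive bound $\tau_j(k_j)\le\ell_j\cdot\wt{O}(\sqrt{\log n})$ would leave a spurious $\sqrt{\log n}$ in~\eqref{eq:correction_goal} and degrade the relative error to $\wt{O}(1)$ instead of $O(1/\sqrt{\log n})$. Getting the right answer forces the use of the \emph{refined} order-statistic bound, whose logarithm has argument $en_j/\ell_j$ and so collapses to $O(1)$ exactly in the regime ($\ell_j$ a constant fraction of $n_j$, i.e.\ $n_j-k_j$ small) where the denominator in~\eqref{eq:correction_goal} is smallest; this cancellation, together with the matching anticoncentration-based lower bound $\textup{AMIP}(k)=\Omega(k\sqrt{\log n}/n)$ of Claim~\ref{clm:influence_scores_ohare}, is what pins $U_k/L_k$ at $1+\Theta(1/\sqrt{\log n})$ and is what the paper means by the first-order analysis ``carefully exploiting Gaussian anticoncentration''.
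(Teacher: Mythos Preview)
Your strategy is correct and takes a genuinely different route from the paper. You control the correction $\sum_j c_j^\pm(k_j)/(n_j-k_j)$ \emph{uniformly} over all partitions via a refined order-statistic estimate, the zero-sum symmetry $\tau_j(k_j)=\tau_j(n_j-k_j)$, and the calculus bound $\ell(\log(en_j/\ell))^q=O_q(n_j)$. The paper instead does a case split: it fixes $\nu=\polylog(n)$ large enough that for buckets with $k_j\le n_j/\nu$ the crude bound $|c_j^\pm|\le k_j^2\max_i|R_i|\max_i|Z_i|$ already yields $\eta_j(k_j)=O(1/n)$ after the $\nu$ absorbs the polylog, and for buckets with $k_j>n_j/\nu$ it does not bound $c_j^\pm$ at all but instead uses the gap $\textup{AMIP}(k)-\textup{AMIP}(k')\ge\Omega(\sqrt{\log n}\,(k-k')/n)$ from Claim~\ref{clm:influence_scores_ohare} to show such partitions are dominated. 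Your argument is more uniform and conceptually cleaner; the paper's buys a slightly sharper constant.

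That sharper constant is where you fall short of the claim as stated. To hold with very high probability (failure $n^{-\omega(1)}$) uniformly in $j,\ell_j$, the multiplicative form $\tau_j(\ell_j)\le C\,\ell_j\sqrt{\log(en_j/\ell_j)}$ requires $C\to\infty$ (at $\ell_j=1$ the exponent is only $\Theta(\log n_j)$, so any fixed $C$ leaves failure probability $n_j^{-\Theta(1)}$, not negligible). Carrying $C=\textup{polyloglog}(n)$ through your calculus gives $\sum_j|c_j^\pm|/(n_j-k_j)=O(C^2 k/n)$, hence only $1+O(\textup{polyloglog}(n)/\sqrt{\log n})$ rather than the clean $1+O(1/\sqrt{\log n})$. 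The fix is either to graft the paper's AMIP-gap trick onto your argument for the regime $k_j=\Theta(n_j)$, or to replace the multiplicative concentration by the additive form $\tau_j(\ell_j)\le\ell_j\sqrt{2\log(en_j/\ell_j)}+A\sqrt{\ell_j}$ with $A=(\log n)^{1/2+\delta}$; the extra additive terms are then killed by $n_j\ge n^\varepsilon$, and your calculus recovers the clean $O(k/n)$.

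Two small remarks. First, ``union bound over subsets of each given size'' should read ``over sizes'': the maximal subset sum is determined by the order statistics, so only $O(n)$ events need to be controlled. Second, your final paragraph on the indirect higher-order \OHARE\ terms is outside the scope of Claim~\ref{clm:first_order}, which concerns only the first-order (\texttt{Influences}) component; those bounds belong to the proof of Theorem~\ref{thm:tightness_ohare} proper.
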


\begin{proof}[Proof of Claim~\ref{clm:first_order}]

We expect $d_j(k_j)$ to grow roughly linearly with $k_j$, which motivates us to focus on the expression
\[
\textup{Scaled Indirect Effect} = \eta_j(k_j) \defeq \frac{\max\set{\abs{c^+_j(k_j)}, \abs{c^-_j(k_j)}}}{k_j (n_j - k_j)}\,.
\]

We will bound these $\eta_j$ in the following lemma:

\begin{claim}
    \label{clm:first_order_ind}
    There exists some $\nu = \polylog(n)$ such that for all $k_j \leq \frac{n_j}{\nu}$, with very high probability
    \[
    \eta_j(k_j) = O\Paren{\frac{1}{n}}\,.
    \]

    Moreover, for any $\nu^\prime = \polylog(n)$ there exists some threshold $\tau = \textup{polyloglog}(n)$, such that with very high probability,
    \begin{align*}
        &\abs{\left\{ i \in B_j \mid \abs{R_i} > \tau \right\}} < \frac{n_j}{\nu^\prime}\\
        &\abs{\left\{ i \in B_j \mid \abs{e^\intercal \wt{\Sigma}^{-1} \wt{X}_i} > \frac{\tau}{n} \right\}} < \frac{n_j}{\nu^\prime}\\
    \end{align*}

    In particular, for all $k_j \geq \frac{n_j}{\nu}$, with very high probability
    \begin{align*}
        &\max_{T_j \in \binom{B_j}{k_j}}\set{\sum_{i \in T_j} \alpha_i} = O\Paren{\frac{\textup{polyloglog}(n)}{n} \times k_j}\\
        &\eta_j(k_j) = O\Paren{\frac{\textup{polyloglog}(n)}{n}}\,,
    \end{align*}
    where $\alpha_i = e^\intercal \wt{\Sigma}^{-1} \wt{X}_i R_i$ are the AMIP influence scores.
\end{claim}

The first part of Claim~\ref{clm:first_order_ind} promises that the $\eta_j$ components to the bound are very small in any bucket for which there are not too many removals $k_j < \frac{n_j}{\polylog(n)}$, while the second portion of the claim will help us bound the total contribution of buckets from which more than $\frac{n_j}{\polylog(n)}$ have been removed.

Let $T = \argmax_{T \in \binom{[n]}{k}} \set{\sum_{i \in T} \alpha_i}$ denote the set of $k$ samples with largest AMIP influences, and let $\kappa_j = \abs{B_j \cap T}$ denote the distribution of these samples across the buckets.
By definition, we have
\[
\sum_{i \in T} \alpha_i = \textup{AMIP}(k)\,.
\]

From Claim~\ref{clm:influence_scores_ohare}, we know that with very high probability all the samples in $T$ must have influence at least $\min_{i\in T}\set{\alpha_i} = \Omega\Paren{\frac{\sqrt{\log(n)}}{n}}$.
But from the second part of Claim~\ref{clm:first_order_ind}, we know that with very high probability, for all $j$, the $j$th bucket cannot have more than $\frac{n_j}{\nu}$ such samples, so with very high probability $\kappa_j < \frac{n_j}{\nu}$.

Consider our maximization problem
\[
\textup{MaxScore} = \max_{k_1 + \cdots + k_m = k} \set{\sum_{j \in [m]} \textup{bound}_j^\pm (k_j)}\,.
\]

This maximization is lower bounded by every valid assignment to $k_1, \ldots, k_m$, so in particular it is lower bounded by the score of $\kappa_1, \ldots, \kappa_m$.
Utilising the first part of Claim~\ref{clm:first_order_ind} to bound $\eta_j(\kappa_j)$, we have
\[
\max_{k_1 + \cdots + k_m = k} \set{\sum_{j \in [m]} \textup{bound}_j^\pm (k_j)} \geq \sum_{j \in [m]} \textup{bound}_j^\pm (\kappa_j) = \sum_{i \in T} \alpha_i - \sum_{j \in [m]} \kappa_j \eta_j(\kappa_j) \in \textup{AMIP}(k) - O\Paren{\frac{k}{n}}\,.
\]

Now, consider any other assignment $k_1, \ldots, k_m$.
If we still have $k_j \leq \frac{n_j}{\nu}$ for all $j$, then $\sum_{j \in [m]} \textup{bound}_j^\pm (k_j)$ will still be bounded by $\textup{AMIP}(k) \pm O\Paren{\frac{k}{n}}$ following the same logic as above.

Otherwise, let ${j_1}, \ldots, {j_\ell}$ denote the set of buckets for whick $k_{j_i} > \frac{n_{j_i}}{\nu}$, and define $k^\prime \defeq k - k_{j_1} - \cdots - k_{j_\ell}$.
From the same analysis as above, we have
\[
\sum_{j \in [m]} \textup{bound}_j^\pm (k_j) \leq \textup{AMIP}(k^\prime) + \sum_{i \in [\ell]} \set{\textup{bound}_{j_i}^\pm (k_{j_i})} + O\Paren{\frac{k^\prime}{n}}\,.
\]

Next, note that we know that with very high probability
\[
\textup{AMIP}(k) \geq \textup{AMIP}(k^\prime) + \Omega\Paren{\frac{\sqrt{\log(n)}\Paren{k - k^\prime}}{n}}\,,
\]
because Claim~\ref{clm:influence_scores_ohare} tells us that with very high probability, there are at least $\kthresh \geq k$ samples, each of which has a sufficiently large contribution to the AMIP score.

Therefore, using the last part of Claim~\ref{clm:first_order_ind}, which bounds with very high probability every term in the bounds of buckets with more than $\frac{n_j}{\nu}$ removals, we have
\begin{align*}
    \sum_{j \in [m]} \textup{bound}_j^\pm (k_j) &\leq \textup{AMIP}(k^\prime) + \sum_{i \in [\ell]} \set{\textup{bound}_{j_i}^\pm (k_{j_i})} + O\Paren{\frac{k^\prime}{n}} \leq \\
    &\leq \textup{AMIP}(k) + \underbrace{O\Paren{\frac{\textup{polyloglog}(n) \times \Paren{k - k^\prime}}{n}} - \Omega\Paren{\frac{\sqrt{\log(n)}\Paren{k - k^\prime}}{n}}}_{\leq 0} \pm O\Paren{\frac{k^\prime}{n}} \leq \\
    &\leq\textup{AMIP}(k) + O\Paren{\frac{k}{n}}\,.
\end{align*}

Altogether, we have bounded our maximization target from above and from below by $\textup{AMIP}(k) + O\Paren{\frac{k}{n}} = \Paren{1 \pm O\Paren{\frac{1}{\sqrt{\log(n)}}}} \textup{AMIP}(k)$, completing our proof.

\end{proof}

\begin{proof}[Proof of Claim~\ref{clm:first_order_ind}]

If $k_j < \frac{n_j}{\nu} \ll n_j$, then we use the bound
\[
\abs{c^\pm_j(k_j)} < \frac{k_j^2}{n_j - k_j} \max_{i \in [n]}\set{\abs{R_i}} \max_{i \in [n]}\set{\abs{e^\intercal \wt{\Sigma}^{-1} \wt{X}_i}} = \otilde{\frac{k_j}{\nu n}} = o\Paren{\frac{k_j}{n}}\,.
\]

Next, note that due to Claim~\ref{clm:bounded_residuals_ohare} (which states that with very high probability the empirical residuals $R_i$ is close to the ground truth residual $\Paren{\gt{R}}_i$), we have that with very high probability for all $i$:
\[
\abs{R_i - \Paren{\gt{R}}_i} = o(1) < 1\,.
\]
From our assumption that the ground truth residuals were normally distributed, we have that
\[
\Prob{\Paren{\gt{R}}_i \sim \mcN(0, 1)}{\abs{\Paren{\gt{R}}_i} > \tau - 1} < \frac{1}{2 \nu^\prime}\,.
\]

Combining the two, along with the Hoeffding bound which promises us that with very high probability
\[
\abs{\left\{ i \in B_j \mid \abs{{R}_i} > \tau - 1 \right\}} \leq n_j \Prob{\Paren{\gt{R}}_i \sim \mcN(0, 1)}{\abs{\Paren{\gt{R}}_i} > \tau - 1} \pm \otilde{\sqrt{n_j}}\,,
\]
we have that with very high probability
\[
\abs{\left\{ i \in B_j \mid \abs{{R}_i} > \tau \right\}} < \abs{\left\{ i \in B_j \mid \abs{\wt{R}_i} > \tau - 1 \right\}} = \frac{n_j}{\nu^\prime}\,.
\]

We bound $\abs{\left\{ i \in B_j \mid \abs{e^\intercal \wt{\Sigma}^{-1} \wt{X}_i} > \frac{\tau}{n} \right\}}$ in much the same manner, by utilizing Claim~\ref{clm:e_Sigma_Xi_ohare} which states that with very high probability $\abs{e^\intercal \wt{\Sigma}^{-1} \wt{X}_i - \frac{e^\intercal X_i}{n}} = o\Paren{\frac{1}{n}}$, and our assumption that the $X_i$ are very well behaved, which shows that there can't be too many samples in a given bucket for which $\abs{e^\intercal X_i} > \textup{polyloglog}(n)$.
    
\end{proof}

\subsubsection{Indirect Contributions to the Higher Order Terms}

\paragraph{Indirect Contributions to the Covariance Shift Term}

Recall from Section~\ref{subsec:ohare_higher_order} that we bound the covariance shift as
\[
\max\set{\lambda\Paren{\hat{\Sigma}_S^{-1}}} \leq \frac{1}{1 - \max\set{\lambda\Paren{\wt{\Sigma}_T}} - \max_{k_1 + \cdots + k_m = k} \set{\sum_{j \in [m]} \frac{1}{n_j - k_j} \Norm{\sum_{i \in T\cap B_j} \wt{\Sigma}^{-1/2} \wt{X}_i}}}\,.
\]

Each component in the denominator is bounded separately by running an MSN-bounding algorithm.
The first MSN is run on the Gram matrix $G_{X\otimes X}$ whose $i,j$ entry is:
\[
\Paren{\wt{X}_i^\intercal \wt{\Sigma}^{-1} \wt{X}_j}^2\,.
\]

From Claim~\ref{clm:well_behaved_dists_ohare}, the regression $\wt{X}, \wt{Y}$ is well-behaved, allowing us to use the same analysis as in Claim~\ref{clm:acre_good} to bound the output of this MSN by 
\[
M_{X\otimes X}(k) = \otilde{\sqrt{k \frac{d^2}{n^2} + k^2 \frac{d}{n^2}}}\,.
\]

Similarly, we define $M_j$ to be the MSN bound achieved by RTI on the Gram matrix
\[
G_j[i_1, i_2] = \wt{X}_{i_1}^\intercal \wt{\Sigma}^{-1} \wt{X}_{i_2}\,.
\]
Claims~\ref{clm:bounded_norms_ohare} and~\ref{clm:inner_products_ohare} promise us that with very high probability the largest diagonal entry of this Gram matrix is at most $\otilde{\frac{d}{n}}$ and its largest off-diagonal entry is at most $\otilde{\frac{\sqrt{d}}{n}}$.
Therefore, the resulting MSN bounds are at most
\[
M_j(k_j) = \otilde{\sqrt{\frac{k_j d + k_j^2 \sqrt{d}}{n}}}\,.
\]

Recall that for our actual OHARE bound, we also utilize the symmetry that allows us to replace $M_j(k_j)$ with $\ol{M}_j(k_j) = \min\set{M_j(k_j), M_j(n_j - k_j)}$.

For all $k_j \leq \frac{n_j}{2}$, we have
\[
\frac{\ol{M}_j(k_j)^2}{(n_j - k_j) k_j} \leq \frac{M_j(k_j)^2}{(n_j - k_j) k_j} = \otilde{\frac{d + k_j \sqrt{d}}{n (n_j - k_j)}} = \otilde{\frac{\sqrt{d}}{n}}\,,
\]
and for all $k_j > \frac{n_j}{2}$, we have
\[
\frac{\ol{M}_j(k_j)^2}{(n_j - k_j) k_j} \leq \frac{M_j(n_j - k_j)^2}{(n_j - k_j) k_j}= \otilde{\frac{\sqrt{d}}{n}}\,.
\]

Therefore, for all $k < \kthresh$
\[
\max_{k_1 + \cdots + k_m = k}\set{\sum_{j \in [m]} \frac{\ol{M}_j(k_j)^2}{n_j - k_j}} = \otilde{\frac{\sqrt{d} k}{n}} = o\Paren{M_{X\otimes X}(k)^2}= o\Paren{1}\,.
\]

Therefore, in this regime the covariance shift term does not contribute a factor of more than $3$.

\paragraph{Indirect Contributions to the \texorpdfstring{$XR$}{XR} Term}

Recall from Section~\ref{subsec:ohare_higher_order} that the residual contributions to the indirect XR term are bounded by
\[
\rho_j(k_j) \defeq \min\set{\max_{T_j \in \binom{B_j}{k_j}}\set{\sum_{i\in T_j} \abs{R_i}}, \max_{T_j \in \binom{B_j}{k_j}}\set{\sum_{i\in B_j \setminus T_j} \abs{R_i}}}\,,
\]
where $R_i$ are the empirical residuals.

With an analysis similar to the one above, we can utilize Claim~\ref{clm:bounded_residuals_ohare} which states that with very high probability $\max_{i \in [n]} \set{\abs{R_i}} = \otilde{1}$ to show that with very high probability
\[
\frac{\rho_j(k_j)}{\sqrt{k_j (n_j - k_j)}} = \otilde{1}\,.
\]

Combining this with our bound on $\frac{\ol{M}_j(k_j)}{\sqrt{k_j (n_j - k_j)}}$, we have
\[
\max_{k_1 + \cdots + k_m = k}\set{ \sum_{j \in [m]} \frac{\ol{M}_j(k_j) \rho_j(k_j)}{n_j - k_j}} = \otilde{\sqrt{\frac{\sqrt{d}}{n}} k}\,.
\]

Note that this is also smaller than the bound we proved for the direct contribution to the XR term in Section~\ref{subsec:acre_good},
\[
M_{XR} = \otilde{\sqrt{\frac{kd}{n}} + \sqrt{\frac{\sqrt{d}}{n}} k}\,,
\]
and which we can apply after reaveraging due to Claim~\ref{clm:well_behaved_dists_ohare} which states that $\wt{X}, \wt{Y}$ is an ACRE-friendly with very high probability.

\paragraph{Indirect Contributions to the \texorpdfstring{$XZ$}{XZ} Term}

Finally, recall from Section~\ref{subsec:ohare_higher_order} that the $Z$ component of the indirect effect on the $XZ$ term was bounded by
\[
\zeta_j(k_j) = \min\set{\max_{T_j \in \binom{B_j}{k_j}}\set{\sum_{i \in T_j} \abs{Z_i}}, \max_{T_j \in \binom{B_j}{k_j}}\set{\sum_{i \in B_j \setminus T_j} \abs{Z_i}}}\,,
\]
where $Z_i = e^\intercal \wt{\Sigma}^{-1} \wt{X}_i$

Using the same analysis as above and Claim~\ref{clm:e_Sigma_Xi_ohare} which states that with very high probability $\abs{Z_i} = \otilde{\frac{1}{n}}$ for all $i\in [n]$, we have that with very high probability
\[
\frac{\zeta_j(k_j)}{\sqrt{k_j (n_j - k_j)}} = \otilde{\frac{1}{n}}\,.
\]

Combining this with our bound on $M_j(k_j)$, we have
\[
\max_{k_1 + \cdots + k_m = k}\set{ \sum_{j \in [m]} \frac{\ol{M}_j(k_j) \zeta_j(k_j)}{n_j - k_j}} = \otilde{\sqrt{\frac{\sqrt{d}}{n^2}} k}\,.
\]

As before, this is smaller than our bound on the direct effect
\[
M_{XZ} = \otilde{\sqrt{\frac{k d}{n^2} + \frac{k^2 \sqrt{d}}{n^2}}}\,.
\]

\paragraph{Putting it all together}

In Claim~\ref{clm:first_order} and over the last few paragraphs, we have bounded all the individual terms that go into generating the OHARE bounds.
In particular, we have shown that
\begin{align*}
    U_k, L_k &= \underbrace{\text{First Order}}_{\Paren{1 \pm \Paren{\frac{1}{\sqrt{\log(n)}}}} \times \textup{AMIP}(k)} \pm \frac{\lvert \text{Direct XR} + \text{Indirect XR} \rvert \times \lvert \text{Direct XZ} + \text{Indirect XZ} \rvert}{1 - \text{Direct CS} - \text{Indirect CS}} = \\
    &= \Paren{1 \pm \Paren{\frac{1}{\sqrt{\log(n)}}}} \times \textup{AMIP}(k) \pm \otilde{\frac{kd}{n^{3/2}} + \frac{k^2 \sqrt{d}}{n^{3/2}}} =\\
    &= \Paren{1 \pm \Paren{\frac{1}{\sqrt{\log(n)}}}} \times \textup{AMIP}(k)\,,
\end{align*}
for all $k \leq \kthresh$, concluding our proof of Theorem~\ref{thm:tightness_ohare}.

\section*{Acknowledgments}

Sam Hopkins was partially supported by NSF CAREER award no. 2238080.
Both authors were partially supported by  MLA@CSAIL.

\newpage

\bibliographystyle{plainnat}
\bibliography{main}

\newpage
\appendix
\section{Proof of Claim~\ref{clm:new_regression}}
\label{subsec:prf_new_regression}
The main analytic idea we use in the OHARE algorithm is an alternative description of what happens when we perform a linear regression while controlling for a categorical feature.
We formalise this process in Claim~\ref{clm:new_regression}, which we prove here:

\begin{proof}[Proof of Claim~\ref{clm:new_regression}]

Let $X = \Paren{F\mid U}$ be the covariates for the original regression.
Consider the Gram-Schmidt orthogonalization process, and define
\[
\hat{X} = \Paren{F - D^{-1} U^\intercal F \mid U}
\]
where $D = U^\intercal U$ is the diagonal matrix whose $j$th entry is $\Norm{u_j}$.

$\hat{X}$ now has a block-diagonal covariance matrix
\[
\hat{\Sigma} = \hat{X}^\intercal \hat{X} = \begin{pmatrix}
(F - D^{-1}U^\intercal F)^\intercal(F - D^{-1}U^\intercal F) & 0 \\
0 & D
\end{pmatrix}
\]

Similarly, let $\hat{Y} = \hat{Y} = Y - D^{-1} U^\intercal Y$.
These labels are now also perpendicular to the dummy variables $U$.
Therefore, for $\hat{X} = F - D^{-1} U^\intercal F$, $\hat{\Sigma} = \hat{X}^\intercal \hat{X}$, we have

\begin{equation}
\begin{aligned}
\hat{\beta} &= \hat{\Sigma}^{-1} \hat{X}^\intercal \hat{Y} \\
&= \begin{pmatrix}
(F - D^{-1}U^\intercal F)^\intercal(F - D^{-1}U^\intercal F) & 0 \\
0 & D
\end{pmatrix}^{-1} \begin{pmatrix}
F - D^{-1}U^\intercal F \\
U
\end{pmatrix}^\intercal \left(Y - D^{-1} U^\intercal Y\right) \\
&= \begin{pmatrix}
\widetilde{\Sigma}^{-1} \widetilde{X}^\intercal \widetilde{Y} \\
D^{-1}U^\intercal \left(Y - D^{-1} U^\intercal Y\right)
\end{pmatrix} = \begin{pmatrix}
    \hat{\beta} \\ \star
\end{pmatrix}
\end{aligned}
\end{equation}

\end{proof}

\section{One-Hot Encodings are Almost Brittle}
\label{app:one_hot}

In this section, we will prove our claim from the introduction that datasets with one-hot encodings are arbitrarily close to being extremely brittle.
In particular, we will show that

\begin{claim}
    \label{clm:one_hot_brittle}
    Let $X \in \R^{n\times d}$ be an array of features and let $Y \in \R^n$ be some labels, such that one of the features is $0$ on all but $\kbuck < n - d$ of the samples. In other words, for some set $S \subseteq \brac{n}$ of size $\abs{S} = n - \kbuck$, we have
    \[
    \forall i \in S \;\;\;\;\; X_{i,d} = 0
    \]

    Then, for all $\gamma \in \R^{d-1}$ and for all $c > 0$, there exists a linear regression $X^\prime, Y^\prime$ such that $\Norm{X^\prime - X} + \Norm{Y^\prime - Y} < c$, and $\text{OLS}\Paren{X^\prime_S, Y^\prime_S}_{\brac{d-1}} = \gamma$.
\end{claim}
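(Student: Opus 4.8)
The plan is to exploit the one feature that vanishes on $S$ as a ``handle'': although the $d$-th column of $X$ is identically $0$ on the rows of $S$, in the perturbed instance we may set it to be an \emph{arbitrarily small nonzero} vector $\epsilon u$ supported on $S$, and the restricted OLS fit $\mathrm{OLS}(X'_S,Y'_S)$ turns out to depend only on the \emph{direction} $u$, not on $\epsilon$. Concretely, if the $d$-th column of $X'_S$ equals $\epsilon u$ with $u$ not in the column span of $F'_S$ (the first $d-1$ columns restricted to $S$), then minimizing $\Norm{Y'_S - F'_S\gamma' - \epsilon u\, t}^2$ over $(\gamma',t)$ and optimizing out $t$ leaves $\min_{\gamma'}\Norm{P_u(Y'_S - F'_S\gamma')}^2$, where $P_u = I - uu^\intercal/\Norm{u}^2$; this is an $\epsilon$-independent projected least-squares problem. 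So the strategy is to choose $u$ so that $\gamma$ becomes its solution.

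First I would pass to general position with a perturbation of size $<c/2$, modifying only the first $d-1$ columns of $X$ and the entries of $Y$ on rows of $S$, so that (i) $F_S$ has full column rank $d-1$, and (ii) $Y_S\notin\mathrm{col}(F_S)$. Both are generic (their failure loci are proper Zariski-closed sets) and feasible because $|S| = n-\kbuck \ge d+1 > d-1$, so $\mathrm{col}(F_S)$ is a proper subspace of $\R^{|S|}$. Set $v^* := Y_S - F_S\gamma$; since $F_S\gamma\in\mathrm{col}(F_S)$, the orthogonal component of $v^*$ relative to $\mathrm{col}(F_S)$ equals that of $Y_S$, hence is nonzero by (ii).

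Then comes the main step: take $u := v^*$, choose $\epsilon>0$ with $\epsilon\Norm{u}<c/2$, let $X'$ agree with the current instance except that the $d$-th column on rows of $S$ is $\epsilon u$, and let $Y':=Y$. Since $u\notin\mathrm{col}(F_S)$, the $d$ columns of $X'_S=[\,F_S\mid \epsilon u\,]$ are linearly independent, so $\Sigma'_S=(X'_S)^\intercal X'_S$ is invertible and $\mathrm{OLS}(X'_S,Y'_S)$ is genuinely well-defined. By the reduction above its first $d-1$ coordinates solve $\min_{\gamma'}\Norm{P_u(Y_S - F_S\gamma')}^2$; because $P_u v^* = 0$ by the choice $u=v^*$, the value $\gamma'=\gamma$ attains $0$ and, $P_uF_S$ being injective (it has trivial kernel: $P_uF_S\gamma'=0$ forces $F_S\gamma'\in\mathrm{span}(u)$, impossible for $\gamma'\ne 0$ since $u\notin\mathrm{col}(F_S)$ and $F_S$ is injective), it is the unique solution. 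Thus $\mathrm{OLS}(X'_S,Y'_S)_{\brac{d-1}}=\gamma$, and $\Norm{X'-X}+\Norm{Y'-Y}<c/2+c/2=c$.

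The genericity argument in the first step and the ``optimize out $t$'' computation are routine; the only genuinely clever point — and the one place I expect any friction — is recognizing that the vanishing $d$-th column on $S$ is exactly what lets an arbitrarily small perturbation move the restricted fit to an arbitrary target, because the induced projection $P_u$ is scale-invariant in $\epsilon$. I would close by noting that this is the very singularity phenomenon ($\Sigma_S$ degenerating as samples of a category are removed) that motivates \OHARE.
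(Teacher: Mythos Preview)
Your proposal is correct and takes essentially the same approach as the paper: both perturb to general position, then set the $d$-th column of $X'$ on the rows of $S$ to be a small scalar multiple of the target residual $R=Y'_S-F'_S\gamma$, and use that $R\notin\mathrm{col}(F'_S)$ to conclude the restricted OLS has the desired first $d-1$ coordinates. The only cosmetic difference is that the paper observes this choice yields a \emph{perfect} fit on $S$ (so the OLS solution is immediate), whereas you optimize out $t$ and argue via the projection $P_u$; your route is slightly more elaborate but equivalent, since with $u=R$ the projected problem attains value $0$ at $\gamma$, which is exactly the perfect-fit observation.
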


The reason regressions become so close to brittle is that with one of the features being always $0$, we can make a very small change to its value, in a way that creates very strong correlations.

\begin{proof} [Proof of Claim \ref{clm:one_hot_brittle}]
    The proof of Claim~\ref{clm:one_hot_brittle} is relatively simple.
    
    First, we want to ensure that the original regression problem has no other degeneracies.
    We do this to ensure that the resulting OLS has a unique solution.
    
    Let $V \subseteq \R^{n-\kbuck}$ be the linear space spanned by the columns of $X_{S, \brac{d-1}}$ and $Y_S$.
    $V$ is spanned by $d$ vectors.
    Therefore, $\dim V \leq d$.
    
    If $\dim V = d$, we can skip this step, and if this inequality is strict we say that $X_{S, \brac{d-1}}, Y_S$ are degenerate.
    If $\dim V < d$, it is easy to see that almost all $X_{S, \brac{d-1}}^\prime, Y_S^\prime$ in $B_{c/2}\Paren{X_{S, \brac{d-1}}, Y_S}$ (i.e. in the ball of radius $c/2$ around the original regression) are non-degenerate.
    Therefore, let $X_{S, \brac{d-1}}^\prime, Y_S^\prime$ be such a non-degenerate pair.

    Now, consider the vector $R = Y_S^\prime - X_{S, \brac{d-1}}^\prime \gamma$.
    This is the residual vector for the linear model $Y_S^\prime \approx X_{S, \brac{d-1}}^\prime \gamma$, and by our assumption that $X_{S, \brac{d-1}}^\prime, Y_S^\prime$ are non-degenerate, $R \neq 0$ and is not in the span of the columns of $X_{S, \brac{d-1}}^\prime$.

    It is only left to decide the values of $X^\prime_{S, d}$.
    Setting $X^\prime_{S, d} = \frac{c}{2 \Norm{R}} R$, our regression has a perfect fit on the samples in $S$
    \[
    Y^\prime_S = X^\prime_{S} \begin{pmatrix}
        \gamma \\ -\frac{2\Norm{R}}{c}
    \end{pmatrix}
    \]
    By our construction, $\Sigma = \Paren{X_S^\prime}^\intercal  X_S^\prime$ is full rank, making this OLS solution unique.
\end{proof}

\end{document}